\newcommand{\ind}{\mathds{1}}
\def\smallunderbrace#1{\mathop{\vtop{\m@th\ialign{##\crcr
   $\hfil\displaystyle{#1}\hfil$\crcr
   \noalign{\kern3\p@\nointerlineskip}%
   \tiny\upbracefill\crcr\noalign{\kern3\p@}}}}\limits}
\numberwithin{equation}{section}
\newcommand{\eventInd}{{D}}
\newcommand{\pre}{\circ}
\newtheorem{claim}[theorem]{Claim}
\newcommand{\cmark}{\textcolor{Green}{\ding{51}}}
\newcommand{\xmark}{\textcolor{BrickRed}{\ding{55}}}
\newcommand{\bftab}{\fontseries{b}\selectfont}
\begin{document}

\title{Survival Kernets: Scalable and Interpretable \\ Deep Kernel Survival Analysis with an Accuracy Guarantee}

\author{\name George H.~Chen \email georgechen@cmu.edu \\
       \addr Heinz College of Information Systems and Public Policy\\
       Carnegie Mellon University\\
       Pittsburgh, PA 15213, USA}

\editor{Ji Zhu}

\maketitle

\begin{abstract}
Kernel survival analysis models estimate individual survival distributions with the help of a kernel function, which measures the similarity between any two data points. Such a kernel function can be learned using deep kernel survival models. In this paper, we present a new deep kernel survival model called a \emph{survival kernet}, which scales to large datasets in a manner that is amenable to model interpretation and also theoretical analysis. Specifically, the training data are partitioned into clusters based on a recently developed training set compression scheme for classification and regression called \emph{kernel netting} that we extend to the survival analysis setting. At test time, each data point is represented as a weighted combination of these clusters, and each such cluster can be visualized. For a special case of survival kernets, we establish a finite-sample error bound on predicted survival distributions that is, up to a log factor, optimal. Whereas scalability at test time is achieved using the aforementioned kernel netting compression strategy, scalability during training is achieved by a warm-start procedure based on tree ensembles such as \textsc{xgboost} and a heuristic approach to accelerating neural architecture search. On four standard survival analysis datasets of varying sizes (up to roughly 3 million data points), we show that survival kernets are highly competitive compared to various baselines tested in terms of time-dependent concordance index. Our code is available at: \\ \url{https://github.com/georgehc/survival-kernets}
\end{abstract}

\begin{keywords}
survival analysis, kernel methods, neural networks, scalability, \mbox{interpretability}
\end{keywords}

\section{Introduction}
\label{sec:intro}

Survival analysis is about modeling the amount of time that will elapse before a critical event of interest happens. Examples of such critical events include death, hospital readmission, disease relapse, device failure, or a convicted criminal reoffending. A key technical challenge in survival analysis is that typically when collecting training data, we cannot wait until the critical event happens for every training data point, such as waiting for everyone in a clinical study to be deceased. Thus, in our training data, we observe the time duration we aim to predict for only some but not all of the data points. This is in contrast to standard classification and regression problem settings in which the target label to be predicted is observed across all training data. Importantly, the data points that do not encounter the critical event could still provide valuable information and should not simply be discarded from the data analysis. For instance, they may have specific characteristics that significantly delay when the critical event will happen.

In the last few decades, survival analysis datasets have dramatically grown in size, from hundreds of data points (e.g., the German Breast Cancer Study Group dataset \citep{schumacher1994randomized}) to now millions (e.g., customer churn data from the music streaming service \textsc{kkbox}\footnote{\url{https://www.kaggle.com/c/kkbox-churn-prediction-challenge}}). We anticipate that in the years to come, survival analysis datasets larger than ones typically encountered today will become the norm. Meanwhile, we remark that many survival analysis problems are in high-stakes application domains such as healthcare. For such applications, it can be helpful for the survival analysis models used to be interpretable.
With these large-scale high-stakes applications in mind, in this paper, we propose a new deep survival analysis model called a \emph{survival kernet} that has all of the following properties:
\begin{itemize}[itemsep=0.2em,parsep=0em,partopsep=0em]
\item flexible (model is nonlinear and nonparametric, and achieves concordance indices nearly as high as or higher than the best baselines tested in our experiments)
\item scalable (uses a compression technique to construct a test-time predictor that can handle large datasets such as the \textsc{kkbox} dataset; also uses a scalable neural net warm-start initialization strategy)
\item interpretable (any data point is represented by a weighted combination of a few clusters, each of which can be visualized---this is somewhat like how topic modeling for text describes each text document in terms of a few topics, each of which can be visualized)
\item for a special case of the model, comes with a theoretical guarantee on prediction accuracy (a finite-sample error bound for predicted survival distributions)
\end{itemize}
In contrast, existing deep survival analysis models that have been developed typically are not easily interpretable (e.g., \citealt{ranganath2016deep,fotso2018deep,chapfuwa2018adversarial,giunchiglia2018rnn,katzman2018deepsurv,lee2018deephit,kvamme2019time,engelhard2020neural,zhong2021deep,danks2022derivative,tang2022soden}). Instead, to get a deep survival analysis model to be interpretable, \citet{zhong2022deep}, for instance, model a subset of features by a linear model (that is straightforward to interpret) and then model the rest of the features by an arbitrarily complex neural net that they do not aim to interpret. While this sort of interpretability could be valuable, it does not resolve the difficulty of interpreting the part of the model that actually uses the neural net. Meanwhile, \citet{chapfuwa2020survival}, \citet{nagpal2021deep}, and \citet{manduchi2022deep} have developed deep survival models that are interpretable in the sense of representing data points in terms of clusters, but none of these models have accuracy guarantees. Somewhat like using clustering, neural survival topic models by \citet{li2020neural} represents data in terms of ``topics'', where each topic corresponds to specific raw features being more probable and also to either higher or lower survival times; these models again lack accuracy guarantees. We are aware of only two existing deep survival analysis models that have accuracy guarantees \citep{zhong2021deep,zhong2022deep}, both of which have only been tested on small datasets (the largest real dataset these authors consider has about two thousand data points) and the theoretical analyses for these models are limited to a special family of neural nets (involving $\ell_0$ and sup-norm constraints) that are not used in practice. We summarize how our proposed survival kernet model compares to some representative existing models in Table~\ref{tab:property-comparison}; note that this table is partially based on a similar comparison table by \citet{tang2022soden}.

\begin{table}[!t]
\centering
\caption{Comparison of the proposed survival kernet model with some representative existing survival models.}
\label{tab:property-comparison}
\vspace{-.5em}
\setlength{\tabcolsep}{5pt}
\adjustbox{max width=\textwidth}{%
\begin{tabular}{cccccc}
\toprule
\multirow{2}{*}{Model} & \multirow{2}{*}{Nonlinear} & No proportional & Designed to be & Accuracy & Comment on \\
 &  & hazards assumption & interpretable$^*$ & guarantee$^\dagger$ & computational scalability$^\ddagger$ \\ \midrule
\addlinespace[0.5em]
Cox \citep{cox1972regression} & \xmark & \xmark & \cmark & \cmark & SGD \\
\addlinespace[0.25em]
XGBoost \citep{chen2016xgboost} & \cmark & \xmark~$\!^\S$ & \xmark & \xmark & various optimizations \\
\addlinespace[0.25em]
DeepSurv \citep{katzman2018deepsurv} & \cmark & \xmark & \xmark & \xmark & SGD \\
\addlinespace[0.25em]
DeepHit \citep{lee2018deephit} & \cmark & \cmark & \xmark & \xmark & SGD \\
\addlinespace[0.25em]
Nnet-survival \citep{gensheimer2019scalable} & \cmark & \cmark & \xmark & \xmark & SGD \\
\addlinespace[0.25em]
Cox-time \citep{kvamme2019time} & \cmark & \cmark & \xmark & \xmark & SGD \\
\addlinespace[0.25em]
Deep Extended Hazard \citep{zhong2021deep} & \cmark & \cmark~$\!^\P$ & \xmark & \cmark & SGD \\
\addlinespace[0.25em]
SODEN \citep{tang2022soden} & \cmark & \cmark & \xmark & \xmark & SGD \\
\addlinespace[0.25em]
Neural survival topic models \citep{li2020neural} & \cmark & \cmark & \cmark & \xmark & SGD \\
\addlinespace[0.5em]
\multirow{2}{*}{Deep Cox mixtures \citep{nagpal2021deep}} & \multirow{2}{*}{\cmark} & \multirow{2}{*}{\cmark} & \multirow{2}{*}{\cmark} & \multirow{2}{*}{\xmark} & uses spline fitting procedure with \\
& & & & & computation linear in training set size$^\|$ \\
\addlinespace[0.5em]
\multirow{2}{*}{Deep kernel survival analysis \citep{chen2020deep}} & \multirow{2}{*}{\cmark} & \multirow{2}{*}{\cmark} & \multirow{2}{*}{\cmark} & \multirow{2}{*}{\xmark} & SGD but prediction for single point uses \\
& & & & & computation linear in training set size \\
\addlinespace[0.5em]
\multirow{2}{*}{Survival kernets (proposed)} & \multirow{2}{*}{\cmark} & \multirow{2}{*}{\cmark} & \multirow{2}{*}{\cmark} & \multirow{2}{*}{\cmark} & SGD + tunable test-time compression \\
& & & & & + scalable warm-start \\
\bottomrule
\end{tabular}} \\
\smallskip
{\scriptsize
$^*$ For models designed to be interpretable, the notion of interpretability varies (the Cox model is linear, neural survival topic models decompose into a topic model and a survival model that each can be interpreted, Deep Cox mixtures and survival kernets represents data using clusters, and deep kernel survival analysis can explicitly indicate which training points contribute to each prediction and what their relative weights are) \\[3pt]
$^\dagger$ Accuracy guarantees for the deep extended hazard model and survival kernets are for special cases of these models \\[3pt]
$^\ddagger$ For some models that can be trained using (minibatch) SGD, doing so is not actually theoretically justified but is straightforward to implement in practice \\[3pt]
$^\S$ Specific to using Cox regression as the learning objective, which we do in our experiments \\[3pt]
$^\P$ The deep extended hazard model is a hybrid between a proportional hazards model and an accelerated failure time model so that it partially uses a proportional hazards assumption (but from training, the model could automatically decide to rely on the accelerated failure time component and not really use the proportional hazards component) \\[3pt]
$^\|$ Deep Cox mixtures uses an Expectation-Maximum (EM) training procedure that partly uses minibatches but every few minibatches looks at the entire training set in computing a survival curve (as a spline) per cluster \\[-1em]}
\end{table}

Our work builds on an existing model called deep kernel survival analysis \citep{chen2020deep}. Unlike non-kernel-based deep survival models that have been developed, deep kernel survival analysis and a Bayesian variant by \citet{wu2021uncertainty} learn a kernel function that measures how similar any two data points are. To predict the survival distribution of a specific test point, these kernel-based methods use information from training points most similar to the test point according to the learned kernel function. The learned kernel function can help us probe a resulting survival model. For instance, these kernel functions can provide forecast evidence in terms of which training points contribute to a test point's prediction. As pointed out by \citet{chen2020deep}, the kernel functions can also be used to construct statistically valid prediction intervals that are \emph{relative} (e.g., the hospital length of stay of patients \emph{similar to Alice} are within the interval [0.5, 2.5] days with probability at least 80\%). Separate from these practical advantages, a kernel-based approach is also amenable to theoretical analysis, where we import proof techniques by \citet{chen2019nearest} to analyze survival kernets.

A key obstacle to using kernel survival models in practice is that at test time, these models in principle depend on knowing the similarity between a test point and every training point. The computation involved in a naive implementation becomes impractical when the training dataset size grows large. In the Bayesian setting, this scalability problem has been addressed but without guarantees in the survival analysis setting and also thus far without experiments on datasets with censoring \citep{wu2021uncertainty}.

Our main technical contribution in this paper is to show how to scale deep kernel survival analysis at test time to large datasets in a manner that not only yields an interpretable model but also achieves a finite-sample accuracy guarantee on predicted survival distributions.
The resulting model is what we call a \emph{survival kernet}.
We achieve this test-time scalability by extending an existing data compression scheme (\emph{kernel netting} by \citet{kpotufe2017time} developed for classification and regression) to the survival analysis setting (``survival kernet'' combines ``survival analysis'' and ``kernel netting'' into a short phrase). Kernel netting could be viewed as partitioning the training data into clusters and then representing any data point as a weighted combination of a few clusters. We show how to visualize such clusters using a strategy similar to that of \citet{chapfuwa2020survival}. We remark that Kpotufe and Verma did not consider interpretability or how to visualize clusters in their original kernel netting paper.

We next turn to training scalability. Here, minibatch gradient descent readily enables fitting a deep kernel survival model to large datasets for a specific neural net architecture. The challenge is that there could be many neural net architectures to try, making the overall training procedure computationally expensive. Our second contribution is in proposing a warm-start approach to deep kernel survival model training that drastically reduces the amount of computation needed when sweeping over neural net architecture choices. Our proposed warm-start strategy first learns a kernel function using a scalable tree ensemble such as \textsc{xgboost} \citep{chen2016xgboost}. We then fit a neural net (trying different neural net architectures) to this learned tree ensemble kernel function before fine tuning using survival kernet training (at which point the neural net architecture is fixed). We call our warm-start approach \textsc{tuna} (\emph{Tree ensemble Under a Neural Approximation}). We remark that \citet{chen2020deep} had come up with an earlier tree ensemble initialization strategy but it does not scale to large datasets.

We demonstrate survival kernets with and without \textsc{tuna} on four standard survival analysis datasets, three of which are healthcare-related on predicting time until death (with number of data points ranging from thousands to tens of thousands), and the fourth is the \textsc{kkbox} dataset (millions of data points). Survival kernets with \textsc{tuna} achieve accuracy scores nearly as high or higher than the best performing baselines that we tested. Meanwhile, using \textsc{tuna} to accelerate training consistently results in higher accuracy models than not using \textsc{tuna} and reduces overall training times by 17\%--85\% in our experiments (with time savings that are more dramatic on larger datasets). We show how to interpret survival kernet models trained on all four datasets we consider.

\section{Background\label{sec:deep-kernel-prediction}}

We first review the standard right-censored survival analysis setup (model and prediction task) and deep kernel survival analysis \citep{chen2020deep}. For the latter, we do not address the Bayesian formulation (e.g., \citealt{wu2021uncertainty}) as the machinery involved is a bit different and our theoretical analysis later is frequentist. For ease of exposition, we phrase terminology in terms of predicting time until death although in general, the critical event of interest need not be death.

\paragraph{Model}
Let $(X_{1},Y_{1},D_{1}),\dots,(X_{n},Y_{n},D_{n})$ denote the training data, where the $i$-th training point has feature vector $X_{i}\in\mathcal{X}$, observed nonnegative time duration $Y_{i}\ge0$, and event indicator $D_i\in\{0,1\}$; if $D_i=1$, then $Y_i$ is a time until death whereas if $D_i=0$, then $Y_i$ is a time until censoring (i.e., the $i$-th point's true time until death is at least $Y_{i}$). Each point $(X_i,Y_i,D_i)$ is assumed to be generated i.i.d.~as follows:
\begin{enumerate}[itemsep=0.2em,parsep=0em,partopsep=0em]
\item Sample feature vector $X_{i}\sim\mathbb{P}_{\mathsf{X}}$.
\item Sample nonnegative survival time $T_{i}\sim\mathbb{P}_{\mathsf{T}\mid\mathsf{X}=X_{i}}$.
\item Sample nonnegative censoring time $C_{i}\sim\mathbb{P}_{\mathsf{C}\mid\mathsf{X}=X_{i}}$.
\item If $T_{i}\le C_{i}$ (death happens before censoring): set $Y_{i}=T_{i}$ and $\eventInd_{i}=1$.

Otherwise (death happens after censoring): set $Y_{i}=C_{i}$ and $\eventInd_{i}=0$.
\end{enumerate}
Distributions $\mathbb{P}_{\mathsf{X}}$, $\mathbb{P}_{\mathsf{T}\mid\mathsf{X}}$, and $\mathbb{P}_{\mathsf{C}\mid\mathsf{X}}$ are unknown to the learning method. We assume that the distribution $\mathbb{P}_{\mathsf{T}|\mathsf{X}=x}$ is for a continuous random variable with CDF $F_{\mathsf{T}|\mathsf{X}}(t|x)$ and PDF $f_{\mathsf{T}|\mathsf{X}}(t|x)=\frac{\partial}{\partial t}F_{\mathsf{T}|\mathsf{X}}(t|x)$.

\sloppy
\paragraph{Prediction task} A standard prediction task is to estimate, for a test feature vector~$x$, the conditional survival function
\[
S(t|x):=\mathbb{P}(\text{time until death}>t~\!|~\!\text{feature vector}=x)=1-F_{\mathsf{\mathsf{T}|\mathsf{X}}}(t|x),
\]
which is defined for all $t\ge0$. A closely related problem is to estimate the so-called hazard function
\begin{equation}
h(t|x)
:=
-\frac{\partial}{\partial t} \log S(t|x)
= -\frac{\frac{\partial}{\partial t}S(t|x)}{S(t|x)}
=-\frac{\frac{\partial}{\partial t}[1-F_{\mathsf{T}|\mathsf{X}}(t|x)]}{S(t|x)}
=\frac{f_{\mathsf{\mathsf{T}|\mathsf{X}}}(t|x)}{S(t|x)},
\label{eq:hazard}
\end{equation}
which is (from the final expression) the instantaneous rate of death at time $t$ given survival at least through time $t$ for feature vector $x$. By how the hazard function is defined, $S(t|x)=\exp(-\int_{0}^{t}h(s|x)ds)$, so estimating $h(\cdot|x)$ yields an estimate of $S(\cdot|x)$.

\paragraph{Kernel estimators}
Let $\mathbb{K}:\mathcal{X}\times\mathcal{X}\rightarrow[0,\infty)$ denote a kernel function that measures how similar any two feature vectors are (a higher value means more similar). We explain how this function can be learned shortly in a neural net framework. For now, consider it to be pre-specified. Then we can estimate the hazard function as follows.

\emph{Hazard function estimator.}
Let $t_{1}<t_{2}<\cdots<t_{m}$ denote the unique times of death in the training data, and define $t_0:=0$. Then a kernel predictor for a discretized version of the hazard function is, for time indices $\ell=1,2,\dots,m$ and feature vector $x\in\mathcal{X}$,
\begin{equation}
\widehat{h}(\ell|x):=\frac{\sum_{j=1}^{n}\mathbb{K}(x,X_{j})\eventInd_{j}\ind\{Y_{j}=t_{\ell}\}}{\sum_{j=1}^{n}\mathbb{K}(x,X_{j})\ind\{Y_{j}>t_{\ell-1}\}},
\label{eq:hazard-predictor}
\end{equation}
where $\ind\{\cdot\}$ is the indicator function that is~1 when its argument is true and 0 otherwise. Note that $\widehat{h}(\ell|x)$ estimates the probability of dying at time~$t_{\ell}$ conditioned on surviving beyond time~$t_{\ell-1}$ for feature vector~$x$. To see this, consider when $\mathbb{K}(x, X_j)=1$ for all~$j$, in which case the numerator counts the total number of deaths at time $t_\ell$, and the denominator counts the total number of data points that survived beyond time $t_{\ell-1}$; adding kernel weights that are not necessarily always~1 weights the contribution of the $j$-th training point depending on how similar test feature vector $x$ is to $X_j$. As a corner case, in evaluating equation~\eqref{eq:hazard-predictor}, if the numerator and denominator are both 0, we use the convention that $0/0=0$.

\emph{Survival function estimator.}
The hazard estimate $\widehat{h}(\cdot|x)$ can be used to estimate the conditional survival function $S(\cdot|x)$ by taking products of empirical probabilities of surviving from time 0 to $t_{1}$, $t_{1}$ to $t_{2}$, and so forth up to time $t$:
\begin{equation}
\widehat{S}(t|x):=\prod_{\ell=1}^{m}\big(1-\widehat{h}(\ell|x)\big)^{\ind\{t_{\ell}\le t\}}.
\label{eq:kernel-survival-estimator}
\end{equation}
This kernel predictor is called the \emph{conditional Kaplan-Meier estimator} \citep{beran1981nonparametric} and has known finite-sample error bounds \citep{chen2019nearest}. For equation~\eqref{eq:kernel-survival-estimator}, the special case where $\mathbb{K}(x,x')=1$ for all $x,x'\in\mathcal{X}$ yields the classical Kaplan-Meier estimator \citep{kaplan1958nonparametric} that does not depend on feature vectors and is a population-level survival curve estimate:
\begin{equation}
\widehat{S}^{\text{KM}}(t)
:=
\prod_{\ell=1}^{m}
\Big(
1-\frac{\sum_{j=1}^{n}\eventInd_{j}\ind\{Y_{j}=t_{\ell}\}}{\sum_{j=1}^{n}\ind\{Y_{j}>t_{\ell-1}\}}
\Big)^{\ind\{t_{\ell}\le t\}}.
\label{eq:kaplan-meier-estimator}
\end{equation}

\paragraph{Deep kernel survival analysis}
To automatically learn the kernel function $\mathbb{K}$,
deep kernel survival analysis \citep{chen2020deep} parameterizes $\mathbb{K}$ in terms of a base neural net $\phi:\mathcal{X}\rightarrow\widetilde{\mathcal{X}}$ that maps a raw feature vector $x$ to an embedding vector $\widetilde{x}=\phi(x)\in\widetilde{\mathcal{X}}$; throughout this paper we denote embedding vectors with tildes. We always assume that the embedding space $\widetilde{\mathcal{X}}$ is a subset of~$\mathbb{R}^{d}$ and that kernel function $\mathbb{K}$ is of the form
\begin{equation}
\mathbb{K}(x,x';\phi)=K(\rho(x,x';\phi)),
\;\,
\rho(x,x';\phi)=\|\phi(x)-\phi(x')\|_2,
\label{eq:kernel-basic-form}
\end{equation}
where $K:[0,\infty)\rightarrow[0,\infty)$ is a nonincreasing function, and $\|\cdot\|_2$ denotes Euclidean distance.
For example, a common choice is the Gaussian kernel $K(u)=\exp(-\frac{u^2}{2\sigma^2})$ where $\sigma^2$ is the user-specified variance hyperparameter. Learning $\mathbb{K}$ or distance function~$\rho$ amount to learning the base neural net~$\phi$. The architecture of~$\phi$ is left for the user to specify, where standard strategies can be used. For example, when working with images, we can choose~$\phi$ to be a convolutional neural net and when working with time series, $\phi$ can be a recurrent neural net.

To prevent overfitting during training, we replace the hazard estimate $\widehat{h}(\ell|x)$ in equation~\eqref{eq:hazard-predictor} with a ``leave-one-out'' training version, and in terms of notation, we now also emphasize the dependence on the base neural net $\phi$:
\begin{equation}
\widehat{h}_{\text{train}}(\ell|i;\phi):=\frac{\sum_{j=1\text{ s.t.~}j\ne i}^{n}\mathbb{K}(X_{i},X_{j};\phi)\eventInd_{j}\ind\{Y_{j}=t_{\ell}\}}{\sum_{j=1\text{ s.t.~}j\ne i}^{n}\mathbb{K}(X_{i},X_{j};\phi)\ind\{Y_{j}>t_{\ell-1}\}}.
\label{eq:hazard-predictor-loo}
\end{equation}
In particular, the prediction for the $i$-th training point does not depend on the $i$-th training point's observed outcomes $Y_i$ and $D_i$. Similarly, we can define a leave-one-out version $\widehat{S}^{\text{train}}(t|x;\phi)$ of the survival function estimate $\widehat{S}(t|x)$ given in equation~\eqref{eq:kernel-survival-estimator}.

As for what loss function to minimize to learn the base neural net $\phi$ (and thus the kernel function $\mathbb{K}$ for use with the hazard or survival function estimators), one possibility is the negative log likelihood loss by \citet{brown1975use}:
\[
 L_{\text{NLL}}(\phi) \\
 :=\frac{1}{n}\sum_{i=1}^{n}\bigg(L_\text{BCE}(i;\phi)
 +\underbrace{\sum_{\substack{\ell=1\\\text{ s.t.~}t_{\ell}<Y_{i}}}^{m}\log\frac{1}{1-\widehat{h}_{\text{train}}(\ell|i;\phi)}}_{i\text{-th individual survives at times before }Y_{i}}\bigg),
\]
where $L_{\text{BCE}}(i;\phi)$ is the binary cross-entropy loss
\[
L_{\text{BCE}}(i;\phi):=
\eventInd_{i}\log\frac{1}{\widehat{h}_{\text{train}}(\kappa(Y_{i})|i;\phi)}
+ (1-\eventInd_{i})\log\frac{1}{1-\widehat{h}_{\text{train}}(\kappa(Y_{i})|i;\phi)},
\]
and $\kappa(Y_{i})$ denotes the sorted time index (from $1,2,\dots,m$) that time $Y_{i}$ corresponds to. Minimizing $L_{\text{NLL}}$ via minibatch gradient descent yields the deep kernel survival analysis approach by \citet{chen2020deep}.

Chen also discussed how to incorporate the \textsc{deephit} ranking loss term \citep{lee2018deephit}, which could be written
\[
L_{\text{rank}}(\phi)
:=
\frac{1}{n^2}\sum_{\substack{i=1\\\text{s.t.~}D_i=1}}^n~\sum_{\substack{j\ne i\\\text{s.t.~}Y_j>Y_i}}\exp\Big(\frac{\widehat{S}_{\text{train}}(Y_i|X_i;\phi)-\widehat{S}_{\text{train}}(Y_i|X_j;\phi)}{\sigma_{\text{rank}}}\Big),
\]
where $\sigma_{\text{rank}}>0$ is a hyperparameter; for simplicity, we use a normalization factor of $1/n^2$, which is the same normalization factor used in the \textsc{deephit} implementation that is part of the now standard \textsc{pycox} software package \citep{kvamme2019time}. Our experiments later use the following overall deep kernel survival analysis (DKSA) loss term that trades off between Brown's loss $L_{\text{NLL}}$ and the ranking loss $L_{\text{rank}}$:
\begin{equation}
L_{\text{DKSA}}(\phi)
:= \eta L_{\text{NLL}}(\phi)
+ (1-\eta) L_{\text{rank}}(\phi), \label{eq:full-dksa-loss}
\end{equation}
where $\eta\in[0,1]$ is another hyperparameter.

\emph{Implementation remarks.}
As stated, the hazard estimator \eqref{eq:hazard-predictor} as well as the conditional Kaplan-Meier estimator \eqref{eq:kernel-survival-estimator} are defined in terms of the unique observed times of death. In practice, \citet{chen2020deep} observed that discretizing time into time steps of equal size can sometimes yield more accurate survival predictions. This discretization acts as a form of regularization since we are essentially smoothing the resulting estimated hazard and conditional survival functions. Furthermore, note that the conditional Kaplan-Meier estimator is defined to be piecewise constant. However, in practice, interpolation improves test-time prediction accuracy. Chen's implementation of deep kernel survival analysis uses the constant density interpolation strategy by \citet{kvamme2019continuous}.

Separately, during training, using an infinite-support kernel function such as a Gaussian kernel works well in practice with neural net frameworks to prevent vanishing gradients. If we use a kernel function with finite support (such as the box, triangle, or \mbox{Epanechnikov} kernels), and we initialize~$\phi$ such that for each training point, no other training point is found to be similar enough as to have a nonzero kernel weight, then we would struggle to learn an improved embedding representation. As a concrete example, suppose that we use the box kernel $K(u)=\ind\{u\le1\}$, and the base neural net is simply $\phi(x)=w x$ for some scalar weight $w\in\mathbb{R}$ (i.e., $w$ is the only neural net parameter in this case). Then if during neural net training, $w$ becomes too large in absolute value (namely, $|w|>\frac{1}{\min_{i\ne j}\|X_i - X_j\|_2}$), then for all $i\ne j$,
\begin{align*}
\mathbb{K}(X_i,X_j;\phi)
&= K(\rho(X_i,X_j;\phi)) \\
&= \ind\{\|\phi(X_i)-\phi(X_j)\|_2\le1\} \\
&= \ind\{\|w X_i - w X_j\|_2\le1\} \\
&= \ind\{|w| \|X_i - X_j\|_2\le1\} \\
&= 0.
\end{align*}
When this happens, the numerator and denominator of the leave-one-out training hazard estimator \eqref{eq:hazard-predictor-loo} are both~0, so by the earlier stated convention that $0/0=0$, the overall predicted hazard is~0 for all time. This will mean that the loss does not depend on the neural net parameter~$w$ at all, so the gradient of the loss with respect to $w$ is~0. By simply using an infinite-support kernel function during training, we avoid having to deal with these zero kernel weight issues.

\section{Scalable and Interpretable Test-Time Prediction with an Accuracy Guarantee
\label{sec:scaling}}

To make a prediction for test feature vector $x$, we would in principle have to compute the similarity between~$x$ and every training feature vector, which is computationally expensive for large training datasets. To address this problem, we apply \emph{kernel netting} \citep{kpotufe2017time} to deep kernel survival analysis, obtaining a model we call a \emph{survival kernet}. Kernel netting constructs a compressed version of the training data for use at test time using the standard notion of $\varepsilon$-nets (e.g., see the textbook by \citet{vershynin2018high}). As a technical remark, kernel netting was originally developed for classification and regression; extending the proof ideas to survival analysis requires carefully combining proof ideas by \citet{kpotufe2017time} and \citet{chen2019nearest}. Separately, Kpotufe and Verma did not consider interpretability in their original kernel netting paper.

\paragraph{Sample splitting}
For our theoretical guarantee on test time prediction error later, we assume that the base neural net $\phi$ has already been trained on ``pre-training'' data $(X_{1}^{{\pre}},Y_{1}^{{\pre}},D_1^{{\pre}})$, $\dots$, $(X_{n_{{\pre}}}^{{\pre}},Y_{n_{{\pre}}}^{{\pre}},D_{n_{{\pre}}}^{{\pre}})$ that are independent of training data $(X_{1},Y_{1},D_1),\dots,(X_{n},Y_{n},D_n)$. As shorthand notation, we use $X_{1:n_{{\pre}}}^{{\pre}}:=(X_1^{{\pre}},\dots,X_{n_{{\pre}}}^{{\pre}})\in\mathcal{X}^{n_{{\pre}}}$, and similarly define $Y_{1:n_{{\pre}}}^{{\pre}}$, $D_{1:n_{{\pre}}}^{{\pre}}$, $X_{1:n}$, $Y_{1:n}$, and $D_{1:n}$. The pre-training data $(X_{1:n_{{\pre}}}^{{\pre}},Y_{1:n_{{\pre}}}^{{\pre}},D_{1:n_{{\pre}}}^{{\pre}})$ need not be sampled in the same manner as the ``proper'' training data $(X_{1:n},Y_{1:n},D_{1:n})$. In practice, one could for example take a complete training dataset and randomly split it into two portions, the first portion to treat as the pre-training data, and the second portion to treat as the proper training data. After training $\phi$ on pre-training data, we refer to the learned neural net as $\widehat{\phi}$. Our theory treats how $\widehat{\phi}$ is learned as a black box, but requires that the output space of $\widehat{\phi}$ (the embedding space $\widetilde{\mathcal{X}}$) satisfy some regularity conditions. Later in our experiments, we also intentionally try setting the pre-training and training datasets to be the same although our theory does not cover this scenario.

\paragraph{Survival kernets}
We now state how to train and make predictions with a survival \mbox{kernet}. At test time, for a test feature vector $x$, we only consider using training data within a threshold distance~$\tau$ from~$x$, where distances are computed via the learned distance $\rho(x,x';\widehat{\phi})=\|\widehat{\phi}(x)-\widehat{\phi}(x')\|_{2}$ with pre-trained neural net~$\widehat{\phi}$. In particular, at test time, we replace the function $\mathbb{K}$ from equation~\eqref{eq:kernel-basic-form} with the ``truncated'' version
\begin{equation}
\widehat{\mathbb{K}}(x,x';\widehat{\phi})=K(\rho(x,x';\widehat{\phi}~\!))\ind\{\rho(x,x';\widehat{\phi}~\!)\le \tau\}.\label{eq:truncated-kernel}
\end{equation}
From a computational viewpoint, we can take advantage of recent advances in (approximate) nearest neighbor search data structures for Euclidean distance to find neighbors of $x$ that are within distance~$\tau$ (e.g., \citealt{andoni2015practical,andoni2015optimal,malkov2020efficient,prokhorenkova2020graph}).

\emph{Training.}
The training procedure for a survival kernet works as follows, for a user-specified (approximate) Euclidean distance nearest neighbor data structure:
\begin{enumerate}[itemsep=0.2em,parsep=0em,partopsep=0em]
\item Learn base neural net $\phi$ with pre-training data $(X_{1:n_{{\pre}}}^{{\pre}},Y_{1:n_{{\pre}}}^{{\pre}},D_{1:n_{{\pre}}}^{{\pre}})$ by minimizing the deep kernel survival analysis loss $L_{\text{DKSA}}$ given in equation~\eqref{eq:full-dksa-loss} with minibatch gradient descent, \emph{without truncating the kernel function}. Denote the learned neural net as $\widehat{\phi}$.
\item For the training (and not pre-training) feature vectors $X_{1:n}$, compute the embedding vectors $\widetilde{X}_{1}=\widehat{\phi}(X_{1}),\widetilde{X}_{2}=\widehat{\phi}(X_{2}),\dots,\widetilde{X}_{n}=\widehat{\phi}(X_{n})$, and construct a Euclidean-distance-based nearest neighbor data structure using training embedding vectors $\widetilde{X}_{1:n}:=(\widetilde{X}_{1},\widetilde{X}_{2},\dots,\widetilde{X}_{n})$.
\item With the help of the nearest neighbor data structure, compute a subsample of $\widetilde{X}_{1:n}$ that we denote as $\widetilde{\mathcal{Q}}_{\varepsilon}\subseteq\widetilde{X}_{1:n}$, where $\varepsilon>0$ is an approximation parameter (as $\varepsilon\rightarrow0$, no subsampling is done, i.e., $\widetilde{\mathcal{Q}}_{\varepsilon}$ becomes~$\widetilde{X}_{1:n}$). Specifically, $\widetilde{\mathcal{Q}}_{\varepsilon}$ is an $\varepsilon$-net; $\widetilde{\mathcal{Q}}_{\varepsilon}$ can be computed efficiently with the help of a nearest neighbor data structure as follows:
\begin{itemize}
\item[(a)] Initialize $\widetilde{\mathcal{Q}}_{\varepsilon}$ to be the empty set.
\item[(b)] For $i\in\{1,2,\dots,n\}$: if $\widetilde{X}_i$'s nearest neighbor in $\widetilde{\mathcal{Q}}_{\varepsilon}$ is not within Euclidean distance $\varepsilon$, add $\widetilde{X}_i$ to $\widetilde{\mathcal{Q}}_{\varepsilon}$.
\end{itemize}
\item For each training embedding vector $\widetilde{X}_i$, assign it to a single closest exemplar point in $\widetilde{\mathcal{Q}}_{\varepsilon}$ (break ties arbitrarily). After this assignment, each exemplar point $\widetilde{q}\in\widetilde{\mathcal{Q}}_{\varepsilon}$ is assigned to a subset $\mathcal{I}_{\widetilde{q}}\subseteq\{1,2,\dots,n\}$ of training points. This could be viewed as a clustering assignment, where the training points of each cluster is represented by an exemplar.
\item For each exemplar point $\widetilde{q}\in\widetilde{\mathcal{Q}}_{\varepsilon}$, recalling that $t_1<\cdots<t_m$ are the unique times of death in the training data, compute the following summary functions for $\ell=1,\dots,m$:
\begin{equation}
\mathbf{D}_{\widetilde{q}}(\ell)
:=
\sum_{j\in\mathcal{I}_{\widetilde{q}}} \eventInd_j \ind\{Y_j=t_\ell\},
\qquad \mathbf{R}_{\widetilde{q}}^+(\ell)
:=
\sum_{j\in\mathcal{I}_{\widetilde{q}}} \ind\{Y_j\ge t_\ell\}.
\label{eq:unweighted-summaries}
\end{equation}
Note that $\mathbf{D}_{\widetilde{q}}(\ell)$ is the number of deaths at time $t_{\ell}$ across training points assigned to exemplar $\widetilde{q}$, and $\mathbf{R}_{\widetilde{q}}^+(\ell)$ is the number of these training points that are ``at risk'' (could possibly die) at time $t_{\ell}$.
\end{enumerate}

\emph{Prediction.} After training a survival kernet, prediction works as follows:
\begin{itemize}
\item[~] For test feature vector $x$, first compute the embedding vector $\widetilde{x}=\widehat{\phi}(x)$. Then form the hazard estimate
\begin{equation}
\widetilde{h}_{\widetilde{\mathcal{Q}}_{\varepsilon}}(\ell|\widetilde{x})
:=
{\displaystyle \frac{\sum_{\widetilde{q}\in \widetilde{\mathcal{Q}}_{\varepsilon}}\widetilde{\mathbb{K}}(\widetilde{x},\widetilde{q})\mathbf{D}_{\widetilde{q}}(\ell)}{\sum_{\widetilde{q}\in \widetilde{\mathcal{Q}}_{\varepsilon}}\widetilde{\mathbb{K}}(\widetilde{x},\widetilde{q})\mathbf{R}_{\widetilde{q}}^+(\ell)}}
\qquad\text{for }\ell=1,2,\dots,m,
\label{eq:kernel-netting-for-survival-analysis-hazard}
\end{equation}
where
$
\widetilde{\mathbb{K}}(\widetilde{x},\widetilde{q})
:= {K(\|\widetilde{x}-\widetilde{q}\|_{2})\ind\{\|\widetilde{x}-\widetilde{q}\|_{2}\le \tau\}}
$; as a reminder, $K:[0,\infty)\rightarrow[0,\infty)$ is a nonincreasing function (e.g., $K(u)=\exp(-u^2)$). Note that the nearest neighbor data structure constructed in step~2 of the training procedure can be used to find all exemplars in $\widetilde{\mathcal{Q}}_{\varepsilon}$ within distance~$\tau$ of $\widetilde{x}$ (this is needed to compute~$\widetilde{\mathbb{K}}$). The conditional survival function can be estimated by computing
\begin{equation}
\widetilde{S}_{\widetilde{\mathcal{Q}}_{\varepsilon}}(t|\widetilde{x}):={\prod_{\ell=1}^{m}(1-\widetilde{h}_{\widetilde{\mathcal{Q}}_{\varepsilon}}(\ell|\widetilde{x}))^{\ind\{t_{\ell}\le t\}}}
\qquad\text{for }t\ge 0.
\label{eq:kernel-netting-for-survival-analysis-survival}
\end{equation}
As a corner case, if all the kernel weights are zero for test feature vector~$x$, then we output the training set Kaplan-Meier survival function estimate~\eqref{eq:kaplan-meier-estimator} as the prediction.
\end{itemize}
\paragraph{The form of the predicted survival function and how it relates to the Kaplan-Meier estimator}
The predicted conditional survival function $\widetilde{S}_{\widetilde{\mathcal{Q}}_{\varepsilon}}$ in equation~\eqref{eq:kernel-netting-for-survival-analysis-survival} has the following interpretation. Consider an embedding vector $\widetilde{x} = \widehat{\phi}(x)$. Suppose that for some exemplar $\widetilde{q}\hspace{1pt}'\in\widetilde{\mathcal{Q}}_{\varepsilon}$, the similarity score between $\widetilde{x}$ and $\widetilde{q}\hspace{1pt}'$ is equal to $\widetilde{\mathbb{K}}(\widetilde{x},\widetilde{q}\hspace{1pt}') = 1$, whereas the similarity score between $\widetilde{x}$ and all other exemplars is~0. Put another way, the embedding vector $\widetilde{x}$ is ``purely explained'' by exemplar $\widetilde{q}\hspace{1pt}'$ and none of the other exemplars. Then in this case, equation~\eqref{eq:kernel-netting-for-survival-analysis-hazard} would become
\[
\widetilde{h}_{\widetilde{\mathcal{Q}}_{\varepsilon}}(\ell|\widetilde{x})
=
\frac{\mathbf{D}_{\widetilde{q}\hspace{1pt}'}(\ell)}{\mathbf{R}_{\widetilde{q}\hspace{1pt}'}^+(\ell)}
\qquad\text{for }\ell=1,2,\dots,m,
\]
and equation~\eqref{eq:kernel-netting-for-survival-analysis-survival} would become
\begin{equation}
\widetilde{S}_{\widetilde{\mathcal{Q}}_{\varepsilon}}(t|\widetilde{x})
={\prod_{\ell=1}^{m}(1-\widetilde{h}_{\widetilde{\mathcal{Q}}_{\varepsilon}}(\ell|\widetilde{x}))^{\ind\{t_{\ell}\le t\}}}
={\prod_{\ell=1}^{m}\Big(1-\frac{\mathbf{D}_{\widetilde{q}\hspace{1pt}'}(\ell)}{\mathbf{R}_{\widetilde{q}\hspace{1pt}'}^+(\ell)}\Big)^{\ind\{t_{\ell}\le t\}}}
\qquad\text{for }t\ge 0,
\label{eq:cluster-KM}
\end{equation}
which is precisely the Kaplan-Meier estimator (equation~\eqref{eq:kaplan-meier-estimator}) restricted to training points that have been assigned to the cluster of exemplar~$\widetilde{q}\hspace{1pt}'$ (in Step~4 of the survival kernet training procedure).

In general, a data point $x$ with embedding vector $\widetilde{x}$ can have a similarity score that is nonzero for multiple exemplars. This is akin to how in topic modeling, a data point could have nonzero weights for multiple topics. In topic modeling, a key visualization strategy is to focus on a single topic at a time and look at the distribution of features (i.e., words) that show up for that topic. This visualization strategy could be thought of as reasoning about what a data point would look like if it were to be purely explained by a single topic. In a similar vein, for our survival analysis setting, our visualization strategy later is based on reasoning about what a data point would look like if it were to be purely explained by a single cluster or exemplar (again, in our setting, a cluster directly corresponds to an exemplar). Equation~\eqref{eq:cluster-KM} reveals that a data point that is purely explained by exemplar $\widetilde{q}\hspace{1pt}'$ would have a predicted conditional survival function that is just the Kaplan-Meier estimator restricted to data points in the cluster of $\widetilde{q}\hspace{1pt}'$. Note that plotting such an estimated survival function is standard in survival analysis and the resulting plots are called \emph{Kaplan-Meier curves}. Our visualization strategy later plots the Kaplan-Meier curve specific to each cluster/exemplar.

\paragraph{Connections to deep kernel survival analysis, to the original kernel netting, and to the original Kaplan-Meier estimator}
For a survival kernet model, note that if $\varepsilon=0$ (i.e., the $\varepsilon$-net consists of all training embedding vectors), and the pre-training and training sets are set to be identical, then the model becomes an approximate version of the original deep kernel survival analysis model by \citet{chen2020deep}. In particular, the approximation comes from the kernel function being truncated (set to be 0 beyond the critical threshold distance~$\tau$) at test time.

Separately, consider if the base neural net is the identity function $\phi(x)=x$ (which would require no pre-training data to learn, so we skip steps~1 and~2 of the survival kernet training procedure). Then in this case, we obtain a non-neural-net extension of the original kernel netting procedure by \citet{kpotufe2017time} to the survival analysis setting.

Lastly, if $\varepsilon\rightarrow\infty$, then the $\varepsilon$-net $\widetilde{\mathcal{Q}}_{\varepsilon}$ would consist of a single exemplar that summarizes the whole training (and not pre-training) data. Put another way, there would only be a single cluster that consists of all the training data. If $\tau\rightarrow\infty$, then the prediction for any test point would simply be the original Kaplan-Meier survival function \citep{kaplan1958nonparametric} fitted to the training data. The prediction would not actually depend on the base neural net $\phi$ in this case.

\paragraph{Outline for the remainder of this section}
In the remainder of this section, we discuss three main topics. First, we discuss model interpretability in Section~\ref{sec:model-interpretability}. The key idea here has to do with the interpretation we described above of how if a data point were to be purely explained by a single exemplar $\widetilde{q}\hspace{1pt}'$, then its predicted conditional survival function is just the Kaplan-Meier estimator restricted to training points in the cluster of $\widetilde{q}\hspace{1pt}'$.

Next, we provide an overview of the theory we have developed for survival kernets in Section~\ref{sec:theory}, starting from assumptions and the generalization error used to stating the main finite-sample guarantee and some interpretations and implications. In a nutshell, our theory says that if the embedding vectors are in some sense ``nice'' (satisfying standard theoretical assumptions made in nonparametric estimation and survival analysis), then a survival kernet estimates the conditional survival function arbitrarily accurately (up to some pre-specified time horizon) with high probability as the amount of training data grows large. The error bound we obtain has an order of growth that, ignoring a log factor, is optimal.

Lastly, we present a variant of our training procedure in Section~\ref{sec:summary-fine-tuning} that turns out to yield significant accuracy improvements in practice although it is not covered by our theoretical analysis. This variant adds a step at the end of survival kernet training that fine-tunes the summary functions from training step~5. Specifically, note that the summary functions $\mathbf{D}_{\widetilde{q}}$ and $\mathbf{R}_{\widetilde{q}}^+$ in equation~\eqref{eq:unweighted-summaries} are constructed from training data, and it could be that these are noisy or inaccurate. After training the base neural net $\phi$ and treating it as fixed, we could then set up a fine-tuning step in which we learn $\mathbf{D}_{\widetilde{q}}$ and $\mathbf{R}_{\widetilde{q}}^+$ in a neural net framework.

\subsection{Model Interpretability}
\label{sec:model-interpretability}

\paragraph{Visualizing clusters}
Recall that each exemplar $\widetilde{q}\hspace{1pt}'\in\widetilde{\mathcal{Q}}_{\varepsilon}$ corresponds to a cluster of training points, i.e., there is a one-to-one correspondence between exemplars and clusters. For tabular data, we can create a heatmap visualization to help us quickly identify how clusters differ in terms of whether specific feature values are more prominent for specific clusters. In particular, we set the rows of the heatmap to correspond to different features, the columns to correspond to different clusters, and the heatmap intensity values to correspond to the fraction of points in a cluster with a specific feature value. As a concrete example, we show this heatmap visualization in Figure~\ref{fig:support}\subref{subfig:support-heatmap} for
a dataset on predicting time until death of hospitalized patients from the Study to Understand Prognoses, Preferences, Outcomes, and Risks of Treatment (\textsc{support}) \citep{knaus1995support}. For instance, we see that the leftmost cluster (column) in the heatmap corresponds to patients who often have metastatic cancer, who are mostly between 49.28 and 76.02 years of age, and who have at least one comorbidity. In contrast, the rightmost cluster corresponds to young patients without cancer. Our proposed heatmap visualization is a more visual way of conveying information like that of \citet{chapfuwa2020survival} in their Table~2.

\begin{figure}[!t]
\centering
\begin{subfigure}[b]{0.48\linewidth}
\centering
\includegraphics[scale=.42]{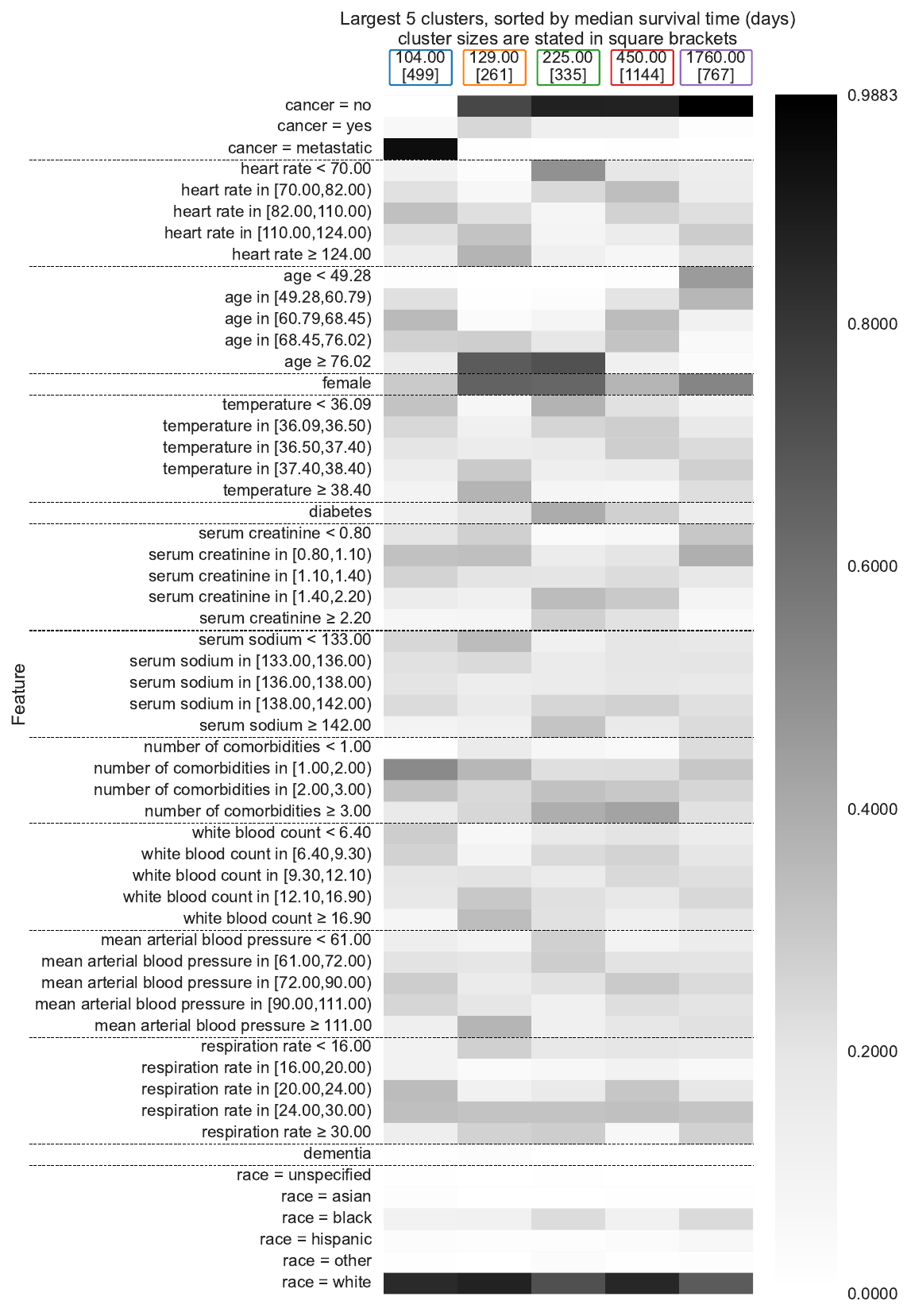}
\caption{}
\label{subfig:support-heatmap}
\end{subfigure}
~~~
\begin{subfigure}[b]{0.48\linewidth}
\centering
\hspace{1em}
\includegraphics[scale=.42]{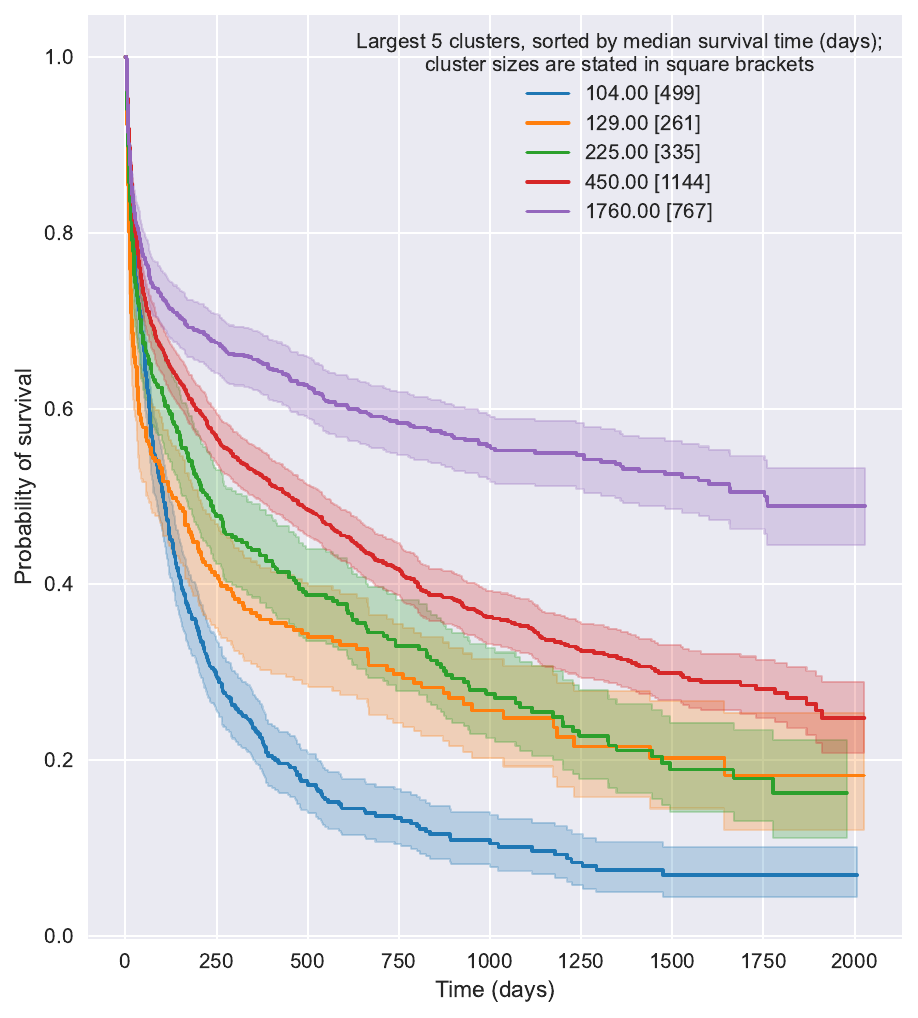}
\caption{}
\label{subfig:support-km}
\end{subfigure}
\caption{Visualization of the largest 5 clusters found by a survival kernet model trained on the \textsc{support} dataset (we limit the number of clusters shown for ease of exposition and to prevent the plots from being too cluttered); more information on how the model is trained is in Section~\ref{sec:experiments}. Panel (a) shows a heatmap visualization that readily provides information on how the clusters are different, highlighting feature values that are prominent for specific clusters; the dotted horizontal lines separate features that correspond to the same underlying variable. Panel (b) shows Kaplan-Meier survival curves with 95\% confidence intervals for the same clusters as in panel (a); the x-axis measures the number of days since a patient entered the study.}
\label{fig:support}
\end{figure}

Moreover, per exemplar $\widetilde{q}\hspace{1pt}'\in\widetilde{\mathcal{Q}}_{\varepsilon}$, we can compute its corresponding cluster's Kaplan-Meier survival curve using equation~\eqref{eq:cluster-KM}. As we pointed out previously, this predicted survival curve is precisely what the survival kernet model would predict for a data point that is purely explained by $\widetilde{q}\hspace{1pt}'$ and none of the other exemplars. We can plot the survival curves of different exemplars as shown in Figure~\ref{fig:support}\subref{subfig:support-km}, where we have also included 95\% confidence intervals using the standard exponential Greenwood formula \citep{kalbfleisch1980statistical}; the x-axis is the time since a patient entered the study.

Note that the time (x-axis value) at which a Kaplan-Meier survival curve crosses probability 1/2 (y-axis value) corresponds to a median survival time estimate (since half the patients survived up to this time and the rest survive beyond this time). Thus, per cluster, we can obtain a median survival time estimate (if the cluster's Kaplan-Meier curve never crosses probability 1/2, then the median survival time is greater than the largest observed time for the cluster). In fact, for the heatmap in Figure~\ref{fig:support}\subref{subfig:support-heatmap}, we have sorted the clusters (i.e., the columns) by median survival time from smallest to largest.

Comparing Kaplan-Meier curves across different groups of individuals is standard practice in survival analysis and gives a quick way to see which group is better or worse off over time. For instance, in this case, the purple cluster (with median survival time $1760$ days) generally has higher survival probability across time compared to the four other clusters that have been plotted. Meanwhile, initially the orange cluster (median survival time 129 days) appears worse off than the green cluster (median survival time 225 days) but then these two clusters' Kaplan-Meier curves start heavily overlapping after around 1200 days since patients entered the study.

We remark that the SUPPORT dataset has patients within nine disease groups, of which three directly are related to cancer (colon cancer, lung cancer, multiple organ system failure with cancer). For these three disease groups, age and comorbidites are known to be predictors of survival (e.g., \citealt{van2015impact,asmis2008age,frey2007co}). In this sense, the heatmap appears to surface some trends that agree with existing clinical literature. There are of course other variables present too. Overall, our proposed heatmap visualization and accompanying survival curve plot is meant as a debugging tool, to help a modeler see how the clusters relate to raw features and also to survival time distributions, revealing possible associations that may warrant additional investigation and that could be discussed with domain experts.

We provide a few technical details regarding how we generated Figure~\ref{fig:support}\subref{subfig:support-heatmap}. Only for visualization purposes, continuous features have been discretized into 5 equal-sized bins (fewer bins are used if there are not enough data points in a cluster, or if the 20/40/60/80 percentile threshold values for a continuous feature are not all unique). The survival kernet models themselves do \emph{not} require continuous features to be discretized first. Moreover, we have sorted the features (i.e., rows of the heatmap) in the following manner: per row, we compute the intensity range (i.e., maximum minus minimum intensity values), and then we sort the rows from largest to smallest intensity range, with the constraint that rows corresponding to the same underlying variable (the same continuous feature that has been discretized, or the same categorical variable) are still grouped together. For example, the reason why the variable ``cancer'' shows up first in Figure~\ref{fig:support}\subref{subfig:support-heatmap} is that among the three different cancer rows, one of them has the highest intensity range across all heatmap rows.

\paragraph{Data-point-specific information}
Next, we observe that for any test feature vector $x$ with embedding vector $\widetilde{x}=\widehat{\phi}(x)$, its hazard estimate is given by equation~\eqref{eq:kernel-netting-for-survival-analysis-hazard}, reproduced below for ease of exposition:
\[
\widetilde{h}_{\widetilde{\mathcal{Q}}_{\varepsilon}}(\ell|\widetilde{x})
=
{\displaystyle \frac{\sum_{\widetilde{q}\in \widetilde{\mathcal{Q}}_{\varepsilon}}\widetilde{\mathbb{K}}(\widetilde{x},\widetilde{q})\mathbf{D}_{\widetilde{q}}(\ell)}{\sum_{\widetilde{q}\in \widetilde{\mathcal{Q}}_{\varepsilon}}\widetilde{\mathbb{K}}(\widetilde{x},\widetilde{q})\mathbf{R}_{\widetilde{q}}^+(\ell)}}.
\]
In the numerator and denominator summations, the only exemplars $\widetilde{q}$ that contribute to the calculation are ones for which $\widetilde{\mathbb{K}}(\widetilde{x},\widetilde{q})$ is positive. Again, since there is a one-to-one correspondence between exemplars and clusters, this means that each test feature vector's hazard is modeled by only a subset of the clusters.

For any test feature vector~$x$, we can readily determine which exemplars/clusters could possibly contribute to the prediction for $x$ by figuring out which $\widetilde{q}\in\widetilde{\mathcal{Q}}_{\varepsilon}$ satisfy $\widetilde{\mathbb{K}}(\widetilde{x},\widetilde{q})>0$. Of course, how large these weights are can also give a sense of the relative importance of the clusters for $x$. We could, for instance, make the same plots as in Figure~\ref{fig:support} but only show the clusters that have nonzero weight for a specific test feature vector $x$.

\subsection{Theory of Survival Kernets}
\label{sec:theory}

We now provide an overview of our theoretical analysis of survival kernets.
We begin with assumptions on the embedding space that are standard in nonparametric estimation theory (Section~\ref{sec:assumptions-emb-space}), although these assumptions are typically imposed on the raw feature space rather than the embedding space. We then state our assumption on how the embedding space relates to survival and censoring times (Section~\ref{sec:assumptions-emb-to-label}).
All proofs are in Appendix~\ref{sec:proofs}.

\subsubsection{The Embedding Space}
\label{sec:assumptions-emb-space}

We assume that the raw feature vectors are sampled i.i.d.~from a marginal distribution $\mathbb{P}_{\mathsf{X}}$. As we treat the pre-trained neural net $\widehat{\phi}$ as fixed, then the random embedding vector $\widetilde{X}=\widehat{\phi}(X)$ is sampled from some distribution $\mathbb{P}_{\widetilde{\mathsf{X}}}$ instead. We require $\mathbb{P}_{\widetilde{\mathsf{X}}}$ to satisfy some mild regularity conditions.

As an example, we remark that an embedding space that is uniform over a unit hypersphere (i.e., embedding vectors are Euclidean vectors with norm~1) satisfies all the assumptions of this subsection. For ways to encourage the learned embedding space to be uniform over the unit hypersphere, see the papers by \citet{wang2020understanding} and \citet{liu2021learning}.

Our theory requires a technical assumption that ensures that the event that $\widetilde{X}$ lands within distance $\tau$ of an embedding vector $\widetilde{x}\in\widetilde{\mathcal{X}}$ has a well-defined probability. This event could be phrased as~$\widetilde{X}$ landing in the closed ball of radius~$\tau$ centered at~$\widetilde{x}$, denoted as $B(\widetilde{x},\tau):={\{\widetilde{x}'\in\widetilde{\mathcal{X}}:\|\widetilde{x}-\widetilde{x}'\|_{2}\le \tau\}}$.

\medskip
\noindent
\textbf{Assumption $\mathbf{A}^{\normalfont \textit{technical}}$.}
\emph{Distribution $\mathbb{P}_{\widetilde{\mathsf{X}}}$ is a Borel probability measure with compact support ${\widetilde{\mathcal{X}}\subseteq\mathbb{R}^d}$.}

\medskip
Compactness of $\widetilde{\mathcal{X}}$ eases the exposition. Our results trivially extend to the case where embedding vectors sampled from $\mathbb{P}_{\widetilde{\mathsf{X}}}$ land in a compact region (where our theory applies) with probability at least $1-\delta$ for some $\delta\ge0$, and otherwise when the embedding vectors land outside the compact region, we tolerate a worst-case prediction error with probability~$\delta$.

We use the standard notion of covers, packings, and nets (all specialized to Euclidean distance).
\begin{definition}
For any radius $\varepsilon>0$:
\begin{itemize}[itemsep=0.2em,parsep=0em,partopsep=0em]
\item A subset $\widetilde{\mathcal{Q}}$ of $\widetilde{\mathcal{X}}$ is called an $\varepsilon$\emph{-cover} of $\widetilde{\mathcal{X}}$ if for any $\widetilde{x}\in\widetilde{\mathcal{X}}$, there exists $\widetilde{q}\in\widetilde{\mathcal{Q}}$ such that $\|\widetilde{x}-\widetilde{q}\|_{2}\le\varepsilon$.
\item A subset $\widetilde{\mathcal{Q}}$ of $\widetilde{\mathcal{X}}$ is called an $\varepsilon$\emph{-packing} of $\widetilde{\mathcal{X}}$ if for any two distinct $\widetilde{q}$ and $\widetilde{q}\hspace{1pt}'$ in $\widetilde{\mathcal{Q}}$, we have ${\|\widetilde{q}-\widetilde{q}\hspace{1pt}'\|_{2}>\varepsilon}$.
\item A subset $\widetilde{\mathcal{Q}}$ of $\widetilde{\mathcal{X}}$ is called an $\varepsilon$\emph{-net} of $\widetilde{\mathcal{X}}$ if $\widetilde{\mathcal{Q}}$ is both an $\varepsilon$-cover and an $\varepsilon$-packing of~$\widetilde{\mathcal{X}}$.
\end{itemize}
\end{definition}
Next, our theory makes use of the standard complexity notions of covering numbers and intrinsic dimension to describe the embedding space, where lower complexity will correspond to tighter error bounds. As an illustrative example, we explain how these behave when the embedding space is the unit hypersphere $\mathbb{S}^{d-1}=\{\widetilde{x}\in\mathbb{R}^{d}:\|\widetilde{x}\|_{2}=1\}$ (with $d\ge2$).
\begin{definition}
The \emph{$\varepsilon$-covering number} of $\widetilde{\mathcal{X}}$ (for Euclidean distance) is the smallest size possible for an $\varepsilon$-cover for $\widetilde{\mathcal{X}}$. We denote this number by $N(\varepsilon;\widetilde{\mathcal{X}})$.
\end{definition}
The embedding space $\widetilde{\mathcal{X}}$ being compact implies that for every $\varepsilon>0$, we have $N(\varepsilon;\widetilde{\mathcal{X}})<\infty$, and moreover, that $\widetilde{\mathcal{X}}$ has a finite diameter $\Delta_{\widetilde{\mathcal{X}}}:={\max_{\widetilde{x},\widetilde{x}'\in\widetilde{\mathcal{X}}}\|\widetilde{x}-\widetilde{x}'\|_{2}}<\infty$.
For example, the unit hypersphere $\mathbb{S}^{d-1}$ clearly has diameter 2. Moreover, its covering number is bounded as follows.
\begin{claim}[Follows from Corollary 4.2.13 by \citet{vershynin2018high}]\label{claim:unit-hypersphere-covering-number}
For any $\varepsilon>0$, the unit hypersphere has covering number
$
{N(\varepsilon;\mathbb{S}^{d-1})}\le(\frac{4}{\varepsilon}+1)^{d}.
$
\end{claim}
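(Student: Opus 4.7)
The plan is a direct application of the standard volume-based covering number bound: since $\mathbb{S}^{d-1}$ is compact with Euclidean diameter exactly $2$ (realized by any pair of antipodal points), the general inequality $N(\varepsilon;K)\le(2\,\mathrm{diam}(K)/\varepsilon+1)^d$ for bounded $K\subset\mathbb{R}^d$ specializes immediately to the target $(4/\varepsilon+1)^d$. The claim already credits this to Vershynin's Corollary 4.2.13, so the real work is just to instantiate that corollary with $K=\mathbb{S}^{d-1}$ and note $\Delta_{\mathbb{S}^{d-1}}=2$.

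To reprove the inequality from scratch, I would use a packing argument. First I would extract a maximal $\varepsilon$-separated subset $\widetilde{\mathcal{Q}}\subseteq\mathbb{S}^{d-1}$ by a standard greedy/Zorn construction, where termination is justified by the volume bound itself. By maximality, every $\widetilde{x}\in\mathbb{S}^{d-1}$ lies within Euclidean distance $\varepsilon$ of some $\widetilde{q}\in\widetilde{\mathcal{Q}}$, so $\widetilde{\mathcal{Q}}$ is also an $\varepsilon$-cover, and in particular $N(\varepsilon;\mathbb{S}^{d-1})\le|\widetilde{\mathcal{Q}}|$.

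Second, the separation property makes the open Euclidean $\varepsilon/2$-balls around the points of $\widetilde{\mathcal{Q}}$ pairwise disjoint in $\mathbb{R}^d$. Fixing any $\widetilde{q}_0\in\widetilde{\mathcal{Q}}$, the diameter-$2$ bound forces every such ball to be contained in the single closed ball of radius $2+\varepsilon/2$ centered at $\widetilde{q}_0$. Comparing $d$-dimensional Lebesgue volumes, and using that the volume of a Euclidean $d$-ball scales as the $d$-th power of its radius, then gives $|\widetilde{\mathcal{Q}}|\cdot(\varepsilon/2)^d\le(2+\varepsilon/2)^d$; dividing through by $(\varepsilon/2)^d$ yields exactly $(4/\varepsilon+1)^d$.

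There is no substantial obstacle. The only minor points to watch are that the packing can indeed be taken maximal (this uses compactness of $\mathbb{S}^{d-1}$) and that the volume comparison is performed in the ambient $\mathbb{R}^d$ rather than intrinsically on the measure-zero sphere; both are automatic once the Euclidean balls are regarded as subsets of $\mathbb{R}^d$ rather than of the embedding space itself.
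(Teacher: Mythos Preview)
Your proposal is correct, but it takes a different route from the paper. The paper does \emph{not} apply a diameter-based bound directly to $\mathbb{S}^{d-1}$. Instead it invokes Vershynin's Corollary 4.2.13 specifically for the solid unit ball $\mathcal{B}_d$ at scale $\varepsilon/2$, obtaining $N(\varepsilon/2;\mathcal{B}_d)\le(4/\varepsilon+1)^d$, and then transfers this cover to the sphere: from the minimal $\varepsilon/2$-cover of $\mathcal{B}_d$ it keeps only those centers $\xi_i$ that are within $\varepsilon/2$ of some sphere point, picks for each such $\xi_i$ a point $s_i\in\mathbb{S}^{d-1}\cap B(\xi_i,\varepsilon/2)$, and uses the triangle inequality to show $B(s_i,\varepsilon)\supseteq B(\xi_i,\varepsilon/2)$, so that the $s_i$'s form an $\varepsilon$-cover of $\mathbb{S}^{d-1}$ by points lying on the sphere. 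Your argument instead runs the volumetric packing comparison directly on $\mathbb{S}^{d-1}$ in ambient $\mathbb{R}^d$, using only that $\Delta_{\mathbb{S}^{d-1}}=2$; this is more self-contained and automatically produces cover points on the sphere (since the maximal packing is taken inside $\mathbb{S}^{d-1}$), so you avoid the transfer step entirely. The paper's approach, on the other hand, makes the ``follows from Corollary 4.2.13'' attribution literal---that corollary is stated for the ball, and the only additional work is the $\varepsilon/2\to\varepsilon$ triangle-inequality transfer. One small inaccuracy in your write-up: Vershynin's Corollary 4.2.13 is the bound $(2/\varepsilon+1)^d$ for the unit \emph{ball}, not the general diameter inequality you quote; your diameter version is the right generalization, but it is your own packing argument (or a trivial scaling), not the cited corollary verbatim.
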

Covering numbers provide a way to derive error bounds. Another way is to assume a known ``intrinsic dimension'' of the embedding space. We use the following notion of intrinsic dimension.

\medskip
\noindent
\textbf{Assumption $\mathbf{A}^{\normalfont \textit{intrinsic}}$.}
\emph{There exist a positive integer $d'>0$ (called the \emph{intrinsic dimension} of $\mathbb{P}_{\widetilde{\mathsf{X}}}$) and positive constants $C_{d'}$ and $r^*$ such that for any $\widetilde{x}\in\widetilde{\mathcal{X}}$ and $r\in(0,r^*]$, we have
$
\mathbb{P}_{\widetilde{\mathsf{X}}}(B(\widetilde{x},r))\ge C_{d'}r^{d'}.
$}

\medskip
When embedding vectors are on the unit hypersphere, then under a mild regularity condition, the embedding space has intrinsic dimension $d-1$.
\begin{claim}\label{claim:uniform-unit-hypersphere-intrinsic-dimension}
If $\mathbb{P}_{\widetilde{\mathsf{X}}}$ has a probability density function that is 0 outside of $\mathbb{S}^{d-1}$ and lower-bounded by a constant $c_{\min}>0$ over $\mathbb{S}^{d-1}$, then the embedding space has intrinsic dimension $d-1$.
\end{claim}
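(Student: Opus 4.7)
The plan is to reduce the claim to the standard spherical-geometry fact that small Euclidean balls meet the sphere in spherical caps of $(d-1)$-dimensional surface area $\Theta(r^{d-1})$. Since $\mathbb{P}_{\widetilde{\mathsf{X}}}$ is supported on $\mathbb{S}^{d-1}$, its density $f\ge c_{\min}$ must be with respect to the natural reference measure on the sphere, namely the $(d-1)$-dimensional Hausdorff (surface) measure $\sigma_{d-1}$. Therefore, for any $\widetilde{x}\in\mathbb{S}^{d-1}$ and any $r>0$,
\[
\mathbb{P}_{\widetilde{\mathsf{X}}}\bigl(B(\widetilde{x},r)\bigr) \;\ge\; c_{\min}\,\sigma_{d-1}\bigl(B(\widetilde{x},r)\cap\mathbb{S}^{d-1}\bigr),
\]
and the task reduces to lower-bounding the cap area on the right.

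I would first translate the Euclidean-ball constraint into a constraint on geodesic distance: two points of $\mathbb{S}^{d-1}$ at angular separation $\phi$ are at Euclidean distance $2\sin(\phi/2)$, so $B(\widetilde{x},r)\cap\mathbb{S}^{d-1}$ coincides with the cap of half-angle $\theta(r):=2\arcsin(r/2)$ about $\widetilde{x}$ (well-defined for $r\le 2$). Since $\arcsin$ is convex on $[0,1]$ with $\arcsin'(0)=1$, one gets $\theta(r)\ge r$ for $r\in[0,1]$. By the standard polar decomposition of $\sigma_{d-1}$ with pole $\widetilde{x}$,
\[
\sigma_{d-1}\bigl(B(\widetilde{x},r)\cap\mathbb{S}^{d-1}\bigr) \;=\; \omega_{d-2}\int_{0}^{\theta(r)}(\sin\phi)^{d-2}\,d\phi,
\]
where $\omega_{d-2}$ is the surface area of $\mathbb{S}^{d-2}$ (using the convention $\omega_0:=2$ when $d=2$, in which case the integrand is $1$).

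Next, I would restrict to $r\in(0,r^*]$ with, e.g., $r^*=1$, ensuring $\theta(r)\le\pi/2$, and lower-bound the integrand via Jordan's inequality $\sin\phi\ge(2/\pi)\phi$ on $[0,\pi/2]$. For $d\ge 3$ this gives
\[
\int_{0}^{\theta(r)}(\sin\phi)^{d-2}\,d\phi \;\ge\; \Bigl(\tfrac{2}{\pi}\Bigr)^{d-2}\frac{\theta(r)^{d-1}}{d-1} \;\ge\; \Bigl(\tfrac{2}{\pi}\Bigr)^{d-2}\frac{r^{d-1}}{d-1},
\]
and for $d=2$ it simply evaluates to $\theta(r)\ge r$. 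Combining everything yields $\mathbb{P}_{\widetilde{\mathsf{X}}}(B(\widetilde{x},r))\ge C_{d-1}\,r^{d-1}$ with $C_{d-1}:=c_{\min}\,\omega_{d-2}(2/\pi)^{d-2}/(d-1)$, which is exactly Assumption $\mathbf{A}^{\text{intrinsic}}$ with $d'=d-1$.

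There is no substantive obstacle; the argument is routine spherical geometry. The only care required is (a) recognizing that the density must be with respect to $\sigma_{d-1}$ rather than $d$-dimensional Lebesgue measure, and (b) the bookkeeping in relating Euclidean radius to cap half-angle and in keeping the small-argument bounds on $\arcsin$ and $\sin$ valid, which is why we cap $r$ at $r^*=1$.
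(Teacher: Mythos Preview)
Your proposal is correct. Both you and the paper reduce the claim to lower-bounding the surface area of the spherical cap $\mathbb{S}^{d-1}\cap B(\widetilde{x},r)$, but you diverge in how you execute that bound. You convert the Euclidean radius to a cap half-angle $\theta(r)=2\arcsin(r/2)$, invoke the exact polar-decomposition formula $\omega_{d-2}\int_0^{\theta(r)}(\sin\phi)^{d-2}d\phi$ for the cap area, and then bound the integrand via $\arcsin(x)\ge x$ and Jordan's inequality. The paper instead uses a purely geometric projection: it observes that the cap's $(d-1)$-dimensional surface area is at least the $(d-1)$-dimensional volume of its orthogonal projection onto the tangent hyperplane at $\widetilde{x}$, computes that projection to be a ball of radius $\tfrac{1}{2}r\sqrt{4-r^2}$ via the Pythagorean theorem, and for $r\in(0,1]$ lower-bounds this radius by $\tfrac{\sqrt{3}}{2}r$. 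Your route is the standard analytic one and requires knowing the cap-area integral formula; the paper's route is more elementary (Pythagoras and a picture) and avoids any integral. Both yield a bound of the form $C_{d-1}r^{d-1}$ with explicit but different constants, so either completes the proof of Assumption $\mathbf{A}^{\text{intrinsic}}$ with $d'=d-1$.
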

We remark that for embedding vectors on the hypersphere, one desirable property is that they are uniformly distributed \citep{wang2020understanding,liu2021learning}, which is of course a special case of Claim~\ref{claim:uniform-unit-hypersphere-intrinsic-dimension}.

\subsubsection{Relating Embedding Vectors to Survival and Censoring Times}
\label{sec:assumptions-emb-to-label}

Next, we impose a survival analysis assumption by \citet{chen2019nearest}, which is a slight variant on an earlier assumption by \citet{dabrowska1989uniform}.

\medskip
\noindent
\textbf{Assumption $\mathbf{A}^{\normalfont \textit{survival}}$.}
\emph{In addition to~Assumption $\mathbf{A}^{\text{technical}}$, we further assume that the conditional survival, censoring, and observed time distributions $\mathbb{P}_{\mathsf{T}|\widetilde{\mathsf{X}}}$, $\mathbb{P}_{\mathsf{C}|\widetilde{\mathsf{X}}}$, and $\mathbb{P}_{\mathsf{Y}|\widetilde{\mathsf{X}}}$ correspond to continuous random variables with PDF's $\widetilde{f}_{\mathsf{T}|\widetilde{\mathsf{X}}}$, $\widetilde{f}_{\mathsf{C}|\widetilde{\mathsf{X}}}$, and $\widetilde{f}_{\mathsf{Y}|\widetilde{\mathsf{X}}}$. The conditional survival function that we aim to predict is precisely $\widetilde{S}(t|\widetilde{x}):=\int_t^\infty \widetilde{f}_{\mathsf{T}|\widetilde{\mathsf{X}}}(s|\widetilde{x})ds$. To be able to estimate this function for any $\widetilde{x}\in\widetilde{\mathcal{X}}$, we assume that censoring does not almost surely happen for every $\widetilde{x}\in\widetilde{\mathcal{X}}$. Moreover, we assume the following:
\begin{itemize}[leftmargin=1.7em,itemsep=0.2em,parsep=0em,partopsep=0em]
\item[(a)] (Observed time distribution has large enough tails) For a user-specified time $t_{\text{horizon}}>0$, there exists a constant $\theta>0$ such that $\int_{t_{\text{horizon}}}^\infty f_{\mathsf{Y}|\widetilde{\mathsf{X}}}(s|\widetilde{x}) ds \ge\theta$ for all $\widetilde{x}\in\widetilde{\mathcal{X}}$.
\item[(b)] (Smoothness: embedding vectors that are close by should have similar survival time distributions and also similar censoring time distributions) PDF's $\widetilde{f}_{\mathsf{T}|\widetilde{\mathsf{X}}}$ and $\widetilde{f}_{\mathsf{C}|\widetilde{\mathsf{X}}}$ are assumed to be H\"{o}lder continuous with respect to embedding vectors, i.e., there exist constants $\lambda_{\mathsf{T}}>0$, $\lambda_{\mathsf{C}}>0$, and $\alpha>0$ such that for all $\widetilde{x},\widetilde{x}'\in\widetilde{\mathcal{X}}$,
\begin{align*}
|\widetilde{f}_{\mathsf{T}|\widetilde{\mathsf{X}}}(t|\widetilde{x})-\widetilde{f}_{\mathsf{T}|\widetilde{\mathsf{X}}}(t|\widetilde{x}')| \le\lambda_{\mathsf{T}}\|\widetilde{x}-\widetilde{x}'\|_{2}^{\alpha}, \\
\quad
|\widetilde{f}_{\mathsf{C}|\widetilde{\mathsf{X}}}(t|\widetilde{x})-\widetilde{f}_{\mathsf{C}|\widetilde{\mathsf{X}}}(t|\widetilde{x}')| \le\lambda_{\mathsf{C}}\|\widetilde{x}-\widetilde{x}'\|_{2}^{\alpha}.
\end{align*}
\end{itemize}}

\medskip
For any estimate $\widetilde{G}$ of the true conditional survival function~$\widetilde{S}$, our theory uses the generalization error
\[
\mathcal{L}_{\text{survival}}(\widetilde{G},\widetilde{S})
:=\mathbb{E}_{\widetilde{X}\sim\mathbb{P}_{\widetilde{\mathsf{X}}}}
  \Big[\frac{\int_0^{t_{\text{horizon}}}\big(\widetilde{G}(t|\widetilde{X})-\widetilde{S}(t|\widetilde{X})\big)^2 dt}{t_{\text{horizon}}}\Big].
\]
We only aim to accurately estimate $\widetilde{S}(t|\widetilde{x})$ up to the user-specified time $t_{\text{horizon}}$ (that appears in Assumption~$\mathbf{A}^{\text{survival}}$(a)). We shall plug in the survival kernet conditional survival function estimate $\widetilde{S}_{\widetilde{\mathcal{Q}}_{\varepsilon}}$ (from step~6 of the survival kernet procedure) in place of $\widetilde{G}$.

\subsubsection{Theoretical Guarantee on Prediction Accuracy\label{subsec:guarantee}}

We are now ready to state the main theoretical result of the paper, which states a generalization error bound for survival kernets.

\begin{theorem}
\label{thm:main-result}
Suppose that Assumptions $\mathbf{A}^{\text{technical}}$, $\mathbf{A}^{\text{intrinsic}}$, and $\mathbf{A}^{\text{survival}}$ hold, and we train a survival \mbox{kernet} with $\varepsilon=\beta\tau$ when constructing $\widetilde{\mathcal{Q}}_{\varepsilon}$, where $\beta\in(0,1)$ and $\tau>0$ are user-specified parameters, and the truncated kernel used is of the form in equation~\eqref{eq:truncated-kernel}. Let $\Psi=\min\{N({\textstyle \frac{(1-\beta)\tau}{2}};\widetilde{\mathcal{X}}), \frac{1}{C_{d'}((1-\beta)\tau)^{d'}}\}$, where $d'$ is the intrinsic dimension.
Then when $n\ge\mathcal{O}(\frac{K(0)}{K(\tau)((1-\beta)\tau)^{d'}})$,
\[
 \mathbb{E}_{\widetilde{X}_{1:n},Y_{1:n},D_{1:n}}[ \mathcal{L}_{\text{survival}}(\widetilde{S}_{\widetilde{\mathcal{Q}}_{\varepsilon}},\widetilde{S})]
 \le\widetilde{\mathcal{O}}\Big(\frac{1}{n}\cdot{}\frac{K^4(0)}{K^4(\tau)}\cdot\Psi\Big)
 + (1+\beta)^{2\alpha}\tau^{2\alpha}\cdot\mathcal{O}\Big(\frac{K^2(0)}{K^2(\tau)}\Big),
\]
where $\widetilde{\mathcal{O}}$ ignores log factors.
\end{theorem}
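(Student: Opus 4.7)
The plan is to decompose the generalization error into a \emph{netting approximation}, a kernel-smoothing \emph{bias}, and a stochastic \emph{variance} piece, and then integrate the pointwise control over $\mathbb{P}_{\widetilde{\mathsf{X}}}$ using either the covering number or the intrinsic dimension of $\widetilde{\mathcal{X}}$. First I would introduce an un-netted intermediate estimator defined by the truncated-kernel conditional Kaplan--Meier on all $n$ training embeddings $\widetilde{X}_{1:n}$, and exploit the fact that each $\widetilde{X}_i$ lies within $\varepsilon=\beta\tau$ of its assigned exemplar $\widetilde{q}$; by monotonicity of $K$, this sandwiches $\widetilde{\mathbb{K}}(\widetilde{x},\widetilde{X}_i)$ between truncated-kernel values at the exemplar over balls of radii $(1-\beta)\tau$ and $(1+\beta)\tau$ around $\widetilde{x}$, so that the netting discrepancy is absorbed into $K(0)/K(\tau)$ factors and an effective enlargement of the support.

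Because $\widetilde{S}_{\widetilde{\mathcal{Q}}_{\varepsilon}}$ has the Kaplan--Meier product form, I would next apply the telescoping identity used by \citet{chen2019nearest} to bound $\int_0^{t_{\text{horizon}}}(\widetilde{S}_{\widetilde{\mathcal{Q}}_{\varepsilon}}(t|\widetilde{x})-\widetilde{S}(t|\widetilde{x}))^2\,dt$ by a sum of squared per-step hazard discrepancies (Assumption~$\mathbf{A}^{\text{survival}}$(a) keeps the hazard denominators bounded below, making the time-discretization error harmless). Each $\widetilde{h}_{\widetilde{\mathcal{Q}}_{\varepsilon}}(\ell|\widetilde{x})$ is a ratio of weighted i.i.d.~sums, so standard algebra for ratio differences reduces the analysis to Bernstein-type concentration for the numerator and denominator separately together with a bias bound on their expectations. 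H\"{o}lder continuity of $\widetilde{f}_{\mathsf{T}|\widetilde{\mathsf{X}}}$ and $\widetilde{f}_{\mathsf{C}|\widetilde{\mathsf{X}}}$ over the enlarged $(1+\beta)\tau$-ball yields the $(1+\beta)^{2\alpha}\tau^{2\alpha}$ squared-bias factor; lower-bounding the expected denominator by $n\cdot K(\tau)\cdot C_{d'}((1-\beta)\tau)^{d'}$ via Assumption~$\mathbf{A}^{\text{intrinsic}}$ and upper-bounding numerators by $K(0)$ produces the $K^2(0)/K^2(\tau)$ overhead on bias, which is squared to $K^4(0)/K^4(\tau)$ on variance. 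The sample-size threshold $n\ge\mathcal{O}(K(0)/(K(\tau)((1-\beta)\tau)^{d'}))$ is exactly what is needed to keep the denominator concentrated around its mean with high probability, so that the ratio-bound algebra is not vacuous.

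Finally, to pass from pointwise bounds to the integrated criterion $\mathcal{L}_{\text{survival}}$, I would either union-bound over a $\frac{(1-\beta)\tau}{2}$-cover of $\widetilde{\mathcal{X}}$ of size $N(\frac{(1-\beta)\tau}{2};\widetilde{\mathcal{X}})$---exploiting that nearby test embeddings differ by only a negligible H\"{o}lder slack in the hazard---or partition the support by at most $1/(C_{d'}((1-\beta)\tau)^{d'})$ balls whose $\mathbb{P}_{\widetilde{\mathsf{X}}}$-masses are lower-bounded via Assumption~$\mathbf{A}^{\text{intrinsic}}$, taking the minimum of the two to obtain $\Psi$; converting the resulting high-probability bound into an expectation contributes the logarithmic factor hidden in $\widetilde{\mathcal{O}}$. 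The hardest step is the simultaneous bookkeeping of three intertwined approximations: the random netting structure (whose cluster sizes are themselves random), the deterministic truncation at radius $\tau$, and the random denominator lower bound. One must show that the netting error is controlled entirely through $K(0)/K(\tau)$ and the $(1\pm\beta)\tau$ sandwich, that the denominator of the netted hazard remains at least of order $n\cdot K(\tau)\cdot C_{d'}((1-\beta)\tau)^{d'}$ with overwhelming probability despite the random cluster assignment, and that the H\"{o}lder bias is valid over the enlarged $(1+\beta)\tau$-ball. Threading \citet{chen2019nearest}'s conditional Kaplan--Meier machinery through \citet{kpotufe2017time}'s netting sandwich is the technically delicate piece; the rest reduces to routine Bernstein bounds, H\"{o}lder expansions, and Kaplan--Meier telescoping.
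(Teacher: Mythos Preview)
Your high-level architecture---the $(1\pm\beta)\tau$ sandwich for the netting error, a bias--variance split governed by H\"older continuity and $\mathbb{P}_{\widetilde{\mathsf{X}}}(B(\cdot,(1-\beta)\tau))$, and integration over the test embedding---is sound and aligns with the paper. The central decomposition, however, is genuinely different. The paper neither introduces an un-netted intermediate estimator nor telescopes the Kaplan--Meier product into per-step hazard discrepancies; instead it works directly on the netted estimator, passes to logarithms via $|a-b|\le|\log a-\log b|$ for $a,b\in(0,1]$, and Taylor-expands $\log(1+K_i/d_K^+)$ to split $\log\widehat{S}$ into three pieces $W_1+W_2+W_3$: a kernel-regression term on hypothetical labels $-D_i\ind\{Y_i\le t\}/S_{\mathsf{Y}|\mathsf{U}}(Y_i|u)$, an observed-time tail-estimation term handled by a weighted DKW-type inequality, and the Taylor remainder (which is what actually generates the $K^4(0)/K^4(\tau)$ power). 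This $W_1+W_2+W_3$ device \emph{is} the technique of \citet{chen2019nearest}---that paper does not use telescoping, so your attribution is incorrect. The final integration step also differs: rather than a union bound over a $\frac{(1-\beta)\tau}{2}$-cover, the paper derives a pointwise-in-$u$ expected-error bound conditional on three good events ($\mathcal{E}_\beta$, $\mathcal{E}_{\text{horizon}}$, $\mathcal{E}_{\text{CDF}}$), uses the trivial bound $1$ on their complement, and then directly bounds $\mathbb{E}_U[1/\mathbb{P}_{\widetilde{\mathsf{X}}}(B(U,(1-\beta)\tau))]\le\Psi$, avoiding any uniform-in-$u$ control or H\"older-slack argument between nearby test points.

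One caution about your telescoping route: the unique death times number $O(n)$, so reducing $|\widehat{S}-S|^2$ to a sum of squared per-step hazard errors and then applying Bernstein term-by-term risks accumulating a factor of $n$ unless you first aggregate into a cumulative-hazard (Nelson--Aalen-type) object---which is essentially what the log-transform accomplishes. The paper's decomposition collapses the time dimension into a single integral inside $W_1$ and a single $\sup_{s\ge 0}$ CDF deviation inside $W_2$, so the random number of death times never appears as a multiplicative factor in the error bound.
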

The error bound consists of two terms, which correspond to variance and bias respectively. When the embedding space has intrinsic dimension $d'$, this error bound is, up to a log factor, optimal in the case where $K(u)=1$ for $u\ge0$, and $\tau=\widetilde{\mathcal{O}}(n^{-1/(2\alpha+d')})$. In this scenario, the survival error bound is $\widetilde{\mathcal{O}}(n^{-2\alpha/(2\alpha+d')})$, which matches the lower bound for conditional CDF estimation by \citet{chagny2014adaptive} (this is a special case of the survival analysis setup in which there is no censoring).
Of course, in practice, $K$ is typically chosen to decay, so $\tau$ should be chosen to not be too large due to the error bound's dependence on the ratio $K(0)/K(\tau)$.

\paragraph{High-level proof overview}
We combine proof ideas of \citet{kpotufe2017time} and \citet{chen2019nearest}. What these authors consider to be the raw feature space is what we take to be the embedding space (the output space of the already trained neural net $\widehat{\phi}$). In this high-level proof overview, we only highlight some of the key ideas of our proof.

The bulk of the proof treats the test point $x$ as fixed. Its corresponding embedding vector is $\widetilde{x}=\widehat{\phi}(x)$. Recall that $\widetilde{\mathcal{Q}}_{\varepsilon}$ is the set of exemplars in the embedding space, and that step 4 of the survival kernet training procedure assigns each exemplar point $\widetilde{q}\in\widetilde{\mathcal{Q}}_{\varepsilon}$ to a subset $\mathcal{I}_{\widetilde{q}}\subset\{1,2,\dots,n\}$ of training points. In particular, every training point is assigned to exactly one exemplar so that
\[
\underbrace{\{1,2,\dots,n\}}_{\text{training data indices}}=\coprod_{\widetilde{q}\in\widetilde{Q}_{\varepsilon}}\mathcal{I}_{\widetilde{q}},
\]
where ``$\coprod$'' denotes a disjoint union. Following \citet{kpotufe2017time}, an initial key observation is that the only training points that can possibly contribute to the prediction for $\widetilde{x}$ (i.e., they have nonzero truncated kernel weight) are the ones with indices in
\[
\mathcal{N}_{\widetilde{Q}_{\varepsilon}}(\widetilde{x}):=\coprod_{\widetilde{q}\in\widetilde{Q}_{\varepsilon}\text{ s.t.~}\|\widetilde{x}-\widetilde{q}\|_{2}\le\tau}\mathcal{I}_{\widetilde{q}}.
\]
Kpotufe and Verma's analysis focuses on regression in which they then proceed to analyzing a kernel-weighted average of the labels of the training points in $\mathcal{N}_{\widetilde{Q}_{\varepsilon}}(\widetilde{x})$. In the survival analysis setting, the challenge is that prediction is more complicated than taking a kernel-weighted average of $Y_{i}$'s.

To address predicting the conditional survival function $\widetilde{S}(\cdot|\widetilde{x})$, the proof strategy by \citet{chen2019nearest} uses a Taylor expansion to decompose the predicted log of the estimated conditional survival function $\widetilde{S}_{\widetilde{\mathcal{Q}}_{\varepsilon}}(\cdot|\widetilde{x})$ into three terms:
\[
\log\widetilde{S}_{\widetilde{\mathcal{Q}}_{\varepsilon}}(t|\widetilde{x})=W_{1}(t|\widetilde{x})+W_{2}(t|\widetilde{x})+W_{3}(t|\widetilde{x}),
\]
where $W_{1}(t|\widetilde{x})$ is a kernel-weighted average of a hypothetical label that we do not directly observe (we provide more details on this shortly), $W_{2}(t|\widetilde{x})$ is an error term for how well we can approximate the CDF of the distribution of observed times specific to test embedding vector $\widetilde{x}$, and lastly $W_{3}(t|\widetilde{x})$ is a higher-order Taylor series error term. In more detail, $W_{1}(t|\widetilde{x})$ is a kernel-weighted average, where the $i$-th training point's label is taken to be $-\frac{D_{i}\ind\{Y_{i}\le t\}}{\widetilde{S}(Y_{i}|\widetilde{x})}$; note that this label knows the true conditional survival function $\widetilde{S}(\cdot|\widetilde{x})$. This hypothetical label only shows up in the proof and is not actually needed when implementing the model. Similarly, the CDF estimation problem that $W_{2}(t|\widetilde{x})$ relates to also only shows up in the proof and does not need to be implemented.

\citet{chen2019nearest} does not use a $\varepsilon$-net and his analysis could be thought of as the case where we take $\varepsilon\rightarrow0$ for the $\varepsilon$-net (so that every training embedding vector is its own exemplar). For this special case, Chen showed sufficient conditions that ensure that $W_{1}(t|\widetilde{x})\rightarrow\log\widetilde{S}(t|\widetilde{x})$, $W_{2}(t|\widetilde{x})\rightarrow0$, and $W_{3}(t|\widetilde{x})\rightarrow0$. Thus, $\log\widetilde{S}_{\widetilde{\mathcal{Q}}_{\varepsilon}}(t|\widetilde{x})\rightarrow\log\widetilde{S}(t|\widetilde{x})$, which can be turned into a statement on the difference ${\widetilde{S}_{\widetilde{\mathcal{Q}}_{\varepsilon}}(t|\widetilde{x})-\widetilde{S}(t|\widetilde{x})}$, without logs.

Extending Chen's analysis to account for $\varepsilon$-nets requires a bit of extra bookkeeping and, ultimately, the two main changes are as follows (that at a high-level relate to variance and bias of the survival kernet estimator):
\begin{enumerate}
\item Chen's original analysis (restated to use our notation) regularly involves the quantity $n\mathbb{P}_{\widetilde{\mathsf{X}}}(\mathcal{B}(\widetilde{x},\tau))$, which is the expected number of training points that land within distance $\tau$ of $\widetilde{x}$. Again, keep in mind that the setup Chen considers could be thought of as every training point being its own exemplar. Thus, the exemplars/training points that could contribute to the prediction of $\widetilde{x}$ is indeed any training point whose embedding vector lands within distance $\tau$ of $\widetilde{x}$. Thus, when the quantity $n\mathbb{P}_{\widetilde{\mathsf{X}}}(\mathcal{B}(\widetilde{x},\tau))$ is larger, then we should have more training points that contribute to the prediction of $\widetilde{x}$, thus reducing the variance in the estimated conditional survival function.

However, once we use $\varepsilon$-nets with $\varepsilon=\beta\tau$, the complication is that now, the training points that could contribute to the prediction of $\widetilde{x}$ need to be assigned to exemplars that are within distance $\tau$ of $\widetilde{x}$. It is possible that a training point $x'$ with embedding vector $\widetilde{x}'$ satisfies $\|\widetilde{x}'-\widetilde{x}\|\le\tau$ but $\widetilde{x}'$ could be assigned to an exemplar that is farther away from distance $\tau$ from $\widetilde{x}$ so that training point $x'$ does not actually contribute to the prediction for $\widetilde{x}$. Here, the key observation (which shows up in Kpotufe and Verma's regression analysis but is more generally a standard $\varepsilon$-net proof technique) is that it suffices to consider training points whose embedding vectors are within distance $(1-\beta)\tau$ of $\widetilde{x}$. These particular embedding vectors will for sure be assigned to exemplar(s) within distance $\tau$ of $\widetilde{x}$: any such embedding vector is within distance ${(1-\beta)\tau}$ from $\widetilde{x}$ and can be assigned to an exemplar that is at most an additional distance $\beta\tau$ away from $\widetilde{x}$ since the set of exemplars is an $(\beta\tau)$-net.
Thus, whereas Chen's original analysis depended on the quantity $n\mathbb{P}_{\widetilde{\mathsf{X}}}(\mathcal{B}(\widetilde{x},\tau))$, now we instead replace this quantity with $n\mathbb{P}_{\widetilde{\mathsf{X}}}(\mathcal{B}(\widetilde{x},(1-\beta)\tau))$ as we want the expected number of training embedding vectors within distance $(1-\beta)\tau$ (which get assigned to exemplar(s) within distance $\tau$ of $\widetilde{x}$) to be large.
\item The other main change is that in Chen's original analysis, there is a bias calculation involving H\"{o}lder's inequality. Restated using our notation, the idea is that a training embedding vector that is a distance $\tau$ away from $\widetilde{x}$ incurs a bias in estimating $W_{1}(t|\widetilde{x})$ that scales roughly as $\tau^{2\alpha}$ (specifically, see Lemma F.4 of \citet{chen2019nearest}; note that Chen uses sup norm error so the exponent is just $\alpha$ instead of $2\alpha$ whereas in this paper, we use squared error which adds the factor of 2). However, now that we are using an $\varepsilon$-net, the prediction for $\widetilde{x}$ could depend on an exemplar that is a distance $\tau$ away from $\widetilde{x}$, and this exemplar could have a training embedding vector assigned to it that is a worst case distance of $\tau+\beta\tau=(1+\beta)\tau$ away from $\widetilde{x}$. Thus, the worst-case bias incurred is now $((1+\beta)\tau)^{2\alpha}$ instead of just $\tau^{2\alpha}$. This same idea shows up in Kpotufe and Verma's regression analysis as well.
\end{enumerate}
Overall, we get that for large enough $n$, with high probability,
\[
\mathbb{E}_{\substack{\text{training}\\\text{data}}}\bigg[
\frac{1}{t_{\text{horizon}}}\int_{0}^{t_{\text{horizon}}}(\widetilde{S}_{\widetilde{\mathcal{Q}}_{\varepsilon}}(t|\widetilde{x})-\widetilde{S}(t|\widetilde{x}))^{2}dt\bigg]
\le \widetilde{\mathcal{O}}\Big(\frac{1}{n\mathbb{P}_{\widetilde{\mathsf{X}}}(\mathcal{B}(\widetilde{x},(1-\beta)\tau))} + ((1+\beta)\tau)^{2\alpha}\Big).
\]
We then further take the expectation of both sides over the randomness in sampling the test embedding vector $\widetilde{x}\sim\mathbb{P}_{\widetilde{\mathsf{X}}}$. Doing so, the first term on the right-hand side $\mathbb{E}_{\widetilde{x}\sim\mathbb{P}_{\widetilde{\mathsf{X}}}}[\frac{1}{n\mathbb{P}_{\widetilde{\mathsf{X}}}(\mathcal{B}(\widetilde{x},(1-\beta)\tau))}]$ can be upper-bounded by $\frac{1}{n} \cdot \Psi$ that shows up in the statement of Theorem~\ref{thm:main-result}. For details, see the full proof in Appendix~\ref{sec:pf-main-result}.

\subsection{Summary Fine-Tuning}
\label{sec:summary-fine-tuning}

Our theoretical analysis crucially relies on the summary functions being the ones stated in equation~\eqref{eq:unweighted-summaries}. However, in practice, it turns out that learning the summary functions improves prediction accuracy. We now discuss how to do this. Specifically, we add a ``summary fine-tuning'' step that refines the summary functions $\mathbf{D}_{\widetilde{q}}(\ell)$ and $\mathbf{R}_{\widetilde{q}}^+(\ell)$ that are used in the test-time hazard predictor (given in equation~\eqref{eq:kernel-netting-for-survival-analysis-hazard}).

To do summary fine-tuning, we treat everything in the survival kernet model as fixed except for the summary functions $\mathbf{D}_{\widetilde{q}}(\ell)$ and $\mathbf{R}_{\widetilde{q}}^+(\ell)$ for each exemplar $\widetilde{q}\in\widetilde{\mathcal{Q}}_{\varepsilon}$ and each time step $\ell\in\{1,\dots,m\}$, where as a reminder the time steps $t_1<t_2<\cdots<t_m$ are at the unique times of death in the training data. We now use the survival kernet predicted hazard function given in equation \eqref{eq:kernel-netting-for-survival-analysis-hazard}, which we plug into the deep kernel survival analysis loss $L_{\text{DKSA}}$ given in equation~\eqref{eq:full-dksa-loss}. Importantly, now when we minimize the loss $L_{\text{DKSA}}$, we are not learning the base neural net~$\phi$ from earlier (as it is treated as fixed); we are only learning the summary functions. Thus, the only additional details needed in explaining how summary fine-tuning works are in how we parameterize the summary functions. This requires a little bit of care to encode the constraints that the number of people at risk monotonically decreases over time and is always at least as large as the number of people who die over time.

\paragraph{Parameterizing summary functions for number of deaths}
We parameterize the summary function $\mathbf{D}_{\widetilde{q}}(\ell)$ for the number of deaths at time step $\ell$ specific to exemplar $\widetilde{q}$ as
\[
\mathbf{D}_{\widetilde{q}}(\ell) := \exp(\gamma_{\widetilde{q},\ell}) + \exp(\gamma^{\text{baseline}}_{\ell}),
\]
where $\gamma_{\widetilde{q},\ell}\in\mathbb{R}$ and $\gamma^{\text{baseline}}_{\ell}\in\mathbb{R}$ are unconstrained neural net parameters. We initialize $\mathbf{D}_{\widetilde{q}}(\ell)$ to be approximately equal to the value given in equation~\eqref{eq:unweighted-summaries} by using the initial values
\begin{align}
\gamma_{\widetilde{q},\ell}
&=\log\Big(\sum_{j\in\mathcal{I}_{\widetilde{q}}} \eventInd_j \ind\{Y_j=t_\ell\}\Big), \label{eq:gamma-parameter} \\
\gamma^{\text{baseline}}_{\ell}
&=\log(10^{-12})\approx-27.631. \label{eq:gamma-baseline-parameter}
\end{align}
Note that the $10^{-12}$ number in equation~\eqref{eq:gamma-baseline-parameter} is just an arbitrarily chosen small constant. Meanwhile, in computing equation~\eqref{eq:gamma-parameter}, to prevent numerical issues, if the summation inside the log is less than $10^{-12}$, then we also set $\gamma_{\widetilde{q},\ell}=\log(10^{-12})$.

\paragraph{Parameterizing summary functions for number of individuals at risk}
To parameterize the summary function for the number of individuals at risk, we first introduce a new variable for the number of individuals censored at time step $\ell$ specific to exemplar $\widetilde{q}$:
\[
\mathbf{C}_{\widetilde{q}}(\ell) := \exp(\omega_{\widetilde{q},\ell}) + \exp(\omega^{\text{baseline}}_{\ell}),
\]
where $\omega_{\widetilde{q},\ell}\in\mathbb{R}$ and $\omega^{\text{baseline}}_{\ell}\in\mathbb{R}$ are unconstrained neural net parameters. How these parameters are initialized works the same way as for the number of deaths and requires that we count how many people are censored at each time step $\ell$, which is straightforward to compute from the training data (the only minor complication is that the time steps are for unique observed times of death, and times of censoring could happen at other times---the simple fix is to consider the number of individuals censored at time step $\ell$ to be summed across all individuals censored at times within the interval $(t_{\ell-1},t_{\ell}]$).

Finally, it suffices to note that the number of individuals at risk at time step $\ell$ specific to exemplar $\widetilde{q}$ is given by the recurrence relation
\begin{equation}
\mathbf{R}_{\widetilde{q}}^+(\ell)=\mathbf{D}_{\widetilde{q}}(\ell)+\mathbf{C}_{\widetilde{q}}(\ell) + \mathbf{R}_{\widetilde{q}}^+(\ell+1),
\label{eq:at-risk-recurrence}
\end{equation}
where $\mathbf{R}_{\widetilde{q}}^+(m+1):=0$. In particular, for a specific exemplar $\widetilde{q}$, once we have computed $\mathbf{D}_{\widetilde{q}}(\ell)$ and $\mathbf{C}_{\widetilde{q}}(\ell)$ for all $\ell$, then we can easily compute $\mathbf{R}_{\widetilde{q}}(m)$, $\mathbf{R}_{\widetilde{q}}(m-1)$, $\mathbf{R}_{\widetilde{q}}(m-2)$, $\dots$, $\mathbf{R}_{\widetilde{q}}(1)$ using equation~\eqref{eq:at-risk-recurrence}---note that we compute these in reverse chronological order.

\paragraph{Final remarks on summary fine-tuning}
To summarize, summary fine-tuning computes refined estimates of the summary functions $\mathbf{D}_{\widetilde{q}}(\ell)$ and $\mathbf{R}_{\widetilde{q}}^+(\ell)$ by minimizing the deep kernel survival analysis loss $L_{\text{DKSA}}$ (equation~\eqref{eq:full-dksa-loss}) using the hazard function given in equation~\eqref{eq:kernel-netting-for-survival-analysis-hazard}. We treat the pre-trained base neural net $\widehat{\phi}$ and survival kernet clustering assignments as fixed. In fact, the only neural net parameters that we optimize over are the newly introduced variables $\gamma_{\widetilde{q},\ell},\gamma^{\text{baseline}}_{\ell},\omega_{\widetilde{q},\ell},\omega^{\text{baseline}}_{\ell}\in\mathbb{R}$ for $\widetilde{q}\in\widetilde{\mathcal{Q}}_{\varepsilon}$ and $\ell\in\{1,2,\dots,m\}$.

In terms of model interpretation, summary fine-tuning does not change survival kernet cluster assignments and only changes the summary functions $\mathbf{D}_{\widetilde{q}}(\ell)$ and $\mathbf{R}_{\widetilde{q}}^+(\ell)$ of the different clusters. For a specific exemplar $\widetilde{q}$, we could still compute a survival curve for the cluster corresponding to $\widetilde{q}$ using equation~\eqref{eq:cluster-KM}, where we plug in the refined summary functions (instead of the original summary functions stated in equation~\eqref{eq:unweighted-summaries}). The survival curve per cluster could still be interpreted as what the survival kernet model would predict for a data point that is purely explained by a single cluster and none of the other clusters. However, by using the refined summary functions, equation~\eqref{eq:cluster-KM} no longer corresponds to a Kaplan-Meier curve, so we can no longer use the 95\% confidence intervals that are meant for Kaplan-Meier survival curves.

We remark that simultaneously optimizing the summary functions \emph{and} the base neural net $\phi$ would be difficult in terms of how we set up the survival kernet framework because we only determine the exemplars after the base neural net $\phi$ has been learned and treated as fixed (the exemplars are chosen based on the embedding space). If we update the base neural net, the exemplars would change as would their summary functions, and in fact even the number of exemplars (and thus, the number of summary functions) could change. An alternating optimization could potentially be done (i.e., alternate between optimizing $\phi$, choosing which training points are the exemplars, and then optimizing the summary functions) but would be computationally costly; for simplicity, we do not do such an optimization.

\section{Scalable Tree Ensemble Warm-Start}
\label{sec:scalable-tree-ensemble-warm-start}

Whereas Section~\ref{sec:scaling} focused on constructing a test-time predictor that can scale to large datasets, we now turn to accelerating the training of the base neural net $\phi$, which happens in the very first step of survival kernet training. Specifically, we now present a warm-start strategy for initializing $\phi$ prior to optimizing the parameters of $\phi$ by minimizing the DKSA loss $L_{\text{DKSA}}$ given in equation~\eqref{eq:full-dksa-loss}.

Our warm-start strategy begins by learning a kernel function using a scalable tree ensemble approach such as \textsc{xgboost} \citep{chen2016xgboost}. Because a kernel function learned by a tree ensemble is not represented as a neural net, we then fit a neural net to the tree ensemble's kernel function via minibatch gradient descent. In short, we warm-start~$\phi$ using a \emph{Tree ensemble Under a Neural Approximation} (\textsc{tuna}). After the warm-start, we then fine-tune $\phi$ using the DKSA loss $L_{\text{DKSA}}$ (this is not to be confused with summary fine-tuning; here we in fact are fine-tuning the base neural net $\phi$ and we are \emph{not} optimizing over any sort of summary functions). Importantly, at the end of this section, we explain how \textsc{tuna} can accelerate neural architecture search. Note that our theory in Section~\ref{sec:theory} trivially supports~\textsc{tuna}: simply train base neural net~$\phi$ with \textsc{tuna} on the pre-training data.

\paragraph{Approximating a tree ensemble kernel}
The key idea behind \textsc{tuna} is that decision trees and their ensembles implicitly learn kernel functions \citep{breiman2000some}. Thus, by using any scalable decision tree or decision tree ensemble learning approach, we can learn an initial kernel function guess~$\mathbb{K}_0$. For example, a decision tree has $\mathbb{K}_{0}(x,x')=\ind\{x\text{ and }x'\text{ are in the same leaf}\}$. For a decision forest $\mathbb{K}_{0}(x,x')$ is equal to the fraction of trees for which $x$ and $x'$ are in the same leaf. For gradient tree boosting, the kernel function is a bit more involved and the general case is given by \citet[Section 7.1.3]{chen2018explaining}. For \textsc{xgboost} when we add one tree at a time and the final ensemble has equal weight across trees, then it suffices to set $\mathbb{K}_{0}(x,x')$ to be the fraction of trees for which~$x$ and~$x'$ are in the same leaf.

The kernel function $\mathbb{K}_0$ of a tree ensemble does not give us an embedding representation of the data, where the embedding function is a neural net. However, we can train the base neural net $\phi$ to approximate $\mathbb{K}_0$ by minimizing the mean-squared error
\begin{equation}
\frac{1}{n_{\pre}(n_{\pre}-1)/2}
\sum_{i=1}^{n_{\pre}}\sum_{j=i+1}^{n_{\pre}} \big(\underbrace{\mathbb{K}(X_i^{\pre},X_j^{\pre})}_{\text{to be learned}} - \underbrace{\mathbb{K}_0(X_i^{\pre},X_j^{\pre})}_{\substack{\text{given by the}\\\text{tree ensemble}}}\big)^2.
\label{eq:kernel-warm-start-mse-loss}
\end{equation}
As a reminder, ``$\pre$'' indicates that a quantity is part of pre-training data, and $\mathbb{K}$ depends on the base neural net~$\phi$: ${\mathbb{K}(x,x')}={K(\|\phi(x)-\phi(x')\|_2^2)}$, where for example $K(u)=\exp(-u^2)$ for a Gaussian kernel. To scale to large datasets, we minimize loss~\eqref{eq:kernel-warm-start-mse-loss} in minibatches.
The \textsc{tuna} warm-start strategy is as follows:
\begin{enumerate}[itemsep=0.2em,parsep=0em,partopsep=0em]

\item Train a scalable tree ensemble (e.g., \textsc{xgboost}) on the pre-training data; denote its learned kernel function by $\mathbb{K}_0$. For a subset $\mathcal{S}\subseteq\{X_1^{\pre},\dots,X_{n_{\pre}}^{\pre}\}$ of the pre-training feature vectors, let $\mathbf{K}_{\mathcal{S}}$ denote the $|\mathcal{S}|$-by-$|\mathcal{S}|$ Gram matrix formed so that the $(i,j)$-th entry is given by $\mathbb{K}_0(x_i^\circ, x_j^\circ)$ where $x_i^\circ$ and $x_j^\circ$ are the $i$-th and $j$-th pre-training feature vectors in~$\mathcal{S}$ (with elements of $\mathcal{S}$ ordered arbitrarily).

\item For each minibatch consisting of pre-training feature vectors $x_1^{\pre},\dots,x_b^{\pre}$ where $b$ is the batch size:
\begin{enumerate}[leftmargin=0.35em,itemsep=0.2em,parsep=0em,partopsep=0em,topsep=0.2em]

\item Compute the batch's tree ensemble Gram matrix $\mathbf{K}_{\{x_1^{\pre},\dots,x_b^{\pre}\}}$ (defined in step~1) using $\mathbb{K}_0$.

\item Compute the current neural net's Gram matrix estimate $\widehat{\mathbf{K}}_{\{x_1^{\pre},\dots,x_b^{\pre}\}}$, which has \mbox{$(i,j)$-th} entry given by $\mathbb{K}(x_i^{\pre},x_j^{\pre})=K(\|\phi(x_i^{\pre})-\phi(x_j^{\pre})\|_2^2)$.

\item Let this minibatch's loss be the MSE loss \eqref{eq:kernel-warm-start-mse-loss} restricted to feature vectors of the current minibatch, i.e., the MSE loss between $\widehat{\mathbf{K}}_{\{x_1^{\pre},\dots,x_b^{\pre}\}}$ and $\mathbf{K}_{\{x_1^{\pre},\dots,x_b^{\pre}\}}$. Update parameters of neural net $\phi$ based on the gradient of this minibatch's loss.
\end{enumerate}

\end{enumerate}
The existing warm-start strategy by \citet{chen2020deep} does not use minibatches and aims to solve a multidimensional scaling (MDS) \citep{borg2005modern} problem using the whole training dataset, which is computationally expensive to compute. For details on this MDS-based warm-start strategy, see Appendix~\ref{subsec:warm-start}.

\paragraph{Neural architecture search}
Step~2 of \textsc{tuna} can be run using different base neural nets (and step~1 only needs to be run once). Thus, we can search over different base neural net architectures and choose whichever one achieves the lowest average minibatch loss (the one described in step~2(c) above). Then when fine-tuning by minimizing the DKSA loss $L_{\text{DKSA}}$, we do not repeat a search over neural net architectures. Put another way, when searching over neural net architectures (whether using grid search or a more sophisticated approach to selecting architectures to try), we focus on minimizing a batched version of the loss~\eqref{eq:kernel-warm-start-mse-loss}.
We remark that step~2 of \textsc{tuna} does a simple least squares fit without accounting for any survival analysis problem structure.

\section{Experiments}
\label{sec:experiments}

Our experiments are designed to help us understand the empirical performance of survival kernets in terms of prediction accuracy (Section~\ref{sec:results-prediction-accuracy}) and computation time (Section~\ref{sec:results-computation-time}). We consider survival kernets with and without the summary fine tuning, and also with and without our \textsc{tuna} warm-start procedure. We also provide additional examples of cluster visualizations for different datasets that we use (Section~\ref{sec:cluster-visualizations}). Our code is publicly available.\footnote{\url{https://github.com/georgehc/survival-kernets}}

\paragraph{Datasets}
We benchmark survival kernets on four standard survival analysis datasets. We provide basic characteristics of these datasets in Table~\ref{tab:datasets}, where we have sorted the datasets in increasing number of data points. The first three datasets are on predicting time until death. The first dataset \textsc{rotterdam/gbsg} is for breast cancer patients and technically actually consists of two separate datasets: the Rotterdam tumor bank dataset \citep{foekens2000urokinase} and the German Breast Cancer Study Group dataset \citep{schumacher1994randomized}. However, these datasets share a common set of features and are frequently analyzed together (e.g., \citealt{katzman2018deepsurv,kvamme2019time,chen2020deep,zhong2021deep}), so for the rest of the paper, we denote these as the single dataset \textsc{rotterdam/gbsg}. The next dataset \textsc{support} is the one we already mentioned in Section~\ref{sec:model-interpretability} that is for hospitalized patients from the Study to Understand Prognoses, Preferences, Outcomes, and Risks of Treatment (\textsc{support}) \citep{knaus1995support}. The third dataset \textsc{unos} is for patients receiving heart transplants from the United Network for Organ Sharing.\footnote{We use the UNOS Standard Transplant and Analysis Research data from the Organ Procurement and Transplantation Network as of September 2019, requested at: \url{https://www.unos.org/data/}} The fourth dataset \textsc{kkbox} is the one on customer churn mentioned in Section~\ref{sec:intro}. These datasets have appeared in literature although not necessarily all at once in the same paper (e.g., \citealt{chapfuwa2018adversarial,giunchiglia2018rnn,katzman2018deepsurv,lee2018deephit,kvamme2019time,chen2020deep,nagpal2021deep,zhong2021deep}). Data preprocessing details are in Appendix~\ref{sec:datasets-preprocessing-details}. For the \textsc{rotterdam/gbsg} dataset, following \citet{katzman2018deepsurv}, we treat the Rotterdam data as the training data and the German Breast Cancer Study Group data as the test data. For the other datasets, we use a random 70\%/30\% train/test~split.

\begin{table}[t]
\centering
\caption{Dataset characteristics.\label{tab:datasets}}
\vspace{-.5em}
\small
\setlength{\tabcolsep}{6pt}
\begin{tabular}{cccc}
\toprule
Dataset & Number of data points & Number of features & Censoring rate \\
\midrule
\textsc{rotterdam/gbsg} & 2,232     & 7            & 43.2\%  \\
\textsc{support}                 & 8,873     & 14 (19$^*$)  & 31.97\% \\
\textsc{unos}                    & 62,644    & 49 (127$^*$) & 50.23\% \\
\textsc{kkbox}                   & 2,814,735 & 15 (45$^*$)  & 34.67\% \\
\bottomrule
\end{tabular} \\ \vspace{0.2em}
$^*$ number of features after preprocessing (note that \textsc{rotterdam/gbsg} dataset's preprocessing does not change the number of features)
\end{table}

\paragraph{Baseline survival models}
As baselines, we use an elastic-net-regularized Cox model (denoted as \textsc{elastic-net cox}), \textsc{xgboost} \citep{chen2016xgboost}, \textsc{deepsurv} \citep{katzman2018deepsurv}, \textsc{deephit} \citep{lee2018deephit}, Deep Cox mixtures (abbreviated as \textsc{dcm}) \citep{nagpal2021deep}, and \textsc{dksa} \citep{chen2020deep}. All neural net models use a multilayer perceptron as the base neural net. Hyperparameter grids and optimization details are in Appendix~\ref{sec:hyperparameter-grids-opt-details}. Note that our hyperparameter grid for \textsc{elastic-net cox} includes no regularization, which corresponds to the standard Cox model \citep{cox1972regression}. We remark that \textsc{deephit} has been previously reported to obtain state-of-the-art accuracy on the largest three datasets we have tested (cf., \citealt{lee2018deephit,kvamme2019time}; as these authors use different experimental setups, their accuracy numbers are not directly comparable to each other or to ours; however, our accuracy results yield trends similar to what was found by these authors in terms of how the baseline methods compare to each other).

\paragraph{Variants of survival kernets}
For survival kernets, as part of hyperparameter tuning, we try a number of variants. We always use the Gaussian kernel $K(u)=\exp(-u^2)$. We fix the truncation distance to be $\tau=\sqrt{2\log 10}\approx2.146$ (i.e., training points contributing to prediction for a test point must have kernel weight/similarity score with the test point that is at least $\exp(-\tau^2) = \exp(-(\sqrt{2\log 10})^2) = 0.01$), and we further only consider at most 128 approximate nearest neighbors, which are found using the HNSW algorithm~\citep{malkov2020efficient}. We comment on different choices of $\tau$ in Appendix~\ref{sec:different-threshold-distances}. For constructing $\varepsilon$-nets, we set $\varepsilon=\beta\tau$ and try $\beta\in\{1/2,1/4\}$. For the embedding space, motivated by Claims~\ref{claim:unit-hypersphere-covering-number} and \ref{claim:uniform-unit-hypersphere-intrinsic-dimension}, we constrain the embedding vectors to be on a hypersphere (in preliminary experiments, we found that leaving the embedding space unconstrained would occasionally lead to numerical issues during training and did not yield better validation accuracy scores than having a hypersphere constraint). We try standard neural net initalization \citep{he2015delving} with an exhaustive hyperparameter grid/neural architecture sweep vs our strategy \textsc{tuna} (denoted in tables as \textsc{kernet} vs \textsc{tuna-kernet}).

When dividing data so that pre-training and training data are not the same, we split the full training data into 60\%/20\%/20\% pre-training/training/validation sets (neural net training uses the pre-training and validation sets, with the latter used for early stopping). When we use this sample splitting, we add the suffix ``\textsc{(split)}'' to the method name in tables (e.g., ``\textsc{tuna-kernet (split)}'' refers to a survival kernet model trained with \textsc{tuna} warm-start and with pre-training and training sets disjoint). Otherwise, if the pre-training and training sets are set to be the same, then we split the full training data into 80\%/20\% training/validation sets similar to all the baselines, using the validation set for hyperparameter tuning (including early stopping during neural net training); in this case, we add the suffix ``\textsc{(no split)}'' to the method name in tables.

Finally, if summary fine-tuning (abbreviated \textsc{sft}) is used, then we also add the suffix ``\textsc{sft}''. For instance, ``\textsc{(no split, sft)}'' means that pre-training and training data are set to be the same, and summary fine-tuning is used.

\paragraph{Experimental setup}
For every dataset and for every model, we repeat the following basic experiment 5 times (except for a select few cases where the model is too computationally expensive to run):
\begin{enumerate}
\item We randomly split the dataset's training set into an 80\% ``proper'' training set and a 20\% validation set.
\item We train the model on the proper training set using different hyperparameters. The validation set is used to select the best hyperparameter according to an accuracy metric (we specifically use the time-dependent concordance index $C^{\text{td}}$ by \citet{antolini2005time}).
\item For the model achieving the best validation set accuracy, we use it to predict on the current dataset's test set. We record the test set accuracy score achieved (again using $C^{\text{td}}$ index).
\end{enumerate}
In our results later, we report the mean and standard deviation of the test set accuracy scores per dataset and per method across the 5 repetitions of the above basic experiment (we indicate the few cases where a model is too computationally expensive to run so that either we only ran the model once or the model could not even finish running a single time).

As a technical detail, for the first step above, we set random seeds so that per dataset, the~5 experimental repeats' have different random splits for the proper training and validation sets. However, we do \emph{not} use different random splits across the different models. For example, for the very first experimental repeat on \textsc{rotterdam/gbsg}, every model would use the exact same proper training set and also the exact same validation set. We set up our experiments in this manner so that every model gets access to the same random proper training/validation splits across the 5 experimental repeats.

\begin{table}[t]
\centering
\caption{Test set $C^{\text{td}}$ indices (mean $\pm$ standard deviation across 5 experimental repeats, except for a few cases that run into computational issues).\label{tab:main-benchmark}}
\vspace{-.5em}
\setlength{\tabcolsep}{5pt}
\adjustbox{max width=\textwidth}{%
\begin{tabular}{ccccc}
\toprule
\multirow{2}{*}[-2.5pt]{Model} & \multicolumn{4}{c}{Dataset} \\
\cmidrule{2-5}
& \textsc{rotterdam/gbsg} & \textsc{support} & \textsc{unos} & \textsc{kkbox}\\
\midrule
\textsc{elastic-net cox} & 0.6660 $\pm$ 0.0045 & 0.6046 $\pm$ 0.0013 & 0.5931 $\pm$ 0.0011 & 0.8438 $\pm$ 0.0001 \\
\textsc{xgboost} & 0.6703 $\pm$ 0.0128 & 0.6281 $\pm$ 0.0031 & 0.6028 $\pm$ 0.0009 & 0.8714 $\pm$ 0.0000 \\
\textsc{deepsurv} & {\bftab 0.6850} $\pm$ 0.0160 & 0.6155 $\pm$ 0.0032 & 0.5941 $\pm$ 0.0021 & 0.8692 $\pm$ 0.0003 \\
\textsc{deephit} & 0.6792 $\pm$ 0.0121 & 0.6354 $\pm$ 0.0047 & 0.6170 $\pm$ 0.0016 & {\bftab 0.9148} $\pm$ 0.0001 \\
\textsc{dcm} & 0.6763 $\pm$ 0.0104 & 0.6289 $\pm$ 0.0047 & 0.6101 $\pm$ 0.0023 & 0.8830$^*$ \\
\textsc{dksa} & 0.6570 $\pm$ 0.0139 & 0.6316 $\pm$ 0.0080 & out of memory & out of memory \\
\cmidrule(lr){1-5}
\textsc{kernet (split)} & 0.6450 $\pm$ 0.0086 & 0.5934 $\pm$ 0.0073 & 0.5936 $\pm$ 0.0039 & 0.8933$^*$ \\
\textsc{kernet (split, sft)} & 0.6478 $\pm$ 0.0140 & 0.6007 $\pm$ 0.0040 & 0.5984 $\pm$ 0.0039 & 0.9027$^*$ \\
\textsc{kernet (no split)} & 0.6599 $\pm$ 0.0190 & 0.6244 $\pm$ 0.0026 & 0.6033 $\pm$ 0.0039 & 0.8942$^*$ \\
\textsc{kernet (no split, sft)} & 0.6621 $\pm$ 0.0191 & 0.6291 $\pm$ 0.0059 & 0.6071 $\pm$ 0.0039 & 0.9029$^*$ \\
\textsc{tuna-kernet (split)} & 0.6510 $\pm$ 0.0212 & 0.6220 $\pm$ 0.0026 & 0.6028 $\pm$ 0.0032 & 0.8952 $\pm$ 0.0002 \\
\textsc{tuna-kernet (split, sft)} & 0.6544 $\pm$ 0.0239 & 0.6287 $\pm$ 0.0050 & 0.6105 $\pm$ 0.0046 & 0.9049 $\pm$ 0.0004 \\
\textsc{tuna-kernet (no split)} & 0.6694 $\pm$ 0.0163 & 0.6385 $\pm$ 0.0038 & 0.6130 $\pm$ 0.0029 & 0.8957 $\pm$ 0.0005 \\
\textsc{tuna-kernet (no split, sft)} & 0.6719 $\pm$ 0.0135 & {\bftab 0.6426} $\pm$ 0.0045 & {\bftab 0.6211} $\pm$ 0.0025 & 0.9057 $\pm$ 0.0003 \\
\bottomrule
\end{tabular}} \\
\smallskip
{\small $^*$ we only ran one experimental repeat due to how computationally expensive \textsc{dcm} and the non-\textsc{tuna} survival kernet variants are, so standard deviations are unavailable for these}
\end{table}

\subsection{Results on Prediction Accuracy}
\label{sec:results-prediction-accuracy}
We report test set $C^{\text{td}}$ indices in Table~\ref{tab:main-benchmark} (mean $\pm$ standard deviation across 5 experimental repeats unless some computational issue was encountered, as described above). The main findings are as follows:
\begin{itemize}
\item[(a)] On the largest three datasets (\textsc{support}, \textsc{unos}, and \textsc{kkbox}), the \textsc{tuna-kernet (no split, sft)} model (i.e., a survival kernet trained with \textsc{tuna}, pre-training data set to be the same as training data, and summary fine-tuning) achieves the highest mean test set $C^{\text{td}}$ indices on \textsc{support} and \textsc{unos} and the second highest on \textsc{kkbox} (second to \textsc{deephit}).

\item[(b)] On the smallest dataset \textsc{rotterdam/gbsg}, \textsc{tuna-kernet (no split, sft)} has higher accuracy than the other survival kernet variants but the mean $C^{\text{td}}$ score it achieves is lower than those of several baselines (\textsc{deepsurv}, \textsc{deephit}, \textsc{dcm}---of which \textsc{deepsurv} achieves the highest mean $C^{\text{td}}$ score). \textsc{tuna-kernet (no split, sft)} does not appear significantly less accurate than these baselines though since the test set $C^{\text{td}}$ standard deviations for \textsc{rotterdam/gbsg} across all models are high. For example, for the test set $C^{\text{td}}$ indices of \textsc{tuna-kernet (no split, sft)}, the interval that is within one standard deviation of the mean (namely, $0.6719 \pm 0.0135$) contains the mean test set $C^{\text{td}}$ indices achieved by \textsc{deepsurv}, \textsc{deephit}, and \textsc{dcm}.\footnote{In fact, when we look at the 5 experimental repeats (as we described in the experimental setup above, for a specific experimental repeat, all models use the same proper training set and the same validation set), \textsc{deepsurv} achieved higher test set $C^{\text{td}}$ index compared to \textsc{tuna-kernet (no split, sft)} only in 3 out of the 5 experiments: specifically, \textsc{deepsurv} achieved test set $C^{\text{td}}$ indices {\bftab 0.693}, {\bftab 0.697}, {\bftab 0.699}, 0.673, and 0.663 whereas \textsc{tuna-kernet (no split, sft)} achieved 0.673, 0.649, 0.674, {\bftab 0.679}, and {\bftab 0.685}, respectively.}

\item[(c)] Our experimental results include two variants of survival kernets that do correspond to our theoretical analysis (namely \textsc{kernet (split)} and \textsc{tuna-kernet (split)}, of which the latter generally outperforms the former). We point out that \textsc{tuna-kernet (split)} does outperform a few baselines: it achieves a higher mean test set $C^{\text{td}}$ index than \textsc{elastic-net cox} and \textsc{deepsurv} on \textsc{support}, \textsc{unos}, and \textsc{kkbox}. For the \textsc{kkbox} dataset, \textsc{tuna-kernet (split)} also outperforms \textsc{xgboost} and \textsc{dcm}.

\item[(d)] Focusing only on the survival kernet variants, we observe the following general trends:
\begin{itemize}
\item[i.] By making the pre-training and training sets disjoint as required by our theory, the accuracy decreases (this trend generally holds for any variant of survival kernets where the only difference is whether pre-training and training sets are disjoint vs set to be the same).
\item[ii.] Using summary fine-tuning improves accuracy compared to setting the summary functions based on equation~\eqref{eq:unweighted-summaries} (this trend generally holds for any variant of survival kernets where the only difference is whether summary fine-tuning is used).
\item[iii.] Using the \textsc{tuna} warm-start strategy improves accuracy compared to standard neural net initialization (this trend generally holds for any variant of survival kernets where the only difference is whether \textsc{tuna} is used).
\end{itemize}
\end{itemize}
As a minor remark, our \textsc{kkbox} $C^{\text{td}}$ index is significantly higher for \textsc{deephit} than previously reported in literature (e.g., \citealt{kvamme2019time}) because we also try setting the time discretization grid to be all unique times of customer churn in the dataset, which turns out to significantly improve \textsc{deephit}'s accuracy for the \textsc{kkbox} dataset (when we discretize to 64 or 128 time steps, the $C^{\text{td}}$ indices we get are on par with what Kvamme et al.~get).

\subsection{Results on Computation Time}
\label{sec:results-computation-time}
We next report wall clock computation times for training survival kernet variants and the baselines in Table~\ref{tab:timing-benchmark}. These training times are inclusive of hyperparameter tuning (which includes searching over neural net architectures). Our computation time results aim to show how much our \textsc{tuna} warm-start strategy accelerates training and also to give a sense of the time needed to train the different models under consideration. All code was run under similar conditions on identical Ubuntu 20.04.2 LTS instances, each with an Intel Core i9-10900K CPU (3.70 GHz, 10 cores, 20 threads), 64GB RAM, and an Nvidia Quadro RTX 4000 (8GB GPU RAM). These compute instances accessed code and data that were centrally stored on a single network-attached storage system.

\begin{table}[t]
\centering
\caption{Total training time (mean $\pm$ standard deviation across 5 experimental repeats, except for a few cases that run into computational issues), which includes training using different hyperparameters. Note that \textsc{tuna-kernet (split)} and \textsc{tuna-kernet (split, sft)} include the time needed to train an \textsc{xgboost} warm-start model only using \emph{pre-training data}, whereas \textsc{tuna-kernet (no split)} and \textsc{tuna-kernet (no split, sft)} include time to train \textsc{xgboost} on the \emph{full training data}.
\label{tab:timing-benchmark}}
\vspace{-.5em}
\begingroup
\setlength{\tabcolsep}{2pt}
\adjustbox{max width=\textwidth}{%
\begin{tabular}{crlrlrlrl}
\toprule
\multirow2{*}[-2.5pt]{Model} & \multicolumn{8}{c}{Total training time (minutes)} \\
\cmidrule{2-9}
& \multicolumn{2}{c}{\makebox[0pt]{\textsc{rotterdam/gbsg}}} & \multicolumn{2}{c}{\textsc{support}} & \multicolumn{2}{c}{\textsc{unos}} & \multicolumn{2}{c}{\textsc{kkbox}} \\
\midrule
\textsc{elastic-net cox} & 0.294 &$\pm$ 0.047 & 0.762 &$\pm$ 0.031 & 3.075 &$\pm$ 0.169 & 60.171 &$\pm$ 0.155 \\
\textsc{xgboost} & 9.369 &$\pm$ 0.209 & 17.885 &$\pm$ 0.537 & 73.503 &$\pm$ 0.420 & 292.358 &$\pm$ 1.943 \\
\textsc{deepsurv} & 0.140 &$\pm$ 0.009 & 0.323 &$\pm$ 0.016 & 2.784 &$\pm$ 0.209 & 48.621 &$\pm$ 0.136 \\
\textsc{deephit} & 5.083 &$\pm$ 0.343 & 9.984 &$\pm$ 0.373 & 467.479 &$\pm$ 5.621 & 1573.683 &$\pm$ 5.333 \\
\textsc{dcm} & 6.739 &$\pm$ 0.539 & 20.667 &$\pm$ 0.829 & 1111.717 &$\pm$ 25.459 & \multicolumn{2}{c}{38263.568$^*$} \\
\textsc{dksa} & 2.756 &$\pm$ 0.184 & 31.135 &$\pm$ 0.269 & \multicolumn{2}{c}{out of memory} & \multicolumn{2}{c}{out of memory} \\
\cmidrule(lr){1-9}
\textsc{kernet (split)} & 8.917 &$\pm$ 0.464 & 47.581 & $\pm$ 1.354 & 841.923 & $\pm$ 80.509 & \multicolumn{2}{c}{10323.367$^*$} \\
\textsc{kernet (split, sft)} & 9.074 &$\pm$ 0.480 & 49.025 &$\pm$ 1.149 & 851.257 &$\pm$ 86.945 & \multicolumn{2}{c}{10515.057$^*$} \\
\textsc{kernet (no split)} & 23.964 &$\pm$ 0.588 & 128.579 &$\pm$ 3.241 & 1579.229 &$\pm$ 58.450 & \multicolumn{2}{c}{14323.791$^*$} \\
\textsc{kernet (no split, sft)} & ~24.103 &$\pm$ 0.583~ & ~129.694 &$\pm$ 3.094~ & ~1595.272 &$\pm$ 65.490~ & \multicolumn{2}{c}{14673.548$^*$} \\
\textsc{tuna-kernet (split)} & 7.210 &$\pm$ 0.296 & 17.840 &$\pm$ 1.317 & 153.803 &$\pm$ 10.966 & ~1581.292 &$\pm$ 26.261~ \\
\textsc{tuna-kernet (split, sft)} & 7.471 &$\pm$ 0.373 & 19.250 &$\pm$ 1.739 & 162.893 &$\pm$ 8.254 & 1924.505 &$\pm$ 17.151 \\
\textsc{tuna-kernet (no split)} & 12.207 &$\pm$ 0.706 & 30.052 &$\pm$ 1.035 & 312.922 &$\pm$ 33.438 & 2083.887 &$\pm$ 85.976 \\
\textsc{tuna-kernet (no split, sft)} & ~~~12.386 &$\pm$ 0.674~~~ & 31.602 &$\pm$ 1.809 & 320.748 &$\pm$ 31.228 & ~2555.702 &$\pm$ 65.595~ \\
\bottomrule
\end{tabular}} \\
\medskip
{\small $^*$ we only ran one experimental repeat due to how computationally expensive \textsc{dcm} and the non-\textsc{tuna} survival kernet variants are, so standard deviations are unavailable for these}
\endgroup
\end{table}

Note that the timing results in Table~\ref{tab:timing-benchmark} are fair in comparing between the different variants of survival kernets but \emph{not} fair in comparing between a survival kernet variant and a baseline or between two different baselines. The reason for this is that, as aforementioned, we report training times that are inclusive of hyperparameter tuning. Different models can have drastically different hyperparameters and hyperparameter grid sizes. We can easily increase or decrease the number of hyperparameters we try for a model, which can significantly increase or decrease the overall training time of the model as a result. Because we use the exact same hyperparameter grid across survival kernet variants, comparing computation times across them is fair. However, comparing the training times of a survival kernet variant and of a baseline (or even of two different baselines) does not lead to a fair comparison due to the hyperparameter grids being different.\footnote{Note that reporting ``per epoch'' training times does not give a fair comparison either since, for instance, training the \textsc{tuna-kernet (no split, sft)} model involves three different kinds of neural net minibatch gradient descent optimizations (the first is for approximating the \textsc{xgboost} tree ensemble kernel, the second is for fine-tuning the base neural net using the DKSA loss, and the third is for summary fine-tuning) whereas, for instance, the baselines \textsc{deepsurv} and \textsc{deephit} each only involve one kind of neural net minibatch gradient descent optimization. Reporting ``per hyperparameter setting'' training times is not straightforward since, for instance, the \textsc{tuna} warm-start strategy could be viewed as further expanding the hyperparameter grid to include the hyperparameters of the base tree ensemble model being trained while at the same time never doing an exhaustive grid search.} That said, we present the overall training times across models anyways in Table~\ref{tab:timing-benchmark} (mean $\pm$ standard deviation across the 5 experimental repeats unless some computational issue was encountered).

The main takeaways from Table~\ref{tab:timing-benchmark} are as follows:
\begin{itemize}
\item[(a)] For each \textsc{tuna-kernet} variant, when we compare it to its corresponding variant that does not use \textsc{tuna}, then we consistently find that using \textsc{tuna} dramatically reduces computation time, with savings of 17.7\% to 85.5\% depending on the specific variant of survival kernets used and on the dataset. The savings are larger on the larger datasets; for instance, when only considering the largest three datasets, then the savings actually range from 60.7\% to 85.5\%.
\item[(b)] Using summary fine-tuning clearly increases training time compared to not using it, with an increase of 0.58\% to 22.6\% additional time depending on the specific variant of survival kernets used and on the dataset.
\item[(c)] Even though survival kernets with \textsc{tuna} are significantly faster to train than survival kernets without \textsc{tuna}, survival kernets with \textsc{tuna} are still relatively expensive to train when compared to baselines tested (aside from \textsc{dksa} and \textsc{dcm}, both of which appear to have issues scaling to larger datasets) at least for the hyperparameter grids that we have used.\footnote{For \textsc{dcm} in particular, the maximum number of clusters we attempted to use was~6 (which was also the maximum used by the original authors \citep{nagpal2021deep}) and even so, the total training time clearly grows at a rate that makes it impractical for use on large datasets (as shown in Table~\ref{tab:timing-benchmark}, a single experiment on \textsc{kkbox} took over 26 days to run). The issue is that the default behavior in the code by Nagpal et al.~is that every 10 minibatches, it does posterior inference across the entire proper training set (not just the current minibatch) to update every cluster's survival curve (represented as a spline).}
\end{itemize}
We reiterate that there are trivial ways to reduce or increase the computation time of either survival kernets or any of the baseline models by simply removing or adding hyperparameter settings to try. For example, we suspect that for the survival kernet variants and for \textsc{deephit}, we could remove some of the hyperparameter settings that we had tried while retaining a similar test set accuracy.
Note that the hyperparameter grid we used for every survival kernet variant is a superset of the one we used for \textsc{deephit}, so we expected the computation times to be higher for survival kernet variants than \textsc{deephit}. It turns out though that all the survival kernet variants with \textsc{tuna} have lower overall training times than \textsc{deephit} on the \textsc{unos} dataset for the hyperparameter grids we used.

As an illustrative example to provide more insight on how much time is spent on different parts of survival kernet training, we provide a timing breakdown for the \textsc{tuna-kernet (no split, sft)} model trained on the \textsc{kkbox} dataset in Table~\ref{tab:tuna-kernet-nosplit-sft-kkbox-breakdown}. Note that in this table, we have separated the computation times for using different $\varepsilon$ values in $\varepsilon$-net construction (as a reminder, we set $\varepsilon=\beta\tau$ where the threshold distance is set to be $\tau=\sqrt{2\log(10)}\approx2.146$, and we sweep over $\beta\in\{1/2,1/4\}$). For example, we see that if we only tried using $\beta=1/4$ and did not try $\beta=1/2$, then we would reduce the computation time by about 31\% in this case. In Table~\ref{tab:tuna-kernet-nosplit-sft-kkbox-breakdown}, we also provide some detail on the hyperparameters that we sweep over for the different steps during training. As a reminder, for the \textsc{tuna} warm-start strategy, the neural net architecture is treated as fixed after we approximate the \textsc{xgboost} kernel with the base neural net (in particular, trying different neural net architectures happens during the step labeled ``Approximate \textsc{xgboost} kernel with base neural net'' in Table~\ref{tab:tuna-kernet-nosplit-sft-kkbox-breakdown}).

\begin{table}
\centering
\caption{Time breakdown in training the \textsc{tuna-kernet (no split, sft)} model on the \textsc{kkbox} dataset for our experimental setup (mean $\pm$ standard deviation across 5 experimental repeats). The time breakdown for training \textsc{tuna-kernet (no split)} is the same without the final summary fine-tuning time. Note that our \textsc{tuna} warm-start strategy corresponds to the combination of the first two tasks listed below. Also, note that every task listed below involves tuning different hyperparameters. For details on hyperparameters, see Appendix~\ref{sec:hyperparameter-grids-opt-details}. For a slightly more detailed time breakdown that separates the fine-tuning of the base neural net further to include the different numbers of time steps to discretize to, see Appendix~\ref{sec:more-detailed-time-breakdown}.}
\label{tab:tuna-kernet-nosplit-sft-kkbox-breakdown}
\vspace{-.5em}
\small
\setlength{\tabcolsep}{6pt}
\adjustbox{max width=\textwidth}{
\begin{tabular}{ccc}
\toprule
Task & Hyperparameter settings & Time (minutes) \\
\midrule
Train \textsc{xgboost} model & 192$^*$ & 292.358 $\pm$ 1.943 \\
Approximate \textsc{xgboost} kernel with base neural net & 18$^\dagger$ & 107.271 $\pm$ 0.481 \\
Fine-tune base neural net with DKSA loss ($\beta=1/2$) & 36$^\ddagger$ & 793.884 $\pm$ 52.890 \\
Fine-tune base neural net with DKSA loss ($\beta=1/4$) & 36$^\ddagger$ & 890.374 $\pm$ 34.188 \\
Summary fine-tuning & 2$^\text{\S}$ & 471.814 $\pm$ 37.123 \\
\midrule
Total & & 2555.702 $\pm$ 65.595 \\
\bottomrule
\end{tabular}} \\
\smallskip
$^*$ corresponds to the full \textsc{xgboost} hyperparameter grid that we use \\
$^\dagger$ base neural net hyperparameters (number of hidden layers, nodes per hidden layer), learning rate \\
$^\ddagger$ survival loss hyperparameters ($\eta$ and $\sigma_{\text{rank}}$ from equation~\eqref{eq:full-dksa-loss}), number of time steps to discretize to, learning rate \\
$^\text{\S}$ learning rate
\end{table}

\subsection{Cluster Visualizations}
\label{sec:cluster-visualizations}

We now present cluster visualizations of \textsc{tuna-kernet (no split, sft)} models. For simplicity, we only show visualizations for the first experimental repeat.\footnote{As is standard in machine learning literature, when we talk about model interpretability, we are talking about interpreting a specific trained model. For example, consider topic models (such as latent Dirichlet allocation \citep{blei2003latent} or any neural topic model \citep{zhao2021topic}) or prototypical part networks \citep{chen2019looks}. When one re-trains such a topic or prototypical part model using different random seeds (but leave model hyperparameters fixed), the topics/prototypes learned can vary across random seeds. Even so, each specific learned model (per experimental repeat) can be interpreted.} For ease of exposition, we begin with the \textsc{support} dataset since we had already presented a figure for it in Section~\ref{sec:model-interpretability} (Figure~\ref{fig:support}). We now also discuss some modified versions of the two plots in Figure~\ref{fig:support}, first to how incorporate summary fine-tuning and then, separately, to discuss how to summarize \emph{all} the clusters found and not just a few. After providing cluster visualizations for the \textsc{support} dataset, we then provide visualizations for the \textsc{rotterdam/gbsg}, \textsc{unos}, and \textsc{kkbox} datasets.

\paragraph{SUPPORT dataset}
For the final \textsc{tuna-kernet (no split)} model trained on the \textsc{support} dataset that we used to evaluate test set accuracy, we visualize the largest 5 clusters in Figure~\ref{fig:support} using our visualization strategies from Section~\ref{sec:model-interpretability}. In Section~\ref{sec:model-interpretability}, we had already pointed out how one could interpret the different clusters, for instance finding the rightmost cluster in Figure~\ref{fig:support}\subref{subfig:support-heatmap} to be of mostly young, cancer-free patients. Meanwhile, the Kaplan-Meier curves in Figure~\ref{fig:support}\subref{subfig:support-km} quickly give us a sense of which clusters appear to be better or worse off than others over time. For this dataset, the largest 5 clusters contain 60.5\% of the proper training data. There are a total of 73 clusters, many of which are small. Of course, more clusters can be visualized at once; the plots get more cluttered though.

If instead summary fine-tuning is used and we want to visualize the largest 5 clusters of the final \textsc{tuna-kernet (no split, sft)} model, then we would make the following changes. First, note that the final \textsc{tuna-kernet (no split, sft)} model is actually identical to the final \textsc{tuna-kernet (no split)} model except that summary fine-tuning is used at the very end, which does not change the survival kernet cluster assignments from the $\varepsilon$-net. Thus, in terms of cluster visualization, the only main change would be that instead of computing the survival curve for each cluster using a Kaplan-Meier plot as in Figure~\ref{fig:support}\subref{subfig:support-km}, we would instead compute the survival curve using equation~\eqref{eq:cluster-KM} with the learned summary functions. Next, based on these newly estimated survival curves, we can look at when they cross probability 1/2 to obtain median survival time estimates. The survival curves from summary fine-tuning in this case are shown in Figure~\ref{fig:support-sft-survival-curves}; note that these curves are in the exact same ordering and correspond to the same clusters as in Figure~\ref{fig:support} (the same color means the same cluster although the median survival times change), but as we can readily see, they look slightly different from the Kaplan-Meier curves from Figure~\ref{fig:support}\subref{subfig:support-km}. Since the survival curves in Figure~\ref{fig:support-sft-survival-curves} are no longer the standard Kaplan-Meier ones, we do not have confidence intervals for them. Meanwhile, the heatmap in Figure~\ref{fig:support}\subref{subfig:support-heatmap} remains the same except that the median survival times at the top would be updated to the new values computed (and that appear in the legend of Figure~\ref{fig:support-sft-survival-curves}); note that the columns may have to be reordered if we still want them to be sorted by median survival time (the new median survival time estimates based on summary fine-tuning do not have to be in the same order as the original median survival time estimates based on Kaplan-Meier curves).

We have found that for some clusters, the survival curve from summary fine-tuning is relatively close to that of the Kaplan-Meier estimate without summary fine-tuning whereas for other clusters, there could be a larger difference (such as the green cluster shifting upward at earlier times in Figure~\ref{fig:support-sft-survival-curves} compared to in Figure~\ref{fig:support}\subref{subfig:support-km}). Some possible explanations could be that the green cluster is in a region of the embedding space that is close to other clusters and its survival function that gets learned using summary fine-tuning could be thought of as being estimated from a larger region of the embedding space (rather than just getting averaged over points assigned to the green cluster by the $\varepsilon$-net), or that even among the points in the cluster, calculating the survival curve by weighting every point equally (as in the standard Kaplan-Meier survival curve estimate) does not make as much sense as using some sort of unequal/nonuniform weighting.

\begin{figure}[!t]
\centering
\includegraphics[scale=.42]{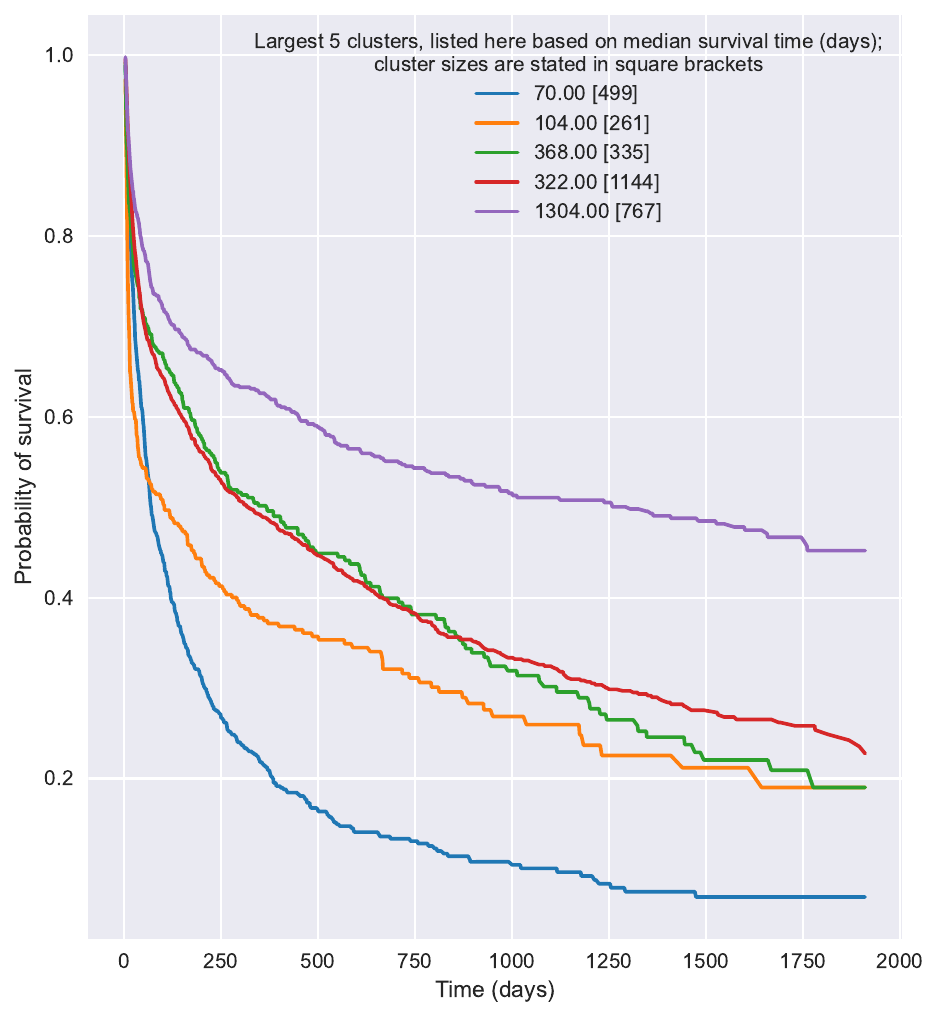}
\caption{Survival curves for the largest 5 clusters found by the final \textsc{tuna-kernet (no split, sft)} model trained on the \textsc{support} dataset; the x-axis measures the number of days since a patient entered the study. Note that the green curve has a higher median survival time estimate than the red curve; this is not a typo in that we are ordering the clusters the exact same way as in Figure~\ref{fig:support}\subref{subfig:support-km}. In particular, the median survival time estimates using summary fine-tuning do \emph{not} have to be ordered the same way as the median survival time estimates from the Kaplan-Meier estimator.}
\label{fig:support-sft-survival-curves}
\end{figure}

Although there are a total of 73 clusters, we have shown only 5 of them thus far. While we could visualize all 73 at once using the same visualization ideas as above, the heatmap and survival curve plots would get cluttered. One way to still visualize information from all 73 clusters is to use agglomerative clustering \citep[Section~14.3.12]{hastie2009elements} to merge many clusters into larger ``superclusters''. For the new superclusters, we can make visualizations similar to Figure~\ref{fig:support}. For instance, using standard complete-linkage agglomerative clustering, we partition the 73 clusters into 10 superclusters in Figure~\ref{fig:support-superclusters}. The resulting feature heatmap in Figure~\ref{fig:support-superclusters}\subref{subfig:support-superclusters-heatmap} gives a more comprehensive view of how features vary across the entire training dataset. In this case, the second-to-rightmost and third-to-rightmost columns/clusters in the heatmap each only have one data point, suggesting that they are outliers among the training data. The rest of the superclusters, however, exhibit some of the same trends we pointed out earlier when only looking at the largest 5 clusters, such as the rightmost supercluster (with the highest median survival time) tending to be for younger, cancer-free patients who often have no comorbidities. Among the leftmost superclusters, we get different common patterns in patient characteristics that are associated with low survival times (e.g., the leftmost four superclusters have unusually high creatinine levels possibly indicative of severe kidney disorder, the supercluster with a median survival time of 80 days corresponds to patients tending to have metastatic cancer and at least one comorbidity). Meanwhile, the survival curves in Figure~\ref{fig:support-superclusters}\subref{subfig:support-superclusters-survival-curves} are more spread out compared to those in Figure~\ref{fig:support-sft-survival-curves}.

Note that when we merge clusters into a supercluster, there is a question of how to compute an aggregated survival curve for the supercluster. If we are not using summary fine-tuning, then the aggregation is straightforward: we continue to use the standard Kaplan-Meier survival curve estimator but instead now use data from all patients in a supercluster to compute that supercluster's survival curve. However, when using summary fine-tuning, there is no standard way to aggregate information from different clusters' learned summary functions. For simplicity, to come up with the survival curves in Figure~\ref{fig:support-superclusters}\subref{subfig:support-superclusters-survival-curves}, for the clusters that are being merged into a supercluster, we take these clusters' survival curves and then just take a weighted average to obtain the survival curve for the supercluster (the weights are proportional to how many points are in each cluster belonging to the supercluster). The median survival times are then estimated by finding the times in which these survival curves cross probability 1/2.

\begin{figure}[!p]
\centering
\begin{subfigure}[b]{\linewidth}
\centering
\includegraphics[scale=.35]{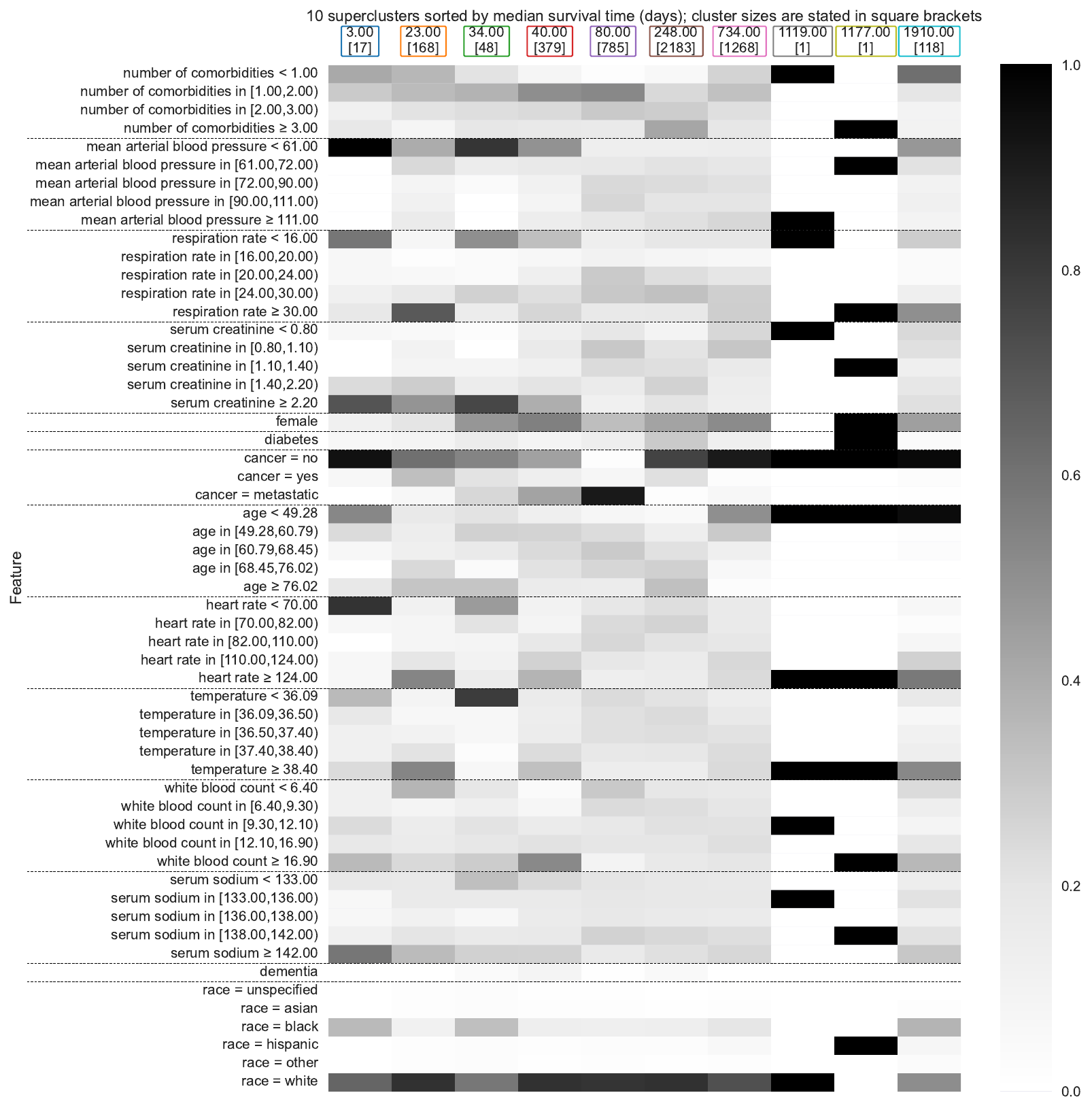}
\caption{}
\label{subfig:support-superclusters-heatmap}
\end{subfigure} \\ \vspace{.5em}
\begin{subfigure}[b]{\linewidth}
\centering
\includegraphics[scale=.35]{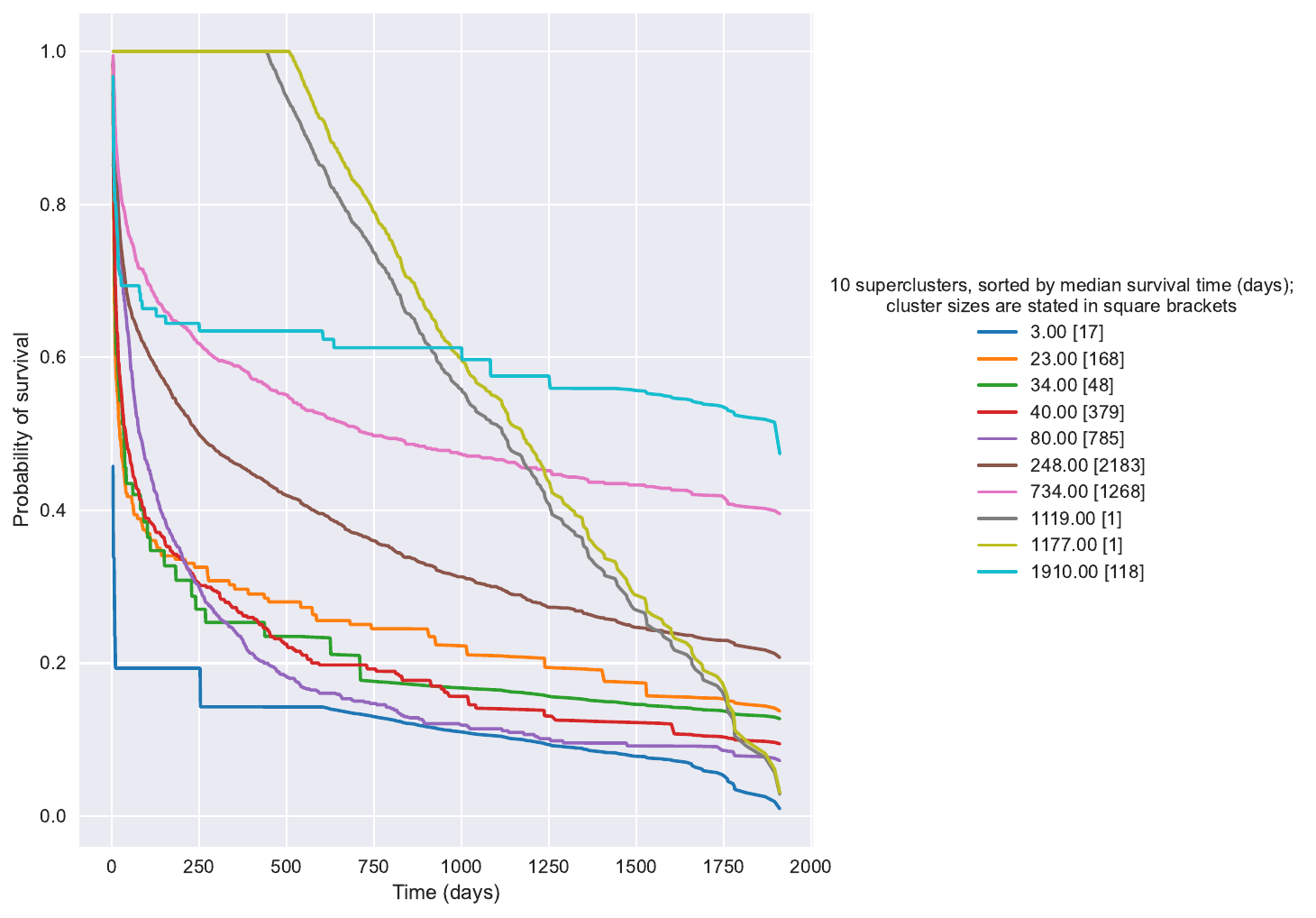}
\vspace{-.5em}
\caption{}
\label{subfig:support-superclusters-survival-curves}
\end{subfigure}
\caption{Visualization of 10 superclusters for the final \textsc{tuna-kernet (no split, sft)} model trained on the \textsc{support} dataset. These 10 superclusters summarize all 73 clusters found by the \textsc{tuna-kernet (no split, sft)} model by merging clusters using complete-linkage agglomerative clustering. Panel (a) shows a feature heatmap visualization. Panel (b) shows survival curves for the same superclusters as in panel (a); the x-axis measures the number of days since a patient entered the study. The second-to-rightmost and third-to-rightmost columns/clusters each only have one data point, suggesting that they are outliers.}
\label{fig:support-superclusters}
\end{figure}

\paragraph{Rotterdam-GBSG dataset}
Next, we plot the feature heatmap and survival curves for the final \textsc{tuna-kernet (no split, sft)} model trained on the \textsc{rotterdam/gbsg} dataset (technically trained only on the Rotterdam portion of the data) in Figure~\ref{fig:rotterdam-gbsg}. In this case the final model only has a total of 10 clusters, so we plot all 10 of them. This number of clusters found is also small enough that using the supercluster idea above that we discussed for the \textsc{support} dataset is unnecessary.

From looking at the heatmap in Figure~\ref{fig:rotterdam-gbsg}\subref{subfig:rotterdam-gbsg-heatmap}, we immediately see a few key patterns: lower survival times are associated with higher numbers of positive nodes (these are lymph nodes with cancer) and higher tumor sizes. The rightmost cluster has the highest median survival time ($>$ 64 months); note that the reason the median survival time is not a precise estimate here and is only a lower bound is that the estimated survival curve for this cluster never crosses the probability 1/2 threshold by the last observed time. For this rightmost cluster, we see that the number of positive nodes is low and also the age distribution of patients in the cluster also tends to be younger than those of the other clusters. These qualitative findings agree with previous literature on breast cancer \citep{sopik2018relationship}.

By looking instead at the survival curves in Figure~\ref{fig:rotterdam-gbsg}\subref{subfig:rotterdam-gbsg-survival-curves}, we immediately notice a pattern in the survival curves: they roughly look like powers of each other, which happens when the proportional hazards assumption holds! This observation agrees with the prediction accuracy results from earlier where we saw that the \textsc{deepsurv} model (which uses a proportional hazards assumption) is the best-performing model we evaluated for \textsc{rotterdam/gbsg}. We suspect that the survival kernet model is essentially too flexible of model in this case (arguably \textsc{deephit} and \textsc{dcm} are also too flexible for this dataset).

\begin{figure}[!t]
\centering
\begin{subfigure}[b]{\linewidth}
\centering
\includegraphics[scale=.42]{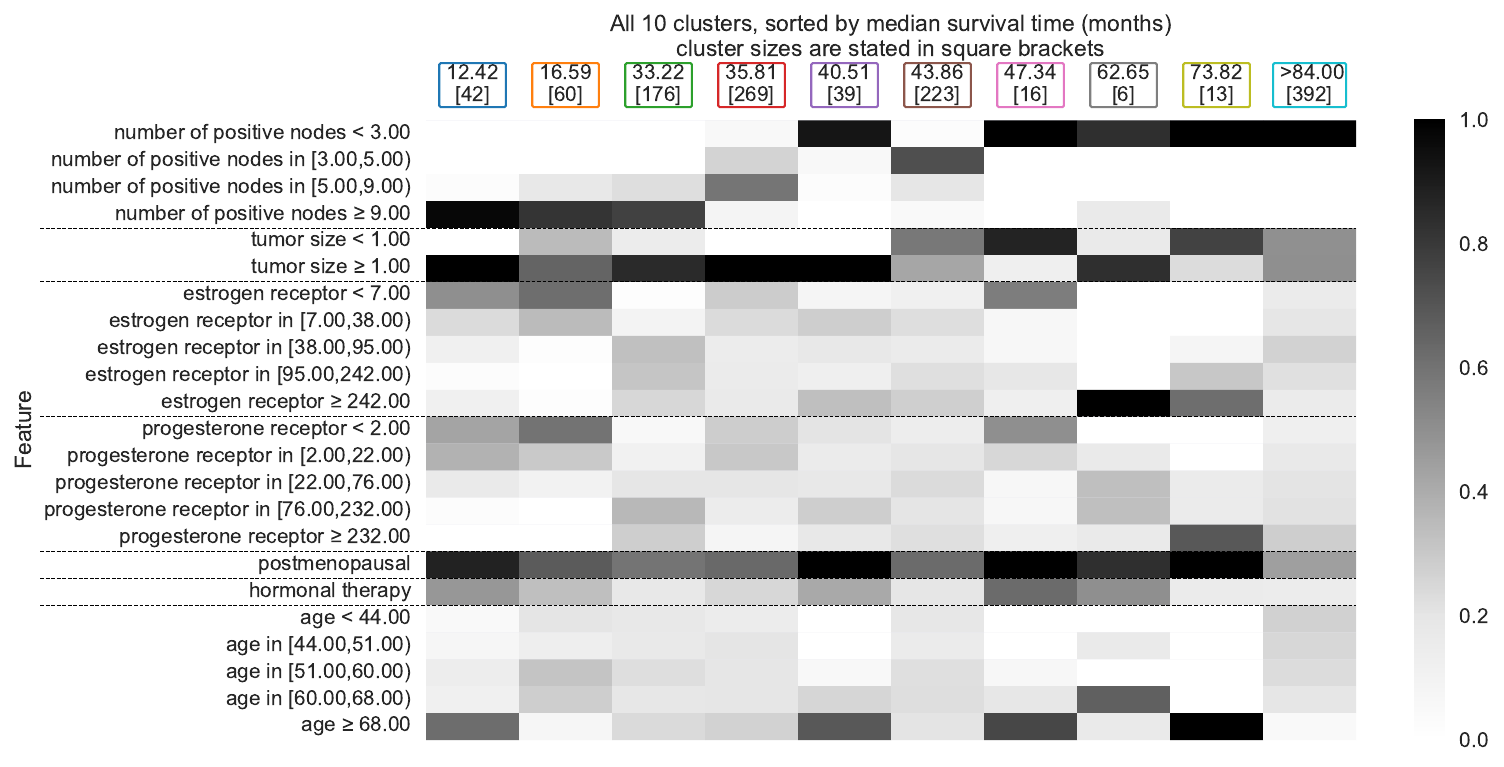}
\caption{}
\label{subfig:rotterdam-gbsg-heatmap}
\end{subfigure}
\\ \vspace{.5em}
\begin{subfigure}[b]{\linewidth}
\centering
\hspace{1em}
\includegraphics[scale=.42]{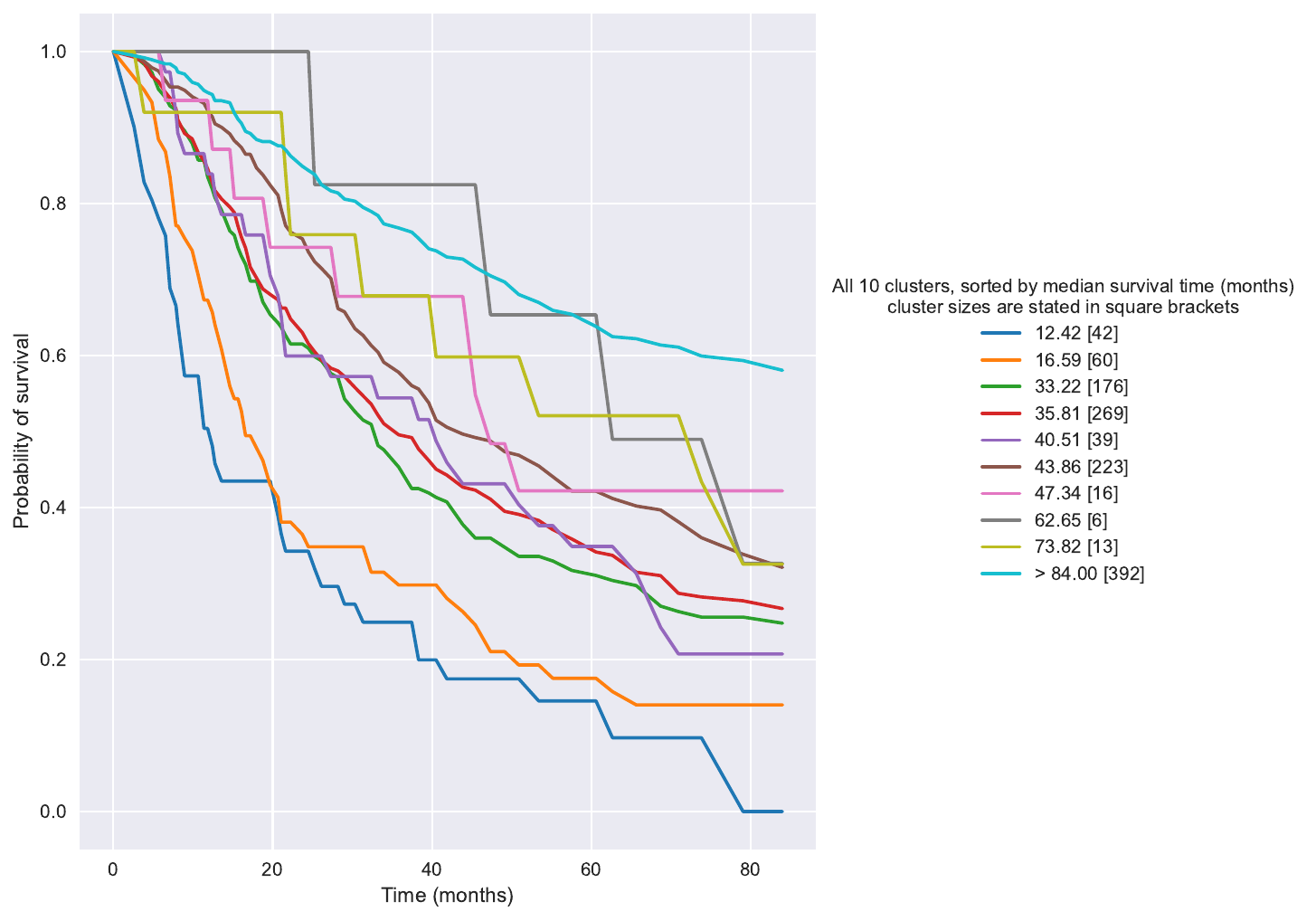}\vspace{-.5em}
\caption{}
\label{subfig:rotterdam-gbsg-survival-curves}
\end{subfigure}
\caption{Visualization of all 10 clusters found by the final  \textsc{tuna-kernet (no split, sft)} model trained on the \textsc{rotterdam/gbsg} dataset (technically trained only on the Rotterdam portion of the data). Panel (a) shows a heatmap visualization that readily provides information on how the clusters are different, highlighting feature values that are prominent for specific clusters; the dotted horizontal lines separate features that correspond to the same underlying variable. Panel (b) shows survival curves (estimated from learned summary functions) for the same clusters as in panel (a); the x-axis indicates recurrence free survival time in months.}
\label{fig:rotterdam-gbsg}
\end{figure}

\paragraph{UNOS dataset}
For the \textsc{unos} dataset, automatic hyperparameter tuning led to a final \textsc{tuna-kernet (no split, sft)} model with a total of 30 clusters that altogether contain 35,080 training points. The largest 5 clusters only account for 87.9\% of the proper training data. For these largest 5 clusters, we plot their feature heatmap and survival curvesin Figure~\ref{fig:unos}. Note that for the heatmap, to prevent it from getting cluttered, we only show 60 rows in the heatmap even though there are a total of 190 rows after feature discretization (as mentioned in Section~\ref{sec:model-interpretability}, we sort features largest to smallest based on the maximum minus minimum intensity value per row although we keep the same variable together, e.g., the different discretized values for ``age'' remain in sequence together).

\begin{figure}[!t]
\centering
\begin{subfigure}[b]{0.48\linewidth}
\centering
\includegraphics[scale=.42]{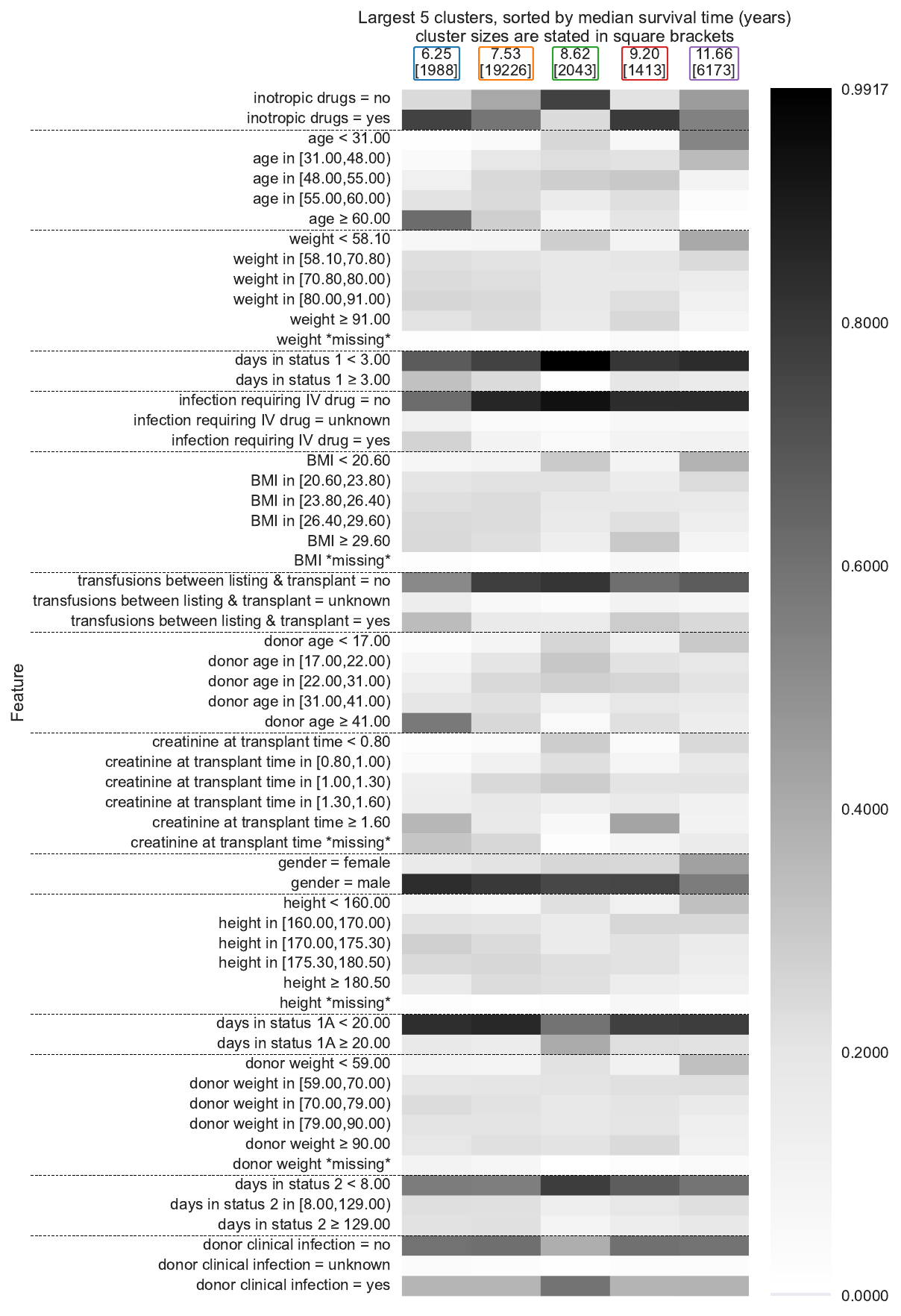}
\caption{}
\label{subfig:unos-heatmap}
\end{subfigure}
~~~
\begin{subfigure}[b]{0.48\linewidth}
\centering
\hspace{1em}
\includegraphics[scale=.42]{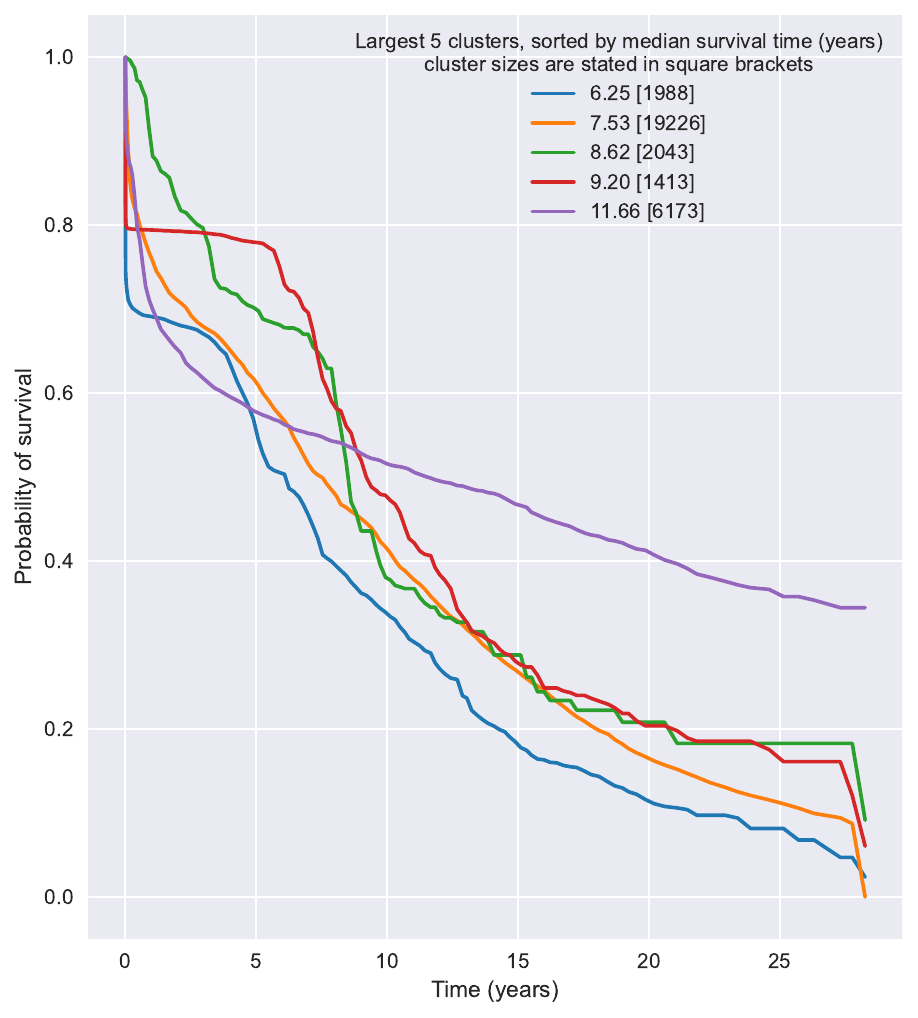}
\caption{}
\label{subfig:unos-survival-curves}
\end{subfigure}
\caption{Visualization of the largest 5 clusters found by the final  \textsc{tuna-kernet (no split, sft)} model trained on the \textsc{unos} dataset. Panel (a) shows a heatmap visualization that readily provides information on how the clusters are different, highlighting feature values that are prominent for specific clusters; the dotted horizontal lines separate features that correspond to the same underlying variable. Panel (b) shows survival curves (estimated from learned summary functions) for the same clusters as in panel (a); the x-axis measures the number of years since a patient received a heart transplant.}
\label{fig:unos}
\end{figure}

From the heatmap in Figure~\ref{fig:unos}\subref{subfig:unos-heatmap}, we can readily tease apart some trends. For instance, of the 5 largest clusters, the one with the lowest median survival time contains patients who are older and whose heart donors are also older. In this cluster, there are also many patients who are either overweight or obese (BMI value of 25 kg/m$^2$ or higher), which is a known risk factor for survival of cardiac transplant patients \citep{russo2010effect}. The rightmost cluster with the highest median survival time contains mostly young patients with low weight, low BMI, and low donor age.

From the survival curves in Figure~\ref{fig:unos}\subref{subfig:unos-survival-curves}, we can tease out some trends among the five clusters shown. For example, the blue cluster (median survival time 6.25 years) is nearly always the worst off among the five clusters shown. Meanwhile, the purple cluster (median survival time 11.66 years) tends to be among the worst off initially (along with the blue cluster) but then has a higher survival probability than the other four clusters shown after around 9 years.

Visualizing all 30 clusters simultaneously would result in a raw feature heatmap and survival curve plot that are too cluttered. We again use complete-linkage agglomerative clustering, this time partitioning the 30 clusters into 10 superclusters, which we then visualize in Figure~\ref{fig:unos-superclusters}. The resulting feature heatmap in Figure~\ref{fig:unos-superclusters}\subref{subfig:unos-superclusters-heatmap} provides a more complete picture of how features vary across the proper training dataset compared to the earlier heatmap in Figure~\ref{fig:unos}\subref{subfig:unos-heatmap}. Note that the rightmost supercluster (with the highest median survival time of 25.13 years) corresponds to a single patient who can be considered an outlier. The five superclusters/columns in the middle of the heatmap (with median survival times 5.08, 5.88, 7.87, 8.62, and 9.20 years) exhibit trends quite similar to what we already saw in Figure~\ref{fig:unos}. On the other hand, the other five superclusters (with median survival times 0.01, 0.04, 15.74, 16.74, and 25.13 years) are all for young patients who receive heart transplants typically from young donors (in fact, we could use a finer-grain discretization of continuous features into more than 5 evenly spaced quantiles to reveal that many patients in these clusters are pediatric patients). Meanwhile, the survival curves in Figure~\ref{fig:unos-superclusters}\subref{subfig:unos-superclusters-survival-curves} show much larger differences compared to the earlier plot in Figure~\ref{fig:unos}\subref{subfig:unos-survival-curves} that was only of the 5 largest clusters.

\begin{figure}[!p]
\centering
\begin{subfigure}[b]{\linewidth}
\centering
\includegraphics[scale=.35]{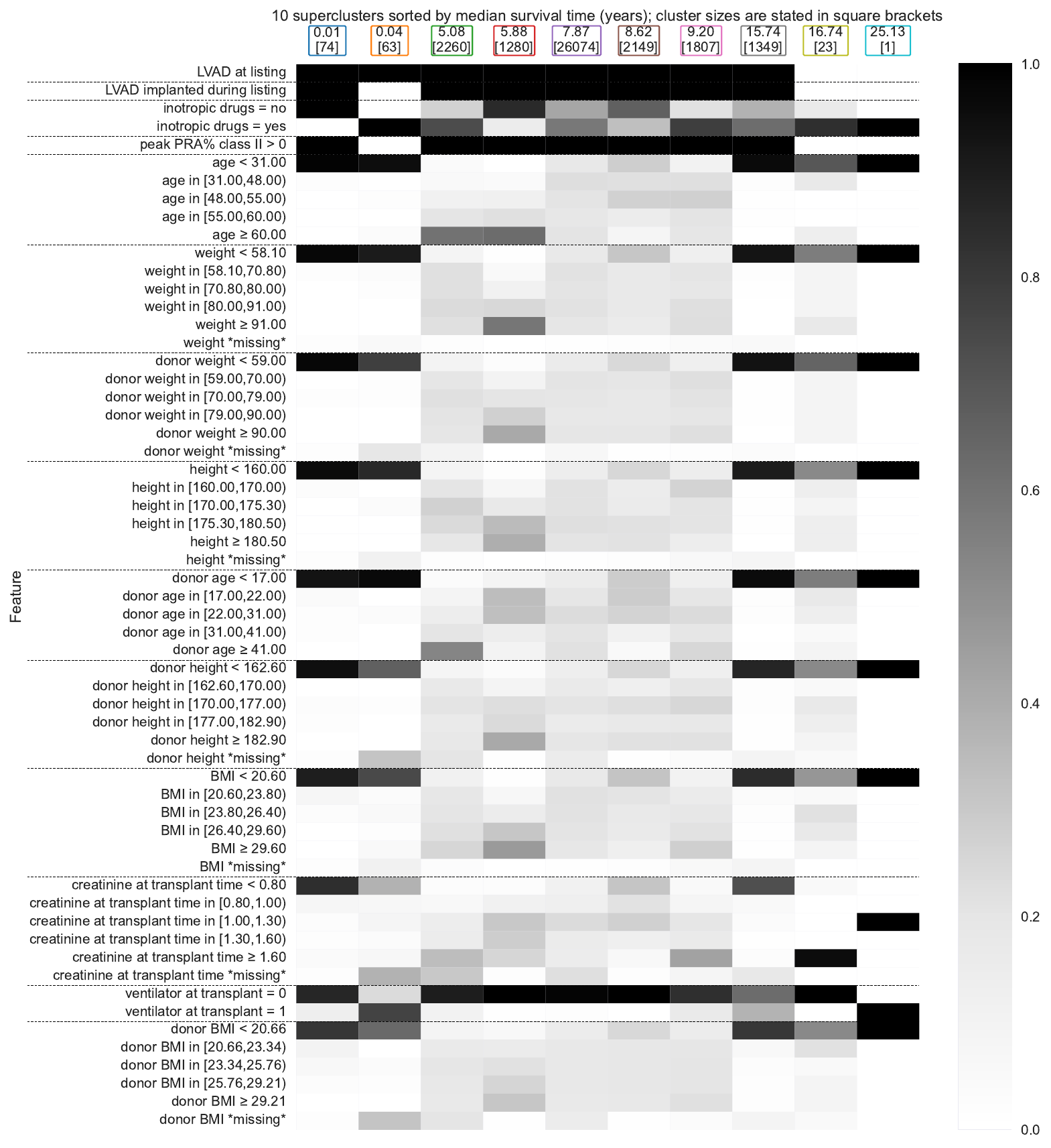}
\caption{}
\label{subfig:unos-superclusters-heatmap}
\end{subfigure} \\ \vspace{.5em}
\begin{subfigure}[b]{\linewidth}
\centering
\hspace{1em}
\includegraphics[scale=.35]{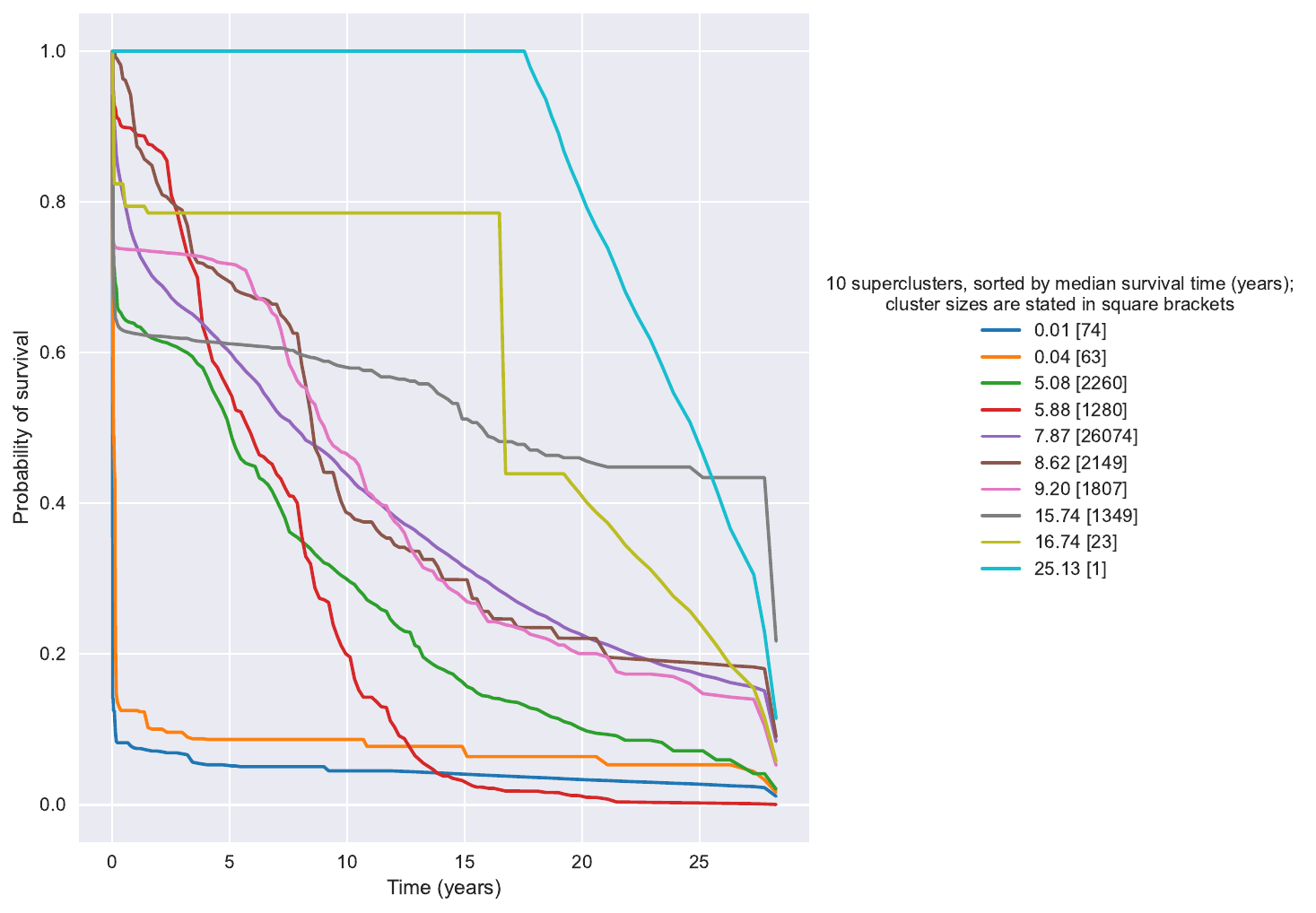}
\vspace{-.5em}
\caption{}
\label{subfig:unos-superclusters-survival-curves}
\end{subfigure}
\caption{Visualization of 10 superclusters (summarizing 30 clusters) for the final \textsc{tuna-kernet (no split, sft)} model trained on the \textsc{unos} dataset. Panel (a) shows a feature heatmap visualization (only 59 rows of features are shown out of 190). Panel (b) shows survival curves for the same superclusters as in panel (a); the x-axis measures the number of years since a patient received a heart transplant.}
\label{fig:unos-superclusters}
\end{figure}

If we focus only on the five superclusters corresponding to young patients and re-plot the raw feature heatmap and survival curves only for these five superclusters, we obtain the plots in Figure~\ref{fig:unos-superclusters-young}. As a reminder, our strategy for ranking raw features in our heatmap visualization depends on which specific (super)clusters are being displayed, which is why the raw features are ranked differently between Figure~\ref{fig:unos-superclusters}\subref{subfig:unos-superclusters-heatmap} and Figure~\ref{fig:unos-superclusters-young}\subref{subfig:unos-superclusters-young-heatmap}. From Figure~\ref{fig:unos-superclusters-young}\subref{subfig:unos-superclusters-young-heatmap}, we see that the differences in these five superclusters could be explained in part by features such as whether a Left Ventricular Assist Device (LVAD) was already in place or implanted at the time of listing, whether inotropic drugs were used, creatinine level at transplant time, and whether a ventilator was used at transplant time; in fact, these features are known to be relevant for survival of pediatric cardiac transplantation \citep{dipchand2018current}.

\begin{figure}[!p]
\centering
\begin{subfigure}[b]{\linewidth}
\centering
\includegraphics[scale=.35]{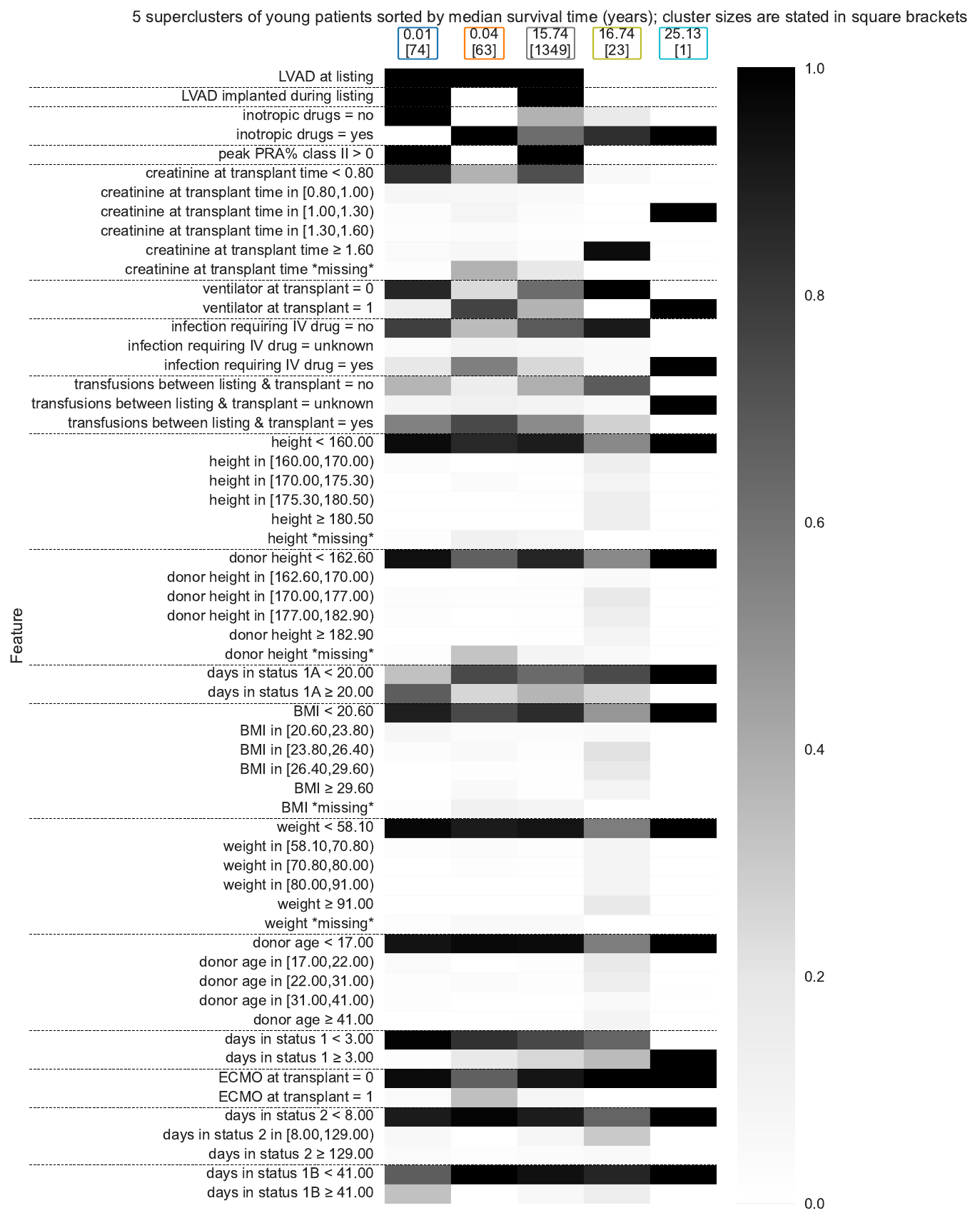}
\caption{}
\label{subfig:unos-superclusters-young-heatmap}
\end{subfigure} \\ \vspace{.5em}
\begin{subfigure}[b]{\linewidth}
\centering
\hspace{1em}
\includegraphics[scale=.35]{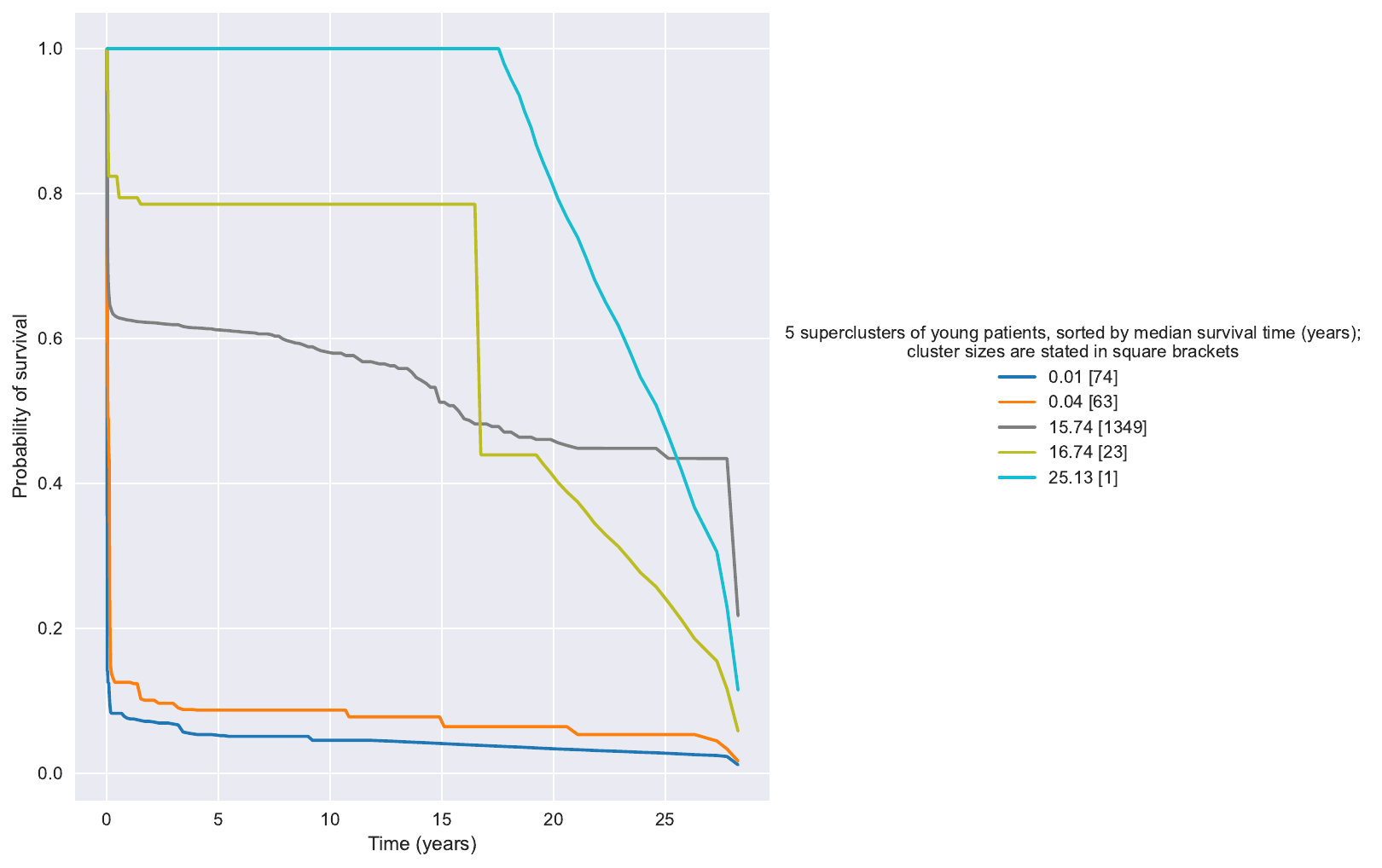}
\vspace{-.5em}
\caption{}
\label{subfig:unos-superclusters-young-survival-curves}
\end{subfigure}
\caption{For the \textsc{unos} dataset, among the 10 superclusters from Figure~\ref{fig:unos-superclusters}, we now only display the superclusters with median survival times 0.01, 0.04, 15.74, 16.74, and 25.13 years corresponding to young patients. Panel (a) shows a feature heatmap visualization (only 59 rows of features are shown out of 190), which has the raw features sorted differently from Figure~\ref{fig:unos-superclusters}\subref{subfig:unos-superclusters-heatmap} to emphasize how these young patients' superclusters differ. Panel (b) shows survival curves for the same superclusters as in panel (a); as before, the x-axis measures the number of years since a patient received a heart transplant.}
\label{fig:unos-superclusters-young}
\end{figure}

\paragraph{KKBOX dataset}
We can repeat the same visualization ideas for the \textsc{kkbox} dataset. As the ideas are the same, we provide only a summary of findings. Using the same supercluster idea presented previously, we visualize 10 superclusters in Figure~\ref{fig:kkbox-superclusters} for the final \textsc{tuna-kernet (no split, sft)} model trained on the \textsc{kkbox} dataset. These superclusters summarize all 108,050 clusters found by the final \textsc{tuna-kernet (no split, sft)} model and contain 1,576,251 data points (the proper training data). We also provide a visualization of just the largest 10 clusters (containing 28.7\% of the proper training data) in Appendix~\ref{sec:additional-visualizations}.
Among superclusters corresponding to users who subscribed to the music streaming service for less than a month, there's a higher fraction of users who are less than 19 years old and who have the lowest amount of payment. Among superclusters corresponding to the longest subscription times, there tends to be fewer previous churns for these users. Many of the survival curves show a steep drop in survival probability at around the 30-day mark, corresponding to a one-month promotion period.

\begin{figure}[!p]
\centering
\begin{subfigure}[b]{\linewidth}
\centering
\includegraphics[scale=.37]{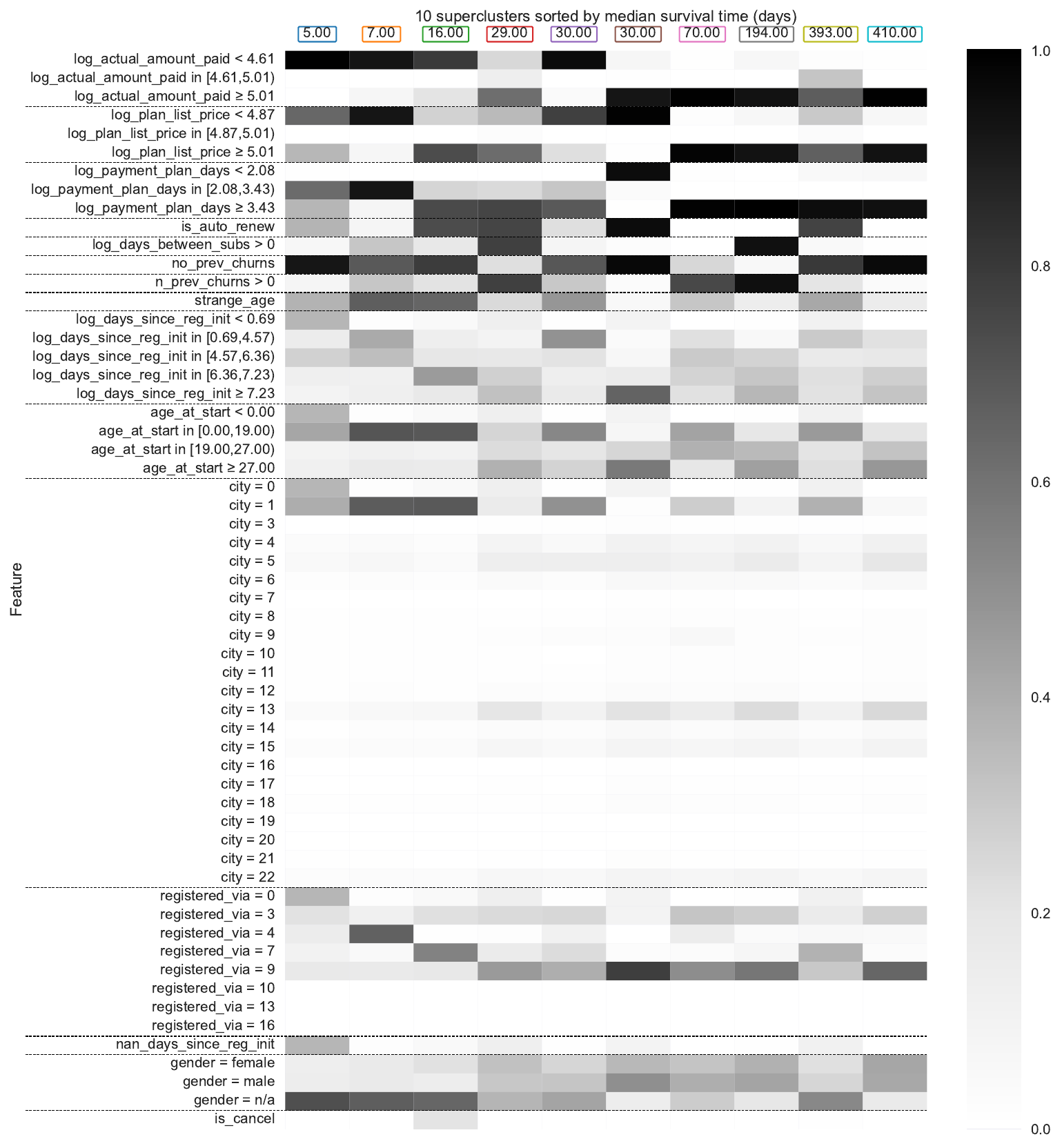}
\caption{}
\label{subfig:kkbox-superclusters-heatmap}
\end{subfigure} \\ \vspace{.5em}
\begin{subfigure}[b]{\linewidth}
\centering
\hspace{1em}
\includegraphics[scale=.37]{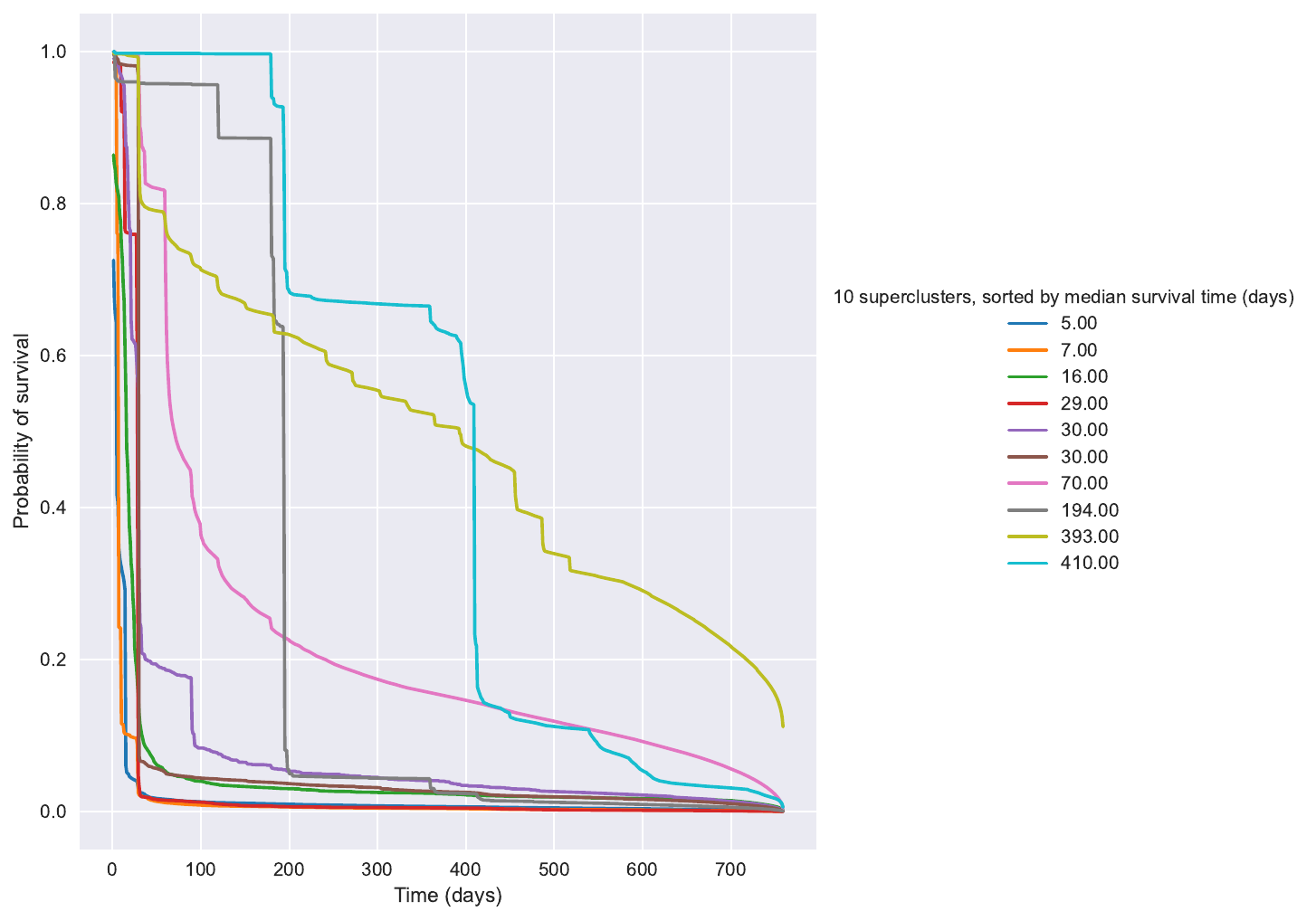}
\vspace{-.5em}
\caption{}
\label{subfig:kkbox-superclusters-survival-curves}
\end{subfigure}
\caption{Visualization of 10 superclusters for the final \textsc{tuna-kernet (no split, sft)} model trained on the \textsc{kkbox} dataset. These 10 superclusters summarize all 108,050 clusters found by the \textsc{tuna-kernet (no split, sft)} model by merging clusters using complete-linkage agglomerative clustering. Panel (a) shows a feature heatmap visualization. Panel (b) shows survival curves for the same superclusters as in panel (a); the x-axis measures the number of days since an individual subscribed to the music streaming service.}
\label{fig:kkbox-superclusters}
\end{figure}

\paragraph{Final remarks on visualization}
We end this section with a reminder that as pointed out at the end of Section~\ref{sec:model-interpretability}, a survival kernet model represents the hazard of any feature vector as a weighted combination of clusters. We can determine which clusters have nonzero weight for any given feature vector and only visualize these particular clusters. This visualization could be helpful to provide as ``forecast evidence'' or to assist model debugging. As an example, we can find test data with predictions that are inconsistent with the ground truth (e.g., if the test data point is not censored and its observed survival time is far from the predicted median survival time, or if the test data point is censored and the predicted median survival time is much lower than the observed time). For these test data that the model has difficulty with, we could examine which clusters have high weight, what features are prominent for these clusters, and what the clusters' survival curves are. After all, the predictions are made with these clusters' summary functions.

\section{Discussion\label{sec:discussion}}

In high-stakes applications such as healthcare, for survival models to be deployed in practice and producing time-to-event predictions using large (potentially live) streams of data in years to come, we believe that these models should be accurate, scalable, robust, interpretable, and theoretically sound. Our proposed survival kernet model achieves many of these properties although some only partially. Specifically, survival kernets are accurate, scalable, interpretable, and for a special case of the model has a theoretical guarantee. We discuss a number of limitations of our work next.

\paragraph{Theoretical analysis of the best-performing survival kernet variant}
As our experimental results show, the best-performing variant of survival kernets sets pre-training and training data to be the same and also uses summary fine-tuning. However, neither of these are covered by our theoretical analysis.

From a technical standpoint, sample splitting (i.e., having pre-training and training data be disjoint and moreover appear i.i.d.)~is crucial in our theoretical analysis. Specifically, our proof of Theorem~\ref{thm:main-result} routinely uses the fact that the training embedding vectors $\widetilde{X}_{1}=\widehat{\phi}(X_{1}),\dots,\widetilde{X}_{n}=\widehat{\phi}(X_{n})$ appear i.i.d. If the pre-training and training data are actually the same, then the learned base neural net $\widehat{\phi}$ is a function of $X_{1},\dots,X_{n}$ (along with their observed times and event indicators), so the embedding vectors are no longer guaranteed to be independent. We do not currently know of a good way to resolve this technical issue. We expect that a proof technique that can address this issue would be quite broadly applicable beyond analyzing survival kernets.\footnote{The idea of using sample splitting to get guarantees really is not limited to our problem setup. Another example of using sample splitting has been in decision forest regression, where a commonly used proof technique (e.g., \citealt{biau2012analysis,denil2014narrowing,wager2018estimation,athey2019generalized})---that translates into how the decision trees in the forest are trained---is to split the training data into two portions: the first portion is used to decide on branching rules for the decision trees, whereas the second portion is used to decide on predicted labels at the leaves of the trees. In practice, random forests are rarely trained using this sort of sample splitting and, in fact, this sample splitting empirically can worsen the model's prediction accuracy in some circumstances (e.g., see the UCI dataset experiments by~\citet{denil2014narrowing}).}

A key reason for why our theoretical analysis requires sample splitting is that our analysis is agnostic to the choice of base neural network $\phi$ used. We leave the choice of $\phi$ up to the modeler since in practice, this is indeed how neural survival analysis models are used. For example, depending on the format of raw inputs, different neural net architectures could be used (e.g., using a multilayer perceptron for tabular data, using a convolutional neural network or vision transformer for images, using a recurrent neural network for variable-length time series). Even for the same class of base neural networks, such as multilayer perceptrons, the modeler could choose to try different options (such as whether to use batch norm). In contrast, existing neural survival analysis models with theoretical guarantees \citep{zhong2021deep,zhong2022deep} require the base neural network to be a multilayer perceptron with a number of constraints that are not typically used by practitioners ($\ell_0$ and sup norm constraints) and, moreover, their theoretical guarantees assume that the neural network training finds the global minimum of the training loss, which is typically an impractical assumption.

As for summary fine-tuning, we suspect that if one treats the learned kernel function as fixed, then theoretical analysis should be possible although the theory would not say anything about the learned kernel function. Of course, the theory that we have presented does not say anything about the learned kernel function either since we treat the base neural net as a black box. Future work could consider the case when the base neural net is chosen from a specific family (e.g., a multilayer perceptron with ReLU activations), which could lead to a more nuanced theoretical guarantee (such as what is done by \citet{zhong2021deep,zhong2022deep} for other deep survival analysis models, although as we already pointed out, their analysis requires some impractical assumptions on the neural net family).

A separate issue related to the theory is whether it is possible to come up with diagnostics that could be practically computed for any learned embedding space to assess to what extent it satisfies the theoretical assumptions relating the embedding space to survival and censoring times. As far as we are aware, there is no existing way to do this even if the neural net is the identity function so that the embedding space is just the raw feature space.

\paragraph{Better handling small datasets}
Survival kernets are inherently nonparametric. They make predictions using all the training data, possibly with some compression. We saw in the experimental results that survival kernets do not work as well as several baselines on the smallest dataset tested (\textsc{rotterdam/gbsg}), for which one possible explanation is that for this dataset, the proportional hazards assumption is reasonable. A future research direction could be to figure out how to combine survival kernets with a parametric model so that when there are too few data points, we mostly use the parametric model (which could, for instance, use a proportional hazards assumption or an accelerated failure time assumption), and as more data become available, we gradually transition to taking a nonparametric approach. Figuring out a seamless way to do this transition could be an interesting research direction to explore.

\paragraph{Better handling low-dimensional structure}
The \textsc{kkbox} dataset has a somewhat peculiar structure not present in the other datasets we considered: the observed survival times for \textsc{kkbox} are always integers in the set $\{1,2,\dots,759\}$; even though there are close to 3 million data points, the number of unique observed times is exactly 759. The number of unique observed times divided by the total number of data points is equal to 0.000270. In contrast, this ratio is significantly higher for all the other datasets considered: the ratio is equal to 0.551 for the \textsc{rotterdam/gbsg} dataset, 0.193 for the \textsc{support} dataset, and 0.149 for the \textsc{unos} dataset. Put another way, the \textsc{kkbox} dataset has a significantly smaller time grid needed to represent observed times, which could be thought of as some low-dimensional structure in time. Of course, there could separately be low-dimensional structure present in the feature vectors themselves.

Currently, survival kernets do not have any sort of special procedure for exploiting low-dimensional structure. As already stated above, survival kernets are inherently nonparametric and compute predictions based on similarity scores to training points. If training feature vectors are noisy, then if we do not denoise the feature vectors somehow (e.g., using singular value thresholding as is done in principal component regression \citep{agarwal2021robustness}), then we suspect that survival kernets could struggle to learn a good model compared to a simpler parametric model. While using an $\varepsilon$-net to cluster the training data can help denoise, right now we do not have an efficient manner that quickly determines what value of $\varepsilon$ should be used. Moreover, this $\varepsilon$-net construction is only done after the base neural net $\phi$ has already been trained, which means relying on the $\varepsilon$-net to denoise would not affect neural net training itself (for which how we are currently learning the base neural net using the infinite-bandwidth kernel is not explicitly incorporating any denoising mechanism).

\paragraph{Improving the warm-start strategy and hyperparameter tuning}
Another possible explanation for why survival kernets does not achieve the best accuracy on the \textsc{kkbox} dataset could be that for \textsc{kkbox}, the \textsc{xgboost} initialization is not very good. Note that even when using \textsc{xgboost} for survival analysis, there are a number of hyperparameters that can be set, and for simplicity, currently we are just using Cox regression as \textsc{xgboost}'s objective. At the time of writing, \textsc{xgboost} also supports using accelerated failure time models with Gaussian, logistic, and extreme value distributions. Sweeping over more of these options could improve \textsc{xgboost}'s prediction accuracy which could in turn improve the accuracy of a survival kernet model when using the \textsc{tuna} warm-start procedure with \textsc{xgboost}.

Separately, as already pointed out by \citet{chen2020deep} in the original DKSA paper, using a tree ensemble to warm-start neural net training is not required. An alternative approach is to use the base neural net learned from, for instance, \textsc{deephit} (possibly removing a subset of the final layers of the neural net) and then fine-tuning using the DKSA loss.

Ultimately, although we have demonstrated that our \textsc{tuna} warm-start strategy is clearly better than using standard random neural net parameter initialization, figuring out the best warm-start strategy to use for survival kernets (which might be dataset-dependent) remains an interesting open question. For example, one research direction could be to see whether contrastive representation learning (e.g., see the survey by \citet{le2020contrastive}) could be used to warm-start survival kernet training and, if so, how well it works.

More generally, the \textsc{tuna} warm-start strategy also aims to save time with hyperparameter selection by having some hyperparameters tuned as part of the warm-start procedure and then treated as fixed afterward. We leave a more thorough investigation of how to optimize hyperparameters for future work. For simplicity, when sweeping over hyperparameters, we used grid search. We did not explore other strategies such as Bayesian hyperparameter optimization (e.g., \citealt{akiba2019optuna}) nor did we carefully try to determine if some hyperparameters simply do not need to be tuned at all (in that there is a reasonable default value that can be used).

\paragraph{Impact of clustering on accuracy and interpretability}
For survival kernets, one could easily replace the $\varepsilon$-net-based clustering with another clustering approach. We chose the $\varepsilon$-net-based clustering for theoretical convenience: $\varepsilon$-nets can easily be constructed in the greedy manner we had mentioned, and importantly, the resulting clusters come with theoretical properties. Consequently, $\varepsilon$-nets have been used to prove that ``compressed'' versions of nearest neighbor or kernel regression are consistent \citep{kpotufe2017time,kontorovich2017nearest,hanneke2021universal}. In contrast, had we used, for instance, \mbox{k-means} for clustering, then even though k-means at this point also has a lot of known theory (e.g., \citealt{rakhlin2006stability,ben2007stability,kumar2010clustering,von2010clustering,franti2019much}), it can be quite unstable in practice and the resulting clusters do not in general come with theoretical assurances. As such, we suspect that using k-means clustering with survival kernets would lead to a test-time predictor that is harder to theoretically analyze.

Putting aside theoretical convenience, by empirically evaluating how a wider range of clustering algorithms work with our survival kernet framework, we could potentially find that some of these lead to better prediction accuracy or better model interpretability compared to using $\varepsilon$-net clustering. We remark that especially as our experiments use embedding vectors that are on a hypersphere, then hyperspherical clustering methods could be used such as fitting a mixture of von Mises-Fisher distributions \citep{banerjee2005clustering}. We leave an empirical study of using different clustering methods with survival kernets for future work.

\paragraph{Handling outliers}
Separately, we point out that even with our $\varepsilon$-net clustering approach, in practice many clusters could have very few data points assigned to them. Currently we are not removing such small clusters, although it might make sense to do so or to somehow flag these clusters as outliers and treat them a bit differently. Even pointing these clusters out to the user and visualizing them using the cluster visualization approach we had presented could be useful for model debugging purposes. In fact, this problem of addressing outliers would also arise if we replace the clustering method with, for instance, DBSCAN \citep{ester1996density}, which automatically flags various data points as outliers as part of the clustering procedure.

\paragraph{Calibration}
When a survival model's predicted number of critical events within a time interval closely resembles the observed number, then the model is considered well-calibrated \citep{haider2020effective}. This could be thought of as a different notion of accuracy than the one we used in our experiments (namely time-dependent concordance index by \citet{antolini2005time}). We have not considered calibration in our paper, which in practice can be important. A straightforward way to incorporate calibration is to introduce an additional loss term such as the X-CAL loss by \citet{goldstein2020x}. We leave a thorough investigation of the calibration properties of survival kernets, with and without the addition of the X-CAL loss, to future work.

\paragraph{Robustness}
We have not discussed robustness of survival kernets thus far although in some sense they already have a limited amount of robustness built-in.
Specifically, survival kernets do not assume a parametric form for the underlying survival distribution; the base neural net is used only to parameterize the kernel function, which is then plugged into the nonparametric conditional Kaplan-Meier estimator. By being nonparametric, survival kernets should be more robust to the data coming from different survival distributions. In fact, our theoretical guarantee works under a fairly broad range of settings. However, survival kernets are currently not designed to handle changes in the data distribution such as covariate shift, where test feature vectors come from a different distribution than that of training but how feature vectors relate to survival and censoring times remains unchanged at test time. Modifying survival kernets to better accommodate test data appearing different from training data is an important future research direction to explore.

\section*{Acknowledgments}
This work was supported in part by NSF CAREER award \#2047981, and by Health Resources and Services Administration contract 234-2005-370011C. The content is the responsibility of the author alone and does not necessarily reflect the views or policies of the National Science Foundation or the Department of Health and Human Services, nor does mention of trade names, commercial products, or organizations imply endorsement by the U.S.~Government. We thank Shu Hu for pointing out some typos in an earlier draft, and we thank the anonymous reviewers for very helpful feedback.

\appendix

\section{Proofs}
\label{sec:proofs}

We present the proofs for Claim~\ref{claim:unit-hypersphere-covering-number} (unit hypersphere's covering number) and Claim~\ref{claim:uniform-unit-hypersphere-intrinsic-dimension} (intrinsic dimension of the unit hypersphere) in Appendices~\ref{sec:pf-prop-unit-hypersphere-covering-number} and~\ref{sec:pf-prop-uniform-unit-hypersphere-intrinsic-dimension}, respectively. The proof of our main theoretical result Theorem~\ref{thm:main-result} (survival kernet error bound) is in Appendix~\ref{sec:pf-main-result}.

\subsection{Proof of Claim~\ref{claim:unit-hypersphere-covering-number}}
\label{sec:pf-prop-unit-hypersphere-covering-number}

Let $\{\zeta_{1},\zeta_{2},\dots,\zeta_{N}\}$ be an $\varepsilon/2$-covering of the $d$-dimensional unit Euclidean ball (denoted by $\mathcal{B}_{d})$ such that $N$ is the smallest value possible, i.e., $N={N(\varepsilon/2;\mathcal{B}_{d})}$; note that the unit Euclidean ball includes the interior of the ball whereas the unit hypersphere $\mathbb{S}^{d-1}$ is only the outer shell. We recall a standard result of covering numbers (Corollary 4.2.13 of \citet{vershynin2018high}) that
\begin{equation}
N=N(\varepsilon/2;\mathcal{B}_{d})\le\Big(\frac{4}{\varepsilon}+1\Big)^{d}.\label{eq:basic-euclidean-ball-delta-2-covering-number}
\end{equation}
Then since $\mathbb{S}^{d-1}\subseteq\mathcal{B}_{d}$, for every $v\in\mathbb{S}^{d-1}$, there exists some $\zeta_j$ such that ${\|v-\zeta_{j}\|}\le\varepsilon/2$. This implies that there exists a subset $\Xi$ of $\{\zeta_{1},\zeta_{2},\dots,\zeta_{N}\}$ such that every point in $\mathbb{S}^{d-1}$ is at most a distance $\varepsilon/2$ from some point in $\Xi$. Let $\Xi^{*}$ be such a subset that has the smallest size, and denote its unique elements as $\xi_{1},\xi_{2},\dots,\xi_{|\Xi^{*}|}$.

We next show how to construct an $\varepsilon$-cover for $\mathbb{S}^{d-1}$ using $\xi_{1},\xi_{2},\dots,\xi_{|\Xi^{*}|}$. For each point $\xi_{i}$, note that there exists some $s_{i}\in\mathbb{S}^{d-1}$ such that $s_{i}\in B(\xi_{i},\varepsilon/2)$ (if such an $s_{i}$ does not exist, then $\xi_{i}$ would not have been included in $\Xi^{*}$, i.e., $\Xi^{*}$ does not actually have the smallest size possible while still covering $\mathbb{S}^{d-1}$). Next, we observe that for any $v\in B(\xi_{i},\varepsilon/2)$, the triangle inequality implies that
\[
\|s_{i}-v\|_{2}=\|s_{i}-\xi_{i}+\xi_{i}-v\|_{2}\le\underbrace{\|s_{i}-\xi_{i}\|_{2}}_{\le\varepsilon/2}+\underbrace{\|\xi_{i}-v\|_{2}}_{\le\varepsilon/2},
\]
which means that $v$ is also in $B(s_{i},\varepsilon)$. Hence,
\[
B(s_{i},\varepsilon)\supseteq B(\xi_{i},\varepsilon/2).
\]
This means that $\cup_{i=1}^{|\Xi^{*}|}B(s_{i},\varepsilon)\supseteq\cup_{i=1}^{|\Xi^{*}|}B(\xi_{i},\varepsilon/2)$, and since the latter covers $\mathbb{S}^{d-1}$, so does the former. We thus conclude that $\{s_{1},s_{2},\dots,s_{|\Xi^{*}|}\}$ forms a valid $\varepsilon$-cover of $\mathbb{S}^{d-1}$. The $\varepsilon$-cover of $\mathbb{S}^{d-1}$ that has the smallest size possible is thus upper-bounded by $|\Xi^{*}|\le N\le\big(\frac{4}{\varepsilon}+1\big)^{d}$ using inequality \eqref{eq:basic-euclidean-ball-delta-2-covering-number}.$\hfill\blacksquare$

\subsection{Proof of Claim~\ref{claim:uniform-unit-hypersphere-intrinsic-dimension}}
\label{sec:pf-prop-uniform-unit-hypersphere-intrinsic-dimension}

Let $f_{\widetilde{\mathsf{X}}}$ denote the PDF of $\mathbb{P}_{\widetilde{\mathsf{X}}}$, where we assume that $f_{\widetilde{\mathsf{X}}}(\widetilde{z})\ge c_{\min}>0$ for all $\widetilde{z}\in\mathbb{S}^{d-1}$, and $f_{\widetilde{\mathsf{X}}}(\widetilde{z})=0$ for all $\widetilde{z}\notin\mathbb{S}^{d-1}$. Let $\widetilde{x}\in\widetilde{\mathcal{X}}$ and $r\in(0,1]$. Then
\[
\mathbb{P}_{\widetilde{\mathsf{X}}}(B(\widetilde{x},r))
= \int_{B(\widetilde{x}, r)} f_{\widetilde{\mathsf{X}}}(\widetilde{z}) d\widetilde{z}
\ge
\int_{B(\widetilde{x}, r)} c_{\min} \cdot d\widetilde{z}
= c_{\min} \cdot \text{area}\big(\underbrace{\mathbb{S}^{d-1}\cap B(\widetilde{x},r)}_{\mathcal{C}(\widetilde{x},r):=}\big).
\]
Note that $\mathcal{C}(\widetilde{x},r)$ is a hyperspherical cap. We make a geometric observation:
\[
\text{area}\big(\mathcal{C}(\widetilde{x},r)\big)\ge\text{volume}\big(%
\underbrace{\mathcal{C}(\widetilde{x},r)\text{ projected down to }\mathbb{R}^{d-1}\text{ in the direction of }\widetilde{x}}_{\mathcal{A}:=}\big),
\]
where $\mathcal{A}$ is depicted as a solid shaded green circle in Figure~\ref{fig:unit-hypersphere-intrinsic-dimension-pf-projection}. The core of the remaining analysis is in determining the volume of $\mathcal{A}$. With a bit of trigonometry, we can derive that $\mathcal{A}$ is precisely equal to a ball centered at the origin with radius $\frac{1}2r\sqrt{4-r^2}$; see Figure~\ref{fig:unit-hypersphere-intrinsic-dimension-pf-trig}. Since $r\in(0,1]$, the radius of $\mathcal{A}$ satisfies the bound $\frac{1}2r\sqrt{4-r^2}\ge\frac{1}2r\sqrt{3}$. Thus, the volume of $\mathcal{A}$ is lower-bounded by the volume of a ball in $\mathbb{R}^{d-1}$ with radius $\frac{\sqrt{3}}2r$, which is $\frac{\pi^{(d-1)/2}}{\Gamma(\frac{d-1}2+1)}(\frac{\sqrt{3}}2r)^{d-1}$, where $\Gamma(z):=\int_{0}^{\infty}u^{z-1}e^{-u}du$. Putting everything together,
\begin{align*}
\mathbb{P}_{\widetilde{\mathsf{X}}}(B(\widetilde{x},r))
&\ge c_{\min}\cdot\text{area}(\mathcal{C}(\widetilde{x},r)) \\
&\ge c_{\min}\cdot\text{volume}(\mathcal{A}) \\
&\ge c_{\min}\cdot\frac{\pi^{(d-1)/2}}{\Gamma(\frac{d-1}2+1)}\Big(\frac{\sqrt{3}}2r\Big)^{d-1} \\
&= \Big[c_{\min}\cdot\frac{\pi^{(d-1)/2}}{\Gamma(\frac{d-1}2+1)}\Big(\frac{\sqrt{3}}2\Big)^{d-1}\Big] r^{d-1},
\end{align*}
which establishes that the embedding space has intrinsic dimension $d-1$.

\begin{figure}[!t]
\centering
\includegraphics[scale=0.55]{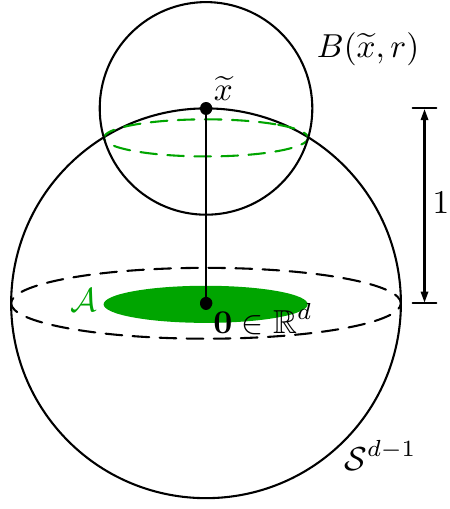}
\caption{This diagram shows the geometric observation in the proof for the case of $d=3$ dimensional Euclidean space. The intersection of the closed ball $B(\widetilde{x},r)$ with the unit hypersphere $\mathbb{S}^{d-1}$ is a hyperspherical cap, with the ``top'' of the cap at $\widetilde{x}$, and the bottom is shown in the dotted green line. The surface area of this cap is lower-bounded by the area of the projection (the solid green circle $\mathcal{A}$; in higher dimensions, this is a ball in $\mathbb{R}^{d-1}$).\label{fig:unit-hypersphere-intrinsic-dimension-pf-projection}}
\end{figure}

\begin{figure}[!t]
\centering
\begin{subfigure}[b]{.48\linewidth}
\centering
\includegraphics[scale=0.55]{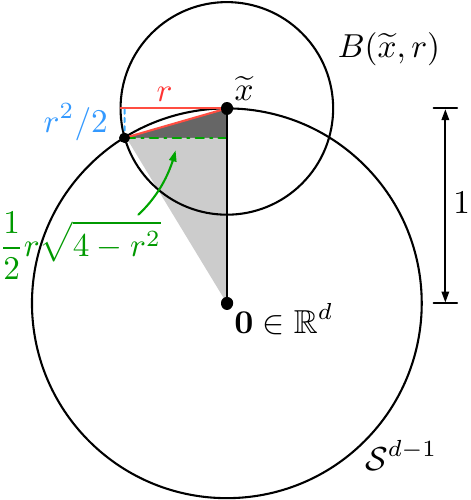}
\caption{}
\end{subfigure}
~
\begin{subfigure}[b]{.48\linewidth}
\centering
\includegraphics[scale=0.55]{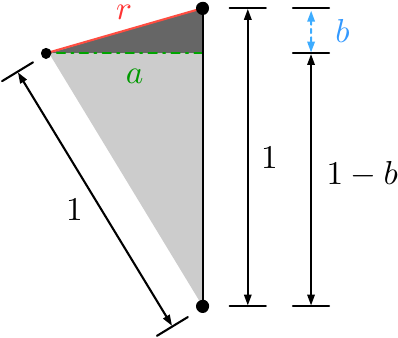}
\caption{}
\end{subfigure}
\caption{The radius of both the dotted and solid green circles in Figure~\ref{fig:unit-hypersphere-intrinsic-dimension-pf-projection} can be computed to be equal to $\frac{1}{2}r\sqrt{4-r^2}$. Specifically, in panel (a), we show a 2D view of Figure~\ref{fig:unit-hypersphere-intrinsic-dimension-pf-projection}. To verify that the lengths specified in red, blue, and green in panel (a) are correct, we show the two shaded right triangles from panel (a) with more detail in panel (b), where we denote the green and blue lengths as $a$ and $b$ respectively: the Pythagorean theorem says that for the darker triangle, we have $a^2+b^2=r^2$, and for the lighter triangle, we have $a^2+(1-b)^2=1^2$; by solving this system of two equations, we get $a=\frac{1}{2}r\sqrt{4-r^2}$ and $b=\frac{r^2}{2}$. \label{fig:unit-hypersphere-intrinsic-dimension-pf-trig}}
\end{figure}

\subsection{Proof of Survival Kernet Error Bound (Theorem~\ref{thm:main-result})}
\label{sec:pf-main-result}

To keep the notation from getting cluttered, we use different notation than what is in the main paper. We drop tildes and instead write the embedding space as $\mathcal{U}\subset\mathbb{R}^d$. The training (and not pre-training) embedding vectors (treated as random variables) are written as $U_{1},U_{2},\dots,U_{n}\in\mathcal{U}$. Just as in the main paper, we use the notation $U_{1:n}=(U_{1},\dots,U_{n})\in\mathcal{U}^{n}$. The random test data point's embedding vector is denoted by the random variable $U$. The marginal distribution of embedding vectors is denoted by $\mathbb{P}_{\mathsf{U}}$. We denote the $\varepsilon$-net as $Q$ (treated as a set), which is a subsample of the indices in $U_{1:n}$ (treated as a sequence). Just as in the main paper, we denote $\mathcal{I}_{q}\subseteq\{1,2,\dots,n\}$ to be the indices of the training points assigned to $q\in Q$.
We denote the true conditional survival, censoring, and observed time distributions as $\mathbb{P}_{\mathsf{T}|\mathsf{U}}$, $\mathbb{P}_{\mathsf{C}|\mathsf{U}}$, and $\mathbb{P}_{\mathsf{Y}|\mathsf{U}}$. We write their PDFs as $f_{\mathsf{T}|\mathsf{U}}(t|u)$, $f_{\mathsf{C}|\mathsf{U}}(t|u)$, and $f_{\mathsf{Y}|\mathsf{U}}(t|u)$; their CDFs as $F_{\mathsf{T}|\mathsf{U}}(t|u)$, $F_{\mathsf{C}|\mathsf{U}}(t|u)$, and $F_{\mathsf{Y}|\mathsf{U}}(t|u)$; and their tail functions as $S_{\mathsf{T}|\mathsf{U}}(t|u)=1-F_{\mathsf{T}|\mathsf{U}}(t|u)$, $S_{\mathsf{C}|\mathsf{U}}(t|u)=1-F_{\mathsf{C}|\mathsf{U}}(t|u)$, and $S_{\mathsf{Y}|\mathsf{U}}(t|u)=1-F_{\mathsf{Y}|\mathsf{U}}(t|u)$. For a test embedding vector~$u$, the conditional survival function we aim to predict is precisely $S_{\mathsf{T}|\mathsf{U}}(t|u)$ for $t\ge0$.

For survival kernets, the only training points that can possibly contribute to the prediction for test embedding vector~$u$ (i.e., they have nonzero truncated kernel weight) are the ones with indices in
\[
\mathcal{N}_Q(u):=\coprod_{q\in Q\text{ s.t.~}\|u-q\|_{2}\le\tau}\mathcal{I}_{q},
\]
where ``$\coprod$'' denotes disjoint union. This union is disjoint since the survival kernet training procedure assigns each training point to exactly one exemplar in~$Q$. The set $\mathcal{N}_Q(u)$ could be thought of as the ``neighbors'' of $u$.

For $Y_{\max}(u):=\max_{j\in\mathcal{N}_Q(u)}Y_{j}$, the survival kernet hazard function estimate (equation~\eqref{eq:kernel-netting-for-survival-analysis-hazard}) can be written as follows, using the new notation:
\begin{align}
\widehat{g}_{Q}(t|u) & :=\begin{cases}
{\displaystyle \frac{\sum_{j\in\mathcal{N}_Q(u)}K(\|u-q_j\|_2)\eventInd_{j}\ind\{Y_{j}=t\}}{\sum_{j\in\mathcal{N}_Q(u)}K(\|u-q_j\|_{2})\ind\{Y_{j}\ge t\}}}\\
\phantom{0}\qquad\text{if }\mathcal{N}_Q(u)\ne\emptyset\text{ and }0\le t\le Y_{\max}(u),\\
0\qquad\text{otherwise}.
\end{cases}
\label{eq:hazard-estimate-appendix-form}
\end{align}
Let $\mathcal{T}$ be the set of (unique) times until death within the training data, i.e., $\mathcal{T}:=\{t'\in[0,\infty):\exists j\in\{1,\dots,n\}\text{ s.t.~}Y_{j}=t'\text{ and }\eventInd_{j}=1\}$. Then the survival kernet conditional survival function estimator can be written as
\[
\widehat{S}_{\mathsf{T}|\mathsf{U}}(t|u):=\prod_{t'\in\mathcal{T}}(1-\widehat{g}_{Q}(t'|u))^{\ind\{t'\le t\}}.
\]
In the main paper, this estimator is written as $\widetilde{S}_{\widetilde{\mathcal{Q}}_{\varepsilon}}(t|\widetilde{x})$. Using the new notation, we state the survival kernet error guarantee (Theorem~\ref{thm:main-result}) in the following proposition (note that this proposition is more detailed than Theorem~\ref{thm:main-result} and includes constants that we have not tried to optimize).

\begin{proposition}
\label{prop:deep-kernel-netting-survival}
Suppose that assumptions $\mathbf{A}^{\text{technical}}$, $\mathbf{A}^{\text{intrinsic}}$, and $\mathbf{A}^{\text{survival}}$ hold, and that for training a survival kernet, we choose $\varepsilon=\beta\tau$ when constructing the $\varepsilon$-net $Q$ using $U_{1:n}$, where $\beta\in(0,1)$ and $\tau>0$ are user-specified parameters. Let $$\Psi=\min\left\{N\Big(\frac{(1-\beta)\tau}{2};\widetilde{\mathcal{X}}\Big), \frac{1}{C_{d'}((1-\beta)\tau)^{d'}}\right\}.$$
Then for $$n\ge\frac{2}{C_{d'}((1-\beta)\tau)^{d'}}\max\left\{\frac{K(0)}{K(\tau)\theta},\Big(\frac{9K^{2}(0)}{144K^{2}(\tau)}(2+\sqrt{2})\Big)^{1/3}\right\},$$
we have
\begin{align*}
 & \mathbb{E}_{Y_{1:n},\eventInd_{1:n},U_{1:n},U}\Big[\frac{\int_{0}^{t_{\text{horizon}}}(\widehat{S}_{\mathsf{T}|\mathsf{U}}(t|U)-S_{\mathsf{T}|\mathsf{U}}(t|U))^{2}dt}{t_{\text{horizon}}}\Big]\\
 & \;\,\le\frac{1}{n}\cdot\frac{346K^{4}(0)\log(n^{3}\frac{144K^{2}(\tau)}{9K^{2}(0)})}{\theta^{4}K^{4}(\tau)}\Psi\\
 & \;\,\quad+(1+\beta)^{2\alpha}\tau^{2\alpha}\bigg[\frac{t_{\text{horizon}}^{2}}{\theta^{2}}\Big(\lambda_{\mathsf{T}}^2+\frac{3\lambda_{\mathsf{T}}f_{\mathsf{T}}^{*}\lambda_{\mathsf{C}}t_{\text{horizon}}}{4}
 +\frac{3(f_{\mathsf{T}}^{*})^{2}\lambda_{\mathsf{C}}^{2}t_{\text{horizon}}^{2}}{20}\Big)\\
 & \;\,\quad\qquad\qquad\qquad\quad\;+\frac{12K^{2}(0)}{K^{2}(\tau)\theta^{4}}(\lambda_{\mathsf{T}}+\lambda_{\mathsf{C}})^{2}t_{\text{horizon}}^{2}\bigg],
\end{align*}
where
\begin{equation*}
f_{\mathsf{T}}^*:=\max_{s\in[0,t_{\text{horizon}}],u\in\mathcal{U}}f_{\mathsf{T}|\mathsf{U}}(s|u).
\end{equation*}
\end{proposition}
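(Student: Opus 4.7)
The plan is to combine the conditional Kaplan--Meier proof techniques of \citet{chen2019nearest} with the $\varepsilon$-net compression arguments of \citet{kpotufe2017time}, adapted to the survival setting. The overall shape is a bias--variance decomposition of the squared generalization error. First I would reduce the error on $\widehat{S}_{\mathsf{T}|\mathsf{U}}(t|u)-S_{\mathsf{T}|\mathsf{U}}(t|u)$ to errors on two kernel-weighted empirical processes via the product-integral representation of the Kaplan--Meier estimator: one process approximates $S_{\mathsf{Y}|\mathsf{U}}(\cdot|u)$ and the other approximates the subdistribution $\mathbb{P}(Y\le\cdot, D=1\mid U=u)$. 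Here Assumption $\mathbf{A}^{\text{survival}}$(a) is what prevents the ``at risk'' denominator in $\widehat{g}_Q$ from being too small on $[0,t_{\text{horizon}}]$; this is the source of the $1/\theta$ and $1/\theta^2$ factors in the bound.

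Next I would exploit the geometry of the $\varepsilon$-net with $\varepsilon=\beta\tau$ to relate the survival kernet estimator to a standard truncated-kernel estimator evaluated at training points. The key triangle-inequality observations are: (a) every training point with $\|u-X_j\|_2\le(1-\beta)\tau$ is assigned to some exemplar $q_j$ with $\|u-q_j\|_2\le\tau$ and hence contributes; (b) every contributing training point satisfies $\|u-X_j\|_2\le(1+\beta)\tau$; and (c) for every contributing training point the kernel weight $K(\|u-q_j\|_2)$ lies in $[K(\tau),K(0)]$, so replacing ``ideal'' weights by the exemplar-based weights costs at most a factor $K(0)/K(\tau)$ in either direction. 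Thus the effective neighborhood $\mathcal{N}_Q(u)$ is sandwiched between the balls of radii $(1-\beta)\tau$ and $(1+\beta)\tau$.

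For the bias, Assumption $\mathbf{A}^{\text{survival}}$(b) gives $|f_{\mathsf{T}|\mathsf{U}}(\cdot|X_j)-f_{\mathsf{T}|\mathsf{U}}(\cdot|u)|\le\lambda_{\mathsf{T}}((1+\beta)\tau)^{\alpha}$ and similarly for $f_{\mathsf{C}|\mathsf{U}}$ on all contributing $X_j$. Pushing these differences through the Kaplan--Meier product integral (as in \citet{chen2019nearest}, but with the additional $K(0)/K(\tau)$ distortion coming from the exemplar-based weights) yields the $(1+\beta)^{2\alpha}\tau^{2\alpha}$ bias term with exactly the additive $\lambda_{\mathsf{T}}$, $\lambda_{\mathsf{T}}\lambda_{\mathsf{C}}$, $\lambda_{\mathsf{C}}^2$, and $(\lambda_{\mathsf{T}}+\lambda_{\mathsf{C}})^2$ pieces. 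For the variance, conditioning on the design reduces each empirical process to a sum of bounded mean-zero terms; Hoeffding's inequality then gives a concentration rate of $1/|\mathcal{N}_Q(u)|$, multiplied by $(K(0)/K(\tau))^2$ to account for weight skewness. Assumption $\mathbf{A}^{\text{intrinsic}}$ implies $\mathbb{P}_{\mathsf{U}}(B(u,(1-\beta)\tau))\ge C_{d'}((1-\beta)\tau)^{d'}$, and the stated lower bound on $n$ is precisely what a multiplicative Chernoff bound needs to guarantee $|\mathcal{N}_Q(u)|\gtrsim n C_{d'}((1-\beta)\tau)^{d'}$ with high probability while also keeping the Kaplan--Meier ``at risk'' count above zero throughout $[0,t_{\text{horizon}}]$.

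To convert pointwise-in-$u$ bounds into the stated in-expectation bound, I would argue either (i) by a union bound over an $\tfrac{(1-\beta)\tau}{2}$-cover of $\widetilde{\mathcal{X}}$, picking up a $\log N(\tfrac{(1-\beta)\tau}{2};\widetilde{\mathcal{X}})$ factor and then using the fact that translating $u$ by $\tfrac{(1-\beta)\tau}{2}$ barely changes the contributing exemplars, or (ii) by integrating the pointwise bound directly against $\mathbb{P}_{\mathsf{U}}$ and using the intrinsic-dimension bound $1/(C_{d'}((1-\beta)\tau)^{d'})$ on the effective number of neighborhood ``types''. Taking whichever is smaller gives $\Psi$; the logarithmic term $\log(n^3\cdot 144 K^2(\tau)/(9K^2(0)))$ emerges from tuning the failure probability in the Chernoff step to $O(1/n^3)$ so that the worst-case error on the low-probability event is absorbed. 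The main obstacle will be stitching the product-integral manipulation of \citet{chen2019nearest} (which was derived for a kernel applied directly at training points) to the exemplar-based weighting of kernel netting while simultaneously controlling the random size and composition of $\mathcal{N}_Q(u)$; in particular, care is needed to verify that the extra $K(0)/K(\tau)$ distortion factors propagate consistently through the nonlinear product-integral identity rather than compounding uncontrollably.
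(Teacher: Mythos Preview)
Your high-level architecture is right---bias--variance split, the $\varepsilon$-net sandwich $B(u,(1-\beta)\tau)\subseteq\mathcal{N}_Q(u)\subseteq B(u,(1+\beta)\tau)$, H\"older continuity for the bias, and concentration plus the intrinsic-dimension lower bound for the variance---but the mechanics differ from the paper's in two places, and one of them would not recover the stated bound.

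First, the paper does not use a product-integral/Duhamel representation. Following \citet{chen2019nearest}, it passes to logs via $|a-b|\le|\log a-\log b|$ for $a,b\in(0,1]$, Taylor-expands $\log(1+z)$, and writes $\log\widehat{S}_{\mathsf{T}|\mathsf{U}}(t|u)=W_1+W_2+W_3$: here $W_1$ is a kernel-regression-type average of $-D_i\ind\{Y_i\le t\}/S_{\mathsf{Y}|\mathsf{U}}(Y_i|u)$ (handled by a direct bias--variance argument), $W_2$ compares the empirical ``at risk'' fraction to $S_{\mathsf{Y}|\mathsf{U}}$ (a CDF-estimation term), and $W_3$ is the Taylor remainder (small once $|\mathcal{N}_Q(u)|$ is large). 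Three good events---enough neighbors, enough neighbors surviving past $t_{\text{horizon}}$, and a DKW-type event for the weighted empirical CDF---are controlled separately. In particular, the logarithm $\log(n^3\cdot 144K^2(\tau)/(9K^2(0)))$ does \emph{not} come from tuning a Chernoff failure probability; it comes from choosing the DKW tolerance $\varepsilon_{\text{CDF}}^2\propto \frac{K^2(0)}{K^2(\tau)|\mathcal{N}_Q(u)|}\log\bigl(|\mathcal{N}_Q(u)|^3\cdot\text{const}\bigr)$ so that the corresponding bad-event probability is at most $1/|\mathcal{N}_Q(u)|$, and then using $|\mathcal{N}_Q(u)|\le n$. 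Your two-empirical-process plan is a plausible alternative route, but the constants would not line up with $346$, $144/9$, etc.

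Second, and more substantively, your route (i)---a union bound over an $\tfrac{(1-\beta)\tau}{2}$-cover---would yield a $\log N$ factor, whereas the proposition has $\Psi$ (which can equal $N$ itself) as a \emph{multiplicative} factor in the variance term. The paper never seeks a uniform-in-$u$ high-probability bound; it keeps the pointwise bound, which contains $1/\mathbb{P}_{\mathsf{U}}(B(u,(1-\beta)\tau))$, and then simply takes $\mathbb{E}_U$, using the elementary estimate $\mathbb{E}_U\bigl[1/\mathbb{P}_{\mathsf{U}}(B(U,(1-\beta)\tau))\bigr]\le\min\{N((1-\beta)\tau/2;\mathcal{U}),\,1/(C_{d'}((1-\beta)\tau)^{d'})\}$. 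Your route (ii) is essentially this second alternative, so that part is fine; but route (i) as written would prove a different statement, not this one.
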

In our theoretical analysis, we assume that the $\varepsilon$-net~$Q$ is a random variable that depends on~$U_{1:n}$, where if we condition on~$U_{1:n}$, then~$Q$ becomes deterministic. Note that as stated previously, we treat~$U_{1:n}$ as a sequence (so ordering matters) whereas~$Q$ is treated as a set. Thus, an algorithm that constructs~$Q$ from~$U_{1:n}$ need not produce the exact same~$Q$ if the ordering of~$U_{1:n}$ is shuffled. If a randomized algorithm is used to construct~$Q$, we assume that it has a random seed that can be fixed, making the algorithm behave deterministically given~$U_{1:n}$.

A key lemma used in proving Proposition \ref{prop:deep-kernel-netting-survival} is a lower bound on the number of neighbors of a test embedding vector~$u$.
\begin{lemma}
\label{lem:lower-bound-on-good-neighbors-deep-kernel-netting-regression}
\emph{(Number of training points contributing to prediction -- lower bound)}
Let $\beta\in(0,1)$. Suppose that we train a survival kernet with threshold distance $\tau>0$ and compression parameter $\varepsilon=\beta\tau$, i.e., training embedding vectors $U_{1:n}$ are used to construct a $\beta\tau$-net $Q$. For a fixed choice of embedding vector $u\in\mathcal{U}$, let $U_{1:n}(u;\beta,\tau)$ denote the multiset of training data that land within the closed ball $B(u,(1-\beta)\tau)$. We have the inclusion relationship
\begin{equation}
U_{1:n}(u;\beta,\tau)
\subseteq \bigcup_{j\in\mathcal{N}_Q(u)} \underbrace{[U_j]}_{\text{multiset with single element}}.
\label{eq:epsilon-net-key-subset-relationship}
\end{equation}
In particular, taking the cardinality of both sides, we have the inequality
\begin{equation}
\Xi_u:=|U_{1:n}(u;\beta,\tau)| \le |\mathcal{N}_Q(u)|.
\label{eq:Xi}
\end{equation}
\end{lemma}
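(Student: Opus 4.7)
The plan is to prove the multiset inclusion in \eqref{eq:epsilon-net-key-subset-relationship} by showing the stronger statement at the level of indices: every $i\in\{1,\dots,n\}$ with $\|u-U_i\|_2\le(1-\beta)\tau$ must belong to $\mathcal{N}_Q(u)$. The argument is a direct triangle-inequality argument that exploits the cover property of the greedy $\varepsilon$-net construction used in Step~3 of the survival kernet training procedure, together with the cluster-assignment rule in Step~4.

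First, I would fix any training index $i$ with $\|u-U_i\|_2\le(1-\beta)\tau$ and invoke the cover property: the greedy construction with parameter $\varepsilon=\beta\tau$ guarantees that either $U_i$ was added to $Q$, or at the time $U_i$ was processed its nearest neighbor in $Q$ was already within distance $\beta\tau$; since $Q$ only grows, this nearest-neighbor distance can only decrease, and in either case $U_i$'s nearest exemplar in the final $Q$ lies within $\beta\tau$. By the assignment rule in Step~4, $i$ is then placed in $\mathcal{I}_q$ for a specific closest exemplar $q\in Q$ satisfying $\|U_i-q\|_2\le\beta\tau$. The triangle inequality now gives
\[
\|u-q\|_2 \le \|u-U_i\|_2 + \|U_i-q\|_2 \le (1-\beta)\tau + \beta\tau = \tau,
\]
so $q$ is one of the exemplars contributing to the disjoint union defining $\mathcal{N}_Q(u)$, and hence $i\in\mathcal{I}_q\subseteq\mathcal{N}_Q(u)$. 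This shows that $U_i$ appears in the multiset $\bigcup_{j\in\mathcal{N}_Q(u)}[U_j]$ with witness $j=i$, establishing \eqref{eq:epsilon-net-key-subset-relationship}.

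For the cardinality bound \eqref{eq:Xi}, it suffices to observe that $|U_{1:n}(u;\beta,\tau)|=|\{i:\|u-U_i\|_2\le(1-\beta)\tau\}|$ since the multiset is indexed by the contributing training points. The argument above shows that this index set is a subset of $\mathcal{N}_Q(u)$, and since the union defining $\mathcal{N}_Q(u)$ is disjoint (each training point is assigned to exactly one exemplar), $|\mathcal{N}_Q(u)|$ equals the number of indices it contains. Therefore $\Xi_u\le|\mathcal{N}_Q(u)|$.

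I do not anticipate any serious technical obstacle; the content is essentially a single triangle-inequality application enabled by the $\beta\tau$-cover property of $Q$ and the disjointness of cluster assignments. The one subtlety worth being careful about is to explicitly justify that the greedy construction in Step~3 produces a $\beta\tau$-cover of $U_{1:n}$ (and not merely a $\beta\tau$-packing), since it is precisely this covering property that bounds $\|U_i-q\|_2$ by $\beta\tau$ for the assigned exemplar $q$.
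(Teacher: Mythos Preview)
Your proposal is correct and follows essentially the same route as the paper: fix a training point within distance $(1-\beta)\tau$ of $u$, use the $\beta\tau$-cover property of $Q$ (via the Step~4 assignment) to bound its distance to its assigned exemplar by $\beta\tau$, and then apply the triangle inequality to conclude the exemplar lies within $\tau$ of $u$, hence the point belongs to $\mathcal{N}_Q(u)$. Your write-up is slightly more explicit than the paper's in justifying the cover property of the greedy construction and in spelling out the cardinality step via disjointness of the cluster assignments, but the argument is the same.
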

\begin{proof}
Let $u'\in U_{1:n}(u;\beta,\tau)$. Since $U_{1:n}(u;\beta,\tau)=U_{1:n}\cap B(u,(1-\beta)\tau)$, we have $\|u'-u\|_2\le(1-\beta)\tau$. Next, let $q\in Q$ be the exemplar point that $u'$ is assigned to by step~4 of the survival kernet training procedure. By how step~4 works, we have $\|u'-q\|_2\le\beta\tau$. We can then bound the distance between $q$ and $u$ using the triangle inequality:
\[
\|q-u\|_{2}\le\underbrace{\|q-u'\|_{2}}_{\le\beta\tau}+\underbrace{\|u'-u\|_{2}}_{\le(1-\beta)\tau}\le\tau.
\]
This means that $u'$ is a training embedding vector that is assigned to an exemplar point $q$ that satisfies $\|u-q\|_{2}\le\tau$, i.e., $u'$ corresponds to a training point in $\mathcal{N}_Q(u)$. This establishes the subset relationship~\eqref{eq:epsilon-net-key-subset-relationship}.
\end{proof}
Throughout our theoretical analysis, we assume that $Q$ is constructed to be an $\varepsilon$-net of $U_{1:n}$ with $\varepsilon=\beta\tau$ just as in Lemma~\ref{lem:lower-bound-on-good-neighbors-deep-kernel-netting-regression}, i.e., $\beta\in(0,1)$ and $\tau>0$ is the threshold distance for the truncated kernel.

We also regularly use the variable~$\Xi_u$ defined in equation~\eqref{eq:Xi}. Importantly, since $U_{1:n}$ are sampled i.i.d.~from $\mathbb{P}_{\mathsf{U}}$, then $\Xi_u$ is binomially distributed with parameters $n$ and $\mathbb{P}_{\mathsf{U}}(B(u,(1-\beta)\tau))$. We can ensure that test embedding vector $u$ has enough neighbors with high probability by asking that $\Xi_u$ be sufficiently large (for which Lemma~\ref{lem:lower-bound-on-good-neighbors-deep-kernel-netting-regression} guarantees that $|\mathcal{N}_Q(u)|\ge\Xi_u$). Specifically, define the good event
\begin{equation}
\mathcal{E}_{\beta}(u):=\Big\{\Xi_u>\frac{1}{2}n\mathbb{P}_{\mathsf{U}}(B(u,(1-\beta)\tau))\Big\}.
\label{eq:event-good-neighbors}
\end{equation}
We denote the complement of this event as $\mathcal{E}_{\beta}^c(u)$.
\begin{lemma}
\label{lem:prob-event-good-neighbors}
\emph{(Sufficiently many number of neighbors)}
Suppose that Assumption $\mathbf{A}^{\text{technical}}$ holds and $U_{1:n}$ are sampled i.i.d.~from~$\mathbb{P}_{\mathsf{U}}$. Let $\beta\in(0,1)$, $\tau>0$, and $u\in\mathcal{U}$. Then the probability that good event $\mathcal{E}_\beta(u)$ does not happen is bounded above as
\[
\mathbb{P}(\mathcal{E}_{\beta}^c(u)) \le \exp\Big(-\frac{n\mathbb{P}_{\mathsf{U}}(B(u,(1-\beta)\tau))}{8}\Big) \le \frac{8}{n\mathbb{P}_{\mathsf{U}}(B(u,(1-\beta)\tau))}.
\]
\end{lemma}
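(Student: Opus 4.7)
The plan is to recognize that $\Xi_u$ is a sum of $n$ i.i.d.~Bernoulli random variables with success probability $p:=\mathbb{P}_{\mathsf{U}}(B(u,(1-\beta)\tau))$, since $U_1,\dots,U_n$ are i.i.d.~from $\mathbb{P}_{\mathsf{U}}$ and $\Xi_u=|\{j:U_j\in B(u,(1-\beta)\tau)\}|$. Thus $\Xi_u\sim\text{Binomial}(n,p)$ with mean $\mu=np$, and the event $\mathcal{E}_\beta^c(u)$ is precisely $\{\Xi_u\le\mu/2\}$, i.e., the event that $\Xi_u$ falls at least a factor of $1/2$ below its mean. This immediately suggests a multiplicative Chernoff lower-tail bound.

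First I would invoke the standard multiplicative Chernoff inequality: for $X\sim\text{Binomial}(n,p)$ with mean $\mu$ and any $\delta\in(0,1]$,
\[
\mathbb{P}(X\le (1-\delta)\mu)\le\exp\Big(-\frac{\delta^2\mu}{2}\Big).
\]
Plugging in $\delta=1/2$ gives $\mathbb{P}(\Xi_u\le\mu/2)\le\exp(-\mu/8)=\exp(-np/8)$, which is the first inequality claimed by the lemma. Since Assumption $\mathbf{A}^{\text{technical}}$ guarantees $p$ is a well-defined Borel probability, there is no measurability obstruction.

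For the second inequality, I would use the elementary fact that $y\le e^y$ for all $y\ge 0$, equivalently $y e^{-y}\le 1$, so $e^{-y}\le 1/y$ whenever $y>0$. Setting $y=np/8$ yields $\exp(-np/8)\le 8/(np)$, completing the chain. (If $p=0$, the bound $8/(np)=\infty$ holds trivially.)

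There is no real obstacle here: the argument is a direct application of the standard Chernoff bound followed by a one-line calculus inequality. The only minor care needed is to note that the i.i.d.~structure of $U_{1:n}$ (which the paper assumes throughout) makes $\Xi_u$ exactly binomial for any \emph{fixed} $u\in\mathcal{U}$, so the lemma is stated pointwise in $u$ and the Chernoff bound applies without any conditioning subtleties.
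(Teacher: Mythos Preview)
Your proposal is correct and follows essentially the same approach as the paper: the paper's proof is simply the one-line remark that the claim follows from a multiplicative Chernoff bound of the binomial distribution together with the inequality $\exp(-z)\le 1/z$ for $z>0$, and you have correctly expanded these two steps with $\delta=1/2$ and $y=np/8$.
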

\begin{proof}
The claim readily follows from a multiplicative \mbox{Chernoff} bound of the binomial distribution and noting that $\exp(-z)\le1/z$ for all $z>0$.
\end{proof}
Another argument used in the proof is that if the \mbox{$j$-th} training point is a neighbor of embedding vector $u$---i.e., $j\in\mathcal{N}_Q(u)$---then their embedding vectors cannot be too far apart, i.e., $\|U_j-u\|_{2}$ is not too large.

\begin{lemma}
\label{lem:neighbor-embedding}
\emph{(Neighbor embedding vectors are close)}
Suppose that we train a survival kernet as in Lemma~\ref{lem:lower-bound-on-good-neighbors-deep-kernel-netting-regression}. Let $u\in\mathcal{U}$. If $j\in\mathcal{N}_Q(u)$, then
\[
\|U_{j}-u\|_{2}\le(1+\beta)\tau.
\]
\end{lemma}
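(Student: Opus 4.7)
The plan is to prove this by a direct triangle inequality argument, mirroring the reasoning already used in Lemma~\ref{lem:lower-bound-on-good-neighbors-deep-kernel-netting-regression}. First I would unpack the definition of $\mathcal{N}_Q(u)=\bigsqcup_{q\in Q\text{ s.t.~}\|u-q\|_2\le\tau}\mathcal{I}_q$ to extract, for any $j\in\mathcal{N}_Q(u)$, a witness exemplar $q\in Q$ such that $j\in\mathcal{I}_q$ and $\|u-q\|_2\le\tau$. Such a $q$ exists precisely because the union defining $\mathcal{N}_Q(u)$ only ranges over exemplars within the threshold distance $\tau$ of $u$.

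Next I would use step~4 of the survival kernet training procedure to bound $\|U_j-q\|_2$. Step~4 assigns each training embedding $U_j$ to a closest exemplar in $Q$, and by the setup of Lemma~\ref{lem:lower-bound-on-good-neighbors-deep-kernel-netting-regression} the $\varepsilon$-net $Q$ has $\varepsilon=\beta\tau$. Since every $\varepsilon$-net is, by Definition, an $\varepsilon$-cover, there exists some exemplar within Euclidean distance $\beta\tau$ of $U_j$, so the closest one satisfies $\|U_j-q\|_2\le\beta\tau$ as well (here $q$ is the specific exemplar $U_j$ was assigned to, which is exactly the one giving $j\in\mathcal{I}_q$).

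Finally I would combine the two bounds with the triangle inequality:
\[
\|U_j-u\|_2 \le \|U_j-q\|_2 + \|q-u\|_2 \le \beta\tau + \tau = (1+\beta)\tau.
\]
There is no real obstacle here; the only subtlety is making sure that the exemplar $q$ used in both bounds is the \emph{same} exemplar, namely the one whose cluster $\mathcal{I}_q$ contains $j$, rather than two different exemplars. Because step~4 assigns each training point to a single exemplar, this $q$ is uniquely determined by $j$, and it must satisfy $\|u-q\|_2\le\tau$ for $j$ to have been included in $\mathcal{N}_Q(u)$ in the first place, so both inequalities apply simultaneously.
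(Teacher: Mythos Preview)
Your proposal is correct and follows essentially the same approach as the paper: identify the unique exemplar $q$ to which $U_j$ was assigned, use the $\beta\tau$-net (cover) property together with step~4's closest-exemplar assignment to get $\|U_j-q\|_2\le\beta\tau$, use the definition of $\mathcal{N}_Q(u)$ to get $\|q-u\|_2\le\tau$, and finish with the triangle inequality. Your added remark that both bounds apply to the \emph{same} exemplar is a nice clarification, but otherwise this is the paper's argument.
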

\begin{proof}
Let $j\in\mathcal{N}_Q(u)$. This means that the exemplar that the $j$-th training point is assigned to, namely $q_{j}\in Q$, must be within distance $\tau$ of $u$. Thus,
\[
\|U_{j}-u\|_{2}\le\underbrace{\|U_{j}-q_{j}\|_{2}}_{\le\beta\tau}+\underbrace{\|q_{j}-u\|_{2}}_{\le\tau}\le(1+\beta)\tau,
\]
where we have used the fact that $Q$ is a $\beta\tau$-net, and $\mathcal{N}_Q(u)$ is defined to consider all exemplars within distance $\tau$ of~$u$.
\end{proof}
We remark that proving the special case when $\varepsilon = 0$ is easier precisely because Lemmas~\ref{lem:lower-bound-on-good-neighbors-deep-kernel-netting-regression}, \ref{lem:prob-event-good-neighbors}, and \ref{lem:neighbor-embedding} simplify --- we no longer need $\varepsilon$-net arguments. Instead of Lemma~\ref{lem:lower-bound-on-good-neighbors-deep-kernel-netting-regression}, we directly consider training points within distance $\tau$ of fixed embedding vector~$u$. Instead of Lemma~\ref{lem:prob-event-good-neighbors}, we directly consider points landing in the ball $B(u,\tau)$. In Lemma~\ref{lem:neighbor-embedding}, there is no multiplicative approximation error $(1+\beta)$ to worry about. In short, we get the same conclusions as these lemmas except plugging in $\beta=0$.

We present proof of Proposition \ref{prop:deep-kernel-netting-survival} next.

\subsubsection{Proof of Proposition~\ref{prop:deep-kernel-netting-survival}}

We focus on proving ``pointwise'' bounds first, which in this case means that we fix both the test embedding vector $u\in\mathcal{U}$ and time $t\in[0,t_{\text{horizon}}]$. (At the end of the proof, we take an expectation over the test embedding vector and integrate over time.)

First off, we have the trivial bound ${|\widehat{S}_{\mathsf{T}|\mathsf{U}}(t|u)-S_{\mathsf{T}|\mathsf{U}}(t|u)|}\le1$. Under some good events holding, we can obtain a nontrivial error bound. The first good event is the same as for regression $\mathcal{E}_{\beta}(u)=\{\Xi_u>\frac12 n\mathbb{P}_{\mathsf{U}}(B(u,(1-\beta)\tau))\}$. We introduce two additional good events. We define the second good event as
\begin{equation}
\mathcal{E}_{\text{horizon}}(u):=\Big\{\underbrace{\sum_{j\in\mathcal{N}_Q(u)}\ind\{Y_{j}>t_{\text{horizon}}\}}_{d^{+}(u,t_{\text{horizon}}):=}>\frac{|\mathcal{N}_Q(u)|\theta}{2}\Big\}.
\label{eq:event-horizon}
\end{equation}
Note that for any $s\ge0$, $d^{+}(u,s)$ is the number of neighbors of $u$ that have observed times $Y_{j}$'s exceeding $s$. When event $\mathcal{E}_{\text{horizon}}(u)$ holds, we have $t_{\text{horizon}}\le Y_{\max}(u)$. We define the third and final good event later when we relate estimating the tail function $S_{\mathsf{T}|\mathsf{U}}(t|u)$ (that is for survival times $T_{j}$'s that we do not always get to observe in the training data), to estimating the tail function $S_{\mathsf{Y}|\mathsf{U}}(t|u)$ (that is of observed times $Y_{j}$'s that we see all of in the training data).

Importantly, when both $\mathcal{E}_{\beta}(u)$ and $\mathcal{E}_{\text{horizon}}(u)$ hold, then for every time $t\in[0,t_{\text{horizon}}]$, the hazard estimator $\widehat{g}_{Q}(t|u)$ takes on a nontrivial value given by the first case of equation~\eqref{eq:hazard-estimate-appendix-form}, rather than just being 0. As we show later (Lemma~\ref{lem:prob-survival-good-events}), good events $\mathcal{E}_{\beta}(u)$ and $\mathcal{E}_{\text{horizon}}(u)$ simultaneously hold with high probability as $n$ grows large. These good events holding enable us to write the survival kernet conditional survival function estimate $\widehat{S}_{\mathsf{T}|\mathsf{U}}(t|u)$ in a form that will be easier to work with.

\begin{lemma}
\label{lem:survival-estimator-convenient-form}
\emph{(Convenient form of the kernel survival estimator)}
Suppose that Assumptions $\mathbf{A}^{\text{technical}}$ and $\mathbf{A}^{\text{survival}}$ hold, and that the kernel function $K$ is of the form described in equation~\eqref{eq:truncated-kernel}. Consider training a survival kernet with parameter $\varepsilon=\beta\tau$, where $\tau>0$ is the truncated kernel threshold distance, and $\beta\in(0,1)$ is a user-specified parameter. Fix $u\in\mathcal{U}$, and condition on events $\mathcal{E}_{\beta}(u)$ and $\mathcal{E}_{\text{horizon}}(u)$ occurring. Recall that $q_j\in Q$ is the exemplar that the $j$-th training point is assigned to during survival kernet training. With probability 1, for all $t\in[0,t_{\text{horizon}}]$,
\begin{equation}
 \widehat{S}_{\mathsf{T}|\mathsf{U}}(t|u)=\prod_{i\in\mathcal{N}_Q(u)}\Big(\frac{d_{K}^{+}(u,Y_{i})}{K_{i}+d_{K}^{+}(u,Y_{i})}\Big)^{\eventInd_{i}\ind\{Y_{i}\le t\}},
\label{eq:survival-estimator-convenient-form}
\end{equation}
where $K_{i}:=K(\|u-q_{i}\|_{2})\ind\{\|u-q_i\|_2\le\tau\}$, and for any $s\ge0$,
\begin{align*}
d_{K}^{+}(u,s) & := \sum_{j\in\mathcal{N}_Q(u)}K_j\ind\{Y_{j}>s\}.
\end{align*}
Moreover, since $\mathcal{E}_{\text{horizon}}(u)=\{d^{+}(u,t_{\text{horizon}})>\frac{|\mathcal{N}_Q(u)|\theta}{2}\}$ holds, we are guaranteed that $\widehat{S}_{\mathsf{T}|\mathsf{U}}(t|u)>0$ for $t\in[0,t_{\text{horizon}}]$.
\end{lemma}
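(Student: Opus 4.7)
\textbf{Proof proposal for Lemma~\ref{lem:survival-estimator-convenient-form}.} The plan is to rewrite the product over unique death times $\mathcal{T}$ as a product over individual training indices, simplify the hazard factor at each such index using the fact that observed times are almost surely distinct, and then use the two conditioned good events to ensure every factor is well-defined and strictly positive.

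First I would invoke Assumption $\mathbf{A}^{\text{survival}}$: since the conditional survival and censoring distributions are continuous, so is the conditional observed-time distribution $\mathbb{P}_{\mathsf{Y}|\widetilde{\mathsf{X}}}$. Consequently, with probability one the observed times $Y_{1},\dots,Y_{n}$ are pairwise distinct. On this probability-one event, each $t'\in\mathcal{T}$ corresponds to a unique training index $i$ with $D_i=1$ and $Y_i=t'$, so the definition of $\widehat{S}_{\mathsf{T}|\mathsf{U}}(t|u)$ becomes $\prod_{i=1}^n (1-\widehat{g}_Q(Y_i|u))^{D_i\ind\{Y_i\le t\}}$.

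Next I would split this product according to whether $i\in\mathcal{N}_Q(u)$. For $i\notin\mathcal{N}_Q(u)$ with $D_i=1$, the numerator of $\widehat{g}_Q(Y_i|u)$ in equation~\eqref{eq:hazard-estimate-appendix-form} is $\sum_{j\in\mathcal{N}_Q(u)} K_j D_j\ind\{Y_j=Y_i\}$, which vanishes almost surely since no $j\in\mathcal{N}_Q(u)$ satisfies $Y_j=Y_i$; hence the corresponding factor equals $1$ and contributes nothing. For $i\in\mathcal{N}_Q(u)$ with $D_i=1$ and $Y_i\le t\le t_{\text{horizon}}$, distinctness again yields
\[
\widehat{g}_Q(Y_i|u)=\frac{K_i}{\sum_{j\in\mathcal{N}_Q(u)}K_j\ind\{Y_j\ge Y_i\}}=\frac{K_i}{K_i+d_K^{+}(u,Y_i)},
\]
so $1-\widehat{g}_Q(Y_i|u)=\frac{d_K^{+}(u,Y_i)}{K_i+d_K^{+}(u,Y_i)}$. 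Combining the two cases and observing that the exponent $D_i\ind\{Y_i\le t\}$ automatically zeroes out censored or late indices gives the claimed form~\eqref{eq:survival-estimator-convenient-form}. A small point to check is that under $\mathcal{E}_\beta(u)\cap\mathcal{E}_{\text{horizon}}(u)$ the first-case branch of $\widehat{g}_Q$ is the active one for every $t\in[0,t_{\text{horizon}}]$: $\mathcal{E}_\beta(u)$ together with Lemma~\ref{lem:lower-bound-on-good-neighbors-deep-kernel-netting-regression} gives $\mathcal{N}_Q(u)\ne\emptyset$, and $\mathcal{E}_{\text{horizon}}(u)$ forces $Y_{\max}(u)>t_{\text{horizon}}$, so $t\le Y_{\max}(u)$ throughout the range of interest.

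Finally, for strict positivity, I would note that for every $j\in\mathcal{N}_Q(u)$ we have $\|u-q_j\|_2\le\tau$, so $K_j\ge K(\tau)>0$ by the monotonicity of $K$. Therefore, for each $i\in\mathcal{N}_Q(u)$ with $Y_i\le t_{\text{horizon}}$,
\[
d_K^{+}(u,Y_i)\;\ge\;K(\tau)\sum_{j\in\mathcal{N}_Q(u)}\ind\{Y_j>Y_i\}\;\ge\;K(\tau)\,d^{+}(u,t_{\text{horizon}})\;>\;K(\tau)\cdot\frac{|\mathcal{N}_Q(u)|\theta}{2}\;>\;0,
\]
where the penultimate inequality uses $\mathcal{E}_{\text{horizon}}(u)$. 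Each factor on the right-hand side of~\eqref{eq:survival-estimator-convenient-form} is therefore a ratio of two strictly positive quantities, so $\widehat{S}_{\mathsf{T}|\mathsf{U}}(t|u)>0$ on $[0,t_{\text{horizon}}]$. The only delicate step is the distinctness argument, which must be handled once and then used uniformly across the two case analyses and across all $t\in[0,t_{\text{horizon}}]$; everything else is bookkeeping about which indicators contribute to which sums.
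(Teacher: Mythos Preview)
Your proposal is correct and follows essentially the same approach as the paper: use continuity to get almost-sure distinctness of the $Y_i$'s, rewrite the product over $\mathcal{T}$ as a product over training indices, simplify each hazard factor using distinctness, and restrict to $i\in\mathcal{N}_Q(u)$. The only cosmetic difference is that the paper first simplifies $\widehat{g}_Q(Y_i|u)=\frac{K_iD_i}{K_i+d_K^+(u,Y_i)}$ uniformly in $i$ and then drops indices with $K_i=0$, whereas you split on membership in $\mathcal{N}_Q(u)$ first and argue the numerator vanishes by distinctness; your version is arguably cleaner, and your explicit positivity chain for $d_K^+(u,Y_i)$ fills in a step the paper merely asserts.
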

Note that $d_{K}^{+}(u,s)$ is not the same as $d^{+}(u,s)$ defined earlier (in equation~\eqref{eq:event-horizon}, which does not use kernel weights). These quantities are related via the bound $d_{K}^{+}(u,s)\ge K(\tau)d^{+}(u,s)$.

\begin{proof}
Fix $u\in\mathcal{U}$ and $t\in[0,t_{\text{horizon}}]$. First off, due to events $\mathcal{E}_{\beta}(u)$ and $\mathcal{E}_{\text{horizon}}(u)$ holding, $\widehat{g}_{Q}(t|u)$ is nonzero. Next, Assumption $\mathbf{A}^{\text{survival}}$ implies that ties in $Y_{j}$'s happen with probability 0. Thus, with probability 1, there are no ties in $Y_{j}$'s, so
\begin{align}
\widehat{S}_{\mathsf{T}|\mathsf{U}}(t|u) & =\prod_{t'\in\mathcal{T}}(1-\widehat{g}_{Q}(t'|u))^{\ind\{t'\le t\}} =\prod_{i=1}^{n}(1-\widehat{g}_{Q}(Y_{i}|u))^{\eventInd_{i}\ind\{Y_{i}\le t\}}.
\label{eq:survival-estimator-convenient-form-helper1}
\end{align}
Next, since there are no ties in $Y_{j}$'s, the expression for $\widehat{g}_{Q}(Y_{i}|u)$ simplifies as
\begin{align*}
\widehat{g}_{Q}(Y_{i}|u)
&=\frac{K_i\eventInd_i}{\sum_{j\in\mathcal{N}_Q(u)}K_j\ind\{Y_{j}\ge Y_i\}}
=\frac{K_i\eventInd_i}{K_i + \sum_{j\in\mathcal{N}_Q(u)}K_j\ind\{Y_{j}> Y_i\}}
=\frac{K_i\eventInd_i}{K_i + d_K^+(u,Y_i)}.
\end{align*}
Hence,
\begin{align*}
 (1-\widehat{g}_{Q}(Y_{i}|u))^{\eventInd_{i}\ind\{Y_{i}\le t\}}
 & =\Big(1-\frac{K_{i}\eventInd_{i}}{K_{i}+d_{K}^{+}(u,Y_{i})}\Big)^{\eventInd_{i}\ind\{Y_{i}\le t\}}\\
 & =\Big(1-\frac{K_{i}}{K_{i}+d_{K}^{+}(u,Y_{i})}\Big)^{\eventInd_{i}\ind\{Y_{i}\le t\}}\\
 & =\Big(\frac{d_{K}^{+}(u,Y_{i})}{K_{i}+d_{K}^{+}(u,Y_{i})}\Big)^{\eventInd_{i}\ind\{Y_{i}\le t\}}.
\end{align*}
Combining this with equation~\eqref{eq:survival-estimator-convenient-form-helper1}, we get
\[
\widehat{S}_{\mathsf{T}|\mathsf{U}}(t|u)=\prod_{i=1}^{n}\Big(\frac{d_{K}^{+}(u,Y_{i})}{K_{i}+d_{K}^{+}(u,Y_{i})}\Big)^{\eventInd_{i}\ind\{Y_{i}\le t\}}.
\]
In the RHS, if $K_{i}=0,$ then the $i$-th factor is 1 and thus does not affect the overall product. Note that $K_{i}>0$ precisely for $i\in\mathcal{N}_Q(u)$. Thus, it suffices to only take the product over $i\in\mathcal{N}_Q(u)$, which establishes equality \eqref{eq:survival-estimator-convenient-form}.
\end{proof}
When we can write $\widehat{S}_{\mathsf{T}|\mathsf{U}}(t|u)$ using the expression~\eqref{eq:survival-estimator-convenient-form}, then using the Taylor expansion ${\log(1+z)}=\sum_{\ell=1}^{\infty}\frac{1}{\ell}(\frac{z}{z+1})^{\ell}$ that is valid for any $z>0$, we get
\begin{align*}
 \log\widehat{S}_{\mathsf{T}|\mathsf{U}}(t|u)
 & =\sum_{i\in\mathcal{N}_Q(u)}\!\eventInd_{i}\ind\{Y_{i}\le t\}\log\Big(\frac{d_{K}^{+}(u,Y_{i})}{K_{i}+d_{K}^{+}(u,Y_{i})}\Big) \\
 & =-\sum_{i\in\mathcal{N}_Q(u)}\!\eventInd_{i}\ind\{Y_{i}\le t\}\log\Big(1+\frac{K_{i}}{d_{K}^{+}(u,Y_{i})}\Big) \\
 & =-\sum_{i\in\mathcal{N}_Q(u)}\!\frac{\eventInd_{i}\ind\{Y_{i}\le t\}K_{i}}{K_{i}+d_{K}^{+}(u,Y_{i})}
 -\sum_{i\in\mathcal{N}_Q(u)}\!\eventInd_{i}\ind\{Y_{i}\le t\}\sum_{\ell=2}^{\infty}\frac{1}{\ell[1+(d_{K}^{+}(u,Y_{i})/K_{i})]^{\ell}} \\
 & =W_{1}(t|u)+W_{2}(t|u)+W_{3}(t|u),
\end{align*}
where
\begin{align*}
 W_{1}(t|u)
 &:=\sum_{i\in\mathcal{N}_Q(u)}\Big(\frac{K_{i}}{\sum_{j\in\mathcal{N}_Q(u)}K_{j}}\Big)\Big(-\frac{\eventInd_{i}\ind\{Y_{i}\le t\}}{S_{\mathsf{Y}|\mathsf{U}}(Y_{i}|u)}\Big),\\
 W_{2}(t|u)
 &:=-\sum_{i\in\mathcal{N}_Q(u)}\frac{K_{i}\eventInd_{i}\ind\{Y_{i}\le t\}\big[\frac{\sum_{\ell\in\mathcal{N}_Q(u)}K_{\ell}}{K_{i}+d_{K}^{+}(u,Y_{i})}-\frac{1}{S_{\mathsf{Y}|\mathsf{U}}(Y_{i}|u)}\big]}{\sum_{j\in\mathcal{N}_Q(u)}K_{j}},\\
 W_{3}(t|u)
 &:=-\sum_{i\in\mathcal{N}_Q(u)}\eventInd_{i}\ind\{Y_{i}\le t\}\sum_{\ell=2}^{\infty}\frac{1}{\ell[1+(d_{K}^{+}(u,Y_{i})/K_{i})]^{\ell}}.
\end{align*}
The high-level idea is to show that $W_{1}(t|u)\rightarrow\log S_{\mathsf{T}|\mathsf{U}}(t|u)$, $W_{2}(t|u)\rightarrow0$, and $W_{3}(t|u)\rightarrow0$.

As previously pointed out by~\citet{chen2019nearest}, the term $W_{1}(t|u)$ could be thought of as a kernel regression estimate that averages over hypothetical labels of the form $-\frac{\eventInd_{i}\ind\{Y_{i}\le t\}}{S_{\mathsf{Y}|\mathsf{U}}(Y_{i}|u)}$ across~$i$, which knows the true tail function $S_{\mathsf{Y}|\mathsf{U}}$ (of observed times and not survival times). Of course this true tail function is not actually known.

The term $W_{2}(t|u)$ focuses on how accurately we can estimate the tail function $S_{\mathsf{Y}|\mathsf{U}}$. Note that estimating the CDF of the observed times $Y_j$'s is simpler than estimating the CDF of the survival times $T_j$'s since observed times are known for all training data whereas survival times are only known for uncensored training data (although the censored data provide information on survival times as well since censored times are lower bounds on survival times).

Lastly, the term $W_{3}(t|u)$ vanishes when the number of neighbors of $u$ with observed times $Y_{j}$'s exceeding $t_{\text{horizon}}$ is sufficiently large.

For now, we assume that good events $\mathcal{E}_{\beta}(u)$ and $\mathcal{E}_{\text{horizon}}(u)$ hold, along with Assumptions $\mathbf{A}^{\text{technical}}$ and $\mathbf{A}^{\text{survival}}$ before introducing the third and final good event that comes into play in showing that $W_{2}(t|u)\rightarrow0$. Lemma~\ref{lem:survival-estimator-convenient-form} guarantees that under these conditions, $\widehat{S}_{\mathsf{T}|\mathsf{U}}(t|u)\ne0$ (recall that $t\in[0,t_{\text{horizon}}]$). Moreover, using Assumption $\mathbf{A}^{\text{survival}}$(a) and since $S_{\mathsf{T}|\mathsf{U}}(\cdot|u)$ monotonically decreases, we have $S_{\mathsf{T}|\mathsf{U}}(t|u)\ge S_{\mathsf{T}|\mathsf{U}}(t_{\text{horizon}}|u)\ge\theta>0$. Using the fact that for any $a,b\in(0,1]$, we have $|a-b|\le|\log a-\log b|$, we get
\begin{align*}
 & \mathbb{E}_{Y_{1:n},\eventInd_{1:n}|U_{1:n}}\big[(\widehat{S}_{\mathsf{T}|\mathsf{U}}(t|u)-S_{\mathsf{T}|\mathsf{U}}(t|u))^{2}\big]\\
 & \quad\le\mathbb{E}_{Y_{1:n},\eventInd_{1:n}|U_{1:n}}\big[(\log\widehat{S}_{\mathsf{T}|\mathsf{U}}(t|u)-\log S_{\mathsf{T}|\mathsf{U}}(t|u))^{2}\big]\\
 & \quad=\mathbb{E}_{Y_{1:n},\eventInd_{1:n}|U_{1:n}}\big[(W_{1}(t|u)-\log S_{\mathsf{T}|\mathsf{U}}(t|u)
 +W_{2}(t|u)+W_{3}(t|u))^{2}\big].
\end{align*}
Next, recall that for any $a_{1},a_{2},\dots,a_{\ell}\in\mathbb{R}$, a consequence of Jensen's inequality is that
$
(\sum_{i=1}^{\ell}a_{i})^{2}\le\ell\sum_{i=1}^{\ell}a_{i}^{2}
$. By applying this inequality to the RHS above, we get
\begin{align}
 & \mathbb{E}_{Y_{1:n},\eventInd_{1:n}|U_{1:n}}\big[(\widehat{S}_{\mathsf{T}|\mathsf{U}}(t|u)-S_{\mathsf{T}|\mathsf{U}}(t|u))^{2}\big]\nonumber\\
 & \quad\le\mathbb{E}_{Y_{1:n},\eventInd_{1:n}|U_{1:n}}\big[(W_{1}(t|u)-\log S_{\mathsf{T}|\mathsf{U}}(t|u)
 +W_{2}(t|u)+W_{3}(t|u))^{2}\big] \nonumber\\
 & \quad\le3\underbrace{\mathbb{E}_{Y_{1:n},\eventInd_{1:n}|U_{1:n}}[(W_{1}(t|u)-\log S_{\mathsf{T}|\mathsf{U}}(t|u))^{2}]}_{\text{term I}}\nonumber
 +3\underbrace{\mathbb{E}_{Y_{1:n},\eventInd_{1:n}|U_{1:n}}[(W_{2}(t|u))^{2}]}_{\text{term II}}\nonumber\\
 & \quad\quad+3\underbrace{\mathbb{E}_{Y_{1:n},\eventInd_{1:n}|U_{1:n}}[(W_{3}(t|u))^{2}]}_{\text{term III}}.
 \label{eq:W3-main-inequality}
\end{align}
We proceed to bound each of the RHS expectations.

\paragraph{Bounding term I} As stated previously, $W_{1}(t|u)$ is like a kernel regression estimate. By a standard bias-variance decomposition,
\begin{align*}
 & \mathbb{E}_{Y_{1:n},\eventInd_{1:n}|U_{1:n}}[(W_{1}(t|u)-\log S_{\mathsf{T}|\mathsf{U}}(t|u))^{2}]\\
 & \quad =\underbrace{\mathbb{E}_{Y_{1:n},\eventInd_{1:n}|U_{1:n}}[(W_{1}(t|u)-\mathbb{E}_{Y_{1:n},\eventInd_{1:n}|U_{1:n}}[W_1(t|u)])^2]}_{\text{variance}}\\
 & \quad\quad +\underbrace{(\mathbb{E}_{Y_{1:n},\eventInd_{1:n}|U_{1:n}}[W_1(t|u)]-\log S_{\mathsf{T}|\mathsf{U}}(t|u))^{2}}_{\text{bias}}.
\end{align*}
We bound the right-hand side.
\begin{lemma}
\label{lem:W1-bound}
\emph{(Bound on term I)}
Under the same setting as Lemma~\ref{lem:survival-estimator-convenient-form}, except where we only condition on good event $\mathcal{E}_{\beta}(u)$ (i.e., good event $\mathcal{E}_{\text{horizon}}(u)$ can optionally hold),
\[
 \mathbb{E}_{Y_{1:n},\eventInd_{1:n}|U_{1:n}}[(W_{1}(t|u)-\log S_{\mathsf{T}|\mathsf{U}}(t|u))^{2}]
 \le\!\!\!\!\underbrace{\frac{K(0)}{4\theta^{2}K(\tau)\Xi_{u}}}_{\text{variance term bound}}\!\!\!\!+\underbrace{\frac{(1+\beta)^{2\alpha}\tau^{2\alpha}}{\theta^{2}}\Big(\lambda_{\mathsf{T}}t+\frac{f_{\mathsf{T}}^{*}\lambda_{\mathsf{C}}t^{2}}{2}\Big)^{2}}_{\text{bias term bound}}.
\]
\end{lemma}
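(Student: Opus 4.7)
The plan is to use a standard bias–variance decomposition of the conditional mean squared error, since once we condition on $U_{1:n}$ the pairs $(Y_i,\eventInd_i)$ are independent across $i$ with each pair depending only on $U_i$. Writing $w_i:=K_i/\sum_{j\in\mathcal{N}_Q(u)}K_j$ and $Z_i:=\eventInd_i\ind\{Y_i\le t\}/S_{\mathsf{Y}|\mathsf{U}}(Y_i|u)$, we have $W_1(t|u)=-\sum_{i\in\mathcal{N}_Q(u)}w_i Z_i$, a weighted sum of conditionally independent random variables with deterministic weights (the $w_i$ depend only on $U_{1:n}$). So the conditional MSE splits into a conditional variance and a squared conditional bias, which we bound separately.

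For the variance, the key observation is that whenever $\ind\{Y_i\le t\}=1$ we have $Y_i\le t\le t_{\text{horizon}}$, so $S_{\mathsf{Y}|\mathsf{U}}(Y_i|u)\ge S_{\mathsf{Y}|\mathsf{U}}(t_{\text{horizon}}|u)\ge\theta$ by Assumption~$\mathbf{A}^{\text{survival}}$(a); hence each $Z_i$ lies in $[0,1/\theta]$ and its conditional variance is at most $1/(4\theta^2)$. This gives $\operatorname{Var}(W_1(t|u)\mid U_{1:n})\le\frac{1}{4\theta^2}\sum_i w_i^2\le\frac{\max_i w_i}{4\theta^2}$ (using $\sum_i w_i=1$). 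The kernel sandwich $K(\tau)\le K_i\le K(0)$, valid for every $i\in\mathcal{N}_Q(u)$, together with $|\mathcal{N}_Q(u)|\ge\Xi_u$ from Lemma~\ref{lem:lower-bound-on-good-neighbors-deep-kernel-netting-regression}, yields $\max_i w_i\le K(0)/(K(\tau)\Xi_u)$, which produces the stated variance term.

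For the bias, we first compute each $\mathbb{E}[Z_i\mid U_i]$ using the conditional independence of $T$ and $C$ given the embedding, so that $f_{\mathsf{Y},\eventInd=1|\mathsf{U}}(s|u')=f_{\mathsf{T}|\mathsf{U}}(s|u')S_{\mathsf{C}|\mathsf{U}}(s|u')$ and
\[
\mathbb{E}[Z_i\mid U_i]=\int_0^t\frac{f_{\mathsf{T}|\mathsf{U}}(s|U_i)\,S_{\mathsf{C}|\mathsf{U}}(s|U_i)}{S_{\mathsf{Y}|\mathsf{U}}(s|u)}\,ds.
\]
The key identification is that at $U_i=u$ the integrand collapses to the hazard $h(s|u)$, so $\mathbb{E}[Z_i\mid U_i=u]=-\log S_{\mathsf{T}|\mathsf{U}}(t|u)$. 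The bias is therefore a convex combination, over $i\in\mathcal{N}_Q(u)$, of integrals of $[f_{\mathsf{T}|\mathsf{U}}(s|u)S_{\mathsf{C}|\mathsf{U}}(s|u)-f_{\mathsf{T}|\mathsf{U}}(s|U_i)S_{\mathsf{C}|\mathsf{U}}(s|U_i)]/S_{\mathsf{Y}|\mathsf{U}}(s|u)$. We bound the numerator by the product-difference identity $|ab-cd|\le|a-c|\cdot b+c\cdot|b-d|$, applying the H\"older assumption on $f_{\mathsf{T}|\mathsf{U}}$ directly, and deriving smoothness of $S_{\mathsf{C}|\mathsf{U}}$ by integrating the H\"older bound on $f_{\mathsf{C}|\mathsf{U}}$ (yielding $|S_{\mathsf{C}|\mathsf{U}}(s|u)-S_{\mathsf{C}|\mathsf{U}}(s|U_i)|\le s\lambda_{\mathsf{C}}\|u-U_i\|_2^\alpha$). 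Combining with $S_{\mathsf{C}|\mathsf{U}}\le1$, $f_{\mathsf{T}|\mathsf{U}}\le f_{\mathsf{T}}^*$, $S_{\mathsf{Y}|\mathsf{U}}(s|u)\ge\theta$ on $[0,t]$, and $\|U_i-u\|_2\le(1+\beta)\tau$ from Lemma~\ref{lem:neighbor-embedding} gives a per-$i$ bound of $\frac{(1+\beta)^\alpha\tau^\alpha}{\theta}\big(\lambda_{\mathsf{T}}t+\frac{f_{\mathsf{T}}^*\lambda_{\mathsf{C}}t^2}{2}\big)$; since this is uniform in $i$, the convex combination inherits it, and squaring yields the stated bias term.

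The only genuinely subtle step is the bias identification: because $S_{\mathsf{Y}|\mathsf{U}}(Y_i|u)$ is evaluated at the test embedding $u$ rather than at $U_i$, this ``mismatch'' is precisely what causes the integrand to collapse to $-h(s|u)$ when $U_i=u$, but it also means H\"older smoothness can only be exploited in the numerator of that integrand. Everything else is a mechanical combination of H\"older continuity with the kernel-weight sandwich and the neighbor-distance bound already provided by Lemmas~\ref{lem:lower-bound-on-good-neighbors-deep-kernel-netting-regression} and~\ref{lem:neighbor-embedding}.
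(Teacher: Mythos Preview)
Your proposal is correct and follows essentially the same approach as the paper's proof: the same bias--variance decomposition, the same $[0,1/\theta]$ range bound for the variance term combined with the kernel sandwich $K(\tau)\le K_i\le K(0)$ and Lemma~\ref{lem:lower-bound-on-good-neighbors-deep-kernel-netting-regression}, and the same integral representation of the bias via $\log S_{\mathsf{T}|\mathsf{U}}(t|u)=-\int_0^t S_{\mathsf{C}|\mathsf{U}}(s|u)f_{\mathsf{T}|\mathsf{U}}(s|u)/S_{\mathsf{Y}|\mathsf{U}}(s|u)\,ds$. The only cosmetic difference is that the paper simply asserts that $S_{\mathsf{C}|\mathsf{U}}(s|\cdot)f_{\mathsf{T}|\mathsf{U}}(s|\cdot)$ is H\"older continuous with parameter $(\lambda_{\mathsf{T}}+f_{\mathsf{T}}^*\lambda_{\mathsf{C}}s,\alpha)$, whereas you spell this out via the product-difference identity $|ab-cd|\le|a-c|b+c|b-d|$; the resulting bounds are identical.
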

\begin{proof}
(Variance term bound) The variance term equals
\begin{align*}
 & \mathbb{E}_{Y_{1:n},\eventInd_{1:n}|U_{1:n}}[(W_{1}(t|u)-\mathbb{E}_{Y_{1:n},\eventInd_{1:n}|U_{1:n}}[W_1(t|u)])^2]\\
 & =\!\!\!\sum_{i\in\mathcal{N}_Q(u)}\!\!\!\Big(\frac{K_{i}}{\sum_{j\in\mathcal{N}_Q(u)}K_{j}}\Big)^{2}
 \underbrace{\mathbb{E}_{Y_{i},\eventInd_{i}|U_{i}}\Big[\Big(-\frac{\eventInd_{i}\ind\{Y_{i}\le t\}}{S_{\mathsf{Y}|\mathsf{U}}(Y_{i}|u)}
 -\mathbb{E}_{Y_{1:n},\eventInd_{1:n}|U_{1:n}}\Big[-\frac{\eventInd_{i}\ind\{Y_{i}\le t\}}{S_{\mathsf{Y}|\mathsf{U}}(Y_{i}|u)}\Big]\Big)^{2}\Big]}_{\text{variance of }-\frac{\eventInd_{i}\ind\{Y_{i}\le t\}}{S_{\mathsf{Y}|\mathsf{U}}(Y_{i}|u)}\text{ conditioned on }U_{i}}.
\end{align*}
Note that this sum is not vacuous since under event $\mathcal{E}_{\beta}(u)$ and using Lemma~\ref{lem:lower-bound-on-good-neighbors-deep-kernel-netting-regression}, $|\mathcal{N}_Q(u)|>0$. Next, note that $-\frac{\eventInd_{i}\ind\{Y_{i}\le t\}}{S_{\mathsf{Y}|\mathsf{U}}(Y_{i}|u)}$ is a bounded random variable. It is nonzero only when $Y_{i}\le t$, for which the denominator could be as large as 1 and as small as $S_{\mathsf{Y}|\mathsf{U}}(Y_{i}|u)\ge S_{\mathsf{Y}|\mathsf{U}}(t|u)\ge S_{\mathsf{Y}|\mathsf{U}}(t_{\text{horizon}}|u)\ge\theta$ using the fact that $S_{\mathsf{Y}|\mathsf{U}}(\cdot|u)$ monotonically decreases and using Assumption $\mathbf{A}^{\text{survival}}$(a). Consequently, $-\frac{\eventInd_{i}\ind\{Y_{i}\le t\}}{S_{\mathsf{Y}|\mathsf{U}}(Y_{i}|u)}\in[-\frac{1}{\theta},0]$, which by a standard result on bounded random variables implies that the variance of this random variable is at most $\frac{1}{4\theta^{2}}$. Hence,
\[
 \mathbb{E}_{Y_{1:n},\eventInd_{1:n}|U_{1:n}}[(W_{1}(t|u)-\mathbb{E}_{Y_{1:n},\eventInd_{1:n}|U_{1:n}}[W_1(t|u)])^{2}]\\
 \le\sum_{i\in\mathcal{N}_Q(u)}\Big(\frac{K_{i}}{\sum_{j\in\mathcal{N}_Q(u)}K_{j}}\Big)^{2}\frac{1}{4\theta^{2}}.
\]
Then by H\"{o}lder's inequality and Lemma~\ref{lem:lower-bound-on-good-neighbors-deep-kernel-netting-regression},
\begin{align*}
 \sum_{i\in\mathcal{N}_Q(u)}\Big(\frac{K_{i}}{\sum_{j\in\mathcal{N}_Q(u)}K_{j}}\Big)^{2}\frac{1}{4\theta^{2}}
 & \le\frac{1}{4\theta^{2}}\Big[\max_{i\in\mathcal{N}_Q(u)}\frac{K_{i}}{\sum_{j\in\mathcal{N}_Q(u)}K_{j}}\Big]\underbrace{\sum_{i\in\mathcal{N}_Q(u)}\frac{K_{i}}{\sum_{j\in\mathcal{N}_Q(u)}K_{j}}}_{=1}\\
 & \le\frac{1}{4\theta^{2}}\max_{i\in\mathcal{N}_Q(u)}\frac{K(0)}{K(\tau)|\mathcal{N}_Q(u)|}\\
 & =\frac{K(0)}{4\theta^{2}K(\tau)|\mathcal{N}_Q(u)|}\le\frac{K(0)}{4\theta^{2}K(\tau)\Xi_{u}},
\end{align*}
which establishes the variance term bound.

(Bias term bound) First, recall from equation~\eqref{eq:hazard} that the hazard function is
$
-\frac{\partial}{\partial t}\log S_{\mathsf{T}|\mathsf{U}}(t|u)=\frac{f_{\mathsf{T}|\mathsf{U}}(s|u)}{S_{\mathsf{T}|\mathsf{U}}(s|u)}$.
Then by the fundamental theorem of calculus,
\begin{align}
 \log S_{\mathsf{T}|\mathsf{U}}(t|u)
 & =-\int_{0}^{t}\frac{f_{\mathsf{T}|\mathsf{U}}(s|u)}{S_{\mathsf{T}|\mathsf{U}}(s|u)}ds\nonumber\\
 & =-\int_{0}^{t}\frac{1}{S_{\mathsf{T}|\mathsf{U}}(s|u)S_{\mathsf{C}|\mathsf{U}}(s|u)}S_{\mathsf{C}|\mathsf{U}}(s|u)f_{\mathsf{T}|\mathsf{U}}(s|u)ds\nonumber\\
 & =-\int_{0}^{t}\frac{1}{S_{\mathsf{Y}|\mathsf{U}}(s|u)}S_{\mathsf{C}|\mathsf{U}}(s|u)f_{\mathsf{T}|\mathsf{U}}(s|u)ds,
 \label{eq:W1-bias-target}
\end{align}
where the last step uses the result that for two independent random variables $A_{1}$ and $A_{2}$, the random variable $\min\{A_{1},A_{2}\}$ has a tail function (1 minus CDF) that is the product of the tail functions of $A_{1}$ and $A_{2}$; in our setting, $Y_j=\min\{T_j,C_j\}$ where $T_j$ and $C_j$ are conditionally independent given $U_j$, so $S_{\mathsf{Y}|\mathsf{U}}(s|u)=S_{\mathsf{T}|\mathsf{U}}(s|u)S_{\mathsf{C}|\mathsf{U}}(s|u)$.

Next, we have
\begin{equation}
 \mathbb{E}_{Y_{1:n},\eventInd_{1:n}|U_{1:n}}[W_{1}(t|u)]
 =\sum_{i\in\mathcal{N}_Q(u)}\Big(\frac{K_{i}}{\sum_{j\in\mathcal{N}_Q(u)}K_{j}}\Big)\mathbb{E}_{Y_{i},\eventInd_{i}|U_{i}}\Big[-\frac{\eventInd_{i}\ind\{Y_{i}\le t\}}{S_{\mathsf{Y}|\mathsf{U}}(Y_{i}|u)}\Big],
 \label{eq:W1-bias-expected-estimate}
\end{equation}
where
\begin{align*}
 \mathbb{E}_{Y_{i},\eventInd_{i}|U_{i}}\Big[-\frac{\eventInd_{i}\ind\{Y_{i}\le t\}}{S_{\mathsf{Y}|\mathsf{U}}(Y_{i}|u)}\Big]
 & =-\int_{0}^{t}\Big[\int_{s}^{\infty}\frac{1}{S_{\mathsf{Y}|\mathsf{U}}(s|u)}d\mathbb{P}_{\mathsf{C}|\mathsf{U}}(c|U_{i})\Big]d\mathbb{P}_{\mathsf{T}|\mathsf{U}}(s|U_{i})\\
 & =-\int_{0}^{t}\frac{1}{S_{\mathsf{Y}|\mathsf{U}}(s|u)}S_{\mathsf{C}|\mathsf{U}}(s|U_{i})f_{\mathsf{T}|\mathsf{U}}(s|U_{i})ds.
\end{align*}
We take the difference of equations~\eqref{eq:W1-bias-expected-estimate} and~\eqref{eq:W1-bias-target} to get
\begin{align*}
 & \mathbb{E}_{Y_{1:n},\eventInd_{1:n}|U_{1:n}}[W_{1}(t|u)]-\log S_{\mathsf{T}|\mathsf{U}}(t|u)\\
 & \quad =\sum_{i\in\mathcal{N}_Q(u)}\Big(\frac{K_{i}}{\sum_{j\in\mathcal{N}_Q(u)}K_{j}}\Big)
 \int_{0}^{t}\frac{S_{\mathsf{C}|\mathsf{U}}(s|u)f_{\mathsf{T}|\mathsf{U}}(s|u)-S_{\mathsf{C}|\mathsf{U}}(s|U_{i})f_{\mathsf{T}|\mathsf{U}}(s|U_{i})}{S_{\mathsf{Y}|\mathsf{U}}(s|u)}ds.
\end{align*}
Note that Assumption $\mathbf{A}^{\text{survival}}$(b) implies that $S_{\mathsf{C}|\mathsf{U}}(s|\cdot)f_{\mathsf{T}|\mathsf{U}}(s|\cdot)$ is H\"{o}lder continuous with parameters $(\lambda_{\mathsf{T}}+f_{\mathsf{T}}^*\lambda_{\mathsf{C}}s)$ and $\alpha$, where $f_{\mathsf{T}}^*=\max_{s\in[0,t_{\text{horizon}}],u\in\mathcal{U}}f_{\mathsf{T}|\mathsf{U}}(s|u)$; this maximum exists by compactness of $[0,t_{\text{horizon}}]$ and $\mathcal{U}$. Then by H\"{o}lder's inequality, Assumption $\mathbf{A}^{\text{survival}}$, and Lemma~\ref{lem:neighbor-embedding},
\begin{align*}
 &|\mathbb{E}_{Y_{1:n},\eventInd_{1:n}|U_{1:n}}[W_{1}(t|u)]-\log S_{\mathsf{T}|\mathsf{U}}(t|u)|\\
 & \quad \le\max_{i\in\mathcal{N}_Q(u)}\Big|\int_{0}^{t}\frac{1}{\underbrace{S_{\mathsf{Y}|\mathsf{U}}(s|u)}_{\ge S_{\mathsf{Y}|\mathsf{U}}(t_{\text{horizon}}|u)\ge\theta}}[S_{\mathsf{C}|\mathsf{U}}(s|u)f_{\mathsf{T}|\mathsf{U}}(s|u)-S_{\mathsf{C}|\mathsf{U}}(s|U_{i})f_{\mathsf{T}|\mathsf{U}}(s|U_{i})]ds\Big|\\
 & \quad \le\frac{1}{\theta}\max_{i\in\mathcal{N}_Q(u)}\int_{0}^{t}|S_{\mathsf{C}|\mathsf{U}}(s|u)f_{\mathsf{T}|\mathsf{U}}(s|u)-S_{\mathsf{C}|\mathsf{U}}(s|U_{i})f_{\mathsf{T}|\mathsf{U}}(s|U_{i})|ds\\
 & \quad \le\frac{1}{\theta}\max_{i\in\mathcal{N}_Q(u)}\int_{0}^{t}(\lambda_{\mathsf{T}}+f_{\mathsf{T}}^{*}\lambda_{\mathsf{C}}s)\|u-U_{i}\|_{2}^{\alpha}ds\\
 & \quad \le\frac{1}{\theta}\max_{i\in\mathcal{N}_Q(u)}\int_{0}^{t}(\lambda_{\mathsf{T}}+f_{\mathsf{T}}^{*}\lambda_{\mathsf{C}}s)((1+\beta)\tau)^{\alpha}ds\\
 & \quad =\frac{(1+\beta)^{\alpha}\tau^{\alpha}}{\theta}\int_{0}^{t}(\lambda_{\mathsf{T}}+f_{\mathsf{T}}^{*}\lambda_{\mathsf{C}}s)ds\\
 & \quad =\frac{(1+\beta)^{\alpha}\tau^{\alpha}}{\theta}\Big(\lambda_{\mathsf{T}}t+\frac{f_{\mathsf{T}}^{*}\lambda_{\mathsf{C}}t^{2}}{2}\Big).
\end{align*}
Squaring both sides yields the bias term bound.
\end{proof}

\paragraph{Bounding term II}
The term $W_{2}(t|u)$ is related to a CDF estimation problem. Specifically, define the following estimate of $S_{\mathsf{Y}|\mathsf{U}}(s|u)$:
\[
\widehat{S}_{\mathsf{Y}|\mathsf{U}}(s|u)
:=\frac{d_{K}^{+}(u,s)}{\sum_{\ell\in\mathcal{N}_Q(u)}K_{\ell}} \\
=\sum_{j\in\mathcal{N}_Q(u)}\frac{K_{j}}{\sum_{\ell\in\mathcal{N}_Q(u)}K_{\ell}}\ind\{Y_{j}>s\}.
\]
Then $1-\widehat{S}_{\mathsf{Y}|\mathsf{U}}(s|u)$ is a weighted empirical distribution that estimates $1-S_{\mathsf{Y}|\mathsf{U}}(s|u)$. We introduce the third and final good event
\begin{equation}
 \mathcal{E}_{\text{CDF}}(u)
 :=\Big\{\underbrace{\sup_{s\ge0}|\widehat{S}_{\mathsf{Y}|\mathsf{U}}(s|u)-\mathbb{E}_{Y_{1:n}|U_{1:n}}[\widehat{S}_{\mathsf{Y}|\mathsf{U}}(s|u)]|}_{\spadesuit:=}\le\varepsilon_{\text{CDF}}\Big\},
 \label{eq:event-good-CDF}
\end{equation}
where
\[
\varepsilon_{\text{CDF}}=\sqrt{\frac{9K^{2}(0)}{4K^{2}(\tau)|\mathcal{N}_Q(u)|}\log\Big(|\mathcal{N}_Q(u)|^{3}\frac{144K^{2}(\tau)}{9K^{2}(0)}\Big)}.
\]
We show later (Lemma~\ref{lem:prob-survival-good-events}) that this event holds with high probability for large $n$. Note that event $\mathcal{E}_{\text{CDF}}(u)$ is like a variance term, saying that $\widehat{S}_{\mathsf{Y}|\mathsf{U}}(s|u)$ is close to its expectation. We also define a bias term
\begin{equation}
\heartsuit:=\sup_{s\in[0,t]}|\widehat{S}_{\mathsf{Y}|\mathsf{U}}(s|u)-\mathbb{E}_{Y_{1:n}|U_{1:n}}[\widehat{S}_{\mathsf{Y}|\mathsf{U}}(s|u)]|.
\label{eq:CDF-bias}
\end{equation}
Then the following lemma relates $W_{2}(t|u)$ to the variance and bias terms.

\begin{lemma}
\label{lem:W2-bound}
\emph{(Bound on term II)}
Under the same setting as Lemma~\ref{lem:survival-estimator-convenient-form}, except where we now condition on all three good events $\mathcal{E}_{\beta}(u)$, $\mathcal{E}_{\text{horizon}}(u)$, and $\mathcal{E}_{\text{CDF}}(u)$ holding,
\begin{align*}
 (W_{2}(t|u))^{2}
 & \le\frac{12K^{2}(0)}{K^{2}(\tau)\theta^{4}\Xi_{u}^{2}}+\frac{12K^{2}(0)}{K^{2}(\tau)\theta^{4}}\Big(\spadesuit^{2}+\heartsuit^{2}\Big).
\end{align*}
where
\[
\spadesuit^{2} \le\varepsilon_{\text{CDF}}^{2},\quad
\heartsuit^{2} \le(\lambda_{\mathsf{T}}+\lambda_{\mathsf{C}})^{2}t^{2}(1+\beta)^{2\alpha}\tau^{2\alpha}.
\]
Thus,
\[
 \mathbb{E}_{Y_{1:n},\eventInd_{1:n}|U_{1:n}}[(W_{2}(t|u))^{2}] \le\frac{12K^{2}(0)}{K^{2}(\tau)\theta^{4}\Xi_{u}^{2}}
 + \frac{12K^{2}(0)}{K^{2}(\tau)\theta^{4}}[\varepsilon_{\text{CDF}}^{2}+(\lambda_{\mathsf{T}}+\lambda_{\mathsf{C}})^{2}t^{2}(1+\beta)^{2\alpha}\tau^{2\alpha}].
\]
\end{lemma}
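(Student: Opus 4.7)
The plan is to bound $|W_2(t|u)|$ pointwise and then square. The first step is algebraic: using the fact that ties among the $Y_j$'s occur with probability zero under Assumption~$\mathbf{A}^{\text{survival}}$, for $i\in\mathcal{N}_Q(u)$ we recognize the identity
$$K_i+d_K^+(u,Y_i)=\sum_{j\in\mathcal{N}_Q(u)} K_j\ind\{Y_j\ge Y_i\} = \Big(\sum_{j\in\mathcal{N}_Q(u)} K_j\Big)\,\widehat{S}_{\mathsf{Y}|\mathsf{U}}(Y_i^-|u),$$
so that the bracket in $W_2$ becomes $\tfrac{1}{\widehat{S}_{\mathsf{Y}|\mathsf{U}}(Y_i^-|u)} - \tfrac{1}{S_{\mathsf{Y}|\mathsf{U}}(Y_i|u)} = \tfrac{S_{\mathsf{Y}|\mathsf{U}}(Y_i|u)-\widehat{S}_{\mathsf{Y}|\mathsf{U}}(Y_i^-|u)}{\widehat{S}_{\mathsf{Y}|\mathsf{U}}(Y_i^-|u)\, S_{\mathsf{Y}|\mathsf{U}}(Y_i|u)}$. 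This exposes $W_2(t|u)$ as a weighted sum measuring how accurately $\widehat{S}_{\mathsf{Y}|\mathsf{U}}$ approximates $S_{\mathsf{Y}|\mathsf{U}}$.

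Next, I lower-bound the denominator using the good events. Assumption~$\mathbf{A}^{\text{survival}}$(a) gives $S_{\mathsf{Y}|\mathsf{U}}(Y_i|u)\ge S_{\mathsf{Y}|\mathsf{U}}(t_{\text{horizon}}|u)\ge\theta$ for $Y_i\le t\le t_{\text{horizon}}$. Under $\mathcal{E}_{\text{horizon}}(u)$, since $K_j\in[K(\tau),K(0)]$ for $j\in\mathcal{N}_Q(u)$, monotonicity of $\widehat{S}_{\mathsf{Y}|\mathsf{U}}$ yields $\widehat{S}_{\mathsf{Y}|\mathsf{U}}(Y_i^-|u)\ge\widehat{S}_{\mathsf{Y}|\mathsf{U}}(t_{\text{horizon}}|u)\ge K(\tau)\theta/(2K(0))$, so the product in the denominator is at least $K(\tau)\theta^2/(2K(0))$. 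For the numerator I apply a three-term triangle inequality,
\begin{align*}
|S_{\mathsf{Y}|\mathsf{U}}(Y_i|u)-\widehat{S}_{\mathsf{Y}|\mathsf{U}}(Y_i^-|u)|
&\le \underbrace{|\widehat{S}_{\mathsf{Y}|\mathsf{U}}(Y_i^-|u)-\widehat{S}_{\mathsf{Y}|\mathsf{U}}(Y_i|u)|}_{=\,K_i/\sum_\ell K_\ell\,\le\,K(0)/(K(\tau)\Xi_u)}
+ \underbrace{|\widehat{S}_{\mathsf{Y}|\mathsf{U}}(Y_i|u)-\mathbb{E}[\widehat{S}_{\mathsf{Y}|\mathsf{U}}(Y_i|u)]|}_{\le\,\spadesuit\,\le\,\varepsilon_{\text{CDF}}\text{ under }\mathcal{E}_{\text{CDF}}(u)} \\
&\quad + \underbrace{|\mathbb{E}[\widehat{S}_{\mathsf{Y}|\mathsf{U}}(Y_i|u)]-S_{\mathsf{Y}|\mathsf{U}}(Y_i|u)|}_{\le\,\heartsuit}.
\end{align*}
Combining and using $\sum_{i\in\mathcal{N}_Q(u)} w_i=1$ with $w_i=K_i/\sum_\ell K_\ell$ gives $|W_2(t|u)|\le\tfrac{2K(0)}{K(\tau)\theta^2}\big(\tfrac{K(0)}{K(\tau)\Xi_u}+\spadesuit+\heartsuit\big)$; squaring and applying $(a+b+c)^2\le 3(a^2+b^2+c^2)$ produces the stated structure.

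For the bias piece $\heartsuit$, I lift Hölder continuity from PDFs to survival functions. Integrating Assumption~$\mathbf{A}^{\text{survival}}$(b) over $[s,\infty)$ gives $|S_{\mathsf{T}|\mathsf{U}}(s|u)-S_{\mathsf{T}|\mathsf{U}}(s|u')|\le\lambda_\mathsf{T}s\|u-u'\|_2^\alpha$ and similarly for $S_{\mathsf{C}|\mathsf{U}}$. Since $Y_j=\min\{T_j,C_j\}$ with $T_j\perp C_j\mid U_j$, we have $S_{\mathsf{Y}|\mathsf{U}}=S_{\mathsf{T}|\mathsf{U}}\cdot S_{\mathsf{C}|\mathsf{U}}$, and the product rule combined with $S_{\mathsf{T}},S_{\mathsf{C}}\le 1$ yields $|S_{\mathsf{Y}|\mathsf{U}}(s|u)-S_{\mathsf{Y}|\mathsf{U}}(s|u')|\le(\lambda_\mathsf{T}+\lambda_\mathsf{C})s\|u-u'\|_2^\alpha$. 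Since $\mathbb{E}[\widehat{S}_{\mathsf{Y}|\mathsf{U}}(s|u)]=\sum_j w_j S_{\mathsf{Y}|\mathsf{U}}(s|U_j)$ and Lemma~\ref{lem:neighbor-embedding} gives $\|U_j-u\|_2\le(1+\beta)\tau$ for $j\in\mathcal{N}_Q(u)$, taking $\sup$ over $s\in[0,t]$ produces $\heartsuit\le(\lambda_\mathsf{T}+\lambda_\mathsf{C})t(1+\beta)^\alpha\tau^\alpha$. The $\spadesuit\le\varepsilon_{\text{CDF}}$ bound is immediate from the definition of $\mathcal{E}_{\text{CDF}}(u)$, and the final expectation step is trivial since $\spadesuit$ and $\heartsuit$ are deterministic upper bounds once we condition on $U_{1:n}$ and the three good events.

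The main obstacle I anticipate is careful bookkeeping around $Y_i^-$ vs.\ $Y_i$ in the first step: one must exploit that $S_{\mathsf{Y}|\mathsf{U}}$ is continuous in time (so $S_{\mathsf{Y}|\mathsf{U}}(Y_i^-|u)=S_{\mathsf{Y}|\mathsf{U}}(Y_i|u)$ can be freely interchanged inside the decomposition) while $\widehat{S}_{\mathsf{Y}|\mathsf{U}}$ is a right-continuous step function whose jumps at observed times are exactly $K_i/\sum_\ell K_\ell$. A secondary subtlety is propagating Hölder continuity through the product $S_{\mathsf{T}|\mathsf{U}}S_{\mathsf{C}|\mathsf{U}}$, where one must use that both factors are uniformly bounded by one so that the Hölder constants add rather than multiply.
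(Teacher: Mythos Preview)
Your approach is essentially the same as the paper's: both rewrite the bracket in $W_2$ as a CDF--estimation discrepancy, split into a jump piece $K_i/\sum_\ell K_\ell$, a variance piece $\spadesuit$, and a bias piece $\heartsuit$, then square and use $(a+b+c)^2\le 3(a^2+b^2+c^2)$. Your derivation of $\heartsuit\le(\lambda_{\mathsf{T}}+\lambda_{\mathsf{C}})t(1+\beta)^\alpha\tau^\alpha$ is correct and matches the paper (one small slip: you should integrate the density bound over $[0,s]$ via the CDF, not over $[s,\infty)$, to get the factor $s$).

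There is one genuine discrepancy in constants. By lower-bounding the denominator $\widehat{S}_{\mathsf{Y}|\mathsf{U}}(Y_i^-|u)\,S_{\mathsf{Y}|\mathsf{U}}(Y_i|u)\ge K(\tau)\theta^2/(2K(0))$ and then \emph{separately} bounding the jump term $K_i/\sum_\ell K_\ell\le K(0)/(K(\tau)\Xi_u)$, you pick up the ratio $K(0)/K(\tau)$ twice. After squaring this yields
\[
\frac{12K^{4}(0)}{K^{4}(\tau)\theta^{4}\Xi_u^{2}}
\]
for the first term, not the $\frac{12K^{2}(0)}{K^{2}(\tau)\theta^{4}\Xi_u^{2}}$ claimed in the lemma. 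The paper avoids this by bounding the jump contribution \emph{without} first factoring through $\widehat{S}_{\mathsf{Y}|\mathsf{U}}(Y_i^-|u)$: writing $\clubsuit_i:=\frac{D_i\ind\{Y_i\le t\}\sum_\ell K_\ell}{(K_i+d_K^+(u,Y_i))\,S_{\mathsf{Y}|\mathsf{U}}(Y_i|u)}$, the product $\clubsuit_i\cdot \frac{K_i}{\sum_\ell K_\ell}$ simplifies to $\frac{D_i\ind\{Y_i\le t\}K_i}{(K_i+d_K^+(u,Y_i))\,S_{\mathsf{Y}|\mathsf{U}}(Y_i|u)}$, where the $\sum_\ell K_\ell$ cancels. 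Bounding this directly via $K_i\le K(0)$, $K_i+d_K^+(u,Y_i)\ge K(\tau)|\mathcal{N}_Q(u)|\theta/2$, and $S_{\mathsf{Y}|\mathsf{U}}(Y_i|u)\ge\theta$ gives only a single $K(0)/K(\tau)$ factor. If you want to recover the exact constants stated, you should handle the jump term this way rather than routing it through the common denominator bound.
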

\begin{proof}
We use H\"{o}lder's inequality and a bit of algebra to get
\begin{align}
 |W_{2}(t|u)|
 & =\bigg|\sum_{i\in\mathcal{N}_Q(u)}\frac{K_{i}\eventInd_{i}\ind\{Y_{i}\le t\}\big[\frac{1}{S_{\mathsf{Y}|\mathsf{U}}(Y_{i}|u)}-\frac{\sum_{\ell\in\mathcal{N}_Q(u)}K_{\ell}}{K_{i}+d_{K}^{+}(u,Y_{i})}\big]}{\sum_{j\in\mathcal{N}_Q(u)}K_{j}}\bigg|\nonumber\\
 & \le\max_{i\in\mathcal{N}_Q(u)}\bigg|\eventInd_{i}\ind\{Y_{i}\le t\}\bigg[\frac{1}{S_{\mathsf{Y}|\mathsf{U}}(Y_{i}|u)}-\frac{\sum_{\ell\in\mathcal{N}_Q(u)}K_{\ell}}{K_{i}+d_{K}^{+}(u,Y_{i})}\bigg]\bigg|\nonumber\\
 & =\max_{i\in\mathcal{N}_Q(u)}\bigg|\frac{\eventInd_{i}\ind\{Y_{i}\le t\}\sum_{\ell\in\mathcal{N}_Q(u)}K_{\ell}}{(K_{i}+d_{K}^{+}(u,Y_{i}))S_{\mathsf{Y}|\mathsf{U}}(Y_{i}|u)}
 \bigg[\frac{K_{i}+d_{K}^{+}(u,Y_{i})}{\sum_{\ell\in\mathcal{N}_Q(u)}K_{\ell}}-S_{\mathsf{Y}|\mathsf{U}}(Y_{i}|u)\bigg]\bigg|\nonumber\\
 & =\max_{i\in\mathcal{N}_Q(u)}\bigg|\underbrace{\frac{\eventInd_{i}\ind\{Y_{i}\le t\}\sum_{\ell\in\mathcal{N}_Q(u)}K_{\ell}}{(K_{i}+d_{K}^{+}(u,Y_{i}))S_{\mathsf{Y}|\mathsf{U}}(Y_{i}|u)}}_{\clubsuit_{i}:=}
 \bigg[\frac{K_{i}}{\sum_{\ell\in\mathcal{N}_Q(u)}K_{\ell}}+\underbrace{\widehat{S}_{\mathsf{Y}|\mathsf{U}}(Y_{i}|u)-S_{\mathsf{Y}|\mathsf{U}}(Y_{i}|u)}_{\text{CDF/tail estimation}}\bigg]\bigg|\nonumber\\
 & \le\sup_{\substack{i\in\mathcal{N}_Q(u),\\s\in[0,t]}}\Big|\clubsuit_{i}\Big[\frac{K_{i}}{\sum_{\ell\in\mathcal{N}_Q(u)}K_{\ell}}+\widehat{S}_{\mathsf{Y}|\mathsf{U}}(s|u)-S_{\mathsf{Y}|\mathsf{U}}(s|u)\Big]\Big|,
 \label{eq:W2-bound-helper1}
\end{align}
where the last step uses the fact that for $\clubsuit_{i}$ to be nonzero, we must have $Y_{i}\le t$, for which we then replace $Y_{i}$ with a worst case $s\in[0,t]$ in the CDF/tail estimation problem.

By squaring both sides of inequality~\eqref{eq:W2-bound-helper1} (and noting that the square can go into the sup on the RHS) followed by using the fact that $(\sum_{i=1}^{\ell}a_{i})^{2}\le\ell\sum_{i=1}^{\ell}a_{i}^{2}$,
\begin{align}
 (W_{2}(t|u))^{2}
 & \le\sup_{\substack{i\in\mathcal{N}_Q(u),\\s\in[0,t]}}\clubsuit_{i}^2\Big[\frac{K_{i}}{\sum_{\ell\in\mathcal{N}_Q(u)}K_{\ell}}+\widehat{S}_{\mathsf{Y}|\mathsf{U}}(s|u)-S_{\mathsf{Y}|\mathsf{U}}(s|u)\Big]^2\nonumber\\
 & =\sup_{\substack{i\in\mathcal{N}_Q(u),\\s\in[0,t]}}\clubsuit_{i}^2\Big[\frac{K_{i}}{\sum_{\ell\in\mathcal{N}_Q(u)}K_{\ell}} +\widehat{S}_{\mathsf{Y}|\mathsf{U}}(s|u)
 -\mathbb{E}_{Y_{1:n},\eventInd_{1:n}|U_{1:n}}[\widehat{S}_{\mathsf{Y}|\mathsf{U}}(s|u)]\nonumber\\
 & \phantom{=\sup_{\substack{i\in\mathcal{N}_Q(u),\\s\in[0,t]}}\clubsuit_{i}^2\Big[}
 +\mathbb{E}_{Y_{1:n},\eventInd_{1:n}|U_{1:n}}[\widehat{S}_{\mathsf{Y}|\mathsf{U}}(s|u)]-S_{\mathsf{Y}|\mathsf{U}}(s|u)\Big]^2\nonumber\\
 & \le\sup_{\substack{i\in\mathcal{N}_Q(u),\\s\in[0,t]}}\Big[3\clubsuit_{i}^{2}\Big(\frac{K_{i}}{\sum_{\ell\in\mathcal{N}_Q(u)}K_{\ell}}\Big)^{2} +3\clubsuit_{i}^{2}(\widehat{S}_{\mathsf{Y}|\mathsf{U}}(s|u)
 -\mathbb{E}_{Y_{1:n},\eventInd_{1:n}|U_{1:n}}[\widehat{S}_{\mathsf{Y}|\mathsf{U}}(s|u)])^{2}\nonumber\\
 & \phantom{\le\sup_{\substack{i\in\mathcal{N}_Q(u),\\s\in[0,t]}}\Big[}
   +3\clubsuit_{i}^{2}(\mathbb{E}_{Y_{1:n},\eventInd_{1:n}|U_{1:n}}[\widehat{S}_{\mathsf{Y}|\mathsf{U}}(s|u)]-S_{\mathsf{Y}|\mathsf{U}}(s|u))^{2}\Big] \nonumber \\
 & \le3\max_{i\in\mathcal{N}_Q(u)}\clubsuit_{i}^{2}\Big(\frac{K_{i}}{\sum_{\ell\in\mathcal{N}_Q(u)}K_{\ell}}\Big)^{2}
 +\Big(3\max_{i\in\mathcal{N}_Q(u)}\clubsuit_{i}^{2}\Big)\spadesuit^{2}
 +\Big(3\max_{i\in\mathcal{N}_Q(u)}\clubsuit_{i}^{2}\Big)\heartsuit^{2},
 \label{eq:W2-bound-helper2}
\end{align}
where the variance term $\spadesuit$ and bias term $\heartsuit$ are defined in equations~\eqref{eq:event-good-CDF} and~\eqref{eq:CDF-bias}. We now bound $\clubsuit_{i}^{2}(\frac{K_{i}}{\sum_{\ell\in\mathcal{N}_Q(u)}K_{\ell}})^{2}$, $\clubsuit_{i}^{2}$, and $\heartsuit^{2}$.

Since $d_{K}^{+}(u,\cdot)$ and $S_{\mathsf{Y}|\mathsf{U}}(\cdot|u)$ are monotonically decreasing and with the constraint in the numerator that $Y_{i}\le t$, we have
\begin{align}
 \clubsuit_{i}^{2}\Big(\frac{K_{i}}{\sum_{\ell\in\mathcal{N}_Q(u)}K_{\ell}}\Big)^{2}
 & =\Big(\frac{\eventInd_{i}\ind\{Y_{i}\le t\}K_{i}}{(K_{i}+d_{K}^{+}(u,Y_{i}))S_{\mathsf{Y}|\mathsf{U}}(Y_{i}|u)}\Big)^{2}\nonumber\\
 & \le\Big(\frac{\eventInd_{i}\ind\{Y_{i}\le t\}K_{i}}{d_{K}^{+}(u,t)S_{\mathsf{Y}|\mathsf{U}}(t|u)}\Big)^{2}\nonumber \\
 & \le\Big(\frac{\eventInd_{i}\ind\{Y_{i}\le t\}K_{i}}{K(\tau)\cdot\frac{|\mathcal{N}_Q(u)|\theta}{2}\cdot\theta}\Big)^{2}&&\text{by }\mathcal{E}_{\text{horizon}}(u),\mathbf{A}^{\text{survival}}\text{(a)}\nonumber\\
 & \le\Big(\frac{K(0)}{K(\tau)\cdot\frac{|\mathcal{N}_Q(u)|\theta}{2}\cdot\theta}\Big)^{2}\nonumber \\
 & =\frac{4K^{2}(0)}{K^{2}(\tau)\theta^{4}|\mathcal{N}_Q(u)|^{2}}\nonumber \\
 & \le\frac{4K^{2}(0)}{K^{2}(\tau)\theta^{4}\Xi_{u}^{2}}&&\text{by Lemma~\ref{lem:lower-bound-on-good-neighbors-deep-kernel-netting-regression}}.
 \label{eq:W2-bound-helper3}
\end{align}
By similar ideas,
\begin{equation}
\clubsuit_{i}^{2}
=\Big(\frac{\eventInd_{i}\ind\{Y_{i}\le t\}\sum_{\ell\in\mathcal{N}_Q(u)}K_{\ell}}{(K_{i}+d_{K}^{+}(u,Y_{i}))S_{\mathsf{Y}|\mathsf{U}}(Y_{i}|u)}\Big)^{2}
 \le\Big(\frac{|\mathcal{N}_Q(u)|K(0)}{(K(\tau)|\mathcal{N}_Q(u)|\theta/2)\theta}\Big)^{2} =\frac{4K^{2}(0)}{K^{2}(\tau)\theta^{4}}.
 \label{eq:W2-bound-helper4}
\end{equation}
Lastly, we bound $\heartsuit=\sup_{s\in[0,t]}\diamondsuit(s)$, where
$\diamondsuit(s):=|\mathbb{E}_{Y_{1:n}|U_{1:n}}[\widehat{S}_{\mathsf{Y}|\mathsf{U}}(s|u)]-S_{\mathsf{Y}|\mathsf{U}}(s|u)|$. For $s\in[0,t]$, we have
\begin{align}
 \mathbb{E}_{Y_{1:n}|U_{1:n}}[\widehat{S}_{\mathsf{Y}|\mathsf{U}}(s|u)]
 & =\mathbb{E}_{Y_{1:n}|U_{1:n}}\bigg[\sum_{j\in\mathcal{N}_Q(u)}\frac{K_{j}}{\sum_{\ell\in\mathcal{N}_Q(u)}K_{\ell}}\ind\{Y_{j}>s\}\bigg]\nonumber\\
 & =\sum_{j\in\mathcal{N}_Q(u)}\frac{K_{j}}{\sum_{\ell\in\mathcal{N}_Q(u)}K_{\ell}}\mathbb{E}_{Y_{j}|U_{j}}[\ind\{Y_{j}>s\}].\nonumber\\
 & =\sum_{j\in\mathcal{N}_Q(u)}\frac{K_{j}}{\sum_{\ell\in\mathcal{N}_Q(u)}K_{\ell}}S_{\mathsf{Y}|\mathsf{U}}(s|U_{j}).
 \label{eq:W2-bound-helper5}
\end{align}
Note that Assumption $\mathbf{A}^{\text{survival}}$(b) implies that $S_{\mathsf{Y}|\mathsf{U}}(s|\cdot)$ is H\"{o}lder continuous with parameters $(\lambda_{\mathsf{T}}+\lambda_{\mathsf{C}})s$ and $\alpha$. Then using equation~\eqref{eq:W2-bound-helper5}, H\"{o}lder's inequality, H\"{o}lder continuity, and Lemma~\ref{lem:neighbor-embedding},
\begin{align*}
 \diamondsuit(s)
 & = \bigg|\sum_{j\in\mathcal{N}_Q(u)}\frac{K_{j}}{\sum_{\ell\in\mathcal{N}_Q(u)}K_{\ell}}(S_{\mathsf{Y}|\mathsf{U}}(s|U_{j})-S_{\mathsf{Y}|\mathsf{U}}(s|u))\bigg|\\
 & \le\max_{j\in\mathcal{N}_Q(u)}|S_{\mathsf{Y}|\mathsf{U}}(s|U_{j})-S_{\mathsf{Y}|\mathsf{U}}(s|u)|\\
 & \le\max_{j\in\mathcal{N}_Q(u)}(\lambda_{\mathsf{T}}+\lambda_{\mathsf{C}})s\|U_{j}-u\|_{2}^{\alpha}\\
 & \le\max_{j\in\mathcal{N}_Q(u)}(\lambda_{\mathsf{T}}+\lambda_{\mathsf{C}})s(1+\beta)^{\alpha}\tau^{\alpha}\\
 & =(\lambda_{\mathsf{T}}+\lambda_{\mathsf{C}})t(1+\beta)^{\alpha}\tau^{\alpha}.
\end{align*}
Therefore,
\begin{equation}
\heartsuit=\sup_{s\in[0,t]}\diamondsuit(s)\le(\lambda_{\mathsf{T}}+\lambda_{\mathsf{C}})t(1+\beta)^{\alpha}\tau^{\alpha}.
\label{eq:W2-bound-helper6}
\end{equation}
Combining inequalities \eqref{eq:W2-bound-helper2}, \eqref{eq:W2-bound-helper3}, \eqref{eq:W2-bound-helper4}, and \eqref{eq:W2-bound-helper6} yields the claim.
\end{proof}

\paragraph{Bounding term III}
Lastly, we have the following bound.
\begin{lemma}
\label{lem:W3-bound}
\emph{(Bound on term III)}
Under the same setting as Lemma~\ref{lem:survival-estimator-convenient-form} (conditioning on $\mathcal{E}_{\beta}(u)$ and $\mathcal{E}_{\text{horizon}}(u)$) and with the addition of Assumption $\mathbf{A}^{\text{intrinsic}}$ and the requirement that $n\ge\frac{2K(0)}{K(\tau)\theta C_{d'}((1-\beta)\tau)^{d'}}$, we have
\[
\mathbb{E}_{Y_{1:n},\eventInd_{1:n}|U_{1:n}}[(W_{3}(t|u))^{2}]\le\frac{16K^{4}(0)}{K^{4}(\tau)\theta^{4}\Xi_{u}^{2}}.
\]
\end{lemma}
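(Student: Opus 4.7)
The plan is to prove the bound deterministically (pointwise) on the event $\mathcal{E}_{\beta}(u)\cap\mathcal{E}_{\text{horizon}}(u)$, so the conditional expectation in the statement becomes trivial at the end. The whole argument hinges on lower-bounding the quantity $z_i:=d_K^{+}(u,Y_i)/K_i$ that appears inside the inner infinite sum defining $W_3(t|u)$.

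First, for any $i\in\mathcal{N}_Q(u)$ contributing a nonzero summand (i.e., $\eventInd_i=1$ and $Y_i\le t\le t_{\text{horizon}}$), I would use monotonicity of $s\mapsto d_K^{+}(u,s)$ together with the crude bound $K_j\ge K(\tau)$ for $j\in\mathcal{N}_Q(u)$ to get $d_K^{+}(u,Y_i)\ge K(\tau)\,d^{+}(u,t_{\text{horizon}})$. Invoking $\mathcal{E}_{\text{horizon}}(u)$ then yields $d_K^{+}(u,Y_i)>K(\tau)|\mathcal{N}_Q(u)|\theta/2$, and combining this with $K_i\le K(0)$ gives $z_i>a:=K(\tau)|\mathcal{N}_Q(u)|\theta/(2K(0))$.

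Second, I would bound the inner tail with a geometric series: $\sum_{\ell=2}^{\infty}\frac{1}{\ell(1+z_i)^{\ell}}\le\sum_{\ell=2}^{\infty}(1+z_i)^{-\ell}=\frac{1}{z_i(1+z_i)}\le\frac{1}{z_i^{2}}\le\frac{1}{a^{2}}=\frac{4K^{2}(0)}{K^{2}(\tau)|\mathcal{N}_Q(u)|^{2}\theta^{2}}$. Substituting this into the definition of $W_3$ and using $\eventInd_i\ind\{Y_i\le t\}\le 1$ together with the triangle inequality over the at-most $|\mathcal{N}_Q(u)|$ contributing indices yields $|W_3(t|u)|\le\frac{4K^{2}(0)}{K^{2}(\tau)|\mathcal{N}_Q(u)|\theta^{2}}$. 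Applying Lemma~\ref{lem:lower-bound-on-good-neighbors-deep-kernel-netting-regression} to replace $|\mathcal{N}_Q(u)|$ by the smaller $\Xi_u$ and then squaring gives exactly the stated bound; since the estimate is deterministic given $U_{1:n}$ and the good events, taking the conditional expectation in $Y_{1:n},\eventInd_{1:n}$ leaves it unchanged.

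The $n$-condition is auxiliary: together with $\mathcal{E}_{\beta}(u)$ and Assumption $\mathbf{A}^{\text{intrinsic}}$, it guarantees $\Xi_u>K(0)/(K(\tau)\theta)$, so $a$ is bounded below by a fixed positive constant and the $1/a^{2}$ tail bound is meaningful (and the constants in the final inequality come out clean). The main ``obstacle'' is really just bookkeeping: one has to notice that the per-summand bound scales as $1/|\mathcal{N}_Q(u)|^{2}$ so that summing over the (up to) $|\mathcal{N}_Q(u)|$ contributing indices yields a net $1/|\mathcal{N}_Q(u)|$ (and, after squaring, $1/\Xi_u^{2}$). A weaker per-term estimate like $1/z_i$ in place of $1/z_i^{2}$ would lose a factor of $|\mathcal{N}_Q(u)|$ and be useless; the sharpness of the geometric-tail bound is what makes the argument go through.
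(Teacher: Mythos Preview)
Your argument is correct and follows the same overall skeleton as the paper's proof: lower-bound $d_K^{+}(u,Y_i)/K_i$ via $\mathcal{E}_{\text{horizon}}(u)$ and the kernel bounds, control the tail sum, sum the $|\mathcal{N}_Q(u)|$ terms, replace $|\mathcal{N}_Q(u)|$ by $\Xi_u$, and square. The one substantive difference is how the tail $\sum_{\ell\ge 2}\frac{1}{\ell(1+z)^{\ell}}$ is handled. The paper uses the exact identity $\sum_{\ell\ge 2}\frac{1}{\ell(1+z)^{\ell}}=\log(1+1/z)-\frac{1}{1+z}$ and then the inequality $\log(1+1/z)-\frac{1}{1+z}\le\frac{1}{(1+z)^{2}}$, which is only valid for $z\ge 0.46241$; the $n$-condition together with $\mathcal{E}_{\beta}(u)$ and Assumption~$\mathbf{A}^{\text{intrinsic}}$ is invoked precisely to certify this threshold. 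Your geometric-series bound $\sum_{\ell\ge 2}\frac{1}{\ell(1+z)^{\ell}}\le\sum_{\ell\ge 2}(1+z)^{-\ell}=\frac{1}{z(1+z)}\le\frac{1}{z^{2}}$ is more elementary and holds for every $z>0$, so in your version the $n$-condition is in fact not needed at all for this lemma (your description of it as ``auxiliary'' undersells this: with your bound it is simply superfluous here). Both routes land on the identical per-term estimate $4K^{2}(0)/(K^{2}(\tau)|\mathcal{N}_Q(u)|^{2}\theta^{2})$ and hence the same final bound.
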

\begin{proof}
We write
$
|W_{3}(t|u)|=\sum_{i\in\mathcal{N}_Q(u)}\Upsilon_{i}
$,
where
\begin{align*}
\Upsilon_{i} & :=\eventInd_{i}\ind\{Y_{i}\le t\}\sum_{\ell=2}^{\infty}\frac{1}{\ell[1+(d_{K}^{+}(u,Y_{i})/K_{i})]^{\ell}}.
\end{align*}
Using the constraint that $Y_{i}\le t$, that $d_{K}^{+}(u,\cdot)$ monotonically decreases, and that $t\le t_{\text{horizon}}$, we have
\begin{align*}
\Upsilon_{i} & \le\eventInd_{i}\ind\{Y_{i}\le t\}\sum_{\ell=2}^{\infty}\frac{1}{\ell[1+(d_{K}^{+}(u,t)/K_{i})]^{\ell}}\\
 & \le\sum_{\ell=2}^{\infty}\frac{1}{\ell[1+(d_{K}^{+}(u,t)/K_{i})]^{\ell}}\\
 & \le\sum_{\ell=2}^{\infty}\frac{1}{\ell[1+(d_{K}^{+}(u,t_{\text{horizon}})/K_{i})]^{\ell}}.
\end{align*}
Next, recall that good event $\mathcal{E}_{\text{horizon}}(u)$ ensures that $d^{+}(u,t_{\text{horizon}})>\frac{|\mathcal{N}_Q(u)|\theta}{2}$. Furthermore, $d_{K}^{+}(u,t_{\text{horizon}})\ge K(\tau)d^{+}(u,t_{\text{horizon}})$, and $K_{i}\le K(0)$, so
\[
\frac{1}{\ell[1+(d_{K}^{+}(u,t_{\text{horizon}})/K_{i})]^{\ell}}\le\frac{1}{\ell\big[1+\frac{K(\tau)\frac{|\mathcal{N}_Q(u)|\theta}{2}}{K(0)}\big]^{\ell}}.
\]
Hence,
\[
\Upsilon_{i}\le\sum_{\ell=2}^{\infty}\frac{1}{\ell\big[1+\frac{K(\tau)|\mathcal{N}_Q(u)|\theta}{2K(0)}\big]^{\ell}}.
\]
Noting that $\sum_{\ell=2}^{\infty}\frac{1}{\ell(1+z)^{\ell}}=\log(1+\frac{1}{z})-\frac{1}{1+z}\le\frac{1}{(1+z)^{2}}$ for all $z\ge0.46241$, then provided that
\begin{equation}
\frac{K(\tau)|\mathcal{N}_Q(u)|\theta}{2K(0)}\ge0.46241,
\label{eq:W3-condition-to-check}
\end{equation}
we have 
\[
\Upsilon_{i} \le\frac{1}{(1+\frac{K(\tau)|\mathcal{N}_Q(u)|\theta}{2K(0)})^{2}} \le\frac{1}{(\frac{K(\tau)|\mathcal{N}_Q(u)|\theta}{2K(0)})^{2}} =\frac{4K^{2}(0)}{K^{2}(\tau)|\mathcal{N}_Q(u)|^{2}\theta^{2}}.
\]
Thus,
\begin{align*}
|W_{3}(t|u)| & =\sum_{i\in\mathcal{N}_Q(u)}\Upsilon_{i} \le\sum_{i\in\mathcal{N}_Q(u)}\frac{4K^{2}(0)}{K^{2}(\tau)|\mathcal{N}_Q(u)|^{2}\theta^{2}} \\
&=\frac{4K^{2}(0)}{K^{2}(\tau)|\mathcal{N}_Q(u)|\theta^{2}} \le\frac{4K^{2}(0)}{K^{2}(\tau)\theta^{2}}\cdot\frac{1}{\Xi_{u}}.
\end{align*}
Squaring and taking expectation $\mathbb{E}_{Y_{1:n},\eventInd_{1:n}|U_{1:n}}$ of both sides, we have
\[
\mathbb{E}_{Y_{1:n},\eventInd_{1:n}|U_{1:n}}[(W_{3}(t|u))^{2}]\le\frac{16K^{4}(0)}{K^{4}(\tau)\theta^{4}}\cdot\frac{1}{\Xi_{u}^{2}}.
\]
The only missing piece is to ensure that condition~\eqref{eq:W3-condition-to-check} holds. Under event $\mathcal{E}_{\beta}(u)$ and Assumption $\mathbf{A}^{\text{intrinsic}}$, with the help of Lemma~\ref{lem:lower-bound-on-good-neighbors-deep-kernel-netting-regression},
\[
|\mathcal{N}_Q(u)|
\ge\Xi_{u}>\frac{1}{2}n\mathbb{P}_{\mathsf{U}}(B(u,(1-\beta)\tau))\ge
\frac{1}{2}n C_{d'}((1-\beta)\tau)^{d'}.
\]
Thus, since we assume that
$n\ge\frac{2K(0)}{K(\tau)\theta C_{d'}((1-\beta)\tau)^{d'}}$,
we have
\begin{align*}
 \frac{K(\tau)|\mathcal{N}_Q(u)|\theta}{2}
 & \ge\frac{K(\tau)\frac{1}{2}nC_{d'}((1-\beta)\tau)^{d'}\theta}{2}\\
 & \ge\frac{K(\tau)\frac{1}{2}[\frac{2K(0)}{K(\tau)\theta C_{d'}((1-\beta)\tau)^{d'}}]C_{d'}((1-\beta)\tau)^{d'}\theta}{2}\\
 & =\frac{1}{2}\ge0.46241,
\end{align*}
which verifies that condition~\eqref{eq:W3-condition-to-check} holds.
\end{proof}

\paragraph{Deriving a final pointwise bound}

Putting together bound~\eqref{eq:W3-main-inequality} and Lemmas~\ref{lem:W1-bound}, \ref{lem:W2-bound}, and \ref{lem:W3-bound}, we have that when all three good events $\mathcal{E}_\beta(u)$, $\mathcal{E}_{\text{horizon}}(u)$, and $\mathcal{E}_{\text{CDF}}(u)$ hold, and $n\ge\frac{2K(0)}{K(\tau)\theta C_{d'}((1-\beta)\tau)^{d'}}$,
\begin{align*}
 & \mathbb{E}_{Y_{1:n},\eventInd_{1:n}|U_{1:n}}\big[(\widehat{S}_{\mathsf{T}|\mathsf{U}}(t|u)-S_{\mathsf{T}|\mathsf{U}}(t|u))^{2}\big]\\
 & \quad \le3\mathbb{E}_{Y_{1:n},\eventInd_{1:n}|U_{1:n}}[(W_{1}(t|u)-\log S_{\mathsf{T}|\mathsf{U}}(t|u))^{2}]+3\mathbb{E}_{Y_{1:n},\eventInd_{1:n}|U_{1:n}}[(W_{2}(t|u))^{2}]\\
 & \quad+3\mathbb{E}_{Y_{1:n},\eventInd_{1:n}|U_{1:n}}[(W_{3}(t|u))^{2}]\\
 & \quad \le3\bigg[\frac{K(0)}{4\theta^{2}K(\tau)\Xi_{u}}+\frac{(1+\beta)^{2\alpha}\tau^{2\alpha}}{\theta^{2}}\Big(\lambda_{\mathsf{T}}t+\frac{f_{\mathsf{T}}^{*}\lambda_{\mathsf{C}}t^{2}}{2}\Big)^{2}\bigg]\\
 & \quad\quad +3\bigg[\frac{12K^{2}(0)}{K^{2}(\tau)\Xi_{u}^{2}\theta^{4}}+\frac{12K^{2}(0)}{K^{2}(\tau)\theta^{4}}\varepsilon_{\text{CDF}}^{2}
 +\frac{12K^{2}(0)}{K^{2}(\tau)\theta^{4}}(\lambda_{\mathsf{T}}+\lambda_{\mathsf{C}})^{2}t^{2}(1+\beta)^{2\alpha}\tau^{2\alpha}\bigg]\\
 & \quad\quad +3\bigg[\frac{16K^{4}(0)}{K^{4}(\tau)\theta^{4}\Xi_{u}^{2}}\bigg]\\
 & \quad \le\frac{1}{n}\cdot\frac{332K^{4}(0)}{\theta^{4}K^{4}(\tau)\mathbb{P}_{\mathsf{U}}(B(u,(1-\beta)\tau))}\log\Big(n^{3}\frac{144K^{2}(\tau)}{9K^{2}(0)}\Big)\\
 & \quad\quad +(1+\beta)^{2\alpha}\tau^{2\alpha}\bigg[\frac{3}{\theta^{2}}\Big(\lambda_{\mathsf{T}}t+\frac{f_{\mathsf{T}}^{*}\lambda_{\mathsf{C}}t^{2}}{2}\Big)^{2}
 +\frac{36K^{2}(0)}{K^{2}(\tau)\theta^{4}}(\lambda_{\mathsf{T}}+\lambda_{\mathsf{C}})^{2}t^{2}\bigg].
\end{align*}
When not all good events occur, we resort to the trivial bound $${\mathbb{E}_{Y_{1:n},\eventInd_{1:n}|U_{1:n}}\big[(\widehat{S}_{\mathsf{T}|\mathsf{U}}(t|u)-S_{\mathsf{T}|\mathsf{U}}(t|u))^{2}\big]\le1}.$$ Abbreviating the three good events as $\mathcal{E}_{\text{good}}(u):=\mathcal{E}_\beta(u)\cup\mathcal{E}_{\text{horizon}}(u)\cup\mathcal{E}_{\text{CDF}}(u)$,
\begin{align}
 & \mathbb{E}_{Y_{1:n},\eventInd_{1:n},U_{1:n}}\big[(\widehat{S}_{\mathsf{T}|\mathsf{U}}(t|u)-S_{\mathsf{T}|\mathsf{U}}(t|u))^{2}\big]\nonumber\\
 & \quad =\mathbb{E}_{U_{1:n}}\big[\mathbb{E}_{Y_{1:n},\eventInd_{1:n}|U_{1:n}}\big[(\widehat{S}_{\mathsf{T}|\mathsf{U}}(t|u)-S_{\mathsf{T}|\mathsf{U}}(t|u))^{2}\ind\{\mathcal{E}_{\text{good}}(u)\}\nonumber\\
 & \quad\quad\! \phantom{=\mathbb{E}_{U_{1:n}}\big[\mathbb{E}_{Y_{1:n},\eventInd_{1:n}|U_{1:n}}}+(\widehat{S}_{\mathsf{T}|\mathsf{U}}(t|u)-S_{\mathsf{T}|\mathsf{U}}(t|u))^{2}\ind\{\mathcal{E}_{\text{good}}^c(u)\}\big]\big]\nonumber\\
 & \quad \le\mathbb{E}_{U_{1:n}}\big[\mathbb{E}_{Y_{1:n},\eventInd_{1:n}|U_{1:n}}\big[(\widehat{S}_{\mathsf{T}|\mathsf{U}}(t|u)-S_{\mathsf{T}|\mathsf{U}}(t|u))^{2}\ind\{\mathcal{E}_{\text{good}}(u)\}+\ind\{\mathcal{E}_{\text{good}}^c(u)\}\big]\big]\nonumber\\
 & \quad \le\frac{1}{n}\cdot\frac{332K^{4}(0)}{\theta^{4}K^{4}(\tau)\mathbb{P}_{\mathsf{U}}(B(u,(1-\beta)\tau))}\log\Big(n^{3}\frac{144K^{2}(\tau)}{9K^{2}(0)}\Big)\nonumber\\
 & \quad\quad +(1+\beta)^{2\alpha}\tau^{2\alpha}\bigg[\frac{3}{\theta^{2}}\Big(\lambda_{\mathsf{T}}t+\frac{f_{\mathsf{T}}^{*}\lambda_{\mathsf{C}}t^{2}}{2}\Big)^{2}
 +\frac{36K^{2}(0)}{K^{2}(\tau)\theta^{4}}(\lambda_{\mathsf{T}}+\lambda_{\mathsf{C}})^{2}t^{2}\bigg]\nonumber\\
 & \quad\quad +\mathbb{E}_{U_{1:n},Y_{1:n},\eventInd_{1:n}}[\ind\{\mathcal{E}_{\text{good}}^c(u)\}].
 \label{eq:survival-penultimate-ptwise-bound}
\end{align}
We next bound $\mathbb{E}_{U_{1:n},Y_{1:n},\eventInd_{1:n}}[\ind\{\mathcal{E}_{\text{good}}^c(u)\}]$.

\begin{lemma}
\label{lem:prob-survival-good-events}
\emph{(The survival analysis good events hold with high probability)}
Consider the same setting as Lemma~\ref{lem:survival-estimator-convenient-form} except without conditioning on any good events. Moreover, we add Assumption $\mathbf{A}^{\text{intrinsic}}$ and require that $n\ge{\frac{2}{C_{d'}((1-\beta)\tau)^{d'}}\big(\frac{9K^{2}(0)}{144K^{2}(\tau)}(2+\sqrt{2})\big)^{1/3}}$. Then
\[
\mathbb{E}_{U_{1:n},Y_{1:n},\eventInd_{1:n}}[\ind\{\mathcal{E}_{\text{good}}^c(u)\}]
\le\frac{14}{\theta^{2}n\mathbb{P}_{\mathsf{U}}(B(u,(1-\beta)\tau))}.
\]
\end{lemma}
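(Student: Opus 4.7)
The plan is to split $\mathcal{E}_{\text{good}}^c(u)$ via a union bound into the three failure events $\mathcal{E}_{\beta}^c(u)$, $\mathcal{E}_{\text{horizon}}^c(u)$, and $\mathcal{E}_{\text{CDF}}^c(u)$, bound each probability separately, and then show that each summand is at most a constant multiple of $\frac{1}{\theta^{2}n\mathbb{P}_{\mathsf{U}}(B(u,(1-\beta)\tau))}$ so that the total is at most $14/(\theta^{2}n\mathbb{P}_{\mathsf{U}}(B(u,(1-\beta)\tau)))$. The first piece is immediate from Lemma~\ref{lem:prob-event-good-neighbors}, which already gives $\mathbb{P}(\mathcal{E}_\beta^c(u)) \le 8/(n\mathbb{P}_{\mathsf{U}}(B(u,(1-\beta)\tau)))$; since $\theta \in (0,1]$, this is at most $8/(\theta^{2}n\mathbb{P}_{\mathsf{U}}(B(u,(1-\beta)\tau)))$.

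For the horizon event, I would condition on $U_{1:n}$ (and hence on $Q$, $\mathcal{N}_Q(u)$, and $|\mathcal{N}_Q(u)|$), so that $d^{+}(u,t_{\text{horizon}}) = \sum_{j\in\mathcal{N}_Q(u)}\ind\{Y_j>t_{\text{horizon}}\}$ is a sum of independent Bernoulli variables whose per-term success probability is $S_{\mathsf{Y}|\mathsf{U}}(t_{\text{horizon}}|U_j)\ge\theta$ by Assumption~$\mathbf{A}^{\text{survival}}$(a). A multiplicative Chernoff bound at deviation $1/2$ from the (lower-bounded) mean gives
\[
\mathbb{P}(\mathcal{E}_{\text{horizon}}^c(u) \mid U_{1:n}, \mathcal{E}_\beta(u)) \le \exp\big(-|\mathcal{N}_Q(u)|\theta/8\big).
\]
On $\mathcal{E}_\beta(u)$ we can replace $|\mathcal{N}_Q(u)|$ by $\tfrac12 n\mathbb{P}_{\mathsf{U}}(B(u,(1-\beta)\tau))$ via Lemma~\ref{lem:lower-bound-on-good-neighbors-deep-kernel-netting-regression}, and then bound $e^{-z}\le 4e^{-2}/z^{2}$ (maximum of $z^{2}e^{-z}$) to convert the exponential into a $\theta^{2}$-rate algebraic decay in $n\mathbb{P}_{\mathsf{U}}$. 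Adding back the event $\mathcal{E}_\beta^c(u)$ simply contributes the already-bounded term again.

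For the CDF event, I would again condition on $U_{1:n}$ and on $\mathcal{E}_\beta(u)$ so that the exemplar assignments and the kernel weights $\{K_j\}_{j\in\mathcal{N}_Q(u)}$ are all deterministic. Then $\widehat{S}_{\mathsf{Y}|\mathsf{U}}(\cdot|u)$ is a weighted empirical process over the VC class of half-lines $\{s \mapsto \ind\{y > s\}: s \in \mathbb{R}\}$, which has VC dimension~1. The per-sample weight satisfies $K_j/\sum_\ell K_\ell \le K(0)/(K(\tau)|\mathcal{N}_Q(u)|)$, so a standard VC/bounded-differences-type uniform deviation inequality (e.g.\ the weighted analogue of Dvoretzky--Kiefer--Wolfowitz, or Talagrand's inequality specialized to half-lines) yields
\[
\mathbb{P}\big(\spadesuit > \varepsilon \,\big|\, U_{1:n}, \mathcal{E}_\beta(u)\big)
 \le c_1\,|\mathcal{N}_Q(u)|\exp\!\Big(-\frac{c_2 K^{2}(\tau)|\mathcal{N}_Q(u)|\varepsilon^{2}}{K^{2}(0)}\Big).
\]
Plugging in the precise choice $\varepsilon=\varepsilon_{\text{CDF}}$ from equation~\eqref{eq:event-good-CDF} makes the right-hand side of order $|\mathcal{N}_Q(u)|^{-2}$, and the hypothesis on $n$ together with Lemma~\ref{lem:lower-bound-on-good-neighbors-deep-kernel-netting-regression} converts this into a bound of order $1/(n\mathbb{P}_{\mathsf{U}}(B(u,(1-\beta)\tau)))^{2}$, which is absorbable into $C/(\theta^{2}n\mathbb{P}_{\mathsf{U}}(B(u,(1-\beta)\tau)))$ because the condition $n\ge\frac{2}{C_{d'}((1-\beta)\tau)^{d'}}(\frac{9K^{2}(0)}{144K^{2}(\tau)}(2+\sqrt{2}))^{1/3}$ ensures $n\mathbb{P}_{\mathsf{U}}$ is large enough. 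Summing the three bounds and collecting the explicit constants gives the claimed $14$.

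The main obstacle I anticipate is the weighted CDF concentration for $\mathcal{E}_{\text{CDF}}^c(u)$: unlike classical DKW, the weights $K_j$ here depend on $U_{1:n}$ and on the $\varepsilon$-net $Q$ (which itself is a function of $U_{1:n}$), and the index set $\mathcal{N}_Q(u)$ is random. Conditioning on $U_{1:n}$ freezes all these, after which the Bernoulli outcomes $\ind\{Y_j>s\}$ (given $U_j$) are independent across $j$, and the VC dimension-1 class of half-lines together with the explicit upper bound $K(0)/(K(\tau)|\mathcal{N}_Q(u)|)$ on the per-sample weight makes a Talagrand-type or Bernstein-type uniform inequality applicable; the remaining bookkeeping is just to match the exact form of $\varepsilon_{\text{CDF}}$ so that the resulting tail probability decays like $|\mathcal{N}_Q(u)|^{-2}$ with log factors folded into the stated $\varepsilon_{\text{CDF}}$, and then to translate from $|\mathcal{N}_Q(u)|$ to $n\mathbb{P}_{\mathsf{U}}(B(u,(1-\beta)\tau))$ using $\mathcal{E}_\beta(u)$.
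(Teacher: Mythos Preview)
Your proposal is correct and follows essentially the same route as the paper: the same three-way union bound (intersecting the horizon and CDF failures with $\mathcal{E}_\beta(u)$), Lemma~\ref{lem:prob-event-good-neighbors} for the first term, a Bernoulli concentration conditional on $U_{1:n}$ for $\mathcal{E}_{\text{horizon}}^c$, and a weighted empirical CDF concentration conditional on $U_{1:n}$ for $\mathcal{E}_{\text{CDF}}^c$. The only differences are in the specific inequalities: for term~(b) the paper uses Hoeffding rather than multiplicative Chernoff, obtaining $\exp(-\theta^{2}|\mathcal{N}_Q(u)|/2)$ so that $e^{-z}\le 1/z$ already delivers the $1/\theta^{2}$ factor and the contribution $4/(\theta^{2}n\mathbb{P}_{\mathsf{U}})$; for term~(c) the paper invokes Proposition~3.1 of \citet{chen2019nearest}, a weighted DKW bound with prefactor $6/\varepsilon_{\text{CDF}}$ rather than $|\mathcal{N}_Q(u)|$, and the specific $\varepsilon_{\text{CDF}}$ in equation~\eqref{eq:event-good-CDF} is calibrated (via Lemmas~D.1(b) and D.2(c) of \citet{chen2019nearest}, which is exactly where the lower-bound hypothesis on $n$ is used) so that the bound becomes $1/|\mathcal{N}_Q(u)|\le 2/(n\mathbb{P}_{\mathsf{U}})$ rather than your claimed $|\mathcal{N}_Q(u)|^{-2}$---this is what makes $8+4+2=14$ (after $\theta\le 1$) come out exactly, whereas your tool substitutions would give the same order but a different constant.
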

We defer the proof of this lemma to Appendix~\ref{sec:pf-lem-prob-survival-good-events} as it is somewhat technical.

Putting together bound~\eqref{eq:survival-penultimate-ptwise-bound} and Lemma~\ref{lem:prob-survival-good-events}, and in particular absorbing Lemma~\ref{lem:prob-survival-good-events}'s bound into the leading error term in~\eqref{eq:survival-penultimate-ptwise-bound}, we get the final pointwise bound: for $n\ge{\frac{2}{C_{d'}((1-\beta)\tau)^{d'}}\max\big\{\frac{K(0)}{K(\tau)\theta},\big(\frac{9K^{2}(0)}{144K^{2}(\tau)}(2+\sqrt{2})\big)^{1/3}}\big\}$,
\begin{align}
 & \mathbb{E}_{Y_{1:n},\eventInd_{1:n},U_{1:n}}\big[(\widehat{S}_{\mathsf{T}|\mathsf{U}}(t|u)-S_{\mathsf{T}|\mathsf{U}}(t|u))^{2}\big]\nonumber\\
 & \quad\le\frac{346K^{4}(0)}{n\theta^{4}K^{4}(\tau)\mathbb{P}_{\mathsf{U}}(B(u,(1-\beta)\tau))}\log\Big(n^{3}\frac{144K^{2}(\tau)}{9K^{2}(0)}\Big)\nonumber\\
 & \quad\quad+(1+\beta)^{2\alpha}\tau^{2\alpha}\bigg[\frac{3}{\theta^{2}}\Big(\lambda_{\mathsf{T}}t+\frac{f_{\mathsf{T}}^{*}\lambda_{\mathsf{C}}t^{2}}{2}\Big)^{2}
 +\frac{36K^{2}(0)}{K^{2}(\tau)\theta^{4}}(\lambda_{\mathsf{T}}+\lambda_{\mathsf{C}})^{2}t^{2}\bigg].
 \label{eq:survival-pointwise-bound}
\end{align}

\paragraph{Completing the proof}
Finally, we account for randomness in the test embedding vector $U\sim\mathbb{P}_{\mathsf{U}}$ and integrate over time. Suppose that $n\ge{\frac{2}{C_{d'}((1-\beta)\tau)^{d'}}\max\big\{\frac{K(0)}{K(\tau)\theta},\big(\frac{9K^{2}(0)}{144K^{2}(\tau)}(2+\sqrt{2})\big)^{1/3}}\big\}$. By \mbox{Fubini's} theorem, iterated expectation, and the final pointwise bound~\eqref{eq:survival-pointwise-bound},
\begin{align*}
 & \mathbb{E}_{Y_{1:n},\eventInd_{1:n},U_{1:n},U}\Big[\frac{\int_{0}^{t_{\text{horizon}}}(\widehat{S}_{\mathsf{T}|\mathsf{U}}(t|U)-S_{\mathsf{T}|\mathsf{U}}(t|U))^{2}dt}{t_{\text{horizon}}}\Big]\\
 & \quad =\underset{~~~~[0,t_{\text{horizon}}]}{\int}\frac{\mathbb{E}_{U}[\mathbb{E}_{Y_{1:n},\eventInd_{1:n},U_{1:n}|U}[(\widehat{S}_{\mathsf{T}|\mathsf{U}}(t|U)-S_{\mathsf{T}|\mathsf{U}}(t|U))^{2}]]}{t_{\text{horizon}}}dt\\
 & \quad \le\underset{~~~~[0,t_{\text{horizon}}]}{\int}\frac{1}{t_{\text{horizon}}}\mathbb{E}_{U}\Bigg[\frac{1}{n}\cdot\frac{346K^{4}(0)\log(n^{3}\frac{144K^{2}(\tau)}{9K^{2}(0)})}{\theta^{4}K^{4}(\tau)\mathbb{P}_{\mathsf{U}}(B(U,(1-\beta)\tau))}\\
 & \qquad\qquad\qquad\qquad\quad\quad\;+(1+\beta)^{2\alpha}\tau^{2\alpha}\bigg[\frac{3}{\theta^{2}}\Big(\lambda_{\mathsf{T}}t+\frac{f_{\mathsf{T}}^{*}\lambda_{\mathsf{C}}t^{2}}{2}\Big)^{2}
 +\frac{36K^{2}(0)}{K^{2}(\tau)\theta^{4}}(\lambda_{\mathsf{T}}+\lambda_{\mathsf{C}})^{2}t^{2}\bigg]\Bigg]dt \\
 & \quad =\frac{346K^{4}(0)\log(n^{3}\frac{144K^{2}(\tau)}{9K^{2}(0)})}{n\theta^{4}K^{4}(\tau)}\mathbb{E}_{U}\Big[\frac{1}{\mathbb{P}_{\mathsf{U}}(B(U,(1-\beta)\tau))}\Big]\\
 & \quad\quad +(1+\beta)^{2\alpha}\tau^{2\alpha}\bigg[\frac{t_{\text{horizon}}^{2}}{\theta^{2}}\Big(\lambda_{\mathsf{T}}^2+\frac{3\lambda_{\mathsf{T}}f_{\mathsf{T}}^{*}\lambda_{\mathsf{C}}t_{\text{horizon}}}{4}
 +\frac{3(f_{\mathsf{T}}^{*})^{2}\lambda_{\mathsf{C}}^{2}t_{\text{horizon}}^{2}}{20}\Big)\\
 & \quad \phantom{\quad+(1+\beta)^{2\alpha}\tau^{2\alpha}\bigg[}~
 +\frac{12K^{2}(0)}{K^{2}(\tau)\theta^{4}}(\lambda_{\mathsf{T}}+\lambda_{\mathsf{C}})^{2}t_{\text{horizon}}^{2}\bigg].
\end{align*}
At this point, we can bound $\mathbb{E}_{U}[\frac{1}{\mathbb{P}_{\mathsf{U}}(B(U,(1-\beta)\tau))}]$. This is where covering numbers and intrinsic dimension come into play.

\paragraph{Case 1 (using a covering argument)} Let $\zeta_{1},\zeta_{2},\dots,\zeta_{N}$ be a $\frac{(1-\beta)\tau}{2}$-cover of $\mathcal{U}$ that has the smallest size possible, i.e., $N=N({\textstyle \frac{(1-\beta)\tau}{2}};\mathcal{U})$. Then
\begin{align}
 \mathbb{E}_{U}\Big[\frac{1}{\mathbb{P}_{\mathsf{U}}(B(U,(1-\beta)\tau))}\Big]
 & \le\sum_{j=1}^{N}\mathbb{E}_{U}\Big[\frac{\ind\{U\in B(\zeta_{j},\frac{(1-\beta)\tau}{2})\}}{\mathbb{P}_{\mathsf{U}}(B(U,(1-\beta)\tau))}\Big].
 \label{eq:deep-kernel-netting-regression-ptwise-bias-helper11}
\end{align}
Note that $U\in B(\zeta_{j},\frac{(1-\beta)\tau}{2})$ implies that $B(U,(1-\beta)\tau)$ contains $B(\zeta_{j},\frac{(1-\beta)\tau}{2})$. To see this, let $u'$ be any point in $B(\zeta_{j},\frac{(1-\beta)\tau}{2})$. Then by the triangle inequality,
\begin{align*}
\|u'-U\|_{2}
&\le\underbrace{\|u'-\zeta_{j}\|_{2}}_{\le\frac{(1-\beta)\tau}{2}}+\underbrace{\|\zeta_{j}-U\|_{2}}_{\frac{(1-\beta)\tau}{2}}
\le(1-\beta)\tau,
\end{align*}
i.e., $u'$ is also in $B(U,(1-\beta)\tau)$. Hence, $U\in B(\zeta_{j},\frac{(1-\beta)\tau}{2})$ implies that $B(U,(1-\beta)\tau)\supseteq B(\zeta_{j},\frac{(1-\beta)\tau}{2})$, which in turn implies that $\mathbb{P}_{\mathsf{U}}(B(U,(1-\beta)\tau))\ge \mathbb{P}_{\mathsf{U}}(B(\zeta_{j},\frac{(1-\beta)\tau}{2}))$.
Thus, the RHS of inequality~\eqref{eq:deep-kernel-netting-regression-ptwise-bias-helper11} can be bounded as
\begin{align*}
 \sum_{j=1}^{N}\mathbb{E}_{U}\Big[\frac{\ind\{U\in B(\zeta_{j},\frac{(1-\beta)\tau}{2})\}}{\mathbb{P}_{\mathsf{U}}(B(U,(1-\beta)\tau))}\Big]
 \le\sum_{j=1}^{N}\mathbb{E}_{U}\Big[\frac{\ind\{U\in B(\zeta_{j},\frac{(1-\beta)\tau}{2})\}}{\mathbb{P}_{\mathsf{U}}(B(\zeta_j,\frac{(1-\beta)\tau}{2}))}\Big]
 =N =N({\textstyle \frac{(1-\beta)\tau}{2}};\mathcal{U}),
\end{align*}
at which point we conclude that $\mathbb{E}_{U}[\frac{1}{\mathbb{P}_{\mathsf{U}}(B(U,(1-\beta)\tau))}]\le N({\textstyle \frac{(1-\beta)\tau}{2}};\mathcal{U})$.

\paragraph{Case 2 (using Assumption~$\mathbf{A}^{\normalfont \textit{intrinsic}}$)} For all $U\in\mathcal{U}$,
we have
$
\frac{1}{\mathbb{P}_{\mathsf{U}}(B(U,(1-\beta)\tau))}\le\frac{1}{C_{d'}((1-\beta)\tau)^{d'}}$.
Hence,
$\mathbb{E}_{U}[\frac{1}{\mathbb{P}_{\mathsf{U}}(B(U,(1-\beta)\tau))}]
\le\frac{1}{C_{d'}((1-\beta)\tau)^{d'}}$.
$\hfill\blacksquare$

\subsubsection{Proof of Lemma~\ref{lem:prob-survival-good-events}}
\label{sec:pf-lem-prob-survival-good-events}

Observe that
\begin{align*}
 \mathcal{E}_{\text{good}}(u)
 & = (\mathcal{E}_{\beta}(u)\cap\mathcal{E}_{\text{horizon}}(u)\cap\mathcal{E}_{\text{CDF}}(u))^{c}\\
 & =\mathcal{E}_{\beta}^c(u)\cup\mathcal{E}_{\text{horizon}}^c(u)\cup\mathcal{E}_{\text{CDF}}^c(u)\\
 & =\mathcal{E}_{\beta}^c(u)\cup[\mathcal{E}_{\text{horizon}}^c(u)\cap\mathcal{E}_{\beta}(u)]
 \cup[\mathcal{E}_{\text{CDF}}^c(u)\cap\mathcal{E}_{\beta}(u)].
\end{align*}
Hence, by a union bound
\begin{align}
&\mathbb{E}_{U_{1:n},Y_{1:n},\eventInd_{1:n}}[\ind\{\mathcal{E}_{\text{good}}(u)\}]\nonumber\\
&\quad \le
\underbrace{\mathbb{E}_{U_{1:n},Y_{1:n},\eventInd_{1:n}}[\ind\{\mathcal{E}_{\beta}^c(u)\}]}_{(a)}
+
\underbrace{\mathbb{E}_{U_{1:n},Y_{1:n},\eventInd_{1:n}}[\ind\{\mathcal{E}_{\text{horizon}}^c(u)\cap\mathcal{E}_{\beta}(u)\}]}_{(b)} \nonumber\\
&\quad\quad+
\underbrace{\mathbb{E}_{U_{1:n},Y_{1:n},\eventInd_{1:n}}[\ind\{\mathcal{E}_{\text{CDF}}^c(u)\cap\mathcal{E}_{\beta}(u)\}]}_{(c)}.
\label{eq:pf-lem-prob-survival-good-events-CDF-union-bound}
\end{align}
We upper-bound each of these three terms next.

\paragraph{Term $(a)$} This term is bounded precisely by Lemma~\ref{lem:prob-event-good-neighbors}. In particular,
\begin{equation}
\text{term }(a)\le\frac{8}{n\mathbb{P}_{\mathsf{U}}(B(u,(1-\beta)\tau))}.
\label{eq:pf-lem-prob-survival-good-events-CDF-term-a-bound}
\end{equation}

\paragraph{Term $(b)$} For term $(b)$, note that within the expectation there is no dependence on $\eventInd_{1:n}$. In particular,
\begin{align}
 & \mathbb{E}_{U_{1:n},Y_{1:n},\eventInd_{1:n}}[\ind\{\mathcal{E}_{\text{horizon}}^c(u)\cap\mathcal{E}_{\beta}(u)\}] \nonumber\\
 & \quad=\mathbb{E}_{U_{1:n},Y_{1:n},\eventInd_{1:n}}\Big[\ind\Big\{ d^{+}(u,t_{\text{horizon}})\le\frac{|\mathcal{N}_Q(u)|\theta}{2}\Big\}\ind\Big\{\Xi_{u}>\frac{1}{2}n\mathbb{P}_{\mathsf{U}}(B(u,(1-\beta)\tau))\Big\}\Big]\nonumber\\
 & \quad=\mathbb{E}_{U_{1:n},Y_{1:n}}\Big[\ind\Big\{ d^{+}(u,t_{\text{horizon}})\le\frac{|\mathcal{N}_Q(u)|\theta}{2}\Big\}
 \ind\Big\{\Xi_{u}>\frac{1}{2}n\mathbb{P}_{\mathsf{U}}(B(u,(1-\beta)\tau))\Big\}\Big]\nonumber\\
 & \quad=\mathbb{E}_{U_{1:n}}\Big[\ind\{\Xi_{u}>\frac{1}{2}n\mathbb{P}_{\mathsf{U}}(B(u,(1-\beta)\tau))\}
 \mathbb{E}_{Y_{1:n}|U_{1:n}}\Big[\ind\Big\{ d^{+}(u,t_{\text{horizon}})\le\frac{|\mathcal{N}_Q(u)|\theta}{2}\Big\}\Big]\Big].
 \label{eq:pf-lem-prob-survival-good-events-horizon-helper1}
\end{align}
By Assumption $\mathbf{A}^{\text{survival}}$(a),
\begin{equation}
|\mathcal{N}_Q(u)|\theta=\sum_{j\in\mathcal{N}_Q(u)}\theta\le\sum_{j\in\mathcal{N}_Q(u)}S_{\mathsf{Y}|\mathsf{U}}(t_{\text{horizon}}|U_{j})=:\mu^{+}.
\label{eq:pf-lem-prob-survival-good-events-horizon-key-inequality}
\end{equation}
Hence, $d^{+}(u,t_{\text{horizon}})\le\frac{|\mathcal{N}_Q(u)|\theta}{2}$ implies that $d^{+}(u,t_{\text{horizon}})\le\frac{\mu^{+}}{2}$, i.e.,
\[
\ind\Big\{ d^{+}(u,t_{\text{horizon}})\le\frac{|\mathcal{N}_Q(u)|\theta}{2}\Big\} \le\ind\Big\{ d^{+}(u,t_{\text{horizon}})\le\frac{\mu^{+}}{2}\Big\}.
\]
Thus,
\begin{align}
 & \mathbb{E}_{U_{1:n}}\Big[\ind\{\Xi_{u}>\frac{1}{2}n\mathbb{P}_{\mathsf{U}}(B(u,(1-\beta)\tau))\}
 \mathbb{E}_{Y_{1:n}|U_{1:n}}\Big[\ind\Big\{ d^{+}(u,t_{\text{horizon}})\le\frac{|\mathcal{N}_Q(u)|\theta}{2}\Big\}\Big]\Big]\nonumber\\
 & \quad\le\mathbb{E}_{U_{1:n}}\Big[\ind\{\Xi_{u}>\frac{1}{2}n\mathbb{P}_{\mathsf{U}}(B(u,(1-\beta)\tau))\}
 \mathbb{E}_{Y_{1:n}|U_{1:n}}\Big[\ind\Big\{ d^{+}(u,t_{\text{horizon}})\le\frac{\mu^{+}}{2}\Big\}\Big]\Big].
 \label{eq:pf-lem-prob-survival-good-events-horizon-helper2}
\end{align}
We now upper-bound $\mathbb{E}_{Y_{1:n}|U_{1:n}}[\ind\{d^{+}(u,t_{\text{horizon}})\le\frac{\mu^{+}}{2}\}]$, under the constraint that $\Xi_{u}>\frac{1}{2}n\mathbb{P}_{\mathsf{U}}(B(u,(1-\beta)\tau))$. Importantly, since the $Y_{1:n}$ variables are conditionally independent given $U_{1:n}$ and since $\mathcal{N}_Q(u)$ is deterministic given $U_{1:n}$, then after conditioning on $U_{1:n}$, the sum
\[
d^{+}(u,t_{\text{horizon}})=\sum_{j\in\mathcal{N}_Q(u)}\ind\{Y_{j}>t_{\text{horizon}}\}
\]
is over independent random variables and, furthermore, using Lemma~\ref{lem:lower-bound-on-good-neighbors-deep-kernel-netting-regression}, the above summation is not vacuous since $|\mathcal{N}_Q(u)|\ge\Xi_{u}>0$. Thus, with $\mathcal{N}_Q(u)$ nonempty, the expectation of the above sum is
\[
 \mathbb{E}_{Y_{1:n}|U_{1:n}}[d^{+}(u,t_{\text{horizon}})]
 =\sum_{j\in\mathcal{N}_Q(u)}\mathbb{E}_{Y_{j}|U_{j}}[\ind\{Y_{j}>t_{\text{horizon}}\}]
 =\sum_{j\in\mathcal{N}_Q(u)}S_{\mathsf{Y}|\mathsf{U}}(t_{\text{horizon}}|U_{j}) =\mu^{+}.
\]
Applying Hoeffding's inequality
\begin{align}
 \mathbb{E}_{Y_{1:n}|U_{1:n}}\Big[\ind\Big\{ d^{+}(u,t_{\text{horizon}})\le\frac{\mu^{+}}{2}\Big\}\Big]
 & \le\exp\Big(-\frac{2(\frac{1}{2}\mu^{+})^{2}}{|\mathcal{N}_Q(u)|}\Big)\nonumber\\
 & \le\exp\Big(-\frac{2(\frac{|\mathcal{N}_Q(u)|\theta}{2})^{2}}{|\mathcal{N}_Q(u)|}\Big)&&\text{by inequality~\eqref{eq:pf-lem-prob-survival-good-events-horizon-key-inequality}}\nonumber\\
 & =\exp\Big(-\frac{\theta^{2}}{2}|\mathcal{N}_Q(u)|\Big)\nonumber\\
 & \le\exp\Big(-\frac{\theta^{2}}{2}\Xi_{u}\Big)&&\text{by Lemma~\ref{lem:lower-bound-on-good-neighbors-deep-kernel-netting-regression}}\nonumber\\
 & \le\frac{2}{\theta^{2}\Xi_{u}}\nonumber\\
 & <\frac{4}{\theta^{2}n\mathbb{P}_{\mathsf{U}}(B(u,(1-\beta)\tau))}&&\text{by good event }\mathcal{E}_{\beta}(u),
 \label{eq:pf-lem-prob-survival-good-events-horizon-helper3}
\end{align}
where the last step uses the constraint $\Xi_{u}>\frac{1}{2}n\mathbb{P}_{\mathsf{U}}(B(u,(1-\beta)\tau))$.

Stringing together inequalities~\eqref{eq:pf-lem-prob-survival-good-events-horizon-helper1}, \eqref{eq:pf-lem-prob-survival-good-events-horizon-helper2}, and \eqref{eq:pf-lem-prob-survival-good-events-horizon-helper3}, we see that
\begin{equation}
\text{term }(b)\le\frac{4}{\theta^{2}n\mathbb{P}_{\mathsf{U}}(B(u,(1-\beta)\tau))}.
\label{eq:pf-lem-prob-survival-good-events-CDF-term-b-bound}
\end{equation}

\paragraph{Term $(c)$} Recall that we use the shorthand notation
$\spadesuit=\sup_{s\ge0}|\widehat{S}_{\mathsf{Y}|\mathsf{U}}(s|u)-\mathbb{E}_{Y_{1:n}|U_{1:n}}[\widehat{S}_{\mathsf{Y}|\mathsf{U}}(s|u)]|$. Then term $(c)$ can be written as
\begin{align}
 & \mathbb{E}_{U_{1:n},Y_{1:n},\eventInd_{1:n}}\Big[\ind\{\spadesuit>\varepsilon_{\text{CDF}}\}
 \ind\Big\{\Xi_{u}>\frac{1}{2}n\mathbb{P}_{\mathsf{U}}(B(u,(1-\beta)\tau))\Big\}\Big]\nonumber\\
 & \quad=\mathbb{E}_{U_{1:n},Y_{1:n}}\Big[\ind\{\spadesuit>\varepsilon_{\text{CDF}}\}
 \ind\Big\{\Xi_{u}>\frac{1}{2}n\mathbb{P}_{\mathsf{U}}(B(u,(1-\beta)\tau))\Big\}\Big]\nonumber\\
 & \quad=\mathbb{E}_{U_{1:n}}\Big[\ind\Big\{\Xi_{u}>\frac{1}{2}n\mathbb{P}_{\mathsf{U}}(B(u,(1-\beta)\tau))\Big\}
 \mathbb{E}_{Y_{1:n}|U_{1:n}}\Big[\ind\{\spadesuit>\varepsilon_{\text{CDF}}\}\Big]\Big].
 \label{eq:pf-lem-prob-survival-good-events-CDF-helper1}
\end{align}
Under the constraint that $\Xi_{u}>\frac{1}{2}n\mathbb{P}_{\mathsf{U}}(B(u,(1-\beta)\tau))$ and conditioned on $U_{1:n}$,
\[
1-\widehat{S}_{\mathsf{Y}|\mathsf{U}}(s|u)=\sum_{j\in\mathcal{N}_Q(u)}\frac{K_{j}}{\sum_{\ell\in\mathcal{N}_Q(u)}K_{\ell}}\ind\{Y_{j}\le s\}
\]
is a weighted empirical distribution constructed from more than $\frac{1}{2}n\mathbb{P}_{\mathsf{U}}(B(u,(1-\beta)\tau))$ samples. Applying Proposition~3.1 of \citet{chen2019nearest},
\begin{align}
 \mathbb{E}_{Y_{1:n}|U_{1:n}}[\ind\{\spadesuit>\varepsilon_{\text{CDF}}\}]
 & \le\frac{6}{\varepsilon_{\text{CDF}}}\exp\Big(-\frac{2\varepsilon_{\text{CDF}}^{2}(\sum_{j\in\mathcal{N}_Q(u)}K_{j})^{2}}{9\sum_{\ell\in\mathcal{N}_Q(u)}K_{\ell}^{2}}\Big)\nonumber\\
 & \le\frac{6}{\varepsilon_{\text{CDF}}}\exp\Big(-\frac{2\varepsilon_{\text{CDF}}^{2}(|\mathcal{N}_Q(u)|K(\tau))^{2}}{9|\mathcal{N}_Q(u)|K^{2}(0)}\Big)\nonumber\\
 & =\frac{6}{\varepsilon_{\text{CDF}}}\exp\Big(-\frac{2\varepsilon_{\text{CDF}}^{2}K^{2}(\tau)}{9K^{2}(0)}|\mathcal{N}_Q(u)|\Big).
 \label{eq:pf-lem-prob-survival-good-events-CDF-helper2}
\end{align}
We show that our choice of $\varepsilon_{\text{CDF}}$ ensures that the RHS is at most $\frac{1}{|\mathcal{N}_Q(u)|}$, i.e., we want to show that
\begin{equation}
\frac{6}{\varepsilon_{\text{CDF}}}\exp\Big(-\frac{2\varepsilon_{\text{CDF}}^{2}K^{2}(\tau)}{9K^{2}(0)}|\mathcal{N}_Q(u)|\Big)\le\frac{1}{|\mathcal{N}_Q(u)|}.
\label{eq:pf-lem-prob-survival-good-events-CDF-helper3}
\end{equation}
Lemma D.1(b) of \citet{chen2019nearest} says that inequality~\eqref{eq:pf-lem-prob-survival-good-events-CDF-helper3} holds under the sufficient conditions
\begin{equation}
\varepsilon_{\text{CDF}}<6|\mathcal{N}_Q(u)|\quad\text{and}\quad|\mathcal{N}_Q(u)|\ge\Big(\frac{9K^{2}(0)e}{144K^{2}(\tau)}\Big)^{1/3}.
\label{eq:chen2019-lemma-d1-sufficient}
\end{equation}
The latter holds since we assume that $n\ge{\frac{2}{C_{d'}((1-\beta)\tau)^{d'}}\big(\frac{9K^{2}(0)}{144K^{2}(\tau)}(2+\sqrt{2})\big)^{1/3}}$, so using good event $\mathcal{E}_{\beta}(u)$, Lemma~\ref{lem:lower-bound-on-good-neighbors-deep-kernel-netting-regression} and Assumption $\mathbf{A}^{\text{intrinsic}}$, we have
\begin{align}
 |\mathcal{N}_Q(u)|
 & \ge\Xi_{u} \nonumber\\
 & >\frac{1}{2}n\mathbb{P}_{\mathsf{U}}(B(u,(1-\beta)\tau))\nonumber\\
 & \ge\frac{1}{2}n C_{d'}((1-\beta)\tau)^{d'}\nonumber\\
 & \ge\frac{1}{2}\Big[\frac{2}{C_{d'}((1-\beta)\tau)^{d'}}\Big(\frac{9K^{2}(0)}{144K^{2}(\tau)}(2+\sqrt{2})\Big)^{1/3}\Big]
 C_{d'}((1-\beta)\tau)^{d'} \nonumber\\
 & =\Big(\frac{9K^{2}(0)}{144K^{2}(\tau)}(2+\sqrt{2})\Big)^{1/3},
 \label{eq:pf-lem-prob-survival-good-events-CDF-large-enough-n-good-outcome}
\end{align}
which is strictly greater than $\big(\frac{9K^{2}(0)e}{144K^{2}(\tau)}\big)^{1/3}$.

At this point, it suffices for us to show that the condition $\varepsilon_{\text{CDF}}<6|\mathcal{N}_Q(u)|$ holds. This inequality can be written as
\begin{equation}
|\mathcal{N}_Q(u)|^{3}  >\frac{9K^{2}(0)}{144K^{2}(\tau)}\log\Big(|\mathcal{N}_Q(u)|^{3}\frac{144K^{2}(\tau)}{9K^{2}(0)}\Big).
\label{eq:eps-CDF-upper-bound-condition}
\end{equation}
Define
\begin{align*}
a&:=\frac{9K^{2}(0)}{144K^{2}(\tau)}, &b&:=\frac{9K^{2}(0)}{144K^{2}(\tau)}\log\Big(\frac{144K^{2}(\tau)}{9K^{2}(0)}\Big)+\frac{18K^{2}(0)}{144K^{2}(\tau)}.
\end{align*}
Then Lemma D.2(c) of \citet{chen2019nearest} says that if $\frac{b}{a}+\log a>1$ and
\begin{equation}
 |\mathcal{N}_Q(u)|^{3}
 \ge a\Big(1+\sqrt{2\log(ae^{b/a-1})}+\log(ae^{b/a-1})\Big)
 =\Big(\frac{9K^{2}(0)}{144K^{2}(\tau)}(2+\sqrt{2})\Big)^{1/3},
 \label{eq:chen2019-lemma-d2-condition}
\end{equation}
then
\[
 |\mathcal{N}_Q(u)|^{3}
 \ge a\log(|\mathcal{N}_Q(u)|^{3})+b
 =\frac{9K^{2}(0)}{144K^{2}(\tau)}\log\Big(|\mathcal{N}_Q(u)|^{3}\frac{144K^{2}(\tau)}{9K^{2}(0)}\Big)+\frac{18K^{2}(0)}{144K^{2}(\tau)},
\]
which implies that condition~\eqref{eq:eps-CDF-upper-bound-condition} holds. In fact, we have already shown that condition~\eqref{eq:chen2019-lemma-d2-condition} holds (see inequality~\eqref{eq:pf-lem-prob-survival-good-events-CDF-large-enough-n-good-outcome}). Thus, it suffices to show that $\frac{b}{a}+\log a>1$. Indeed,
\[
 \frac{b}{a}+\log a
 =\frac{\frac{9K^{2}(0)}{144K^{2}(\tau)}\log\Big(\frac{144K^{2}(\tau)}{9K^{2}(0)}\Big)+\frac{18K^{2}(0)}{144K^{2}(\tau)}}{\frac{9K^{2}(0)}{144K^{2}(\tau)}}
 +\log\Big(\frac{9K^{2}(0)}{144K^{2}(\tau)}\Big)\\
 =2>1.
\]
At this point, we can conclude that condition~\eqref{eq:eps-CDF-upper-bound-condition} holds, which completes our justification that the sufficient conditions~\eqref{eq:chen2019-lemma-d1-sufficient} hold. Thus, we can combine equation~\eqref{eq:pf-lem-prob-survival-good-events-CDF-helper1}, inequality~\eqref{eq:pf-lem-prob-survival-good-events-CDF-helper2}, and inequality~\eqref{eq:pf-lem-prob-survival-good-events-CDF-helper3} to get that
\begin{equation}
\text{term }(c)\le\frac{1}{|\mathcal{N}_Q(u)|}\le\frac{1}{\Xi_{u}}<\frac{2}{n\mathbb{P}_{\mathsf{U}}(B(u,(1-\beta)\tau))},
\label{eq:pf-lem-prob-survival-good-events-CDF-term-c-bound}
\end{equation}
making use of event $\mathcal{E}_{\beta}(u)$ and Lemma~\ref{lem:lower-bound-on-good-neighbors-deep-kernel-netting-regression}.

\paragraph{Completing the proof} Plugging the bounds on terms $(a)$, $(b)$, and $(c)$ (given in inequalities \eqref{eq:pf-lem-prob-survival-good-events-CDF-term-a-bound}, \eqref{eq:pf-lem-prob-survival-good-events-CDF-term-b-bound}, and \eqref{eq:pf-lem-prob-survival-good-events-CDF-term-c-bound}) back into the union bound~\eqref{eq:pf-lem-prob-survival-good-events-CDF-union-bound}, we get
\begin{align*}
 & \mathbb{E}_{U_{1:n},Y_{1:n},\eventInd_{1:n}}[\ind\{(\mathcal{E}_{\beta}(u)\cap\mathcal{E}_{\text{horizon}}(u)\cap\mathcal{E}_{\text{CDF}}(u))^{c}\}]\\
 & \quad
 \le\frac{8}{n\mathbb{P}_{\mathsf{U}}(B(u,(1-\beta)\tau))}
   +\frac{4}{\theta^{2}n\mathbb{P}_{\mathsf{U}}(B(u,(1-\beta)\tau))}
   +\frac{2}{n\mathbb{P}_{\mathsf{U}}(B(u,(1-\beta)\tau))}\\
 & \quad
 \le\frac{14}{\theta^{2}n\mathbb{P}_{\mathsf{U}}(B(u,(1-\beta)\tau))}.\tag*{$\blacksquare$}
\end{align*}

\section{Warm-Start Strategy by \texorpdfstring{\citet{chen2020deep}}{Chen (2020)}\label{subsec:warm-start}}

Previously, \citet{chen2020deep} showed how to warm-start deep kernel survival analysis using any pre-trained kernel function, such as one obtained from decision trees or their ensembled variants (random forests, gradient tree boosting). However, Chen's strategy does not scale to large datasets, as we now explain in more detail.

Let $\mathbb{K}_{0}$ be a pre-trained kernel function (learned from, for instance, a random survival forest or \textsc{xgboost}). Then compute the $n_{\pre}$-by-$n_{\pre}$ matrix~$\mathbf{K}$, where $\mathbf{K}_{i,j}=\mathbb{K}_{0}(X_i^{\pre},X_j^{\pre})$ for every pair of pre-training feature vectors $X_i^{\pre}$ and $X_j^{\pre}$.\footnote{Technically, in the original paper by \citet{chen2020deep}, the actual training feature vectors are used rather than pre-training feature vectors, but we describe the setup here using pre-training feature vectors to have the exposition be closer to that of Section~\ref{sec:scalable-tree-ensemble-warm-start}).} Note that computing the matrix $\mathbf{K}$ is prohibitively expensive for large datasets, where even storing this whole matrix can be impractical. For the moment, suppose that we can compute and store this matrix. Next, if we are using a Gaussian kernel $\mathbb{K}(X_i^{\pre},X_j^{\pre})=\exp(-\|\phi(X_i^{\pre})-\phi(X_j^{\pre})\|_2^2)$, then by setting this expression equal to $\mathbf{K}_{i,j}$ and with a bit of algebra, we obtain
\begin{equation*}
\|\phi(X_i^{\pre})-\phi(X_j^{\pre})\|_2=\sqrt{\log\frac{1}{\mathbf{K}_{i,j}}}\quad\text{for all }i,j.
\end{equation*}
To prevent division by 0 in the log, we could for instance add a small constant to all values of $\mathbf{K}$ or clip values of $\mathbf{K}$ under a user-specified threshold to be equal to the threshold.

Next, we can use metric multidimensional scaling (MDS) \citep{borg2005modern} to learn an embedding vector $\widetilde{X}_i^{\text{MDS}}$ for each pre-training feature $X_i^{\pre}$ such that ${\|\widetilde{X}_i^{\text{MDS}}-\widetilde{X}_j^{\text{MDS}}\|_{2}}\approx\sqrt{\log\frac{1}{\mathbf{K}_{i,j}}}$ for all pairs $i$ and $j$. Then for a base neural net $\phi$ with a user-specified architecture, we warm-start $\phi$ by minimizing the mean squared error loss
\[
\frac{1}{n}\sum_{i=1}^{n}(\phi(X_i^{\pre})-\widetilde{X}_{i}^{\text{MDS}})^{2}.
\]
Effectively we are having the neural net $\phi$ approximate the MDS embedding, which approximates the Euclidean distances induced by the pre-trained kernel function under the assumption of a Gaussian kernel. This MDS-based strategy generalizes to other choices of kernel functions that can be written in terms of the distance function $\rho(x,x')=\|\phi(x)-\phi(x')\|_{2}$, so long as we can solve the equation $\mathbb{K}(X_i^{\pre},X_j^{\pre})=\mathbf{K}_{i,j}$ for $\rho(X_i^{\pre},X_j^{\pre})$; in fact, $\rho$ could even be chosen to be non-Euclidean if generalized MDS is used, which handles non-Euclidean distances \citep{bronstein2006generalized}.

Our proposed alternative warm-start strategy in Section~\ref{sec:scalable-tree-ensemble-warm-start} avoids ever storing a full $n_{\pre}$-by-$n_{\pre}$ Gram matrix by instead only ever looking at the Gram matrix restricted to minibatches. Our warm-start strategy also does not need to use any MDS solver.

\section{Details on Experiments}
\label{sec:experiment-details}

We now provide additional details on data preprocessing (Appendix~\ref{sec:datasets-preprocessing-details}), and on hyperparameter grids and optimization (Appendix~\ref{sec:hyperparameter-grids-opt-details}). An additional cluster visualization is in Appendix~\ref{sec:additional-visualizations}.

\subsection{Preprocessing Notes}
\label{sec:datasets-preprocessing-details}

Continuous features are standardized (subtract mean and divide by standard deviation). Categorical features are one-hot encoded unless they correspond to features with a clear ordering in which case they are converted to be on a scale from 0 to 1 (evenly spaced for where the levels are). The \textsc{unos} data preprocessing is a bit more involved and follows the steps of \citet{yoon2018personalized} and \citet{lee2018deephit}. The \textsc{kkbox} dataset is preprocessed via the \texttt{pycox} package \citep{kvamme2019time}.

\subsection{Hyperparameter Grids (Including Neural Net Architecture Settings) and Optimization Details}
\label{sec:hyperparameter-grids-opt-details}

We provide hyperparameter grids and optimization details in this section. Before doing so, we make a few remarks. First, we implement the elastic-net-regularized Cox model as well as all neural net models in PyTorch \citep{paszke2019pytorch} so that we can train these models via minibatch gradient descent, which is helpful due to the size of some of our datasets.
Note that \textsc{xgboost} supports survival analysis with a few losses. We specifically use the Cox loss. We train all models that are implemented in PyTorch (including \textsc{elastic-net cox}) using Adam \citep{kingma2014adam} with a budget of 100 epochs (except for the \textsc{kkbox} dataset, where we use 10). Early stopping is used based on the validation set (no improvement in the best validation loss within 10 epochs). For simplicity, we always use a batch size of 1024 (from some preliminary experiments, we find that smaller batch sizes such as 128, 256, and 512 yield similar results for baselines but for survival kernets, a larger batch size appears to be helpful).

For the \textsc{kkbox} dataset, because of how large it is, even computing the validation loss is computationally expensive ($C^{\text{td}}$ index is a ranking metric that considers every pair of data points). To make training more practical, we use a subsampling strategy for computing the validation loss where we randomly partition the validation data into groups of size~$2^{14}$, compute the $C^{\text{td}}$ index per group, and then average the indices computed across the groups. We empirically found this approximation to be close to the exact calculation and takes much less time to compute. For the test set as well as bootstrap confidence intervals, we do an exact calculation rather than using this subsampling strategy.

We now list hyperparameter grids of the different methods. Even more details are available in our code.

\medskip
\noindent
\textsc{elastic-net cox}:
\begin{itemize}[leftmargin=*,itemsep=0.2em,parsep=0em,partopsep=0em,topsep=0.1em]
\item Learning rate: 1.0, 0.1, 0.01, 0.001, 0.0001
\item Regularization weight: 0 (corresponds to the standard Cox model \citep{cox1972regression}), 0.01, 1
\item Elastic-net knob for how much to use $\ell_1$ regularization vs squared $\ell_2$ regularization: 0 (ridge regression), 0.5 (evenly balance $\ell_1$ and squared $\ell_2$), 1 (lasso)
\end{itemize}

\medskip
\noindent
\textsc{xgboost}:
\begin{itemize}[leftmargin=*,itemsep=0.2em,parsep=0em,partopsep=0em,topsep=0.1em]
\item Number of random features selected per node: \\ $\frac{1}{2}\times\text{sqrt of number of features}$, $\text{sqrt of number of features}$, $2\times\text{sqrt of number of features}$, use all features
\item Learning rate ``eta'': 0.1, 0.3, 1
\item Number of parallel trees: 1, 10
\item Training data subsampling when growing trees: 0.6, 0.8
\item Max depth: 3, 6, 9, 12
\item Max number of rounds: 100 (for \textsc{kkbox}, only 20)
\end{itemize}

\medskip
\noindent
\textsc{deepsurv}:
\begin{itemize}[leftmargin=*,itemsep=0.2em,parsep=0em,partopsep=0em,topsep=0.1em]
\item Learning rate: 0.01, 0.001
\item Number of hidden layers: 2, 4, 6
\item Number of nodes per hidden layer: 32, 64, 128
\item Nonlinear activation of hidden layers: ReLU
\item Use batch norm after each hidden layer: yes
\end{itemize}
The final layer (a linear layer) is not a hidden layer and corresponds to a Cox model; it has 1 output node and no bias.

\medskip
\noindent
\textsc{deephit} has the same search grid as \textsc{deepsurv} and also additionally has the following:
\begin{itemize}[leftmargin=*,itemsep=0.2em,parsep=0em,partopsep=0em,topsep=0.1em]
\item ``alpha'': 0, 0.001, 0.01
\item ``sigma'': 0.1, 1
\item Number of time steps to discretize to: 0 (i.e., use all unique observed times of death), 64, 128
\end{itemize}
Importantly, for discretizing time, we use evenly spaced quantiles of observed times (for details, see Section 3.1 of \citet{kvamme2019continuous}), which in some circumstances could result in the number of time steps used be \emph{fewer} than what the user specifies. For example, when discretizing to 64 time steps, we take 64 evenly spaced quantiles of observed times. Suppose that 99.9\% of the observed times happen to all be exactly 1 day whereas the other 0.1\% of the observed times are exactly 2 days. Then the 64 evenly spaced quantiles would actually all correspond to 1 day as the observed time. Of course, if all the observed times are unique, then we would not have this sort of issue.

\medskip
\noindent
\textsc{dcm} has the same search grid as \textsc{deepsurv} and also additionally has the following:
\begin{itemize}[leftmargin=*,itemsep=0.2em,parsep=0em,partopsep=0em,topsep=0.1em]
\item Number of Cox distributions: 3, 4, 5, 6
\item Smoothing factor: $10^{-3}$
\end{itemize}
The MLP base neural net's final number of output nodes is precisely the number of dimensions of the embedding space. We set this number to be the minimum of: (1) the number of nodes per hidden layer, and (2) the number of features in feature space (after preprocessing).

\medskip
\noindent
\textsc{dksa} (we used the reference code by the original author specifically with random survival forest initialization; the only changes made were to get it to work with our experimental harness that loads in data, tunes hyperparameters, and gets test set metrics):
\begin{itemize}[leftmargin=*,itemsep=0.2em,parsep=0em,partopsep=0em,topsep=0.1em]
\item Number of random features selected per node (for random survival forest warm-start): \\ $\frac{1}{2}\times\text{sqrt of number of features}$, $\text{sqrt of number of features}$, $2\times\text{sqrt of number of features}$
\item Minimum data points per leaf (for random survival forest warm-start): 8, 32, 128
\item Learning rate: 0.01, 0.001
\item Number of hidden layers: 2, 4, 6
\item Number of nodes per hidden layer: 32, 64, 128
\item Nonlinear activation of hidden layers: ReLU
\item Use batch norm after each hidden layer: yes
\item Number of time steps to discretize to: 0 (i.e., use all unique observed times of death), 64, 128; we use the same discretization strategy stated above for \textsc{deephit}
\end{itemize}
The MLP base neural net's final number of output nodes is set the same way as described above for \textsc{dcm}.

Note that we always project the base neural net's output onto a hypersphere of radius $\sqrt{0.1}$ (this is mathematically equivalent to projecting to the unit hypersphere but changing the kernel function to instead be $K(u)=\exp(-u^2/0.1)$). In fact, this idea is commonly used in contrastive learning and the radius is a hyperparameter that can be tuned and is equal to the square root of two multiplied by what is commonly called the ``temperature'' hyperparameter (see for instance the papers by \citet{wang2020understanding} and \citet{liu2021learning}).

\medskip
\noindent
\textsc{kernet}
has the same search grid as \textsc{deepsurv} and also additionally has the following:
\begin{itemize}[leftmargin=*,itemsep=0.2em,parsep=0em,partopsep=0em,topsep=0.1em]
\item $\eta$: 0, 0.001, 0.01 (defined in $L_{\text{rank}}$ as part of equation~\eqref{eq:full-dksa-loss}; this hyperparameter is analogous to \textsc{deephit}'s ``alpha'' hyperparameter)
\item $\sigma_{\text{rank}}$: 0.1, 1  (also defined in $L_{\text{rank}}$ as part of equation~\eqref{eq:full-dksa-loss}; this hyperparameter is analogous to \textsc{deephit}'s ``sigma'' hyperparameter)
\item Number of time steps to discretize to: 0 (i.e., use all unique observed times of death), 64, 128; we use the same discretization strategy stated above for \textsc{deephit}
\item $\tau$: $\sqrt{2\log(10)}\approx2.146$ (we additionally set the maximum number of approximate nearest neighbors to be 128)
\item $\beta$: 1/4, 1/2 (we construct $\varepsilon$-nets with $\varepsilon=\beta\tau$)
\end{itemize}
We set the MLP base neural net's final number of output nodes the same way as the \textsc{dksa} baseline and again project the base neural net's output to a hypersphere with radius $\sqrt{0.1}$. From preliminary experiments, using a unit hypersphere (i.e., setting the radius to be equal to 1) typically resulted in worse validation accuracy scores.

\medskip
\noindent
\textsc{tuna-kernet} is the same as \textsc{kernet} except that our \textsc{tuna} warm-start strategy is used.

\subsection{More Detailed Computation Time Breakdown Example}
\label{sec:more-detailed-time-breakdown}

As an illustrative example, we give a more detailed breakdown of Table~\ref{tab:tuna-kernet-nosplit-sft-kkbox-breakdown} where specifically for the step that fine-tunes the base neural net with the DKSA loss, we not only separate computation times with respect to $\beta$ but we also separate the computation times with respect to the number of time steps that we discretize to. The resulting breakdown is shown in Table~\ref{tab:tuna-kernet-nosplit-sft-kkbox-breakdown-detailed}. This same idea could of course be used to analyze the amount of time of other hyperparameters too. Specifically for \textsc{tuna-kernet (no split, sft)}, we see that for the \textsc{kkbox} dataset, discretizing using 64 or 128 time steps takes a similar amount of time (although using 128 time steps is of course a little more costly), while discretizing using all unique observed times of death is significantly more costly in time (but for \textsc{kkbox} specifically, it turns out that discretizing using all unique observed times of death significantly improves accuracy). As a minor technical reminder, as stated in the Appendix~\ref{sec:hyperparameter-grids-opt-details}, when 64 time steps are used, we are not guaranteed to have exactly 64 time steps and we might have fewer (similarly for when 128 time steps are used).

\begin{table}
\centering
\caption{Time breakdown in training the \textsc{tuna-kernet (no split, sft)} model on the \textsc{kkbox} dataset for our experimental setup (mean $\pm$ standard deviation across 5 experimental repeats). The time breakdown for training \textsc{tuna-kernet (no split)} is the same without the final summary fine-tuning time. Note that our \textsc{tuna} warm-start strategy corresponds to the combination of the first two tasks listed below. Also, note that every task listed below involves tuning different hyperparameters. For details on hyperparameters, see Appendix~\ref{sec:hyperparameter-grids-opt-details}.}
\label{tab:tuna-kernet-nosplit-sft-kkbox-breakdown-detailed}
\vspace{-.5em}
\small
\setlength{\tabcolsep}{6pt}
\adjustbox{max width=\textwidth}{
\begin{tabular}{ccc}
\toprule
Task & Hyperparameter settings & Time (minutes) \\
\midrule
Train \textsc{xgboost} model & 192$^*$ & 292.358 $\pm$ 1.943 \\
Approximate \textsc{xgboost} kernel with base neural net & 18$^\dagger$ & 107.271 $\pm$ 0.481 \\
Fine-tune base neural net with DKSA loss ($\beta=1/2$, discretize to 64 time steps) & 12$^\ddagger$ & 163.928 $\pm$ 13.769 \\
Fine-tune base neural net with DKSA loss ($\beta=1/2$, discretize to 128 time steps) & 12$^\ddagger$ & 170.394 $\pm$ 16.056 \\
Fine-tune base neural net with DKSA loss ($\beta=1/2$, discretize to all unique observed times of death) & 12$^\ddagger$ & 459.561 $\pm$ 24.087 \\
Fine-tune base neural net with DKSA loss ($\beta=1/4$, discretize to 64 time steps) & 12$^\ddagger$ & 189.322 $\pm$ 10.688 \\
Fine-tune base neural net with DKSA loss ($\beta=1/4$, discretize to 128 time steps) & 12$^\ddagger$ & 196.903 $\pm$ 10.287 \\
Fine-tune base neural net with DKSA loss ($\beta=1/4$, discretize to all unique observed times of death) & 12$^\ddagger$ & 504.150 $\pm$ 15.288 \\
Summary fine-tuning & 2$^\text{\S}$ & 471.814 $\pm$ 37.123 \\
\midrule
Total & & 2555.702 $\pm$ 65.595 \\
\bottomrule
\end{tabular}} \\
\smallskip
$^*$ corresponds to the full \textsc{xgboost} hyperparameter grid that we use \\
$^\dagger$ base neural net hyperparameters (number of hidden layers, nodes per hidden layer), learning rate \\
$^\ddagger$ survival loss hyperparameters ($\eta$ and $\sigma_{\text{rank}}$ from equation~\eqref{eq:full-dksa-loss}), learning rate \\
$^\text{\S}$ learning rate
\end{table}

\subsection{Additional Cluster Visualization}
\label{sec:additional-visualizations}

For the \textsc{kkbox} dataset, we visualize the largest 10 clusters found by the final \textsc{tuna-kernet (no split, sft)} model in Figure~\ref{fig:kkbox}. These largest 10 clusters only contain 28.7\% of the proper training data.

\begin{figure}[!p]
\centering
\begin{subfigure}[b]{\linewidth}
\centering
\includegraphics[scale=.37]{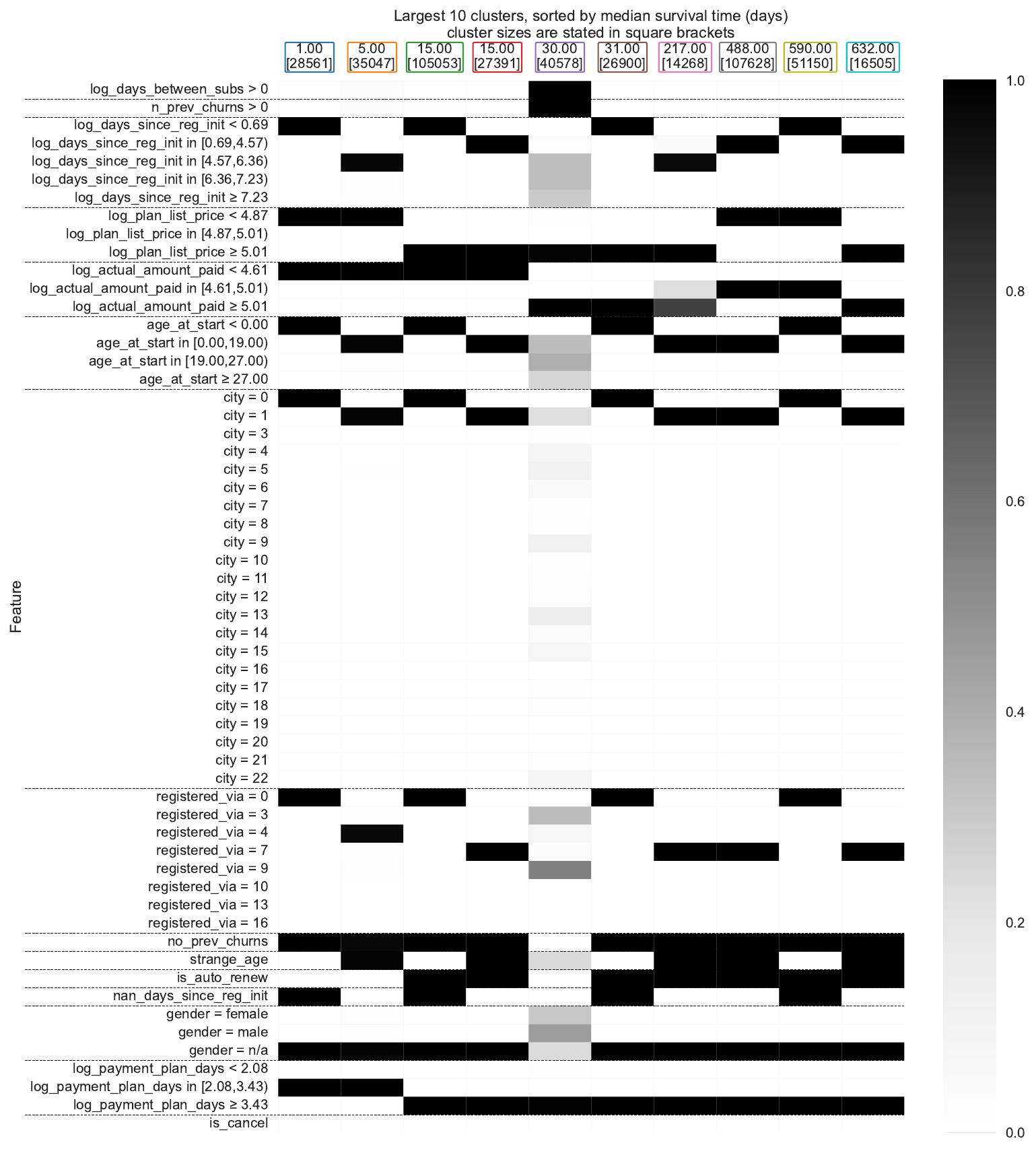}
\caption{}
\label{subfig:kkbox-heatmap}
\end{subfigure} \\ \vspace{.5em}
\begin{subfigure}[b]{\linewidth}
\centering
\hspace{1em}
\includegraphics[scale=.37]{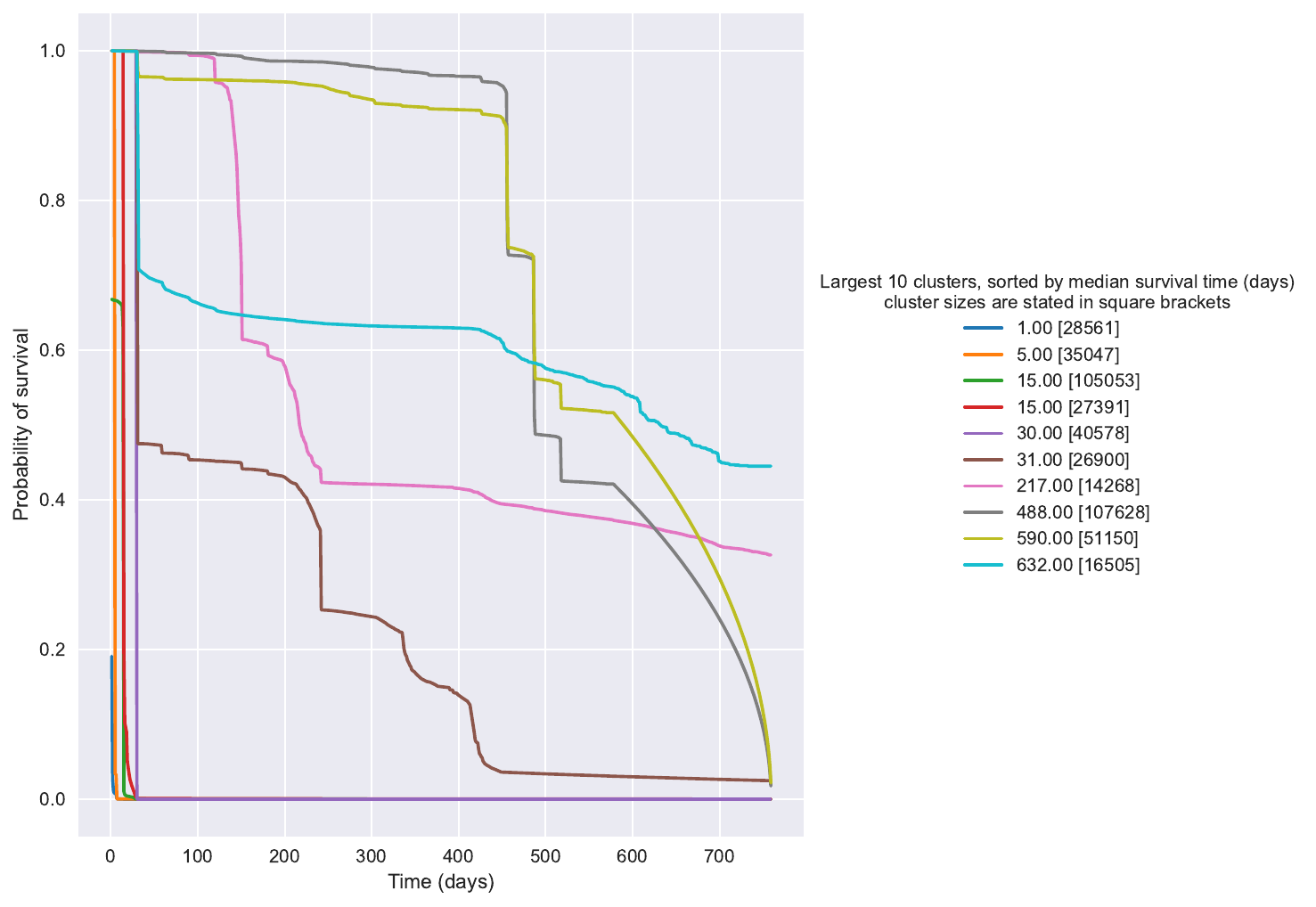}
\vspace{-.5em}
\caption{}
\label{subfig:kkbox-survival-curves}
\end{subfigure}
\caption{Visualization of the largest 10 clusters for the final \textsc{tuna-kernet (no split, sft)} model trained on the \textsc{kkbox} dataset. Panel (a) shows a feature heatmap visualization. Panel (b) shows survival curves for the same clusters as in panel (a); the x-axis measures the number of days since an individual subscribed to the music streaming service.}
\label{fig:kkbox}
\end{figure}

\subsection{Effect of Changing Threshold Distance\texorpdfstring{ $\tau$}{}}
\label{sec:different-threshold-distances}

While the threshold distance $\tau$ for survival kernets could be tuned as part of hyperparameter tuning, we leave it fixed as we found in preliminary experiments that at least for the different values of $\tau$ that we tried, the qualitative flavor of the results did not drastically change. Specifically, we tried $\tau=\sqrt{\log 10}$ (i.e., training points only contribute to the prediction of a test point if they have kernel weight/similarity score at least $0.1$ with the test point), $\tau=\sqrt{2\log 10}$ (minimum kernel weight $0.01$), and $\tau=\sqrt{4\log 10}$ (minimum kernel weight $0.0001$). The validation set (and not test set) $C^{\text{td}}$ indices are shown in Table~\ref{tab:distance-threshold-val}.

For example, looking specifically at the \textsc{tuna-kernet (no split, sft)} model, findings (a) and (b) in Section~\ref{sec:results-prediction-accuracy} still hold when we instead use $\tau=\sqrt{\log 10}$ (i.e., training points only contribute to the prediction of a test point if they have kernel weight/similarity score at least $0.1$ with the test point) or $\tau=\sqrt{4\log 10}$ (minimum kernel weight $0.0001$); look at Table~\ref{tab:distance-threshold} and compare the achieved test set $C^{\text{td}}$ indices with those of the baseline models in Table~\ref{tab:main-benchmark}.

\begin{table}[t]
\centering
\caption{\emph{Validation} set $C^{\text{td}}$ indices (mean $\pm$ standard deviation across 5 experimental repeats) for \textsc{tuna-kernet (no split, sft)} using different threshold distances.\label{tab:distance-threshold-val}}
\vspace{-.5em}
\setlength{\tabcolsep}{5pt}
\adjustbox{max width=\textwidth}{%
\begin{tabular}{ccccc}
\toprule
\multirow{2}{*}[-2.5pt]{Threshold distance} & \multicolumn{4}{c}{Dataset} \\
\cmidrule{2-5}
& \textsc{rotterdam/gbsg} & \textsc{support} & \textsc{unos} & \textsc{kkbox}\\
\midrule
$\tau = \sqrt{\log 10}$ & 0.6852 $\pm$ 0.0212 & 0.6463 $\pm$ 0.0074 & 0.6134 $\pm$ 0.0035 & 0.9047 $\pm$ 0.0008 \\
$\tau = \sqrt{2 \log 10}$ & 0.6874 $\pm$ 0.0198 & 0.6483 $\pm$ 0.0067 & {\bftab 0.6156} $\pm$ 0.0022 & {\bftab 0.9052} $\pm$ 0.0005 \\
$\tau = \sqrt{4 \log 10}$ & {\bftab 0.6887} $\pm$ 0.0239 & {\bftab 0.6484} $\pm$ 0.0055 & 0.6135 $\pm$ 0.0026 & 0.9030 $\pm$ 0.0043 \\
\bottomrule
\end{tabular}}
\end{table}

\begin{table}[t]
\centering
\caption{Test set $C^{\text{td}}$ indices (mean $\pm$ standard deviation across 5 experimental repeats) for \textsc{tuna-kernet (no split, sft)}  using different threshold distances.\label{tab:distance-threshold}}
\vspace{-.5em}
\setlength{\tabcolsep}{5pt}
\adjustbox{max width=\textwidth}{%
\begin{tabular}{ccccc}
\toprule
\multirow{2}{*}[-2.5pt]{Threshold distance} & \multicolumn{4}{c}{Dataset} \\
\cmidrule{2-5}
& \textsc{rotterdam/gbsg} & \textsc{support} & \textsc{unos} & \textsc{kkbox}\\
\midrule
$\tau = \sqrt{\log 10}$ & 0.6708 $\pm$ 0.0098 & 0.6373 $\pm$ 0.0047 & 0.6175 $\pm$ 0.0059 & 0.9051 $\pm$ 0.0006 \\
$\tau = \sqrt{2 \log 10}$ & 0.6719 $\pm$ 0.0135 & 0.6426 $\pm$ 0.0045 & 0.6211 $\pm$ 0.0025 & 0.9057 $\pm$ 0.0003 \\
$\tau = \sqrt{4 \log 10}$ & 0.6706 $\pm$ 0.0124 & 0.6403 $\pm$ 0.0031 & 0.6181 $\pm$ 0.0026 & 0.9035 $\pm$ 0.0042 \\
\bottomrule
\end{tabular}}
\end{table}

\newpage

\bibliography{survival_kernets}

\end{document}